\documentclass[11pt]{article}

\usepackage{amsmath,amssymb}

\usepackage{times}
\usepackage{bm}
\usepackage{natbib}
\usepackage{algorithm}
\usepackage{amssymb}
\usepackage{mathrsfs}
\usepackage{graphicx}
\usepackage[flushleft]{threeparttable}
\usepackage{enumitem} 

\usepackage{microtype}
\usepackage{subfigure}
\usepackage{mathtools}
\usepackage[disable,textsize=tiny]{todonotes}

\usepackage{titletoc}

\usepackage{nexais}

\usepackage[colorlinks,
linkcolor=metablue,
anchorcolor=metablue,
citecolor=metablue,
urlcolor=metablue
]{hyperref}

\usepackage{txfonts}

\usepackage{geometry}

\providecommand{\pmifi}{\ensuremath{\mathrm{PMI}(\mathrm{FI};\mathrm{gap}\mid\mathrm{II})}}
\providecommand{\pminoise}{\ensuremath{\mathrm{PMI}(\mathrm{noise};\mathrm{gap}\mid\mathrm{II})}}
\providecommand{\dRtwo}{\ensuremath{\Delta \mathrm{R}^2}}
\providecommand{\drfi}{\ensuremath{\dRtwo(\mathrm{FI})}}
\providecommand{\drnoise}{\ensuremath{\dRtwo(\mathrm{noise})}}

\providecommand{\independent}{\perp\kern-5pt \perp}

\begin{document}

\title{Feature Bagging Provides Stability}

\author[1,3]{Yuheng Ma}
\author[2,3]{Qiang Sun}

\affiliation[1]{School of Statistics, East China Normal University}
\affiliation[2]{University of Toronto}
\affiliation[3]{MBZUAI}

\abstract{
We study feature bagging through the lens of algorithmic stability. Feature bagging is an ensemble strategy that aggregates base learners trained on randomly subsampled feature subsets, possibly in a data-dependent manner. We introduce feature instability (FI), the feature-axis analogue of instance instability (II), which measures sensitivity to removing a single feature. Smaller values of II or FI correspond to stronger stability, and our experiments show that FI captures generalization-relevant information complementary to II. Within this framework, we analyze feature bagging in both a parametric linear model and a model-free setting inspired by recursive feature subsampling in random forests. In both settings, we establish formal guarantees showing that feature bagging improves the relevant stability relative to its non-bagged counterpart, with larger improvements under more aggressive subsampling. We further show that a modest number of bagging rounds is sufficient to approach the infinite-bagging stability level.
}

\metadata[Keywords]{stability, bagging, feature instability, random forests, random forward selection.}

\date{\today}

\maketitle


\startcontents[main]
\section*{\contentsname}
\printcontents[main]{}{1}{}

\section{Introduction}\label{sec:introduction}

Stability is a fundamental prerequisite for trustworthy machine learning \citep{murdoch2019definitions, yu2020veridical, xing2021algorithmic, zhou2023toward}, and is closely connected to generalization \citep{bousquet2002stability}, uncertainty quantification \citep{wang2023stability}, and model selection \citep{meinshausen2010stability, nogueira2018stability}.
For a predictor $f$ trained on $n$ samples in dimension $d$, an informal way to capture instance-wise instability is to measure how much its prediction changes when one training instance is removed:
\begin{align}\label{equ:informal-instance-stability}
\frac{1}{n} \sum_{i = 1}^n \left( f(\bx)- f^{-i, : }(\bx)\right)^2
\end{align}
where $f^{-i,:}$ denotes the predictor obtained after removing the $i$-th training instance.
Smaller values indicate that the algorithm is more stable to single-instance perturbations. We use this quantity only as motivation; Definition~\ref{def:stability-instance} formalizes \(\phi\)-instance stability and names the associated instance-instability measure. Regularization \citep{bousquet2002stability} and bagging \citep{soloff2024bagging} are two standard approaches for improving instance stability.


Feature bagging has been recognized as a potential driver of ensemble success \citep{breiman2001random, sutton2006reducing, lejeune2020implicit}, but the mechanism behind its empirical effectiveness remains debated.
One promising explanation is that feature bagging acts as a form of regularization \citep{mentch2020randomization, curth2024random}.
This viewpoint makes stability a natural lens for studying feature bagging.
Instance stability is a natural starting point because it measures sensitivity to changes in the training instances. However, it does not directly capture sensitivity to changes in the feature set, precisely the axis on which feature bagging operates; see Section~\ref{sec:exp_for_feature_stability_meaning} for empirical evidence that feature-axis perturbations carry generalization-relevant information that is complementary to instance stability. These considerations motivate our central question:
\begin{quote}
\centering
\textit{Does feature bagging improve stability, and if so, how?}
\end{quote}
Answering this question requires a feature-axis analogue of classical instance  instability.
We therefore introduce \emph{feature instability} (FI), the leave-one-feature-out analogue of \eqref{equ:informal-instance-stability}:
\begin{align}\label{equ:informal-feature-stability}
    \frac{1}{d} \sum_{j = 1}^d \left( f(\bx)- f^{:, -j }(\bx)\right)^2,
\end{align}
where $f^{:,-j}$ denotes the predictor obtained after removing the $j$-th feature.
This informal quantity is the feature-axis analogue of instance instability; the formal definitions of feature-instability and $\phi$-stability are given later in Definition~\ref{def:stability-feature}.

We answer this question through the following contributions:
\begin{enumerate}
\item We introduce FI as the feature-axis analogue of classical II.
This gives a two-axis view of stability tailored to feature bagging: II measures sensitivity to removing training instances, whereas FI measures sensitivity to removing features.
We further show empirically that FI carries generalization-relevant information that is complementary to II, motivating FI as a distinct object of study.

\item In linear regression, we give an exact asymptotic characterization of how bagging affects II and FI.
The analysis explains how ensemble averaging improves stability and how the stability gains depend on the sampling of both instances and features.
It also reveals that feature-level perturbations have their own scaling behavior, governed by the ambient dimension, which is invisible from II alone.

\item Beyond parametric models, we develop a model-free theory for FI under feature bagging, with particular emphasis on recursive feature subsampling.
This theory applies to algorithms such as random forward selection and random forests, where features are resampled repeatedly and data-dependently.
It shows that more aggressive feature subsampling yields stronger feature-stability guarantees, identifies regimes in which feature bagging is provably more stable than its non-bagged counterpart, and establishes that a number of bagging rounds proportional to the dimension is enough to approach the infinite-bagging stability level.
\end{enumerate}

The rest of the paper proceeds as follows. Section~\ref{sec:basic-concept} formalizes II, FI, and bagging. Section~\ref{sec:OLS} analyzes the linear-model setting, while Section~\ref{sec:model-free-feature-stability} develops the model-free theory for recursive feature subsampling. Proofs, derivations, and additional experiments are deferred to the appendix.


\subsection{Related Works}
\label{sec:related-work}

\paragraph{Stability.}
Algorithmic stability formalizes the idea that a learned predictor should not
change too much after a small change to the training data.  Classical
formulations usually modify the instance axis, either by removing one sample or
by replacing it with an independent copy
\citep{bousquet2002stability, elisseeff2005stability, kutin2002almost,
mukherjee2006learning, liu2017algorithmic}.  These notions are central to
stability-based generalization analysis
\citep{bousquet2002stability, feldman2018generalization, feldman2019high,
bousquet2020sharper}.  A closely related line studies averaging stability,
which averages the sensitivity over all single-sample changes
\citep{shalev2010learnability, lei2020fine, soloff2024bagging}.  Recent work has
used this viewpoint to analyze why bagging can stabilize learning algorithms
\citep{adrian2024stabilizing, soloff2024building, soloff2024bagging,
soloff2024stability, liang2025assumption}.

Our definitions follow the removal-based convention.  For instance stability,
we compare the predictor trained on \(\cD\) with the predictor trained on
\(\cD^{-i,:}\).  Replacement-based variants can be related to the removal-based
definition by a triangle-inequality argument
\citep{bousquet2002stability, soloff2024bagging}.  The same issue appears on
the feature axis.  We use feature removal as the primary convention because
feature replacement requires specifying how the replacement coordinate is
generated, for example unconditionally or conditionally on
\((Y_i,\bx_i^{-j})\).

\paragraph{Bagging.}
Instance bagging is widely used across tree-based models
\citep{breiman1996bagging, geurts2006extremely}, nearest-neighbor methods
\citep{biau2010rate, cai2025bagged}, and neural-network ensembles
\citep{hansen1990neural, perrone1995networks, lakshminarayanan2017simple,
zaidi2021neural}.  Feature bagging is most prominent in tree-based methods,
especially random forests, where each split is chosen after subsampling the
candidate features \citep{breiman1996bagging, breiman2001random,
geurts2006extremely}.  Recent high-dimensional linear-model analyses provide a
more controlled view of feature subsampling and ensembling.  Sketched ridgeless
regression studies how downsampling changes interpolation behavior
\citep{chen2023sketched}; related reference-panel estimators analyze the exact
risk of regularization induced by an auxiliary feature panel
\citep{su2024exact}; and bagged linear interpolators show how averaging
sketched estimators controls interpolation-driven variance
\citep{wu2025ensemble}.  These works connect feature subsampling to prediction
risk.  We build on this literature by asking the corresponding stability
question: whether feature bagging also stabilizes predictions along the feature
axis.

\section{Basic Concepts}\label{sec:basic-concept}

\begin{figure}[!t]
\vskip -0.05in
\centering
\subfigure[Instance subsampling.]{
\begin{minipage}{0.17\linewidth}
\centering
\includegraphics[width=\textwidth]{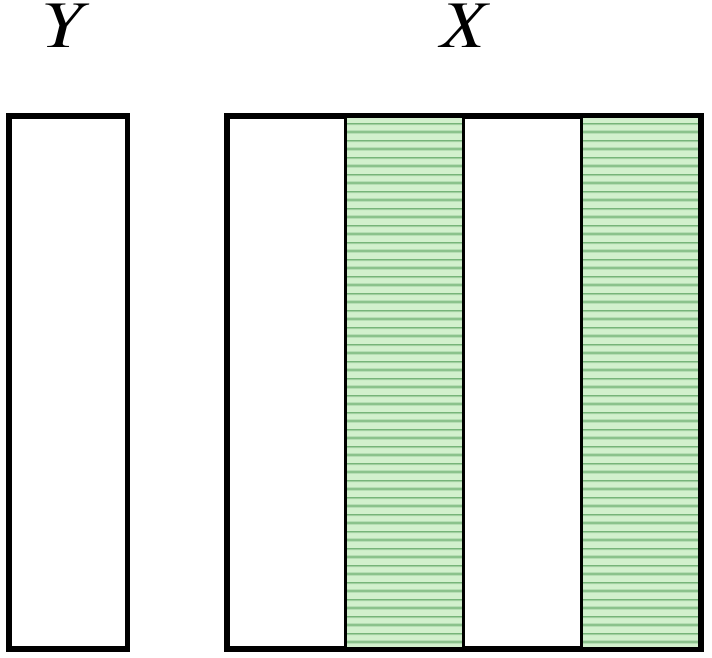}
\end{minipage}
\label{fig:illustration_subsample_1}
}
\subfigure[Feature subsampling.]{
\begin{minipage}{0.17\linewidth}
\centering
\includegraphics[width=\textwidth]{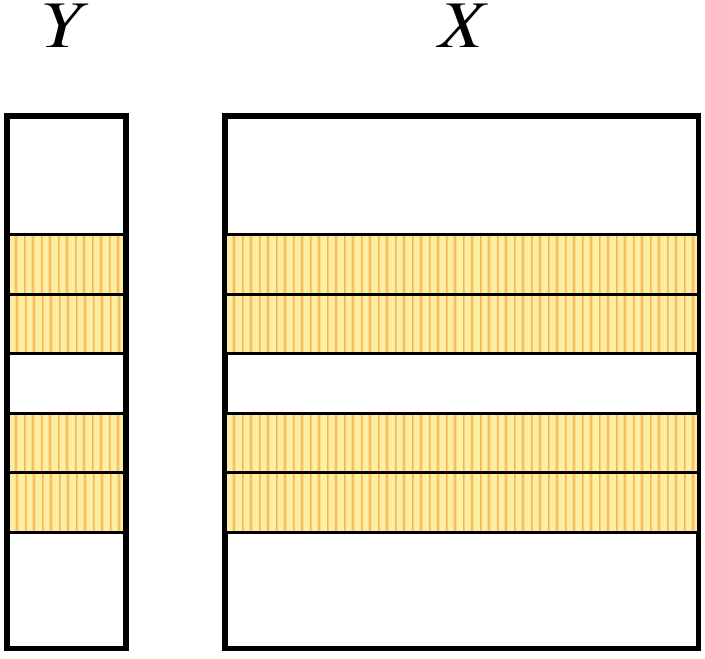}
\end{minipage}
\label{fig:illustration_subsample_2}
}
\subfigure[Instance and feature subsampling.]{
\begin{minipage}{0.17\linewidth}
\centering
\includegraphics[width=\textwidth]{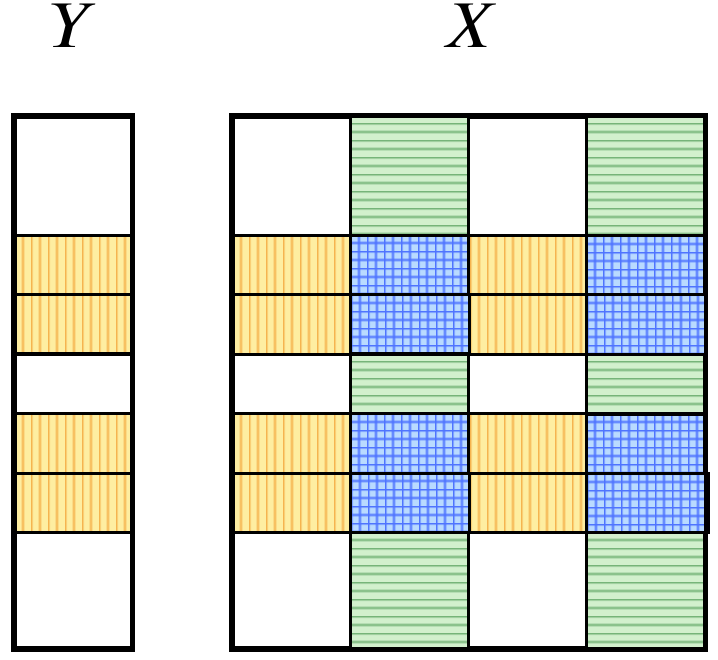}
\end{minipage}
\label{fig:illustration_subsample_3}
}
\subfigure[Bagging scheme in random forests.]{
\begin{minipage}{0.36\linewidth}
\centering
\includegraphics[width=1\linewidth]{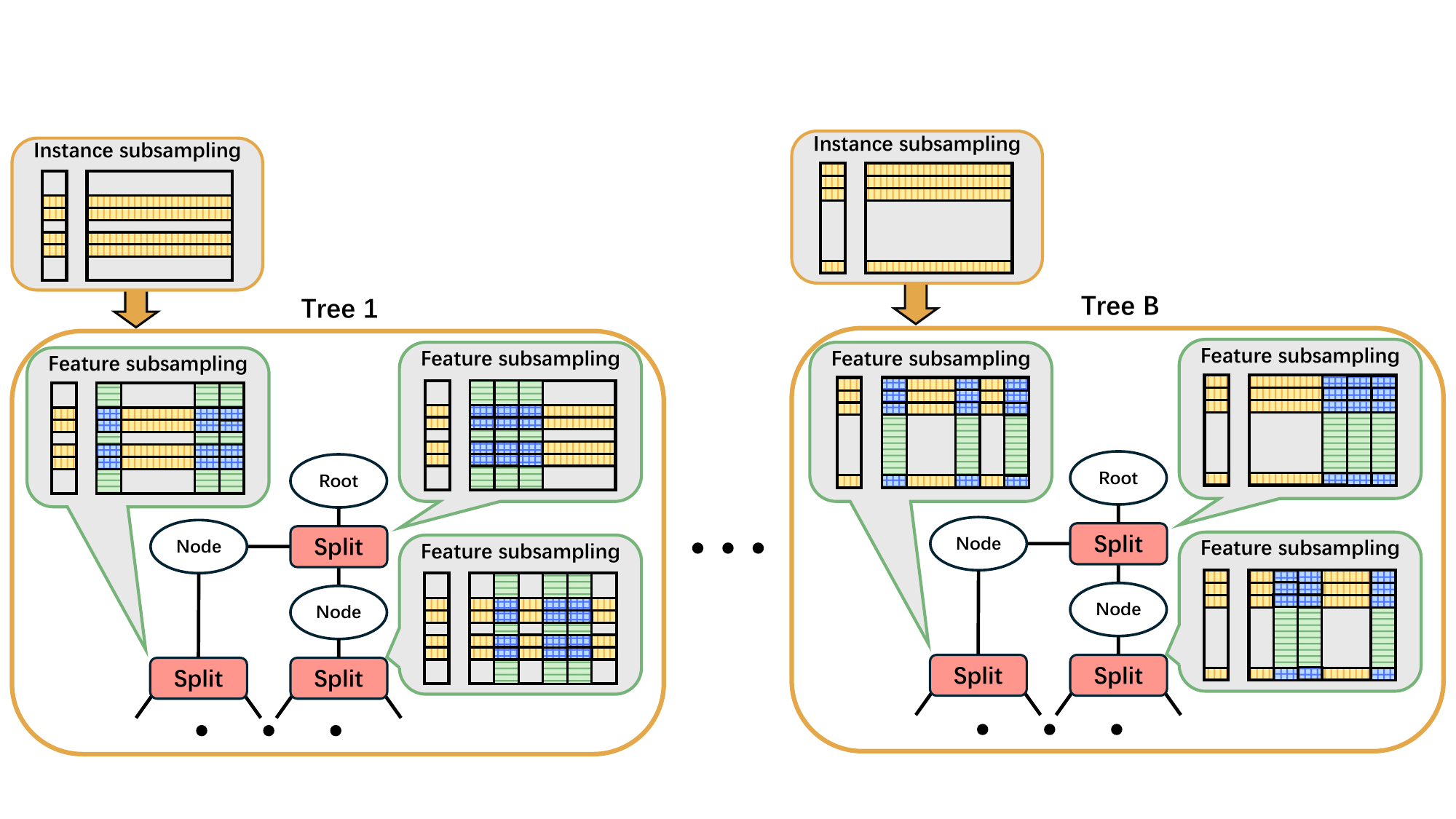}
\end{minipage}
\label{fig:illustration_subsample_4}
\label{fig:illustration}
}
\vskip -0.05in
\caption{
Illustration of subsampling and bagging schemes.
In the random forest example (d), all splits within a single tree share the
same instance subsample (yellow lines), while each split uses an independently
drawn feature subsample (green columns).
}
\vskip -0.05in
\label{fig:illustration_subsample}
\end{figure}

\subsection{Stability}\label{sec:stability_intro}

We begin by fixing notation for the two perturbation axes considered in this
paper. Let $\cD = (\bX, \by)$ be a dataset with $n$ instances and $d$ features.
The data matrix $\bX \in \RR^{n \times d}$ has rows
\(\bx_i\in\cX=\RR^d\), and the labels are collected in
\(\by\in\cY^n\).
For integer $K$, let $[K]=\{1,\dots,K\}$.
For $i\in[n]$, denote by $\cD^{(i),:}$ and $\cD^{-i,:}$ the datasets obtained by
replacing or removing the $i$-th sample from $\cD$, with corresponding data
matrices $\bX^{(i),:}\in\RR^{n\times d}$ and $\bX^{-i,:}\in\RR^{(n-1)\times d}$.
Similarly, for $j\in[d]$, let $\bX^{:, (j)}$ and $\bX^{:,-j}$ denote the data
matrices with the $j$-th feature replaced or removed, and write
$\cD^{:, (j)}$ and $\cD^{:,-j}$ for the associated datasets.
Throughout the paper, our primary convention is feature or instance removal.
Replacement-based variants are also common in the stability literature, and
we discuss their relationship to our removal-based convention in
Section~\ref{sec:related-work}.

Following the standard algorithmic-stability setup, we let a learning algorithm
\(\cA\) map a dataset and a random seed to a predictor
\(f=\cA(\cD,\xi):\cX\to\cY\), where \(\xi\) collects all sources of algorithmic
randomness. Conditional on \(\xi\), the algorithm is deterministic; for example,
\(\xi\) may encode random initialization, data shuffling, or subsampling.
Classical instance stability quantifies the sensitivity of \(\cA\) to
instance-wise perturbations of \(\cD\). We recall this notion first, and defer a
discussion of related formulations to Section~\ref{sec:related-work}.

\begin{definition}\label{def:stability-instance}
An algorithm $\cA$ is $\phi$-instance stable if
\begin{align}\label{equ:stability-instance}
\mathrm{II}:= \frac{1}{n} \sum_{i = 1}^n
\mathbb{E} \left[\left( f(\bx)- f^{-i,  : }(\bx)\right)^2\right] \leq \phi^2,
\end{align}
where $f=\cA(\cD;\xi)$ and $f^{-i,:}=\cA(\cD^{-i,:};\xi)$ are evaluated using
the same algorithmic randomness $\xi$. 
\end{definition}

The expectation is interpreted according
to the stability convention being used. In the uniform convention, the
expectation is over algorithmic randomness and the bound holds for fixed
$\cD$ and $\bx$. In distributional variants, the expectation may also average
over the randomness in $\cD$ and $\bx$. With some abuse of notation, we call the left-hand side of
\eqref{equ:stability-instance} the instance instability of $\cA$, with smaller
values indicating greater instance stability.
The uniform convention is the standard instance-stability convention in the
algorithmic-stability literature, while the distributional convention is weaker
and is used in some of our average-case results, particularly in
Section~\ref{sec:OLS}. The average over \(i\) makes the criterion invariant to
permutations of the dataset, even when \(\cA\) itself is not
permutation-invariant. If \(\cA\) is permutation-invariant, then
\eqref{equ:stability-instance} reduces to
\(\mathbb{E}[(f(\bx)-f^{-i,:}(\bx))^2]\le \phi^2\) for any \(i\in[n]\), under
the same expectation convention.

To study feature bagging, we need the analogous notion along the feature axis.
Rather than modifying training instances, we remove one feature from the input
representation and measure the resulting change in prediction.

\begin{definition}\label{def:stability-feature}
An algorithm $\cA$ is $\phi$-feature stable if
\begin{align}\label{equ:stability-feature}
\mathrm{FI}:=\frac{1}{d} \sum_{j = 1}^d
\mathbb{E} \left[\left( f(\bx)- f^{:,  -j }(\bx)\right)^2\right] \leq \phi^2,
\end{align}
where $f=\cA(\cD ; \xi)$ and
$f^{:,  -j }=\cA\left(\cD^{:,  -j } ; \xi\right)$ are evaluated using the same
algorithmic randomness. By convention, we regard $f^{:,-j}$ as a predictor on
the original feature space that ignores the $j$-th feature. 
\end{definition}

The expectation
follows the same convention as in
Definition~\ref{def:stability-instance}. We call the left-hand side of
\eqref{equ:stability-feature} the feature instability of $\cA$, with smaller
values indicating greater feature stability.
The preceding definition fixes a shared seed across the full-data and
feature-removed runs. We next allow these two runs to use
coupled randomness. This is useful when removing a feature changes the
admissible resampling space, as in the linear-model analysis of
Section~\ref{sec:OLS}.

\begin{definition}[Coupled feature stability]\label{def:coupled-feature-stability}
For a dataset \(\cD\), let \(\Xi(\cD)\) denote the seed space used to run
\(\cA\) on \(\cD\). For each feature \(j\), let \(\Gamma_j(\cD)\) be a coupling
on \(\Xi(\cD)\times\Xi(\cD^{:,-j})\), that is, a joint law for the seeds
\((\xi,\xi^j)\) used in the full-data and feature-removed runs.
An algorithm \(\cA\) is \(\phi\)-feature stable under the coupling family
\(\Gamma=\{\Gamma_j(\cD)\}_{j=1}^d\) if
\begin{align}
\frac{1}{d}\sum_{j=1}^d
\mathbb{E}
\left[
\left(
\cA(\cD;\xi)(\bx)
-
\cA(\cD^{:,-j};\xi^j)(\bx)
\right)^2
\right]
\leq \phi^2,
\label{equ:coupled-feature-stability}
\end{align}
where, in the \(j\)-th summand, \((\xi,\xi^j)\sim\Gamma_j(\cD)\), and any
additional averaging follows the same convention as above. The left-hand side is
the coupled feature instability under \(\Gamma\).
\end{definition}

Definition~\ref{def:stability-feature} is recovered as the shared-randomness
special case of Definition~\ref{def:coupled-feature-stability}: after
identifying the two seed spaces, \(\Gamma_j(\cD)\) is the diagonal coupling
\(\xi^j=\xi\). Unless otherwise stated, FI refers to this shared-randomness
convention. Section~\ref{sec:OLS} is an exception. In the linear-model analysis,
the reduced-feature run shares the unaffected sketching randomness but redraws
the feature sketch from the reduced feature set, corresponding to a
affected-axis resampling coupling with \(\xi^j\ne\xi\). In contrast,
Section~\ref{sec:model-free-feature-stability} returns to the
shared-randomness coupling \(\xi^j=\xi\), because its goal is a conditional
algorithmic stability guarantee.

The FI criterion in \eqref{equ:stability-feature} differs from the II criterion
in \eqref{equ:stability-instance} in two important respects. First, FI perturbs
only the input matrix and leaves the response vector \(\by\) unchanged. Second,
the average over features is essential: features are generally non-exchangeable,
and learning algorithms are typically not permutation-invariant along the
feature dimension.

\subsection{Why Do We Care about Feature Stability?}
\label{sec:exp_for_feature_stability_meaning}


Feature stability is motivated by several practical and methodological concerns.
First, relevant features may be absent, unavailable, or deliberately excluded;
feature inclusion can therefore depend on preprocessing decisions, acquisition
constraints, and analyst judgment
\citep{jeng2024weak, ma2025locally, shen2025can, poudel2025multimodal}. The FI
measure quantifies how sensitive an algorithm is to this feature-level
variation. Second, feature entries may themselves be perturbed because of
corruption \citep{mcwilliams2014fast}, missingness
\citep{little2019statistical, chen2023handling}, or measurement error
\citep{hou2026nonparametric}. Algorithms with stronger feature stability are
therefore more robust to such perturbations.


Beyond these practical considerations, we empirically examine whether FI carries
information about generalization, paralleling a central motivation for the
literature on instance stability
\citep{bousquet2002stability, hardt2016train}. In a controlled synthetic
regression benchmark, specifically the MARSadd random-forest
benchmark \citep{friedman1991multivariate, mentch2020randomization,curth2024random},
where the signal combines a weak component distributed across many features
with a concentrated nonlinear component, we vary model complexity and
subsampling behavior in random forest models, and measure how FI and II account
for variation in the generalization gap. For each configuration of
\texttt{max\_features}, \texttt{max\_samples}, and \texttt{max\_depth}, we
compute the train-test MSE gap together with II and FI by retraining after
removing one instance or one feature. We then quantify whether FI explains
generalization variation beyond II using linear \(R^2\), mutual information,
and residual-based partial mutual information, with Gaussian and permuted-noise
controls. Section~\ref{app:pmi_fs_generalization} provides the detailed
experiments and analysis. The results indicate that FI captures
generalization-relevant variation that is not explained by II alone. Across
Tables~\ref{tab:pmi-synthetic-full}--\ref{tab:dr2-real-full}, the conditional
contribution of FI after accounting for II remains positive in all displayed
synthetic and real-data settings. The noise-control results in
Tables~\ref{tab:pmi-noise-synthetic-full}--\ref{tab:dr2-noise-real-full}
further show that replacing FI with unrelated Gaussian or permuted noise does
not replicate this signal. These findings indicate that FI and II provide
complementary measures for capturing generalization-relevant variation.





\subsection{Instance and Feature Bagging}\label{sec:subsampling}

We now formalize the bagging schemes analyzed in the paper. For both instances
and features, we use sub-bagging: each base learner is trained on a uniformly
sampled fixed-size subset without replacement. Other bagging schemes can lead to
similar theoretical conclusions \citep{soloff2024bagging}, but fixed-size
subsampling is especially natural for features. Classical bagging
\citep{breiman1996bagging} and Poissonized bagging \citep{oza2001online} sample
with replacement, which can select the same feature multiple times and create
non-identifiability issues. Bernoulli sub-bagging
\citep{harrington2003online, wu2025ensemble} produces random subset sizes, which
can complicate models that require a fixed input dimension, such as neural
networks.

\noindent\textbf{Instance bagging.}
We first consider instance bagging, the classical bagging setting. In each
sub-bagging round, \(m\) of the \(n\) instances are sampled uniformly without
replacement. Let \(p=m/n\) denote the instance-subsampling ratio, and write
$
\bmu^{(b)} = (\bmu_1^{(b)}, \dots, \bmu_m^{(b)})^{\top}
$
for the indices selected in the \(b\)-th subsample.
For any algorithm $\cA_0$, we define its evaluation on the $b$-th subsample as
$
\cA(\cD, \xi^{(b)}) := \cA_0(\cD^{\bmu^{(b)}, :}),
$
where
\(\cD^{\bmu^{(b)}, :}
= \{(\bx_{\bmu^{(b)}_1}, y_{\bmu^{(b)}_1}), \dots,
(\bx_{\bmu^{(b)}_m}, y_{\bmu^{(b)}_m})\}\). See
Figure~\ref{fig:illustration_subsample_1} for an illustration. This construction
defines a randomized algorithm \(\cA\), with randomness induced by instance
subsampling. Let \(f^{(b)}:=\cA(\cD,\xi^{(b)})\) be the predictor trained in the
\(b\)-th round. The bagged predictor averages \(B\) such predictors:
\begin{align}
\label{equ:bagged-estimator}
f^B(\bx) := \frac{1}{B} \sum_{b=1}^B f^{(b)}(\bx).
\end{align}
We refer to \(B\) as the number of bagging rounds. The infinite-bagging limit is
the corresponding expectation over the subsampling randomness:
\begin{align}
\label{equ:infinite-bagged-estimator}
\cA_\infty(\cD)(\cdot) := \mathbb{E}_{\xi^{(b)}}[f^{(b)}(\cdot)] = \mathbb{E}_{\xi^{(b)}}[\cA(\cD, \xi^{(b)})(\cdot)].
\end{align}
We write its output as \(f^\infty\), so
\(f^\infty(\bx)=\cA_\infty(\cD)(\bx)
=\mathbb{E}_{\xi^{(b)}}[f^{(b)}(\bx)]\). This predictor is the large-\(B\)
limit of \(f^B\).

\noindent\textbf{Feature bagging.}
Feature bagging is defined analogously.
Let $s \leq d$ denote the number of selected features and $q = s/d$ the subsampling ratio.
In each round, a subset $\bnu^{(b)} = (\bnu^{(b)}_1,\dots,\bnu^{(b)}_s)$ is drawn
uniformly at random without replacement.
The corresponding randomized algorithm is
$\cA(\cD, \xi^{(b)}) := \cA_0(\cD^{:, \bnu^{(b)}})$.
Here, $\cD^{:, \bnu^{(b)}}$ denotes the dataset restricted to the selected
features and is illustrated in Figure~\ref{fig:illustration_subsample_2}.
The finite and infinite bagged predictors are defined as in
\eqref{equ:bagged-estimator} and \eqref{equ:infinite-bagged-estimator}.
At prediction time, the same test point is evaluated by all base learners, but
the \(b\)-th learner only receives the coordinates \(\bx^{\bnu^{(b)}}\) that
match its sampled feature subset.  The final prediction is the average of these
base-learner predictions.  This differs from instance bagging, where every base
learner receives the full test point.

\noindent\textbf{General bagging.} Instance and feature bagging can be combined
by jointly subsampling both axes, yielding an ensemble whose randomness arises
from both sources; see Figure~\ref{fig:illustration_subsample_3}.
Section~\ref{sec:OLS} explores their interaction.
In random forests, feature bagging is implemented recursively rather than once
per tree: as Figure~\ref{fig:illustration} illustrates, all splits in a tree
share the same instance subsample, while feature subsampling is performed
independently at each node. Section~\ref{sec:model-free-feature-stability}
analyzes this recursive feature-subsampling strategy.


\section{Stability under Model Assumptions} \label{sec:OLS}

This section studies the effect of bagging on the II and FI measures in linear regression,
where explicit model assumptions allow sharp comparisons.


\subsection{Basic Settings}\label{sec:linear-basic-setting}

\noindent\textbf{Generating Distribution.}
We consider the standard linear regression model \citep{lejeune2020implicit, chen2023sketched}:
\begin{align} \label{equ:linear-model}
y_i = \bx_i^{\top} \bbeta^* + \varepsilon_i,
\end{align}
where $\bx_i \in \mathbb{R}^d$ is the feature vector, $y_i \in \mathbb{R}$ is the response, $\varepsilon_i \sim \mathcal{N}(0, \sigma^2)$ is Gaussian noise, and ${(\bx_i, \varepsilon_i)}_{i=1}^n$ are i.i.d.\ copies of $(\bx, \varepsilon)$.
Additionally, we assume that $\bx \sim \cN(\zero, \bI)$, $\bbeta^* \sim \mathcal{N}(\zero, \bI/d)$, and that $\bx$, $\bbeta^*$, and the noise $\varepsilon$ are mutually independent. The Gaussian assumptions on $\bx$ and $\varepsilon$ can be generally relaxed to bounded moment conditions under which the same theoretical results hold, a phenomenon known as universality \citep{hastie2022surprises, chen2023sketched, wu2025ensemble}. To avoid introducing technical overhead, we adopt the Gaussian setting.
Let $\bX = (\bx_1, \ldots, \bx_n)^{\top}$ denote the feature matrix, $\by = (y_1, \ldots, y_n)^{\top}$ the response vector, and $\bvarepsilon = (\varepsilon_1, \ldots, \varepsilon_n)^{\top}$ the noise vector. We work under the proportional asymptotic regime, where $d, n \to \infty$ with $d/n \to \gamma$ for some constant $\gamma$. This regime has been widely adopted in recent studies analyzing the exact risk behavior of linear models; see, e.g., \citet{hastie2022surprises, chen2023sketched, wu2025ensemble}.

\begin{table}[t!]
\centering
\caption{Closed-form asymptotic limits for the four \(\Delta_{\ell}\)-terms in
Theorem~\ref{thm:stabilitylinear-decomp-subscript}.
}
\label{tab:linear-stability-limits}
{\setlength{\tabcolsep}{11pt}
\renewcommand{\arraystretch}{1.50}
\makebox[\textwidth][c]{\resizebox{0.96\textwidth}{!}{%
\begin{tabular}{@{}lcccc@{}}
\toprule
\textbf{Setting and Case} &
\(\Delta_\ell^{V=}\) &
\(n\Delta_\ell^{V\neq}\) &
\(\Delta_\ell^{B=}\) &
\(n\Delta_\ell^{B\neq}\) \\
\midrule
II ($\ell = -i$),\,\, \(\gamma q<p\) &
\(\frac{2\gamma q(1-p)}{(p-\gamma q)(1-\gamma q)}\) &
\(\frac{\gamma q^2}{(1-\gamma q^2)^2}\) &
\(\frac{2\gamma q(1-q)(1-p)}{(p-\gamma q)(1-\gamma q)}\) &
\(\frac{\gamma q^2(1-q)^2}{(1-\gamma q^2)^2}\) \\
II ($\ell = -i$),\,\, \(\gamma q>p\) &
\(\frac{2\gamma q(p-p^2)}{(\gamma q-p)(\gamma q-p^2)}\) &
\(\frac{\gamma p^2}{(\gamma-p^2)^2}\) &
\(\frac{2p(1-p)(\gamma^2q-2\gamma pq+p^2)}{\gamma(\gamma q-p)(\gamma q-p^2)}\) &
\(\frac{p^2(\gamma-p)^2}{\gamma(\gamma-p^2)^2}\) \\
FI ($\ell = -j$), \(\gamma q<p\) &
\(\frac{2\gamma pq(1-q)}{(p-\gamma q)(p-\gamma q^2)}\) &
\(\frac{q^2}{(1-\gamma q^2)^2}\) &
\(\frac{2pq(1-q)(p+\gamma-2\gamma q)}{(p-\gamma q)(p-\gamma q^2)}\) &
\(\frac{q^2(1+\gamma-2\gamma q)}{\gamma(1-\gamma q^2)^2}\) \\
FI ($\ell = -j$), \(\gamma q>p\) &
\(\frac{2\gamma p(1-q)}{(\gamma q-p)(\gamma-p)}\) &
\(\frac{p^2}{(\gamma-p^2)^2}\) &
\(\frac{2p(1-q)}{\gamma q-p}\) &
\(\frac{p^2(1-2p+\gamma)}{(\gamma-p^2)^2}\) \\
\bottomrule
\end{tabular}}}}
\end{table}

\noindent\textbf{Bagged least square estimator.}
Following \citet{wu2025ensemble}, we reformulate the bagged least square estimator as an average of sketched least square estimators.
 As described in Section~\ref{sec:subsampling}, let $\bmu^{(b)}$ and $\bnu^{(b)}$ denote the instance and feature indices selected in the $b$-th subsample. Define the sketching matrices $\bU_b \in \mathbb{R}^{n \times m}$ and $\bV_b \in \mathbb{R}^{d \times s}$ as the corresponding selection matrices: $\bU_b$ has ones at entries $(\bmu^{(b)}_i, i)$ for $i \in [m]$, and zeros elsewhere; similarly, $\bV_b$ has ones at entries $(\bnu^{(b)}_j, j)$ for $j \in [s]$.
The matrix $\bU_b^{\top} \bX \bV_b \in \mathbb{R}^{m \times s}$ extracts a submatrix of $\bX$ formed by the sampled rows and columns.

For each subsample, we compute the sketched minimum-norm least squares estimator
\[
\bbeta^{(b)} = \bV_b \left( \bU_b^{\top} \bX \bV_b \right)^{\dagger} \bU_b^{\top} \by,
\]
where $(\cdot)^{\dagger}$ denotes the Moore–Penrose pseudoinverse.
The final bagged estimator is obtained by averaging over $B$ such subsamples $\bbeta := \frac{1}{B} \sum_{b=1}^B \bbeta^{(b)}$.

\noindent\textbf{Affected-axis resampling coupling.}
This section uses an affected-axis resampling coupling for the sketching
randomness: after an instance or feature is removed, we redraw the sketching
matrix on the perturbed axis from the reduced index set, while keeping the
sketching matrix on the other axis fixed. Thus the reduced run is neither a
shared-seed run nor a fully independent rerun.
After removing the $i$-th instance from the dataset, we independently resample a sketching matrix $\bU_b^{\prime}$ from the reduced index set $[n]\setminus\{i\}$.
The resulting bagged estimator after instance removal is
\begin{align}\label{equ:instance-removed-ols}
\bbeta^{-i, :} := \frac{1}{B} \sum_{b=1}^B \bV_b \left( \bU_b^{\prime\top} \bX \bV_b \right)^{\dagger} \bU_b^{\prime\top} \by.
\end{align}
Similarly, removing the $j$-th feature corresponds to resampling $\bV_b^{\prime}$ from $[d]\setminus\{j\}$, yielding
\begin{align}\label{equ:feature-removed-ols}
\bbeta^{:, -j} := \frac{1}{B} \sum_{b=1}^B \bV_b^{\prime} \left( \bU_b^{\top} \bX \bV_b^{\prime} \right)^{\dagger} \bU_b^{\top} \by.
\end{align}


\subsection{Stability Analysis}

Under this affected-axis resampling coupling, we study the expected instability
measures
\(\frac{1}{n}\sum_{i=1}^n
\mathbb{E}_{\bx,\bbeta,\bbeta^{-i,:}}
[(\bbeta^{\top}\bx-(\bbeta^{-i,:})^{\top}\bx)^2]\) for II and
\(\frac{1}{d}\sum_{j=1}^d
\mathbb{E}_{\bx,\bbeta,\bbeta^{:,-j}}
[(\bbeta^{\top}\bx-(\bbeta^{:,-j})^{\top}\bx)^2]\) for FI.
Since the features are isotropic, taking expectation over $\bx$ reduces both
quantities to
\begin{align} \label{equ:expected-stability}
\text{Instability} = \mathbb{E} \left[\left\|\bbeta- \bbeta^{\ell} \right\|_2^2\right],
\quad \ell \in \{-i,-j\},
\end{align}
where the subscript of  \(\EE_{\bbeta,\bbeta^\ell}\) is omitted for brevity, \(\ell=-i\) denotes $i$-th instance removal, and \(\ell=-j\) denotes  $j$-th  feature removal.
We slightly abuse notation by writing \(\bbeta^\ell\) for either
\(\bbeta^{-i,:}\) or \(\bbeta^{:,-j}\). Rather than upper bounding this
instability, we characterize the exact limiting value of
\eqref{equ:expected-stability} under proportional asymptotics. Our first result
gives a finite-\(B\) decomposition and the corresponding asymptotic limits.



\begin{theorem}\label{thm:stabilitylinear-decomp-subscript}
Assume the Gaussian linear model in Section~\ref{sec:linear-basic-setting}.
Then, for $\ell\in\{-i,-j\}$,
\begin{align}\label{eq:finiteB-decomp-subscript}
 \mathbb{E}\!\left[\|\bbeta-\bbeta^{\ell}\|_2^2\right] =
\sigma^2\!\left(\frac{\Delta_{\ell}^{V=}}{B} + \frac{B-1}{B}\Delta_{\ell}^{V\neq}\right)
 \quad +  \frac{\Delta_{\ell}^{B=}}{B} +
\frac{B-1}{B}\Delta_{\ell}^{B\neq}.
\end{align}
Moreover, as $n,d\to\infty$ with $d/n\to\gamma$,
\begin{align}\label{eq:finiteB-exact-term}
\Delta_{\ell}^{V=},  \Delta_{\ell}^{B=}\to\Theta(1),
\text{ and }
 \Delta_{\ell}^{V\neq}, \Delta_{\ell}^{B\neq}\to \Theta(\frac{1}{n}),
\end{align}
where $\Theta(\cdot)$ denotes asymptotic equality up to constants.
The exact asymptotic limits of $\Delta_{\ell}^{V=}$, $n\Delta_{\ell}^{V\neq}$,
$\Delta_{\ell}^{B=}$, and $n\Delta_{\ell}^{B\neq}$ are given in
Table~\ref{tab:linear-stability-limits} for instance removal and feature
removal.
\end{theorem}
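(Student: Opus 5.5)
The plan is to write $\bbeta-\bbeta^{\ell}=\frac{1}{B}\sum_{b=1}^B \bD_b$ with $\bD_b:=\bbeta^{(b)}-\bbeta^{(b),\ell}$, the difference between the $b$-th sketched estimator and its coupled counterpart under the affected-axis resampling coupling. Expanding the square gives
\[
\mathbb{E}\|\bbeta-\bbeta^{\ell}\|_2^2=\frac{1}{B^2}\sum_{b}\mathbb{E}\|\bD_b\|_2^2+\frac{1}{B^2}\sum_{b\neq b'}\mathbb{E}\langle \bD_b,\bD_{b'}\rangle .
\]
Conditionally on $(\bX,\bbeta^*,\bvarepsilon)$, the pairs of sketches $(\bU_b,\bV_b,\bU_b',\bV_b')$ are i.i.d.\ across $b$. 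Hence every diagonal term has a common value and every off-diagonal term has a common value, and this yields the $1/B$ and $(B-1)/B$ weights in \eqref{eq:finiteB-decomp-subscript}.

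Next, substitute $\by=\bX\bbeta^*+\bvarepsilon$. Each $\bD_b$ is then the sum of a linear map of $\bbeta^*$ and a linear map of $\bvarepsilon$, with the same sketch-dependent matrices. Since $\bvarepsilon$ and $\bbeta^*$ are independent and centered, cross terms vanish. The noise part contributes $\sigma^2$ times a trace, which defines $\Delta^{V=}_\ell$ and $\Delta^{V\neq}_\ell$. The signal part contributes $\frac{1}{d}$ times a trace, using $\mathbb{E}\bbeta^*\bbeta^{*\top}=\bI/d$, which defines $\Delta^{B=}_\ell$ and $\Delta^{B\neq}_\ell$. This establishes the finite-$B$ identity exactly, with the four $\Delta$'s given as explicit expected traces of products of sketched pseudoinverses.

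For the asymptotics, the diagonal terms reduce to quantities like $\mathbb{E}\,\mathrm{tr}\big((\bU^\top\bX\bV)^\dagger(\bU^\top\bX\bV)^{\dagger\top}\big)$ for an $m\times s$ Gaussian block, together with cross terms between the original and resampled sketches. After resampling one index, the original and resampled sketches share about a $p$ (respectively $q$) fraction of the rows (columns). The diagonal terms therefore behave like the risk of an estimator compared with a partially resampled copy. They are $\Theta(1)$, and I would compute them with the known Marchenko--Pastur formulas for $\mathrm{tr}(\bZ^\top\bZ)^{-1}$ (aspect ratio $\gamma q/p$, which gives the two cases $\gamma q\lessgtr p$) and for projections onto overlapping subsets. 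These are the sketched ridgeless results of \citet{chen2023sketched} and \citet{wu2025ensemble}, which I would take as given.

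The off-diagonal terms involve two independent sketches, and here there is cancellation. The difference $\bD_b$ averaged over its own sketch is the leave-one-out change of the infinite-bagged estimator. I would show that this change is $O(1/\sqrt n)$ in norm, for example through Sherman--Morrison rank-one updates applied to each subsample that contains index $i$ or $j$, and then average. This gives $\Delta^{\cdot\neq}_\ell=\Theta(1/n)$. The exact constants would come from deterministic equivalents for the overlap of two independent sketches, where the overlap fractions are $p^2$ and $q^2$; this is the source of the $(1-\gamma q^2)^{-2}$ and $(\gamma-p^2)^{-2}$ denominators.

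The main obstacle is this last step. It requires extracting the exact $1/n$-order constant of the off-diagonal terms, which needs second-order (rank-one perturbation) accuracy in the free-probability equivalents rather than just first-order limits. Handling the pseudoinverse in both regimes $\gamma q<p$ and $\gamma q>p$ adds to the difficulty. I would first try leave-one-out identities applied on the event that index $i$ (or $j$) lies in both sketches, and then combine them with the two-sketch deterministic equivalents of \citet{wu2025ensemble}.
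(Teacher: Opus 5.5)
Your finite-$B$ identity argument is the same as the paper's: split $\bbeta-\bbeta^{\ell}$ into $B$ diagonal and $B(B-1)$ off-diagonal pairs, then separate noise from signal via $\by=\bX\bbeta^*+\bvarepsilon$ and $\mathbb{E}[\bbeta^*\bbeta^{*\top}]=\bI/d$. The gap is in the asymptotic part, specifically the off-diagonal columns $n\Delta_\ell^{V\neq}$ and $n\Delta_\ell^{B\neq}$ of Table~\ref{tab:linear-stability-limits}. Your plan stops at ``the main obstacle''. The only concrete tool you offer there is a Sherman--Morrison bound of order $1/\sqrt n$ on the leave-one-out change of the infinite-bagged estimator. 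That yields $\Delta_\ell^{\cdot\neq}=O(1/n)$, but not the exact constants the theorem asserts. It does not even give the lower bound implicit in $\Theta(1/n)$. The second-order deterministic equivalents you would need for two overlapping sketched pseudoinverses, in both regimes $\gamma q\lessgtr p$, are not available to cite; as written, the plan does not prove the table.

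The idea you are missing is that the paper uses no deterministic equivalents at all: it evaluates every expected trace in closed form at finite $(n,d,m,s)$.
\begin{itemize}
\item Lemma~\ref{lem:lemmaoflejeune1} averages out the Gaussian block outside the shared index set. This collapses a product of two sketched pseudoinverses onto the overlap $\nu_1\cap\nu_2$, or onto $\mu_1\cap\mu_2$ when $\gamma q>p$.
\item The inverse-Wishart moment of Lemma~\ref{lem:mplaw} then gives expressions of the form $k/(N-k-1)$. Here $k$ is the overlap size and $N$ is the size of the union support on the other axis (Lemmas~\ref{lem:variance-resorted-lemma} and~\ref{lem:bias-resorted-lemma}).
\end{itemize}
Because the affected-axis coupling makes the full and reduced index sets nested, a removal only shifts $N$ by one, or shifts the expected overlap by about $p^2$ or $q^2$. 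The $\Theta(1)$ parts therefore cancel identically, and each off-diagonal term becomes an explicit difference of rational functions. For example, for instance removal with $\gamma q<p$ the term is $\frac{k}{n-k-2}-\frac{k}{n-k-1}$ with $k\approx s^2/d$. A first-order expansion of this gives $n\Delta_{-i}^{V\neq}\to\gamma q^2/(1-\gamma q^2)^2$ directly. The same exact formulas produce the $\Theta(1)$ diagonal limits, although for those your leading-order Marchenko--Pastur route would also suffice. You correctly identified the overlap fractions $p^2$ and $q^2$ as the source of the squared denominators. What you need on top of that is exact finite-sample trace formulas, so that the cancellation can be carried out explicitly rather than through rank-one perturbation asymptotics.
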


The proof of Theorem~\ref{thm:stabilitylinear-decomp-subscript} is given in
Appendix~\ref{app:linear-stability-proofs}. The theorem decomposes the expected
instance and feature instability of the bagged least-squares estimator into
noise-variance terms and signal-bias terms. We use the superscripts \(V\) and
\(B\) for these two contributions, respectively; the superscript \(B\) denotes
bias and should not be confused with the number of bagging rounds. The \(=\)
terms compare paired estimators built from the same subsampling realization and
receive weight \(1/B\), whereas the \(\neq\) terms compare different submodels
and receive weight \((B-1)/B\).

This decomposition separates the effects of subsampling and bagging. When
\(B=1\), the estimator is a single subsampled minimum-norm least-squares fit,
so only the same-submodel terms remain. These terms inherit the interpolation
behavior of an individual subsampled estimator and become large near the
effective interpolation threshold \(\gamma q=p\), as shown in
Figure~\ref{fig:stability_basic_subsampling}.
Both
II and FI have a double-descent-shaped instability curve, with a peak near
\(\gamma q=p\), where the subsampled design matrix is nearly singular. Changing
the subsampling ratios \(p\) and \(q\) moves this threshold through the
effective aspect ratio \(\gamma q/p\), but does not average away the
same-submodel instability. This is the main contrast with the bagged curves in
Figure~\ref{fig:stability_basic_bagged}.

Increasing \(B\) downweights these
same-submodel terms and transfers mass to the cross-submodel terms
\(\Delta_{\ell}^{V\neq}\) and \(\Delta_{\ell}^{B\neq}\). In the large-\(B\)
limit, the cross terms dominate and are of order \(1/n\), yielding the dashed
curves in Figure~\ref{fig:stability_basic_bagged}. Thus subsampling changes the
effective interpolation threshold and the limiting constants, while bagging
averages away the instability carried by individual subsampled estimators.
Classical stability-based generalization bounds
\citep{bousquet2002stability, elisseeff2005stability, mukherjee2006learning}
do not directly apply to linear regression because the output is unbounded.
Nonetheless, the generalization-error curves observed in
\citep{lejeune2020implicit, wu2025ensemble} show a related qualitative pattern:
subsampling shifts the interpolation threshold, while averaging over subsampled
estimators suppresses the associated peak.

\begin{figure}[!t]
\centering
\subfigure[II with both subsampling]{
\begin{minipage}{0.48\linewidth}
\centering
\includegraphics[width=\textwidth]{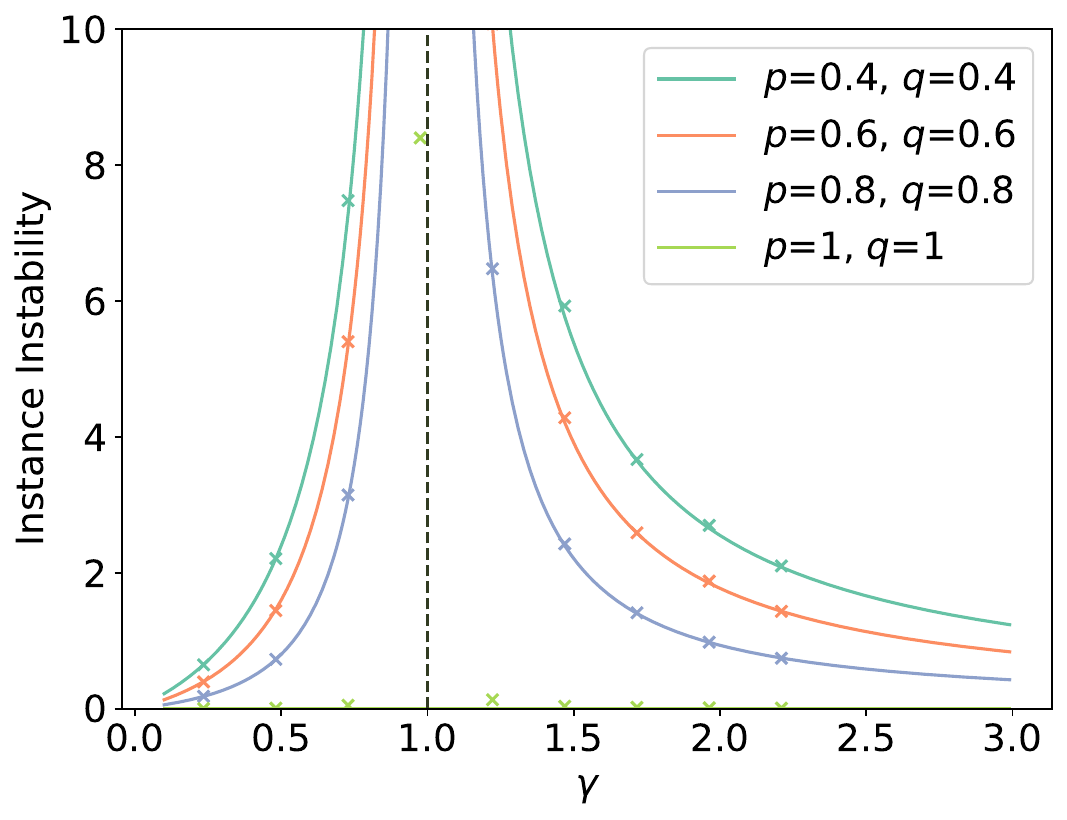}
\end{minipage}
\label{fig:instance_stability_basic_nonbagged_aligned}
}
\subfigure[II with instance subsampling]{
\begin{minipage}{0.48\linewidth}
\centering
\includegraphics[width=\textwidth]{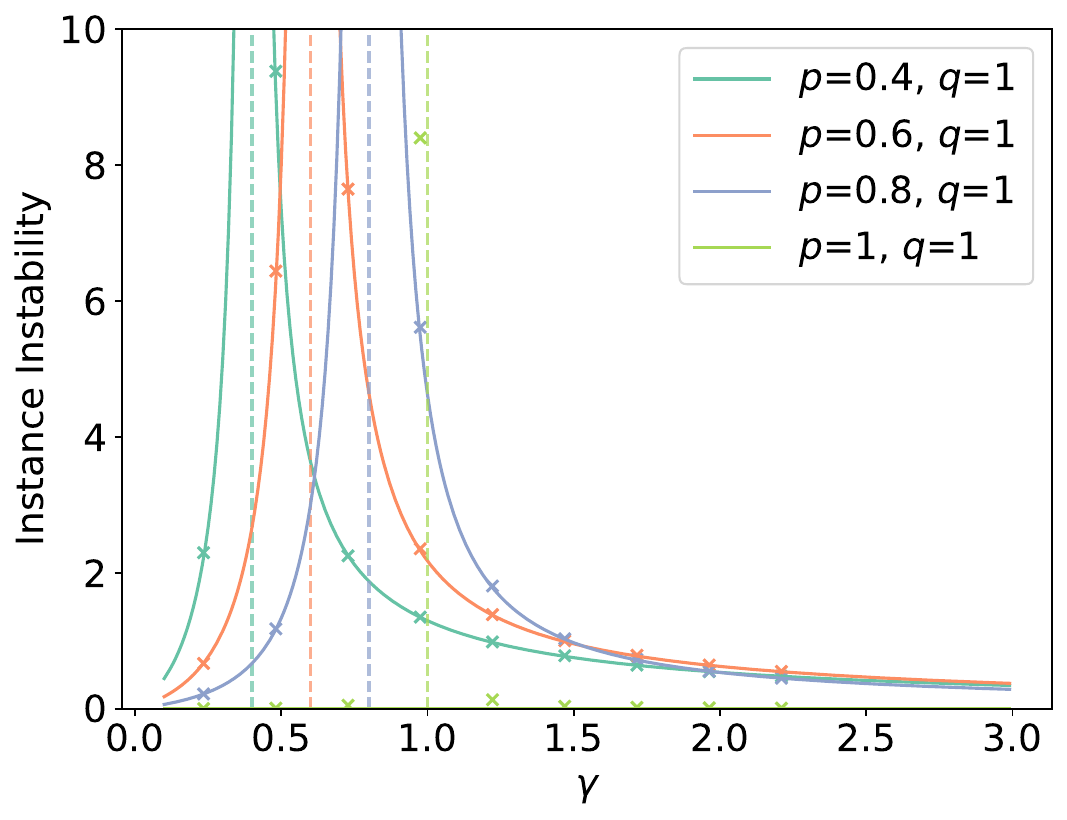}
\end{minipage}
\label{fig:instance_stability_basic_fixed_q_1_nonbagged}
}
\subfigure[FI with both subsampling]{
\begin{minipage}{0.48\linewidth}
\centering
\includegraphics[width=\textwidth]{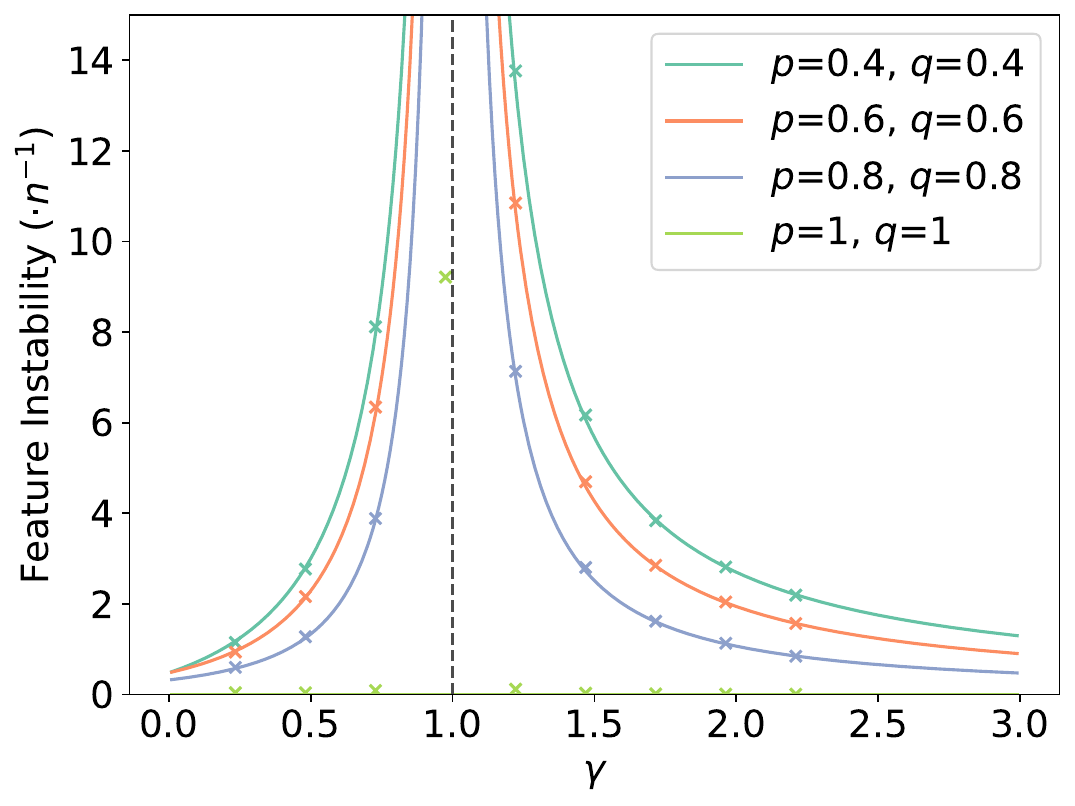}
\end{minipage}
\label{fig:feature_stability_basic_aligned_nonbagged}
}
\subfigure[FI with feature subsampling]{
\begin{minipage}{0.48\linewidth}
\centering
\includegraphics[width=\textwidth]{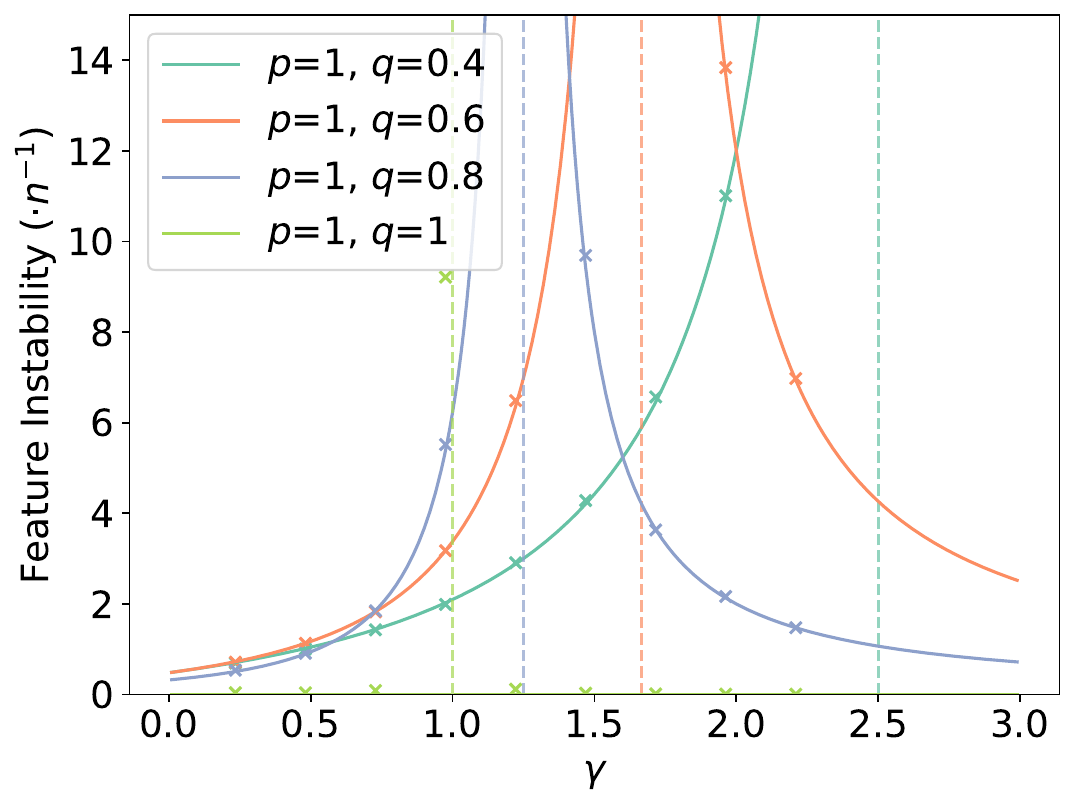}
\end{minipage}
\label{fig:feature_stability_basic_fixed_p_1_nonbagged}
}
\caption{
Instability of the single subsampled minimum-norm OLS estimator as a function
of the aspect ratio \(\gamma\). Colors indicate subsampling ratios \((p,q)\);
solid lines denote theoretical predictions and crosses denote empirical
averages over 300 realizations with \(\sigma=1\), \(n=400\), and
\(d\in\{100,200,\ldots,900\}\). Vertical dashed lines indicate the
interpolation threshold \(\gamma q=p\).
}
\label{fig:stability_basic_subsampling}
\end{figure}

The closed forms further show that feature removal has a different natural
scale from instance removal. In the large-\(B\) limit, most cross terms are
reported on the \(1/n\) scale, but the feature-removal bias term also reflects
the number of available coordinates. In the underparameterized regime
\(\gamma q<p\),
\[
n\Delta_{-j}^{B\neq}
\to
\frac{q^2 (1+\gamma - 2\gamma q)}{\gamma (1-\gamma q^2)^2}
\sim \frac{q^2}{\gamma}
\quad (\gamma\downarrow0).
\]
Since \(d=\gamma n\), the unscaled term behaves as
\(\Delta_{-j}^{B\neq}\asymp q^2/d\). Thus this component of FI is controlled
by the feature dimension, not only by the sample size: removing one feature is
more consequential when \(d\) is small relative to \(n\), while the same
feature-specific contribution vanishes as \(\gamma\to\infty\).

Finally, Theorem~\ref{thm:stabilitylinear-decomp-subscript} quantifies the
finite-\(B\) averaging gain, as illustrated in Figure~\ref{fig:stability_B}.
For small \(B\), the \(1/B\)-weighted same-submodel terms
\(\Delta_{\ell}^{V=}\) and \(\Delta_{\ell}^{B=}\) remain visible. As \(B\)
increases, the instability approaches its order-\(1/n\) large-\(B\) limit; away
from interpolation, taking \(B\) of order \(n\) is already sufficient to make
further bagging rounds have diminishing returns. Near the interpolation
boundary, however, the same-submodel constants can be large, so larger
ensembles may still be needed to flatten the finite-\(B\) peak. The limiting
formulas also show that \(p\) and \(q\) do not enter symmetrically: in the
underparameterized regime \(\gamma q<p\), the leading large-\(B\) constants are
primarily controlled by \(q\), whereas in the overparameterized regime
\(\gamma q>p\), they are primarily controlled by \(p\).

\begin{figure}[!t]
\vskip -0.1in
\centering
\begin{minipage}{0.32\textwidth}
\centering
\subfigure[II with both subsampling]{
\includegraphics[width=0.95\linewidth]{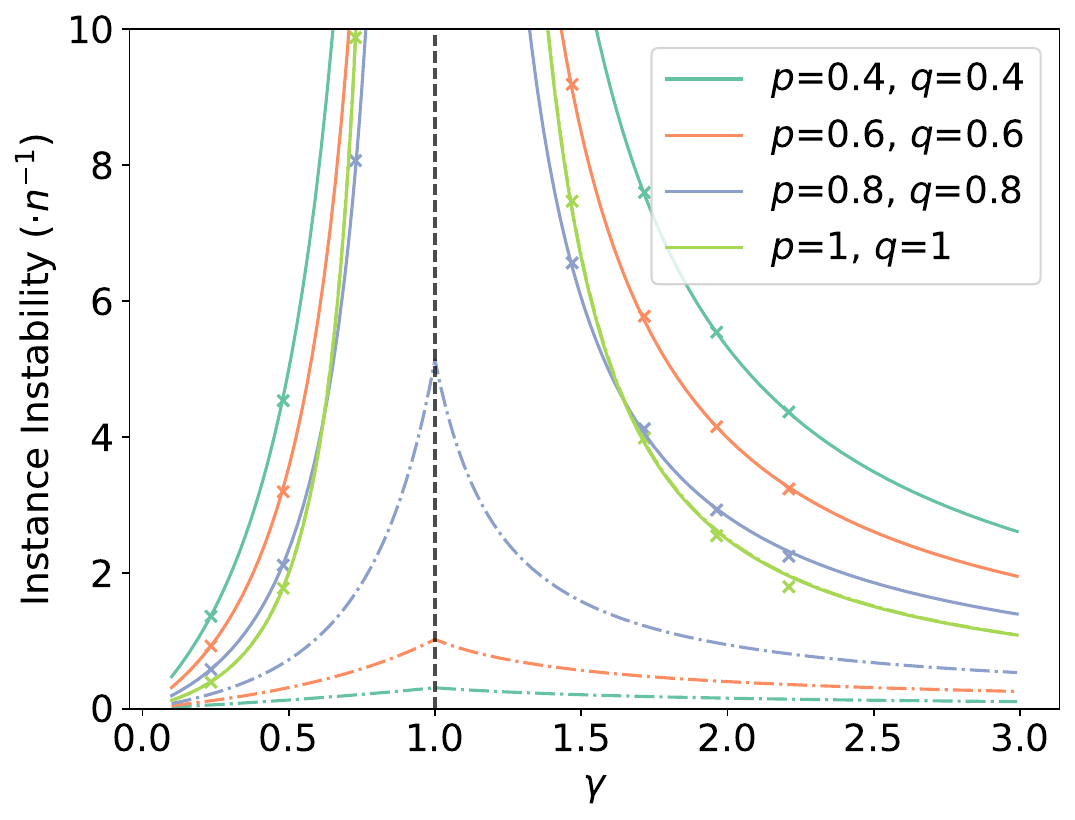}
\label{fig:instance_stability_basic_aligned}
}
\par\vspace{-0.04in}
\subfigure[II with instance subsampling]{
\includegraphics[width=0.95\linewidth]{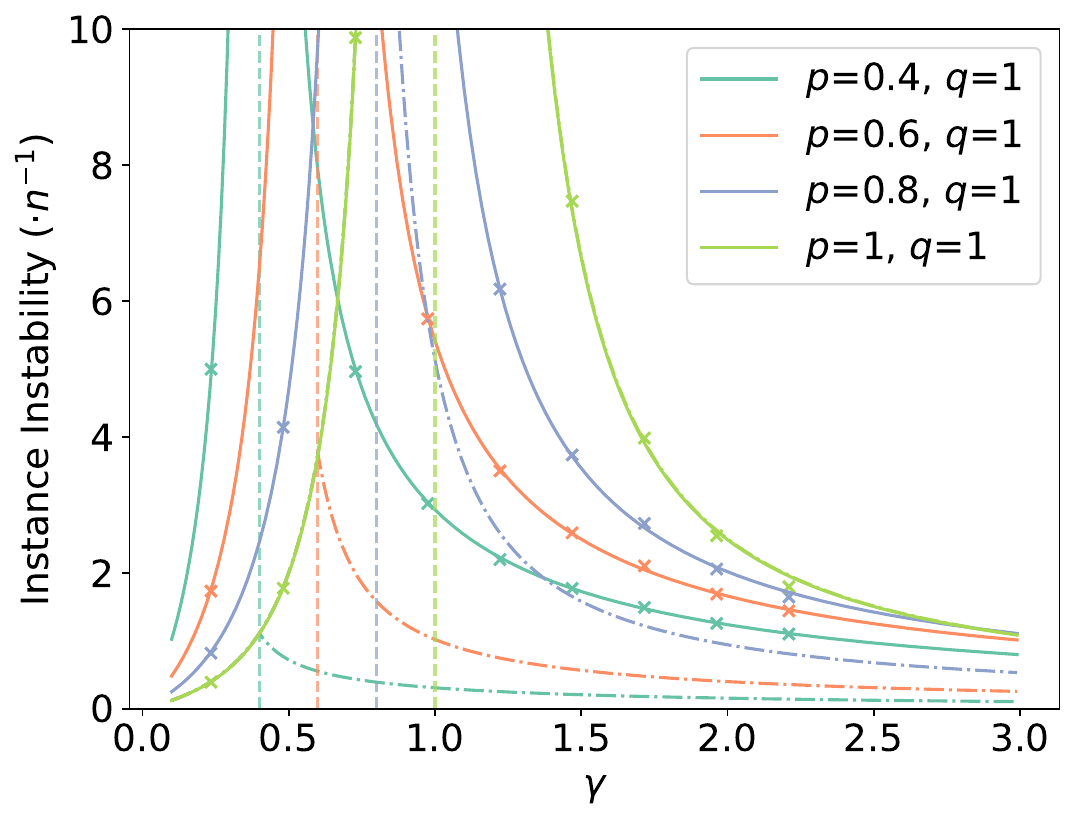}
\label{fig:instance_stability_basic_fixed_q_1}
}
\end{minipage}
\hfill
\begin{minipage}{0.32\textwidth}
\centering
\subfigure[FI with both subsampling]{
\includegraphics[width=0.95\linewidth]{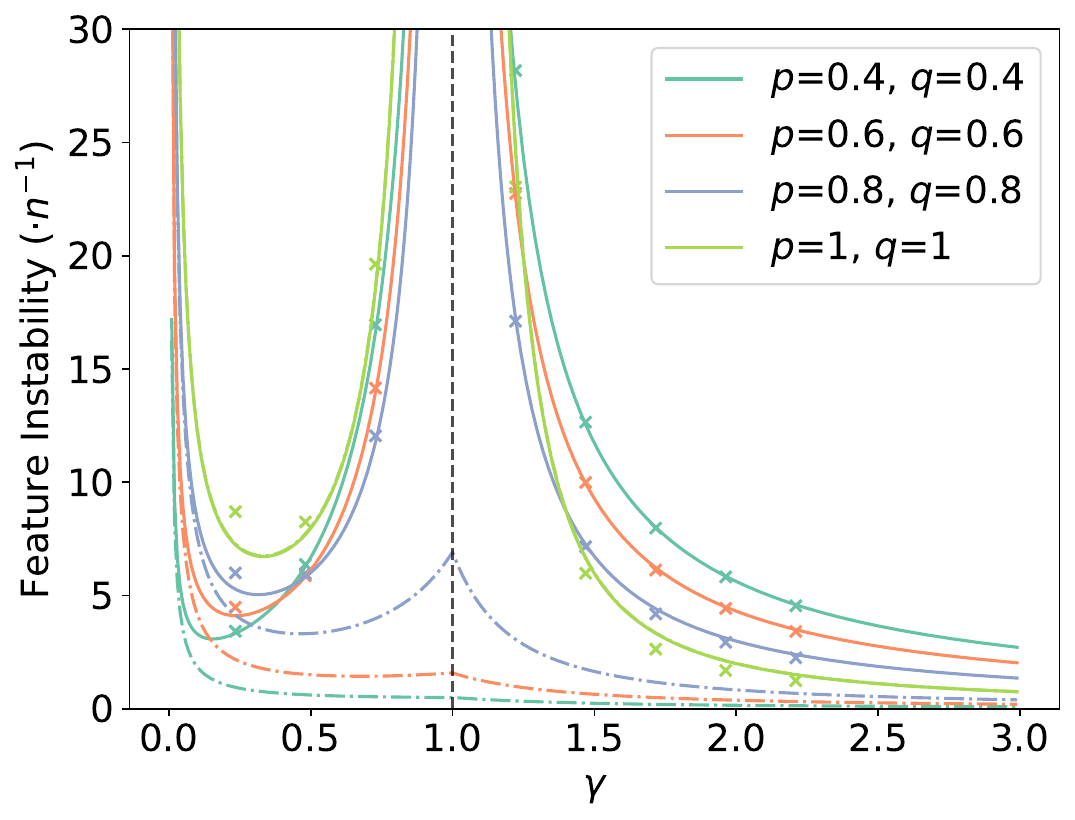}
\label{fig:feature_stability_basic_aligned}
}
\par\vspace{-0.04in}
\subfigure[FI with feature subsampling]{
\includegraphics[width=0.95\linewidth]{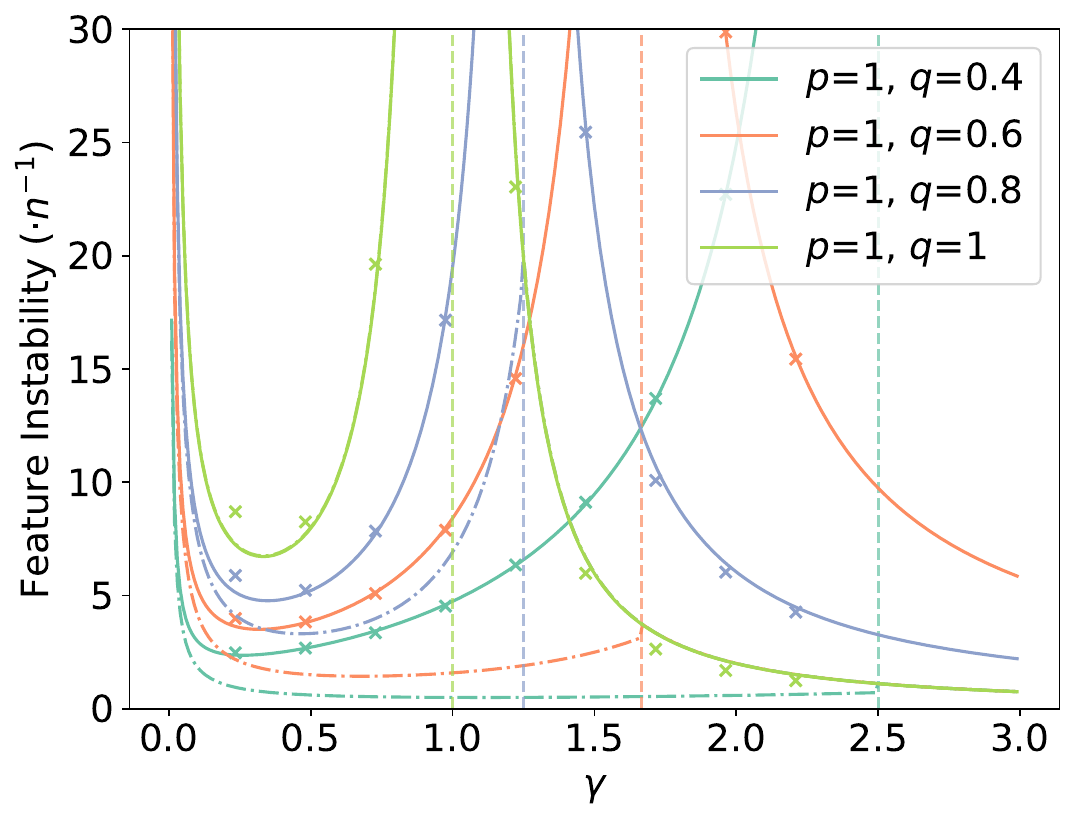}
\label{fig:feature_stability_basic_fixed_p_1}
}
\end{minipage}
\hfill
\begin{minipage}{0.32\textwidth}
\centering
\subfigure[II across bagging rounds]{
\includegraphics[width=0.99\linewidth]{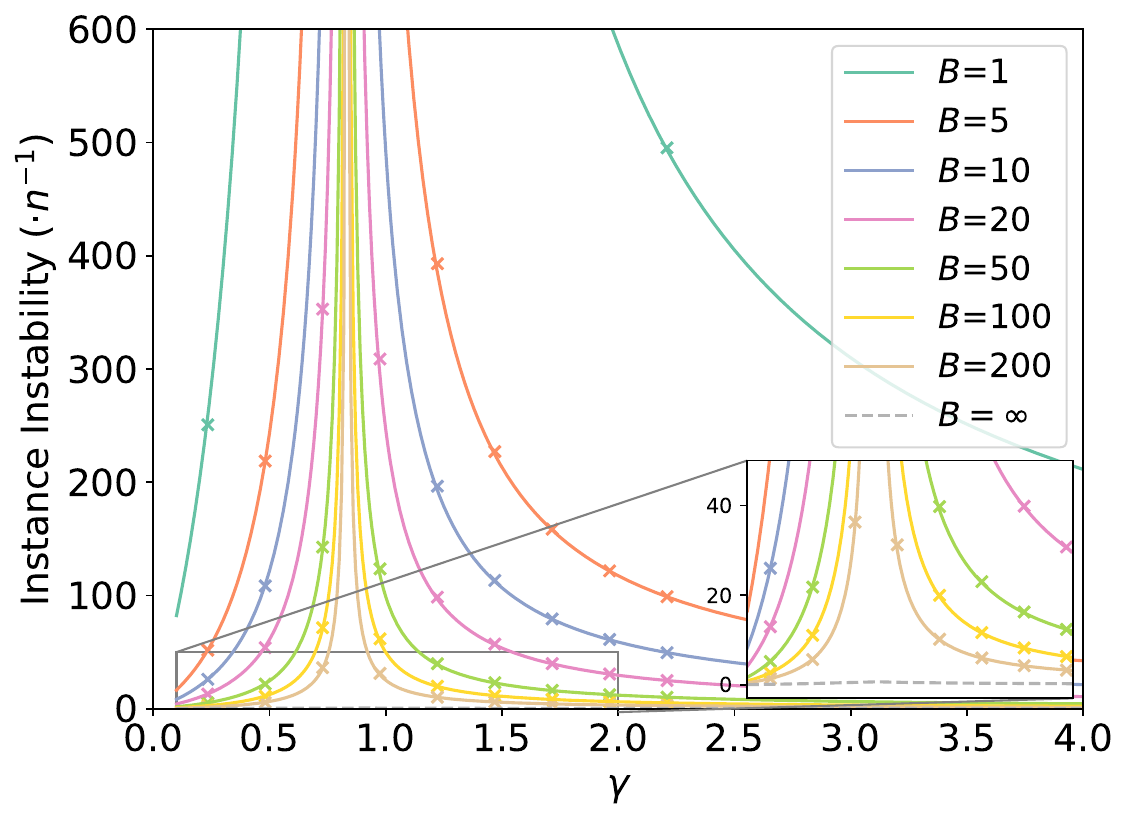}
\label{fig:instance_stability_B}
}
\par\vspace{-0.04in}
\subfigure[FI across bagging rounds]{
\includegraphics[width=0.99\linewidth]{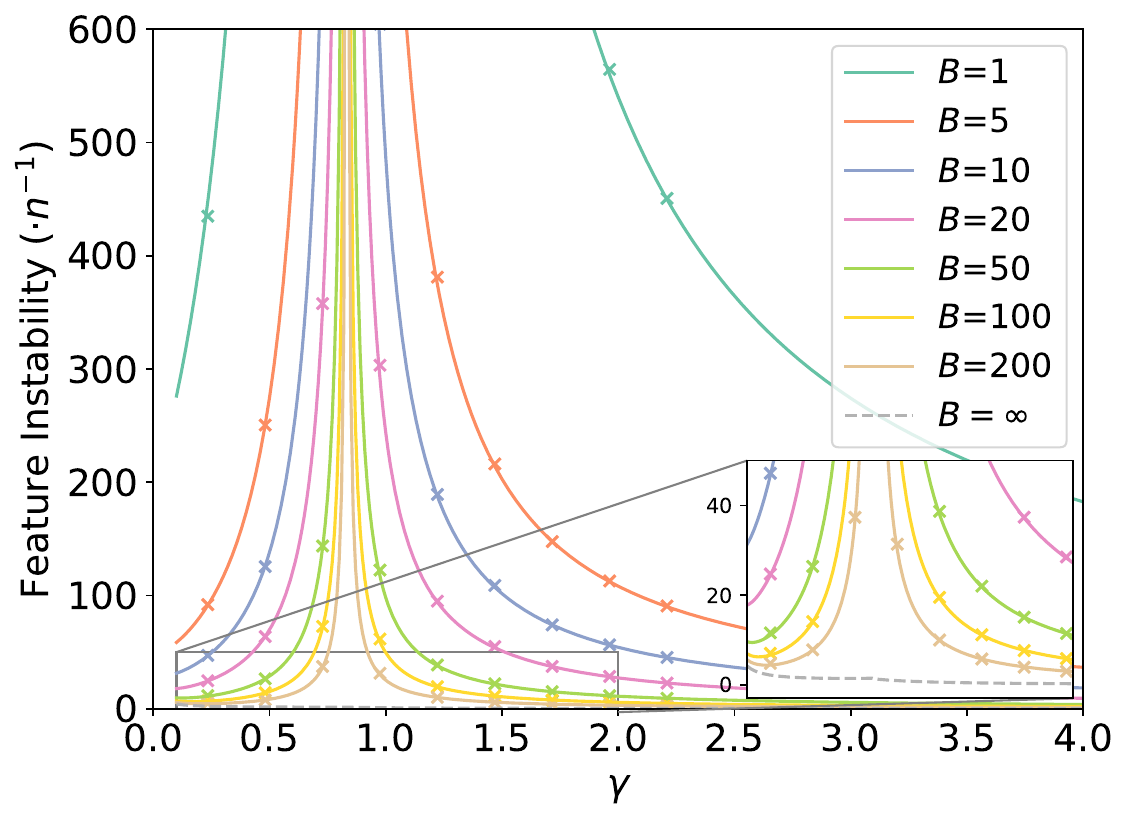}
\label{fig:feature_stability_B}
}
\end{minipage}
\vskip -0.02in
\caption{
Instability \(\EE[\|\bbeta-\bbeta^\ell\|_2^2]\) of the bagged least-squares
estimator. Panels (a)--(d) use \(B=200\) bagging rounds and vary the aspect
ratio \(\gamma\); solid lines denote theoretical predictions, crosses mark
empirical averages, and dashed curves show the infinite-bagging limit. Panels
(e)--(f) compare different numbers of bagging rounds on an \(n^{-1}\) scale;
colors indicate \(B\), solid lines denote theoretical predictions, and crosses
mark empirical averages. All panels use the same simulation setting as in
Figure~\ref{fig:stability_basic_subsampling}.
}
\vskip -0.1in
\label{fig:stability_basic_bagged}
\label{fig:stability_B}
\end{figure}

\section{Model-free Feature Stability}\label{sec:model-free-feature-stability}

In this section, we investigate FI under model-free settings.
We first develop a general-space feature-stability tool, with the scalar
\(\mathbb{R}\) result appearing as a special case
(Section~\ref{sec:extention-metric-space}), and then extend it to recursive
subsampling (Section~\ref{sec:recursivesubsampling}).
These results are applied to random forward selection and random forests in
Sections~\ref{sec:rfs} and~\ref{sec:random-forest-main}, and extended to
finite bagging rounds in Section~\ref{sec:finite-bagging}.
Unlike linear regression analysis in Section~\ref{sec:OLS}, the
model-free results use the shared-randomness, conditional-on-the-dataset
coupling in Definitions~\ref{def:stability-instance}--\ref{def:stability-feature},
namely \(\xi^j=\xi\).
The aim is therefore a uniform algorithmic guarantee
rather than an exact distributional perturbation calculation.

\subsection{Model-free Stability in General Spaces}
\label{sec:extention-metric-space}

This subsection develops the general-space stability tool used later for recursive feature subsampling, used in  random forests \citep{breiman2001random, zhang2024adaptive}. As illustrated in Figure~\ref{fig:illustration}, each tree uses a fixed subset of instances, while features are subsampled recursively at each node. This recursive, data-dependent feature subsampling makes feature-bagging analysis more challenging than instance-bagging analysis. To isolate the feature-subsampling effect, we focus on FI and first present the scalar \(\mathbb{R}\) guarantee before giving its Hilbert-space generalization.

\begin{proposition}[FI of feature bagging]\label{prop:stability-feature}
    Assume the base algorithm $\cA_0$ produces predictors satisfying
    $f(\bx)\in\cY\subseteq[-M,M]$ for every $\bx\in\cX$, and let $q<1$.
    Then the infinitely feature-bagged algorithm ${\cA}_{\infty}$ based on
    $\cA_0$ is $\phi$-feature stable whenever
    \begin{align*}
        \phi^2 \geq \frac{M^2}{d-1}\frac{q}{1-q}.
    \end{align*}
\end{proposition}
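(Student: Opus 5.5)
The plan is to reduce the feature instability of \(\cA_\infty\) to a covariance calculation over the random feature subset, in the spirit of the instance-bagging analysis of \citet{soloff2024bagging}. Fix \(\cD\) and a test point \(\bx\). For an \(s\)-subset \(S\subseteq[d]\), write \(h(S):=\cA_0(\cD^{:,S})(\bx^{S})\in[-M,M]\). Let \(S\) be uniform over \(s\)-subsets of \([d]\), and let \(Z_j:=\mathbf 1\{j\in S\}\), so that \(\mathbb E Z_j=q\). Then \(f^\infty(\bx)=\mathbb E_S[h(S)]\).

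The first key observation concerns the feature-removed run. It draws \(s\)-subsets uniformly from \([d]\setminus\{j\}\), and a base learner trained on \(\cD^{:,-j}\) restricted to \(S\not\ni j\) coincides with \(h(S)\). Hence \(f^{:,-j}(\bx)=\mathbb E[h(S)\mid Z_j=0]\). This requires \(q<1\), so that \(s\le d-1\) and the reduced run is well defined. Writing \(a_j=\mathbb E[h\mid Z_j=1]\) and \(b_j=\mathbb E[h\mid Z_j=0]\), we get
\[
f^\infty(\bx)-f^{:,-j}(\bx)=q(a_j-b_j).
\]
Moreover, with \(\bar h:=h-\mathbb E h\), a two-line computation gives
\[
\mathbb E[\bar h\,(Z_j-q)]=q(1-q)(a_j-b_j).
\]
Therefore
\[
\bigl(f^\infty(\bx)-f^{:,-j}(\bx)\bigr)^2=\frac{\bigl(\mathbb E[\bar h(Z_j-q)]\bigr)^2}{(1-q)^2}.
\]

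The second step bounds \(\sum_{j}\bigl(\mathbb E[\bar h(Z_j-q)]\bigr)^2\). This sum is \(\sup_{\|u\|_2=1}\bigl(\mathbb E[\bar h\,u^\top(Z-q\mathbf 1)]\bigr)^2\). By Cauchy--Schwarz it is at most \(\mathrm{Var}(h)\cdot\lambda_{\max}(\mathrm{Cov}(Z))\). For uniform sampling without replacement, \(\mathrm{Var}(Z_j)=q(1-q)\) and \(\mathrm{Cov}(Z_j,Z_k)=-q(1-q)/(d-1)\) for \(j\ne k\). Hence
\[
\mathrm{Cov}(Z)=q(1-q)\tfrac{d}{d-1}\bigl(\bI-\tfrac1d\mathbf 1\mathbf 1^\top\bigr),
\]
whose largest eigenvalue is \(q(1-q)\,d/(d-1)\). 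Since \(h\in[-M,M]\), we have \(\mathrm{Var}(h)\le M^2\).

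Combining the two steps gives
\[
\frac1d\sum_{j=1}^d\bigl(f^\infty(\bx)-f^{:,-j}(\bx)\bigr)^2\le \frac1d\cdot\frac{M^2\,q(1-q)\,d/(d-1)}{(1-q)^2}=\frac{M^2}{d-1}\frac{q}{1-q}.
\]
This bound holds pointwise for every \(\cD\) and \(\bx\), so it survives any outer expectation and yields \(\phi\)-feature stability in the uniform convention.

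The main obstacle is the second step: a naive per-feature bound \(|a_j-b_j|\le 2M\) loses a factor of \(d\). The gain comes entirely from noting that the \(Z_j\) are nearly orthogonal directions, whose covariance has operator norm only \(q(1-q)d/(d-1)\). I expect the care to lie in verifying that covariance structure and the identification of the reduced run with the conditional law given \(Z_j=0\).
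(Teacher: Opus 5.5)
Your proposal is correct and is essentially the paper's argument. The paper proves the Hilbert-space version (Proposition~\ref{prop:stability-edit-norm}) and specializes with $\operatorname{rad}([-M,M])=M$, using the same ingredients: the identity $f^\infty(\bx)-f^{:,-j}(\bx)=(1-q)^{-1}\mathbb{E}[\bar h\,(Z_j-q)]$ (written there via $\mathbf{1}\{j\notin\bnu\}-(1-q)$), Cauchy--Schwarz against the centered output bounded by the radius, and the same inclusion-indicator covariance giving the factor $q(1-q)\,d/(d-1)$. The only cosmetic difference is that you use $\sum_j c_j^2=\sup_{\|u\|_2=1}(u^\top c)^2$ together with $\lambda_{\max}(\mathrm{Cov}(Z))$, whereas the paper pairs with $L_j$ itself, drops the negative $\|\sum_j L_j\|^2$ term, and divides out $\sqrt{\sum_j\|L_j\|^2}$; this yields the identical bound.
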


Proposition~\ref{prop:stability-feature} is the feature-wise dual of the
infinite instance-bagging guarantee of \citet{soloff2024bagging}. Under the
same bounded-output assumption, their result states that infinite instance
bagging with subsampling ratio \(p<1\) is \(\phi\)-instance stable whenever
\(\phi^2 \ge \frac{M^2}{n-1}\frac{p}{1-p}\). Thus, their guarantee controls II
under instance subsampling, whereas Proposition~\ref{prop:stability-feature}
controls FI under feature subsampling; formally, \((n,p,\mathrm{II})\) is
replaced by \((d,q,\mathrm{FI})\). This feature-side guarantee is the scalar
starting point for the general-space and recursive results below.

We now lift the scalar feature-bagging result to general output spaces. Consider algorithms of the form $\bw=\cA(\cD,\xi)$ whose outputs lie in a Hilbert space $\cH$ with inner product $\langle\cdot,\cdot\rangle$ and induced norm $\|\cdot\|_{\cH}$. We assume that the output belongs to a convex, bounded set $\cW\subset\cH$ and that the zero element of $\cH$ lies in $\cW$. The associated feature instability over $\cW$ is measured using $\|\cdot\|_{\cH}$.

\begin{definition}\label{def:stability-edit-metric}
    An algorithm $\cA$ with output in $\cW$ is $\phi$-feature stable if, for all datasets $\cD=\left\{(\bx_i, y_i)\right\}_{i=1}^n$,
\begin{align*}
\mathrm{FI}_{\cH}:=\frac{1}{d} \sum_{j = 1}^d
\mathbb{E}\left[\|\bw- \bw^{-j}\|_{\cH}^2\right] \leq \phi^2,
\end{align*}
where $\bw=\cA(\cD ; \xi)$ and $\bw^{-j}=\cA(\cD^{:,-j} ; \xi)$.
\end{definition}

The bagged estimator is defined analogously. Let
$\bw^{(b)}=\cA(\cD,\xi^{(b)})=\cA_0(\cD^{:,\bnu^{(b)}})$ be the output of the
$b$-th feature subsample. Since $\cW$ is convex,
$\bw^B=B^{-1}\sum_{b=1}^B\bw^{(b)}$ also lies in $\cW$. The infinite-bagged
estimator is $\bw^\infty:=\mathbb{E}[\cA(\cD,\xi)]$. This framework
extends the scalar case $\cW\subseteq\mathbb{R}$ above. We use the following
Hilbert-space analogue of the bounded-range assumption of
\citet{soloff2024stability}.

\begin{assumption}\label{asp:hilbert}
Let $\cW$ be a closed and convex subset of a Hilbert space with norm
$\|\cdot\|_{\cH}$. We assume that its radius
$
\operatorname{rad}(\cW)
:=\inf_{\bw \in \cW} \sup_{\bw^{\prime} \in \cW}
\left\|\bw-\bw^{\prime}\right\|_{\mathcal{H}}
$
is finite.
\end{assumption}

When $\cH=\mathbb{R}$, $\|\cdot\|_{\cH}=|\cdot|$, and
$\cW=[-M,M]$, Assumption~\ref{asp:hilbert} recovers the bounded-range condition
used in Proposition~\ref{prop:stability-feature}.

\begin{proposition}[General-space FI of feature bagging]\label{prop:stability-edit-norm}
Assume Assumption~\ref{asp:hilbert} holds, and let \(q<1\). For any base
algorithm $\cA_0$ with output in $\cW$, infinite feature bagging
${\cA}_{\infty}$ is $\phi$-feature stable whenever
\begin{align*}
 \phi^2 \geq  \frac{\operatorname{rad}^2(\cW)}{d-1}\frac{q}{1-q}.
\end{align*}
\end{proposition}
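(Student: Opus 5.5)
The plan is to reduce the feature-removal perturbation to a covariance computation over the random subset, mirroring the Hilbert-space version of the instance-bagging argument of \citet{soloff2024bagging}. Fix a dataset \(\cD\) and write \(S\) for a uniformly random \(s\)-subset of \([d]\), with \(g(S):=\cA_0(\cD^{:,S})\in\cW\). Then \(\bw^\infty=\mathbb{E}[g(S)]\).

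\textbf{Step 1 (exact identity for \(\bw^\infty-\bw^{-j}\)).} For feature \(j\), define the conditional means
\[
\mathbf{a}_j:=\mathbb{E}[g(S)\mid j\in S],\qquad \mathbf{c}_j:=\mathbb{E}[g(S)\mid j\notin S].
\]
Conditional on \(j\notin S\), the subset \(S\) is a uniform \(s\)-subset of \([d]\setminus\{j\}\). This is exactly the feature-bagging law on \(\cD^{:,-j}\), with \(s\) kept fixed, so \(\mathbf{c}_j\) is the infinite-bagged output on \(\cD^{:,-j}\), i.e.\ \(\mathbf{c}_j=\bw^{-j}\).

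Since \(\mathbb{P}(j\in S)=q\), we have \(\bw^\infty=q\mathbf{a}_j+(1-q)\mathbf{c}_j\). This gives \(\bw^\infty-\mathbf{c}_j=q(\mathbf{a}_j-\mathbf{c}_j)\) and \(\mathbf{a}_j-\bw^\infty=(1-q)(\mathbf{a}_j-\mathbf{c}_j)\). Hence
\[
\bw^\infty-\bw^{-j}=\frac{q}{1-q}(\mathbf{a}_j-\bw^\infty),
\]
and it remains to bound \(\sum_j\|\mathbf{a}_j-\bw^\infty\|_\cH^2\).

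\textbf{Step 2 (covariance/Bessel bound).} Let \(h(S):=g(S)-\bw^\infty\), which has mean zero, and let \(Z_j:=\mathbf 1\{j\in S\}-q\). Then
\[
\mathbf{a}_j-\bw^\infty=q^{-1}\mathbb{E}[h(S)\mathbf 1\{j\in S\}]=q^{-1}\mathbb{E}[h(S)Z_j].
\]
The covariance matrix of \(Z=(Z_1,\dots,Z_d)\) is that of sampling without replacement:
\[
\mathrm{Cov}(Z)=\frac{q(1-q)d}{d-1}\left(\bI-\tfrac1d\mathbf 1\mathbf 1^\top\right),
\]
whose largest eigenvalue is \(\lambda:=q(1-q)d/(d-1)\).

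For a real-valued mean-zero \(h\), the bound \(\sum_j\mathbb{E}[hZ_j]^2\le\lambda\,\mathbb{E}[h^2]\) follows by writing \(\sum_j\mathbb{E}[hZ_j]^2=\mathbb{E}[h\,\langle v,Z\rangle]\) with \(v_j=\mathbb{E}[hZ_j]\). Cauchy--Schwarz then gives \(\mathbb{E}[h\,\langle v,Z\rangle]\le\sqrt{\mathbb{E}h^2}\,\sqrt{v^\top\mathrm{Cov}(Z)v}\le\sqrt{\mathbb{E}h^2}\,\sqrt{\lambda}\,\|v\|\), and we divide through by \(\|v\|\). For \(\cH\)-valued \(h\), I would apply this coordinatewise in an orthonormal basis and sum, or equivalently use the same Cauchy--Schwarz argument with \(\cH\)-valued inner products. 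This yields
\[
\sum_j\|\mathbf{a}_j-\bw^\infty\|_\cH^2\le\frac{(1-q)d}{q(d-1)}\,\mathbb{E}\|h(S)\|_\cH^2.
\]

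\textbf{Step 3 (radius).} For any \(\bw_0\in\cW\), the variance is at most the second moment about \(\bw_0\), so
\[
\mathbb{E}\|h\|_\cH^2\le\mathbb{E}\|g(S)-\bw_0\|_\cH^2\le\sup_{\bw'\in\cW}\|\bw_0-\bw'\|_\cH^2 .
\]
Taking the infimum over \(\bw_0\) gives \(\mathbb{E}\|h\|_\cH^2\le\operatorname{rad}^2(\cW)\); no attainment is needed, since the bound holds for every \(\bw_0\).

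Combining the three steps,
\[
\frac1d\sum_j\|\bw^\infty-\bw^{-j}\|_\cH^2\le\frac1d\Big(\frac{q}{1-q}\Big)^2\frac{(1-q)d}{q(d-1)}\operatorname{rad}^2(\cW)=\frac{\operatorname{rad}^2(\cW)}{d-1}\frac{q}{1-q}.
\]
The bound is deterministic given \(\cD\), so it holds uniformly over datasets. The main points needing care are the identification \(\mathbf{c}_j=\bw^{-j}\), which requires the same \(s\) in the reduced run, and the sharp eigenvalue \(q(1-q)d/(d-1)\) in Step 2; this eigenvalue is exactly where the factor \(1/(d-1)\) comes from.
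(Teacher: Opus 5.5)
Your proof is correct and is essentially the paper's argument. Both identify $\bw^{-j}$ with $\mathbb{E}[g(S)\mid j\notin S]$ and write $\bw^\infty-\bw^{-j}=\frac{1}{1-q}\mathbb{E}[h(S)Z_j]$; the paper gets this directly by centering the indicator of $j\notin S$ instead of going through $\mathbf{a}_j$. Both then apply the same self-bounding Cauchy--Schwarz step against the without-replacement covariance of the inclusion indicators (top eigenvalue $q(1-q)d/(d-1)$) and bound the variance by $\operatorname{rad}^2(\cW)$.
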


Proposition~\ref{prop:stability-edit-norm} generalizes
Proposition~\ref{prop:stability-feature} by replacing the scalar range bound
\(M\) with the Hilbert-space radius \(\operatorname{rad}(\cW)\). This
general-space form is the key input for the recursive procedures analyzed next,
including random forward selection and random forests.


\subsection{Recursive Subsampling}\label{sec:recursivesubsampling}




The previous subsection studies feature bagging where each base learner only subsamples features once. Recursive algorithms are more complex: each base learner performs multiple rounds of feature subsampling, and the subsampled features at each round are data-dependent. This recursive structure makes the analysis more challenging because the effect from one feature removal can propagate in later updates. We model this propagation through a stochastic
process \(\{\bw_t\}_{t\ge0}\), where each state \(\bw_t\) lies in a set
\(\cW_t\subseteq\cH\) and \(\cH\) is equipped with the norm
\(\|\cdot\|_{\mathcal H}\). At iteration \(t\), the update is
\[
\bw_t = \cA\!\left(\cD,(\bw_{t-1},\xi_t)\right),
\]
where \(\xi_t\) denotes the randomness injected at step \(t\). The admissible
set \(\cW_t\) may depend on \(\bw_{t-1}\), allowing the state space itself to be
data-dependent and recursive. We initialize the process at \(\bw_0\), the zero
element of \(\cH\), and assume that \(\bw_0\in\cW_t\) for all \(t\).

For a fixed dataset, define the update maps
\[
S_{\xi}(\bw) := \cA(\cD,(\bw,\xi)),
\quad
S_{\xi}^{-j}(\bw) := \cA(\cD,(\bw,\xi,j)).
\]
Here the extra argument \(j\) means that the same seed \(\xi\) first generates
the candidate feature set as in the original run, and then feature \(j\) is
removed from that set if it appears.
Using the same randomness \(\xi_{1:T}\), the original trajectory is
\begin{align}
\cA^{(T)}(\cD,\xi_{1:T})
= \bw_T
= S_{\xi_T}\circ\cdots\circ S_{\xi_1}(\bw_0),
\end{align}
whereas the trajectory after removing feature \(j\) is
\begin{align}
\cA^{(T)}(\cD,(\xi_{1:T}, j))
= \bw_T^{-j}
= S_{\xi_T}^{-j}\circ\cdots\circ S_{\xi_1}^{-j}(\bw_0).
\end{align}
Independent copies of this recursive trajectory give the finite- and
infinite-bagged outputs
\begin{align*}
\cA^{(T)}_B(\cD)=
 \frac{1}{B}\sum_{b=1}^B \bw_T^{(b)}, \quad
\cA^{(T)}_{\infty}(\cD)
= \EE_{\xi_{1:T}}[\bw_T],
\end{align*}
with their feature-removed counterparts denoted by a superscript $-j$.
This framework covers random forward selection
\citep{mentch2020randomization} and random forests
\citep{breiman2001random}, both described in the following two subsections.

To state the recursive guarantee, let
\(\Delta_t^{-j}:=\bw_t-\bw_t^{-j}\). We isolate the part of the step-\(t\)
perturbation that comes from propagating the previous feature-removal
discrepancy:
\(\operatorname{pith}_t^j
:=\EE[S_{\xi_t}^{-j}(\bw_{t-1})-S_{\xi_t}^{-j}(\bw_{t-1}^{-j})]\).
The following one-step condition requires this propagated discrepancy to grow
by at most a factor \(1+\delta_t\): for each step \(t\), there exists
\(\delta_t\ge0\) such that, for every \(j\in[d]\),
\begin{align}
\left\|\operatorname{pith}_t^j\right\|_{\cH}
\le
(1+\delta_t)\left\|\EE\left[\Delta_{t-1}^{-j}\right]\right\|_{\cH}.
\label{equ:contraction}
\end{align}
This is a Lipschitz-type bound on the expected transition map, analogous to
stepwise stability assumptions for iterative algorithms
\citep{hardt2016train, lei2020fine}.


\begin{proposition}\label{prop:stabilityofgeneralrecursiveobject}
Let Assumption~\ref{asp:hilbert} hold.  Suppose that for each recursion step
\(t\), the one-step condition~\eqref{equ:contraction} holds for every
\(j\in[d]\).  Then
\begin{align}\label{equ:recursivestabilityresult}
  \frac{1}{d} \sum_{j=1}^d
  \left\|\EE \left[\bw_T\right]
  - \EE \left[\bw^{-j}_T\right]\right\|_{\cH}^2
   \leq    \frac{M_T q}{(d-1)(1-q)},
\end{align}
where
\[
M_T :=
\left(
\sum_{t=1}^{T}
\left(\prod_{t'=t+1}^{T}(1+\delta_{t'})\right)
\sup_{\xi_{1:(t-1)}} \operatorname{rad}(\cW_t)
\right)^2 .
\]
\end{proposition}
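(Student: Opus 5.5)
The plan is to unroll the recursion with a triangle-inequality step, and to control the fresh perturbation created at each step by the one-shot bound of Proposition~\ref{prop:stability-edit-norm}. Fix $j$ and write $\EE[\Delta_t^{-j}]=\EE[\bw_t]-\EE[\bw_t^{-j}]$. Add and subtract $S^{-j}_{\xi_t}(\bw_{t-1})$ to split it into two parts:
\[
\EE[\Delta_t^{-j}]
=\underbrace{\EE\big[S_{\xi_t}(\bw_{t-1})-S^{-j}_{\xi_t}(\bw_{t-1})\big]}_{=: \mathrm{fresh}_t^j}
+\operatorname{pith}_t^j .
\]
Condition~\eqref{equ:contraction} then gives
$\|\EE[\Delta_t^{-j}]\|_\cH\le a_t^j+(1+\delta_t)\|\EE[\Delta_{t-1}^{-j}]\|_\cH$, where $a_t^j:=\|\mathrm{fresh}_t^j\|_\cH$. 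Since $\Delta_0^{-j}=0$, unrolling yields
\[
\|\EE[\Delta_T^{-j}]\|_\cH\le \sum_{t=1}^T \Pi_t\, a_t^j,
\qquad \Pi_t:=\prod_{t'=t+1}^T(1+\delta_{t'}).
\]

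Next I would square this and average over $j$. Set $R_t:=\sup_{\xi_{1:(t-1)}}\operatorname{rad}(\cW_t)$ and $c_t:=\Pi_t R_t$. Cauchy--Schwarz with weights $c_t$ gives
\[
\Big(\sum_t c_t\,\tfrac{a_t^j}{R_t}\Big)^2\le \Big(\sum_t c_t\Big)\sum_t c_t\,\tfrac{(a_t^j)^2}{R_t^2}.
\]
Therefore it suffices to prove, for each $t$,
\[
\frac1d\sum_{j=1}^d (a_t^j)^2\le \frac{R_t^2\,q}{(d-1)(1-q)} .
\]
Plugging this in gives $(\sum_t c_t)^2\,q/((d-1)(1-q))=M_T q/((d-1)(1-q))$, which is exactly \eqref{equ:recursivestabilityresult}.

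For the per-step bound, I would use the fact that $\xi_t$ is independent of $\xi_{1:(t-1)}$. By Jensen applied to the outer expectation,
\[
(a_t^j)^2\le \EE_{\xi_{1:(t-1)}}\Big\|\EE_{\xi_t}\big[S_{\xi_t}(\bw_{t-1})-S^{-j}_{\xi_t}(\bw_{t-1})\,\big|\,\bw_{t-1}\big]\Big\|_\cH^2 .
\]
Conditionally on $\bw_{t-1}$, the map $\xi_t\mapsto S_{\xi_t}(\bw_{t-1})$ is a single-round feature-subsampled algorithm. It draws a uniform size-$qd$ candidate set and outputs a point of $\cW_t$, and $\operatorname{rad}(\cW_t)\le R_t$. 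The inner conditional expectation is therefore its infinite feature-bagged output. Similarly, $S^{-j}_{\xi_t}$ is the same algorithm with feature $j$ deleted from the candidate set. Proposition~\ref{prop:stability-edit-norm}, applied with the dataset and $\bw_{t-1}$ frozen, bounds the $j$-average of these squared conditional differences by $R_t^2 q/((d-1)(1-q))$. Exchanging the average over $j$ with $\EE_{\xi_{1:(t-1)}}$ finishes the step.

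The main obstacle I expect is justifying that Proposition~\ref{prop:stability-edit-norm} applies to the conditional one-step map. Two points need care. First, the removal convention ``draw with the same seed, then drop $j$ if present'' has to match the coupling used in that proposition's proof. I would check that the proof only uses the identity $\EE[\cdot\mid j\notin\bnu]$ versus the unconditional mean, which is the standard subsampling argument behind the $q/(1-q)$ factor. Second, the admissible set $\cW_t$ may depend on $\bw_{t-1}$, so it must be frozen before invoking the proposition. This is why $R_t$ is taken as a supremum over $\xi_{1:(t-1)}$. If the coupling does not match directly, I would redo the short variance argument of Proposition~\ref{prop:stability-edit-norm} inside the conditional expectation rather than cite it.
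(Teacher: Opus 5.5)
Your route is the paper's. You use the same split of $\EE[\Delta_t^{-j}]$ (your $\mathrm{fresh}_t^j$ is the paper's $\operatorname{peel}_t^j$), the one-step condition on the pith, Proposition~\ref{prop:stability-edit-norm} on the peel, and the linear recurrence. Your weighted Cauchy--Schwarz step is equivalent to the paper's Minkowski inequality over $j$ and gives the same $M_T$. Conditioning on $\bw_{t-1}$ and applying Jensen spells out a point the paper's one-line proof leaves implicit.

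The gap is the step you flag yourself, and the check you propose would come out negative. Under the drop-$j$ coupling, on $\{j\in\bnu_t\}$ the reduced run works with the $(s-1)$-element set $\bnu_t\setminus\{j\}$, so
\[
\EE_{\xi_t}\bigl[S^{-j}_{\xi_t}(\bw)\bigr]
=(1-q)\,\EE\bigl[S_{\xi_t}(\bw)\mid j\notin\bnu_t\bigr]
+q\,\EE\bigl[S^{-j}_{\xi_t}(\bw)\mid j\in\bnu_t\bigr].
\]
This generally differs from $\EE[S_{\xi_t}(\bw)\mid j\notin\bnu_t]$. That identity, meaning the reduced run draws a uniform $s$-subset of $[d]\setminus\{j\}$, is exactly what the proof of Proposition~\ref{prop:stability-edit-norm} uses to write $L_j$ as a conditional-minus-unconditional mean.

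Redoing the variance argument cannot fix this at the stated generality. Take $T=1$, $\cW_1=[-1,1]$, and a one-step map that outputs $+1$ when its candidate set has $s$ elements and $-1$ when it has $s-1$. The one-step condition holds trivially, since both runs start at $\bw_0$. But $\EE[\bw_1]-\EE[\bw_1^{-j}]=2q$ for every $j$, so the left side of \eqref{equ:recursivestabilityresult} is $4q^2$, while the right side is $q/((d-1)(1-q))$. For $d=4$, $s=2$ this is $1>1/3$.

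The paper's proof has the same hole, since it cites Proposition~\ref{prop:stability-edit-norm} for the peel without addressing the coupling. To close it you need an extra hypothesis on the one-step map, for instance $\EE_{\xi_t}[S^{-j}_{\xi_t}(\bw)]=\EE_{\xi_t}[S_{\xi_t}(\bw)\mid j\notin\bnu_t]$ for every feasible $\bw$. This holds if the reduced run's candidate set is a uniform $s$-subset of $[d]\setminus\{j\}$. Replacing $j$, when drawn, by a uniform element of $[d]\setminus\bnu_t$ gives such a coupling, and it agrees with $\bnu_t$ on $\{j\notin\bnu_t\}$. Under that hypothesis, your conditional application of Proposition~\ref{prop:stability-edit-norm}, the Jensen step, and the Cauchy--Schwarz unrolling are all correct. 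The one-step conditions verified in the applications would then have to be rechecked for the new coupling. If you keep the drop-$j$ coupling instead, you must bound the extra term $q\,\bigl(\EE[S^{-j}_{\xi_t}(\bw)\mid j\in\bnu_t]-\EE[S_{\xi_t}(\bw)\mid j\notin\bnu_t]\bigr)$ separately for the algorithm at hand.
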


The bound preserves the same feature-subsampling factor
\(q/((d-1)(1-q))\) as the one-step\footnote{Here one step means that each base learner only subsamples features once. } result in Proposition~\ref{prop:stability-edit-norm}. The new factor \(M_T\) is the price
of recursion: it accumulates the radii of the intermediate state spaces and the
one-step inflation factors. In the simple case
\(\operatorname{rad}(\cW_t)\equiv r\) and \(\delta_t\equiv\delta\), we obtain
\(M_T=O(r^2T^2)\) when \(\delta=0\), and
\(M_T=O(r^2(1+\delta)^{2T})\) for fixed \(\delta>0\).


Proposition~\ref{prop:stabilityofgeneralrecursiveobject} is stated for
infinite bagging to isolate the mechanism that improves feature stability.
Section~\ref{sec:finite-bagging} shows that using \(B\) finite bagging rounds
adds only an \(O(1/B)\) concentration error to the infinite-bagging instability
term. Thus, for moderately large \(B\), finite bagging inherits the same
stability behavior up to this vanishing error.


\begin{figure}[!t]
\vskip -0.1in
\centering
\subfigure[$\frac{1}{d}\sum_{j=1}^d\|\bw - \bw^{-j}\|_2^2$]{
\begin{minipage}{0.35\linewidth}
\centering
\includegraphics[width=\textwidth]{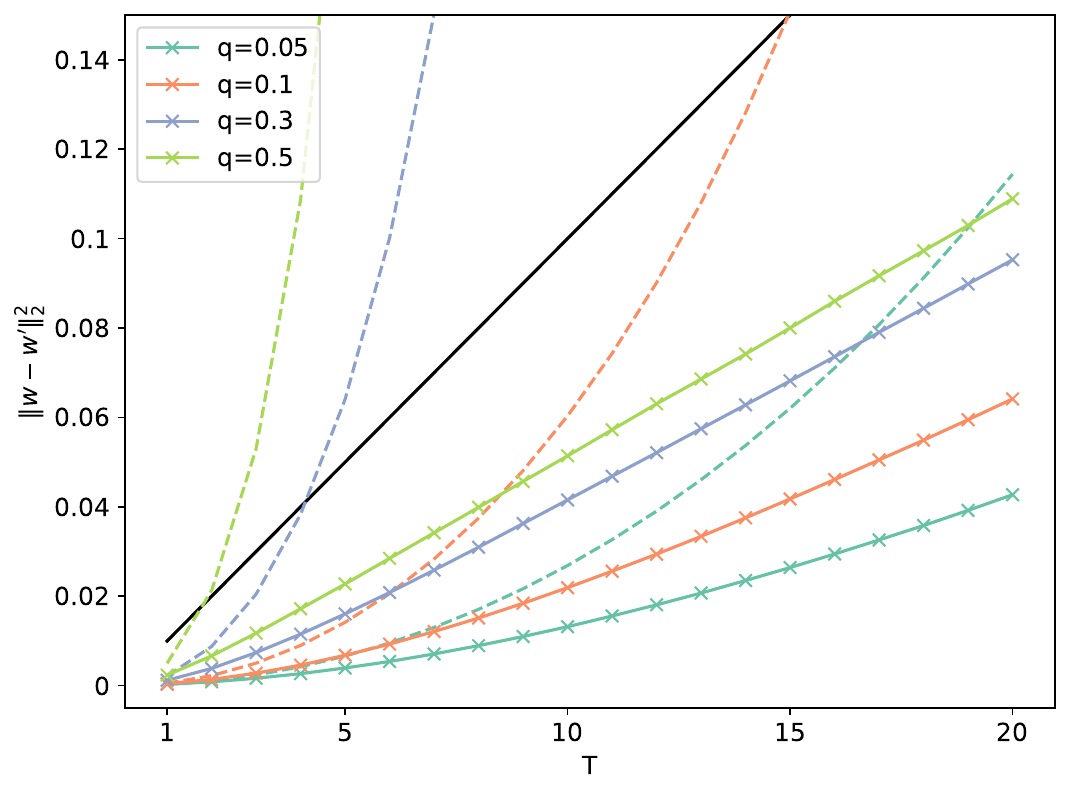}
\end{minipage}
\label{fig:forward_selection_feature_stability_theory}
}
\subfigure[$\frac{1}{d}\sum_{j=1}^d\|\bbeta - \bbeta^{:,-j}\|_2^2$]{
\begin{minipage}{0.35\linewidth}
\centering
\includegraphics[width=\textwidth]{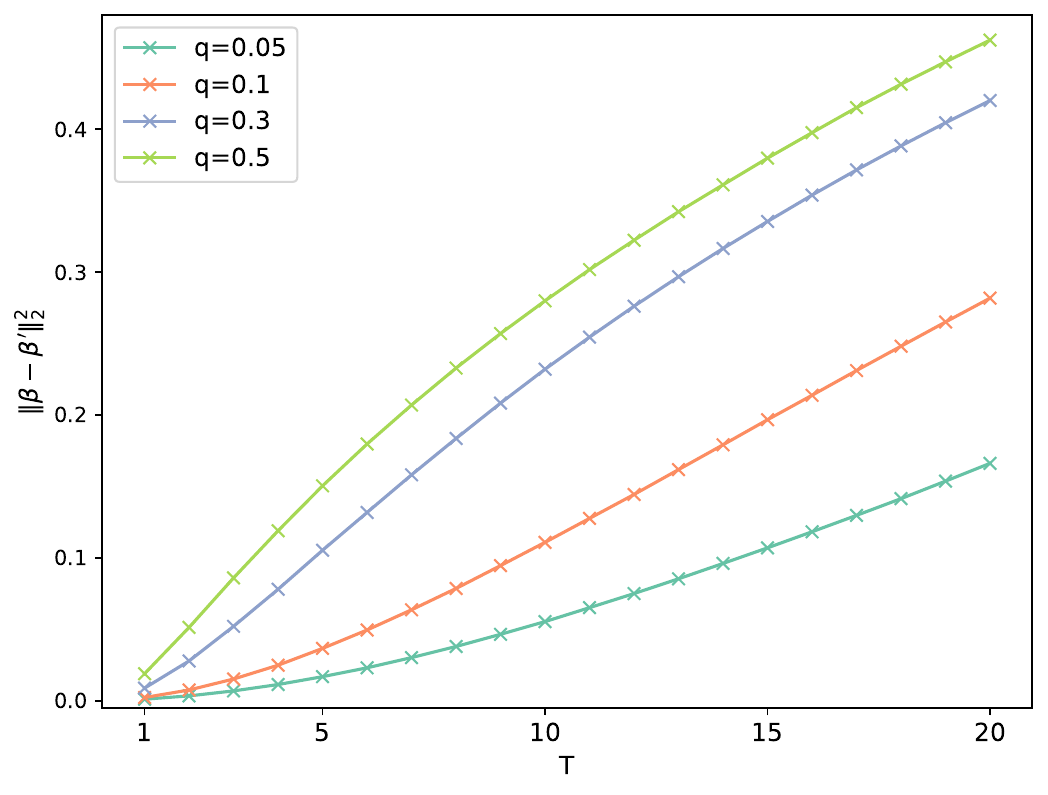}
\end{minipage}
\label{fig:forward_selection_feature_stability_l2}
}
\vskip -0.05in
\caption{
Left: Feature instability of feature-bagged random forward selection and non-bagged baseline (forward selection).  Solid curves with crosses show empirical averages over 100
repetitions for different \(q\), dashed curves show the upper bound
in~\eqref{equ:stability-of-rfs-final-bound}, and the solid black curve gives
the non-bagged baseline. Right: Parameter estimation difference of feature-bagged random forward selection for different \(q\).
}
\vskip -0.1in
\label{fig:forward_selection_feature_stability}
\end{figure}

\subsubsection{Random Forward Selection}\label{sec:rfs}

We first apply the recursive bound to random forward selection. The algorithm
builds a feature set one coordinate at a time: at each step, it samples a
fraction \(q\) of the available features and adds the sampled feature that gives
the largest reduction in residual error. This places random forward selection
within the class of data-dependent recursion algorithms covered by
Proposition~\ref{prop:stabilityofgeneralrecursiveobject}.

To represent the recursive state, encode the selected variables by
\(\bw=(w^1,\ldots,w^d)\), where \(w^j\in\{0,1\}\) and \(w^j=1\) means that
feature \(j\) is included in the fitted linear model. In this specialization,
\(\cW_t\subseteq\{0,1\}^d\subset\cH\), with \(\cH=\mathbb{R}^d\) and
\(\|\cdot\|_{\cH}=\|\cdot\|_2\). For feature-subsampled ordinary least squares,
\(w^j=1\) exactly when \(j\in\bnu\).
Let \(\bE_{\bw}\) denote the diagonal matrix with diagonal \(\bw\).
The \(b\)-th subsampled ordinary least-squares estimator is
\[
\bbeta^{(b)}
=
\bE_{\bw^{(b)}}\left(\bX \bE_{\bw^{(b)}}\right)^{\dagger}\by.
\]
where the inverse is understood as the Moore--Penrose pseudoinverse in the
overparameterized regime.
Let $\bbeta = \bX^{\dagger}\by$.

We consider the orthogonal-design case, where $\bX^{\top} \bX = n \bI_d$.
Then the feature-subsampled estimator reduces to
$\bbeta^{(b)} = \bE_{\bw^{(b)}}\bbeta$ and, with
$\bw_B:=B^{-1}\sum_{b=1}^B\bw^{(b)}$,
\begin{align*}
    \bbeta_{B} = \frac{1}{B}\sum_{b=1}^B \bbeta^{(b)} = \frac{1}{B}\sum_{b=1}^B \bE_{\bw^{(b)}}\bbeta =  \bE_{\bw_B}\bbeta.
\end{align*}
Thus, in the orthogonal setting, the stability of the averaged estimator is governed
by the selection-frequency vector \(\bw_B\). Under uniform feature subsampling,
each feature is selected with probability \(q\), so
\(\EE[\bw_B]=(q,\ldots,q)\). Under random forward selection, \(\bw_B\) is
data-dependent and records how frequently each feature is selected, thereby
capturing feature importance. We measure discrepancies between selection
vectors by the \(\ell_2\) distance. The following theorem bounds the resulting
feature instability.

The full randomized forward-selection procedure is shown in
Algorithm~\ref{alg:rfs}. At step \(t\) of the \(b\)-th bagging round, starting
from the current selection vector \(\bw_{t-1}^{(b)}\), the algorithm first
subsamples a fraction \(q\) of the \(d\) available features according to the
randomness \(\xi_t^{(b)}\), yielding a candidate set \(\bnu_t^{(b)}\). It then
selects the feature that yields the greatest reduction in residual sum of
squares:
\[
j_t^{(b)} = \arg\min_{j \in \bnu_t^{(b)}} \min_{\bbeta}
\left\| \by - \bX \bE_{\bw_{t-1}^{(b)} + \be_j} \bbeta \right\|_2^2,
\]
where \(\be_j\) denotes the \(j\)-th standard basis vector in \(\RR^d\) and
\(\bE_{\bw}\) is the diagonal matrix that projects onto the coordinates selected
by \(\bw\). The selected variable is then added to the active set,
\[
\cA(\bw_{t-1}^{(b)}) = \bw_t^{(b)} = \bw_{t-1}^{(b)} + \be_{j_t^{(b)}}.
\]
If all indices in the candidate set \(\bnu_t^{(b)}\) have already been selected,
we set \(j_t^{(b)}=\texttt{None}\) and \(\be_{\texttt{None}}=\zero\), so the
selection vector remains unchanged.

\begin{algorithm}[!t]
\caption{Randomized Forward Selection}
\label{alg:rfs}
\begin{algorithmic}
\FOR{$b \in [B]$}
\STATE Draw bootstrap sample $\cD^{(b)} =  \{(\bx_i^{(b)},y_i^{(b)}) \}_{i=1}^{n}$ from original data $\cD$.
\STATE Initialize empty active set $\bw_0=\zero$.
\FOR{$t \in [T]$}
\STATE Select subset of $s$ features uniformly at random, denoted $\bnu_t^{(b)}$.
\STATE Select
$ j_t \in \argmin_{j\in\bnu_t^{(b)}}
\norm{\by^{(b)}-\bX^{(b)}\bE_{\bw_{t-1}+\be_j}
(\bX^{(b)}\bE_{\bw_{t-1}+\be_j})^{\dagger}\by^{(b)}}_2^2$,
with deterministic tie-breaking; if every candidate is already selected, set
$\be_{j_t}=\zero$.
\STATE Update $\bw_t=\bw_{t-1}+\be_{j_t}$.
\STATE Update coefficient estimates by fitting least squares on the selected coordinates:
\[
\bE_{\bw_t}\hat{\bbeta}^{(b)}
= \argmin_{\beta} \norm{\by^{(b)}-\bX^{(b)}\bE_{\bw_t}\beta}_2^2,
\qquad
(\bI-\bE_{\bw_t})\hat{\bbeta}^{(b)}=\zero.
\]
\ENDFOR
\ENDFOR
\STATE Compute final coefficient estimates $\hat{\bbeta}=B^{-1}\sum_{b=1}^B\hat{\bbeta}^{(b)}$.
\STATE Compute predictions $\hat{\by}=\bX\hat{\bbeta}$.
\end{algorithmic}
\end{algorithm}

\begin{theorem}\label{thm:stabilityrandomforwardselection}
Let $\cA$ be the random forward selection algorithm in Algorithm~\ref{alg:rfs} with feature-subsampling ratio \(q<1\).
Then $\operatorname{rad}(\cW_t) \leq 1$ for $t\in [T]$.
Moreover, the one-step condition~\eqref{equ:contraction} in
Proposition~\ref{prop:stabilityofgeneralrecursiveobject} holds with
\(\delta_t = 4^{-1}\max\{q(1+q), q^2 t\}\).
When $T \lesssim q^{-1}$, we have
\begin{align}\label{equ:stability-of-rfs-final-bound}
 \frac{1}{d} \sum_{j=1}^d \left\|\EE\left[\bw_T\right]- \EE\left[\bw_T^{-j}\right]\right\|^2
\lesssim \frac{T^2}{d - 1} \cdot \frac{q}{1 - q}.
\end{align}
\end{theorem}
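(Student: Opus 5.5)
The argument has three parts: bound the radii, verify the one-step condition~\eqref{equ:contraction} with the stated \(\delta_t\), and substitute both into Proposition~\ref{prop:stabilityofgeneralrecursiveobject}.

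\emph{Radius.} Each step of Algorithm~\ref{alg:rfs} adds at most one basis vector, so conditionally on \(\bw_{t-1}\) the admissible set \(\cW_t\) is contained in the convex hull of \(\{\bw_{t-1}\}\cup\{\bw_{t-1}+\be_k:k\in[d]\}\). Centering at \(\bw_{t-1}\), every point of this hull lies within \(\ell_2\)-distance one. We would therefore work with the increment representation and read \(\operatorname{rad}(\cW_t)\) as the radius of the one-step increment set. This gives \(\sup_{\xi_{1:(t-1)}}\operatorname{rad}(\cW_t)\le 1\), and hence \(\operatorname{rad}(\cW_t)\le1\) for all \(t\in[T]\).

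\emph{One-step condition.} This is the main obstacle. We need to compare \(\EE[S^{-j}_{\xi_t}(\bw_{t-1})-S^{-j}_{\xi_t}(\bw^{-j}_{t-1})]\) with \(\EE[\Delta^{-j}_{t-1}]\). Write \(S^{-j}_\xi(\bw)=\bw+\be_{j_t(\bw,\xi)}\). The pith then equals \(\EE[\Delta^{-j}_{t-1}]\) plus the expected difference of the selected increments. These increments differ only on the event that the candidate set \(\bnu_t\) contains an index whose availability differs between the two trajectories. Such an index is either feature \(j\) or a coordinate where \(\bw_{t-1}\) and \(\bw_{t-1}^{-j}\) disagree. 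On the complementary event the argmin is taken over the same effective candidates. Strictly, the residual-sum-of-squares ordering depends on the active set, so we would control it by assuming the greedy choice depends on the active set only through the availability pattern of candidates. In the orthogonal design this holds, since each candidate's reduction equals \(n\beta_k^2\) independently of the active set.

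The two trajectories disagree on at most \(t-1\) coordinates. Each is in \(\bnu_t\) with probability \(q\), and any pair lies in \(\bnu_t\) jointly with probability at most \(q^2\). A union/second-moment bound over these coordinates together with \(j\) gives a mismatch probability controlled by \(q(1+q)\) and \(q^2 t\). Comparing this with the size of \(\|\EE\Delta_{t-1}^{-j}\|\), which is of order the probability that \(j\) was previously sampled, yields inflation \(1+\delta_t\) with \(\delta_t=\tfrac14\max\{q(1+q),q^2t\}\). The factor \(1/4\) would come from \(\bar q(1-\bar q)\le 1/4\) applied to the variance of a Bernoulli selection indicator.

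\emph{Conclusion.} Since \(T\lesssim q^{-1}\), we have \(q^2t\le q^2T\lesssim q\) for \(t\le T\), so \(\delta_t\lesssim q\). Then \(\prod_{t'=t+1}^T(1+\delta_{t'})\le \exp(\sum_{t'\le T}\delta_{t'})\le \exp(O(qT))=O(1)\). Plugging this and the radius bound into \(M_T\) gives \(M_T\lesssim(\sum_{t=1}^T 1)^2=T^2\). Proposition~\ref{prop:stabilityofgeneralrecursiveobject} then yields~\eqref{equ:stability-of-rfs-final-bound}.
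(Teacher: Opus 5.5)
Your radius bound (centering at \(\bw_{t-1}\) rather than at the barycenter of the reachable states, as the paper does) is fine. So is your final step: \(\delta_t\lesssim q\) for \(t\le T\lesssim q^{-1}\), so every product in \(M_T\) is \(O(1)\) and \(M_T\lesssim T^2\). The gap is the one-step condition. You state \(\delta_t=\tfrac14\max\{q(1+q),q^2t\}\), but nothing in your sketch produces it.

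First, you are missing the structural fact the paper proves by induction: pathwise, \(\bw_{t-1}-\bw^{-j}_{t-1}\in\{\zero,\be_j\}\cup\{\be_j-\be_a:a\neq j\}\). In words, at most one coordinate \(a\) is active only in the reduced run. If you work instead with ``at most \(t-1\) disagreeing coordinates'', a union bound gives a conditional mismatch probability of order \(tq\). The best you could then certify is \(\delta_t\asymp tq\), and \(\prod_{t\le T}(1+\delta_t)\approx e^{cqT^2}\) stays bounded only for \(T\lesssim q^{-1/2}\), not \(T\lesssim q^{-1}\). Moreover, feature \(j\) is removed from the candidate set of \(S^{-j}_{\xi_t}\) in \emph{both} arguments of the pith, so it can never cause a mismatch. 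The pith is also a difference of means, so a Bernoulli variance bound \(\bar q(1-\bar q)\le 1/4\) plays no role. The quantities \(q(1+q)\), \(q^2t\) and \(1/4\) appear in your sketch only because they appear in the statement.

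Second, suppose you add the structural fact. Your additive route, \(\operatorname{pith}_t^j=\EE[\Delta^{-j}_{t-1}]+\EE[\text{increment difference}]\) followed by the triangle inequality, still gives at best \(\|\operatorname{pith}_t^j\|\le\|\EE[\Delta^{-j}_{t-1}]\|+\sqrt2\,q\sum_a u_a\le(1+\sqrt2q)\|\EE[\Delta^{-j}_{t-1}]\|\), with \(u_a=\PP(\bw_{t-1}-\bw^{-j}_{t-1}=\be_j-\be_a)\). Here a mismatch needs \(a\in\bnu_t\), which has probability \(q\) independently of the past, and \(\sum_a u_a\) is at most the \(j\)-th coordinate \(\alpha\) of \(\EE[\Delta^{-j}_{t-1}]\). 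This is essentially the argument the paper uses for the dyadic forest. It would suffice for \eqref{equ:stability-of-rfs-final-bound}, but \(\sqrt2q\) exceeds the claimed \(\delta_t\) whenever \(qt<4\sqrt2\).

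The paper gets the sharper constant by a different mechanism:
\begin{itemize}
\item It writes \(\EE[\Delta^{-j}_{t-1}]=\alpha\be_j-u\) and \(\operatorname{pith}_t^j=\alpha\be_j-uM\), where \(M\) is the conditional transition kernel of the negative coordinate.
\item It shows \(M\) has row sums at most one, is lower triangular in the score ordering, and has column leakage \(\sum_{a>b}M_{ab}\le\eta_t:=\max\{q(1+q),q^2t\}\). The leakage bound comes from counting candidate sets, and this is the only source of \(\eta_t\).
\item Hence \(\|M\|_2^2\le\|M\|_1\|M\|_\infty\le1+\eta_t\).
\item The \(\be_j\) component is orthogonal to the support of \(u\) and untouched by \(M\). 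Pythagoras together with \(\|u\|_2\le\|u\|_1\le\alpha\) then gives \(\|\operatorname{pith}_t^j\|^2\le(1+\eta_t/2)\|\EE[\Delta^{-j}_{t-1}]\|^2\).
\item Finally, \(\sqrt{1+x}\le1+x/2\) supplies the second factor \(1/2\).
\end{itemize}
None of these ingredients is in your proposal, so the one-step claim of the theorem is not established.
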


The theorem specializes
Proposition~\ref{prop:stabilityofgeneralrecursiveobject} to the selection path
of random forward selection. The bound in
\eqref{equ:stability-of-rfs-final-bound} gives a stronger stability guarantee
for smaller feature-subsampling ratios \(q\), showing that more aggressive
feature subsampling improves stability. Its quadratic dependence on \(T\) reflects the other side of the
recursive analysis: instability can accumulate along a long selection path.
For comparison, deterministic forward selection without bagging has feature
instability
\begin{align*}
\frac{1}{d} \sum_{j=1}^d
\left\|\EE \left[\bw_T\right]
- \EE \left[\bw_T^{-j}\right]\right\|^2
= \frac{2 T}{d},
\end{align*}
whose derivation is deferred to the appendix.
Hence, feature bagging is provably more stable whenever
\begin{align*}
\frac{2T}{d} \gtrsim \frac{T^2}{d - 1} \cdot \frac{q}{1 - q},
\quad \text{or equivalently,} \quad
T \lesssim \frac{1 - q}{q}.
\end{align*}
This is consistent with the condition \(T\lesssim q^{-1}\); for example, it is
satisfied when \(T=o(\sqrt d)\) and \(q=1/\sqrt d\).

Figure~\ref{fig:forward_selection_feature_stability} evaluates this prediction
under the experimental settings detailed in
Appendix~\ref{sec:additional-experiments}.
Figure~\ref{fig:forward_selection_feature_stability_theory} shows that feature
bagging consistently improves empirical feature stability, with smaller
subsampling ratios \(q\) giving larger stability gains.
Figure~\ref{fig:forward_selection_feature_stability_l2} shows a corresponding
decrease in parameter estimation difference, suggesting that the stability gain
does not come at the expense of estimation accuracy.

When \(qT\le1\), the theoretical curve upper bounds the averaged empirical
instability and certifies better stability than the non-bagged baseline. For small \(T\), it
also tracks the empirical curve closely, indicating that the bound captures the
early-stage behavior of random forward selection. As \(T\) grows, the gap
widens, consistent with the quadratic dependence on \(T\) and the conservative
one-step condition used in the analysis. Overall, the experiment supports the
same message as the theorem: feature bagging stabilizes random forward selection
most clearly in the shallow-recursion regime.

\subsubsection{Random Forests}
\label{sec:random-forest-main}
\label{sec:random-forest}

Random forests provide the second application.
In a full classification-and-regression tree, both the split feature and the split
threshold are data-dependent. A decision tree's partition can be encoded by two
vectors: one that records the feature chosen at each internal node and another
that stores the corresponding split threshold; see the \texttt{feature} and
\texttt{threshold} attributes in \citet{pedregosa2011scikit}. Both vectors have length
\(2^T-1\), where \(T\) is the \texttt{max\_depth} parameter, i.e., the maximum
number of splits from the root node to any leaf node.
\begin{figure}[!t]
    \centering
    \includegraphics[width=0.8\linewidth]{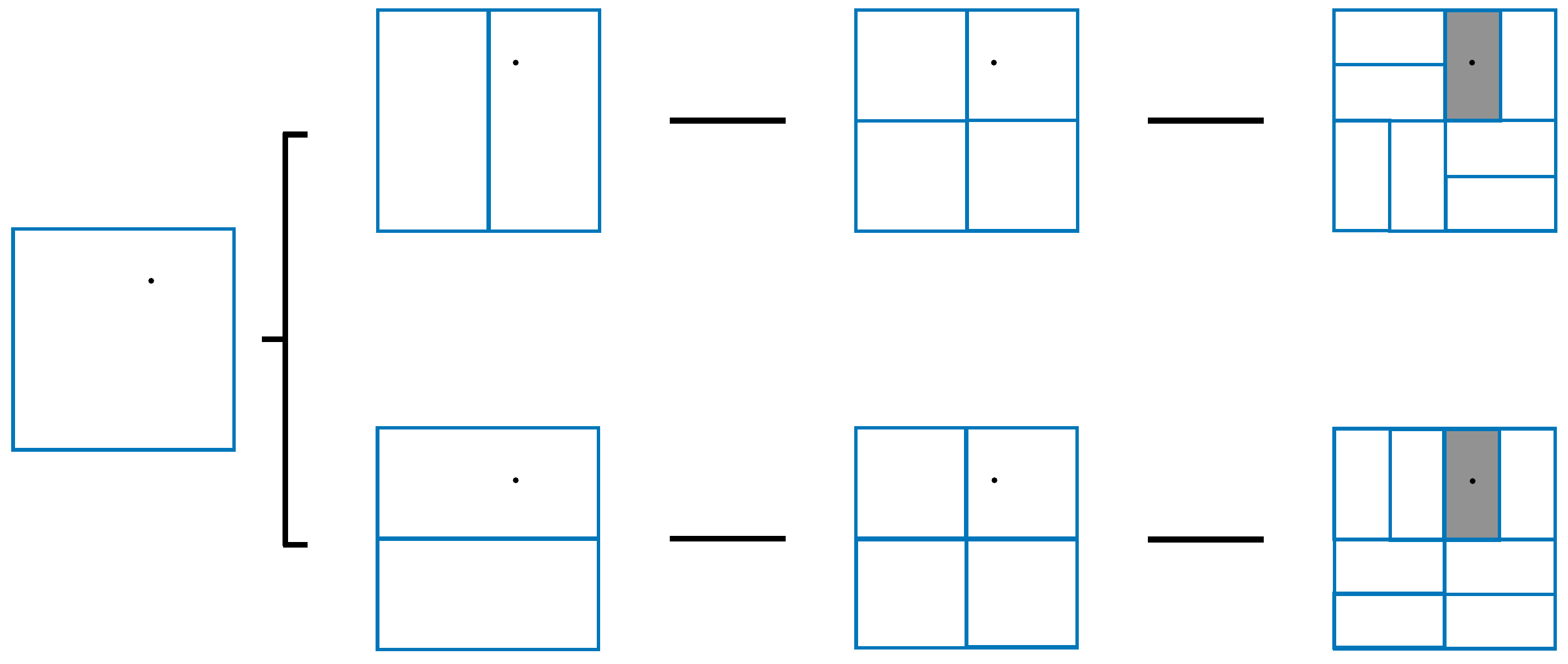}
    \caption{Two possibilities of dyadic tree partition.
    The black dot represents the target test sample $\bx$.
    The two partitions, though different in other areas, have the same representation $\bw$ that corresponds to the grey area.}
    \label{fig:partition}
\end{figure}

We analyze a simplified dyadic forest: each split is made at the midpoint of the
selected coordinate, so the local cell containing a fixed test point \(\bx\) is
determined only by the sequence of split features along the path to \(\bx\). We
further consider the max-edge version of the dyadic tree. The corresponding
local-path procedure is summarized in Algorithm~\ref{alg:dyadic-fs}. Dyadic
splitting is a simplified model of decision-tree partitions and is widely used
in nonparametric statistical estimation \citep[e.g.][]{blanchard2007optimal,
perchet2013multi, cai2023extrapolated, ma2023decision, cai2024transfer,
ma2024optimal}.

\begin{algorithm}[!t]
\caption{Max-edge dyadic randomized forest, local path at \(\bx\)}
\label{alg:dyadic-fs}
\begin{algorithmic}
\STATE Let \(\bE_{r,k}\in\RR^{T\times d}\) be the matrix with one at entry \((r,k)\) and zero elsewhere.
\FOR{$b \in [B]$}
\STATE Draw bootstrap sample $\cD^{(b)} =  \{(\bx_i^{(b)},y_i^{(b)}) \}_{i=1}^{n}$ from original data $\cD$.
\STATE Initialize local encoding $\bw_0=\zero^{T\times d}$ and local cell $L_0=[0,1]^d$.
\FOR{$t \in [T]$}
\STATE Select subset of $s$ features uniformly at random, denoted $\bnu_t^{(b)}$.
\STATE Define \(N_{t-1}^k=\|\bw_{t-1}^k\|_1\) and \(\cM(\bw_{t-1},\bnu_t^{(b)})=\argmin_{k\in\bnu_t^{(b)}}N_{t-1}^k\).
\STATE Select \(j_t\in\argmax_{k\in\cM(\bw_{t-1},\bnu_t^{(b)})}\Delta_k(\bw_{t-1};\cD^{(b)})\), with deterministic tie-breaking.
\STATE Update $\bw_t=\bw_{t-1}+\bE_{N_{t-1}^{j_t}+1,j_t}$.
\STATE Split $L_{t-1}$ along feature $j_t$ at the midpoint and set $L_t$ to the child cell containing $\bx$.
\ENDFOR
\STATE Compute the prediction at $\bx$ as the average of labels in $L_T$.
\ENDFOR
\STATE Average the $B$ predictions.
\end{algorithmic}
\end{algorithm}

Rather than analyzing the entire tree, we study a local property: the leaf node
that contains a given query point \(\bx\). This node is uniquely determined by
the sequence of features used to split the path from the root to that leaf; see
Figure~\ref{fig:partition}. To encode the node, define the feature-weight matrix
\[
\bw_t=(\bw_t^1,\ldots,\bw_t^d)\in\RR^{T\times d}.
\]
If, up to the \(t\)-th step, the leaf node that contains \(\bx\) has been split
along feature \(j\) for \(t_j\) times, then the first \(t_j\) entries of
\(\bw_t^j\) are one, while the remaining entries are zero. We write $N_t^k:=\|\bw_t^k\|_1$ and $N_t^{-j,k}:=\|(\bw_t^{-j})^k\|_1$
for the corresponding split counts in the full and feature-removed runs.

At each local split \(t\), draw a candidate feature set $\bnu_t\subset[d]$ with $|\bnu_t|=s$ and $q=s/d$
uniformly without replacement. Define the candidate-set max-edge set
\[
\cM(\bw_t,\bnu_t)
:=
\argmin_{k\in\bnu_t}N_t^k.
\]
The tree then chooses
\[
k_t
\in
\argmax_{k\in\cM(\bw_t,\bnu_t)}
\Delta_k(\bw_t;\cD),
\]
with deterministic tie-breaking. For the feature-removed run, use the coupled
candidate set $\bnu_t^{-j}:=\bnu_t\setminus\{j\}$.
If \(\bnu_t^{-j}=\varnothing\), the removed update returns the current encoding.
Otherwise define
\[
\cM^{-j}(\bw_t^{-j},\bnu_t^{-j})
:=
\argmin_{k\in\bnu_t^{-j}}N_t^{-j,k},
\]
and choose the feature in this set with the largest split decrease, using the
same deterministic tie-breaking rule. In this dyadic case, only one vector with
length \(2^T-1\) is enough to store the local tree path because the thresholds
are uniquely defined.

For instance, the \(\bx\) in Figure~\ref{fig:partition} has
\begin{align*}
 \bw_1 =  \left(  \begin{matrix}
      0 & 1 \\
      0 & 0 \\
      0 & 0
    \end{matrix}\right) \text{ or }
    \left(  \begin{matrix}
      1 & 0 \\
      0 & 0 \\
      0 & 0
    \end{matrix}\right),
    \qquad
\bw_2 =  \left(  \begin{matrix}
      1 & 1 \\
      0 & 0 \\
      0 & 0
    \end{matrix}\right),
    \qquad
\bw_3 =  \left(  \begin{matrix}
      1 & 1 \\
      0 & 1 \\
      0 & 0
    \end{matrix}\right).
\end{align*}
Moreover, given a fixed \(\bw_t\), the grey region in
Figure~\ref{fig:partition} is the unique leaf node compatible with that
encoding. Denote this region by \(L_{\bw_t}\).

We now justify the choice of the encoding \(\bw\). First, the encoding scheme is
closely related to \texttt{feature\_importances\_} \citep{pedregosa2011scikit}, which
measures a feature's global importance by the proportion of splits that use that
feature. Analogously, for a specific test point \(\bx\), we can quantify the
local importance of feature \(j\) by
\begin{align*}
    \frac{\|\bw_T^j\|_1}{\sum_{j'=1}^d \|\bw_T^{j'}\|_1}.
\end{align*}
This point-specific view is aligned with recent work on individual variable
importance, which moves beyond population-level summaries \citep{dai2025moving}.
We also demonstrate a direct relationship between the prediction at \(\bx\) and
the matrix \(\bw\) through the following proposition. Consider data generated
from the additive model
\begin{align}
    \EE_{y\mid \bx}[y]
    =
    f^{*}(\bx)
    =
    \sum_{j=1}^d f_j^{*}(\bx^j),
    \qquad
    \bx \sim \mathrm{Unif}([0,1]^d).
\end{align}

\begin{proposition}\label{prop:linear-map-tree}
Let \(L_{\bw}\) be the rectangle of the node associated with \(\bx\) under
\(\bw\), and let \(f^{\bw}\) be the decision-tree predictor associated with
\(\bw\). Since we only care about \(\bx\), we view all decision trees with the
same \(\bw\) as equivalent. Based on the above definitions, there exists a map
\(\cK\) such that
\begin{align*}
    \cK(\bw)
    =
    \EE_{\by\mid\bX,\bw}
    \left[
        f^{\bw}(\bx)
    \right].
\end{align*}
Moreover, the map \(\cK\) is affine: there exist a constant \(c_{\bx}\) and a
matrix \(\Theta_{\bx}\in\RR^{T\times d}\) such that
\begin{align*}
    \cK(\bw)
    =
    c_{\bx}
    +
    \langle \Theta_{\bx},\bw\rangle .
\end{align*}
Consequently,
\begin{align*}
    \cK(\bw_1)-\cK(\bw_2)
    =
    \langle \Theta_{\bx},\bw_1-\bw_2\rangle
    \lesssim
    \|\bw_1-\bw_2\|_F .
\end{align*}
\end{proposition}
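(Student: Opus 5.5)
The plan is to compute \(\EE_{\by\mid\bX,\bw}[f^{\bw}(\bx)]\) directly from the structure of the dyadic cell and then exploit additivity of \(f^*\). Fix the design \(\bX\) and the test point \(\bx\). The encoding \(\bw\) determines, for each feature \(k\), the split count \(N^k=\|\bw^k\|_1\). Because splits are at midpoints, the cell containing \(\bx\) is the product rectangle
\[
L_{\bw}=\prod_{k=1}^d I_k(N^k),
\]
where \(I_k(m)\) is the unique dyadic interval of length \(2^{-m}\) containing \(\bx^k\). The tree prediction is the average of the labels in \(L_{\bw}\). Taking the conditional expectation over \(\by\) given \(\bX\) and \(\bw\) gives
\[
\cK(\bw)=\frac{1}{|L_{\bw}|_n}\sum_{i:\bx_i\in L_{\bw}}\sum_{k=1}^d f_k^*(\bx_i^k),
\]
with \(|L_{\bw}|_n\) the number of sample points in the cell. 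This already defines \(\cK\) as a function of \(\bw\) alone, which is the first claim.

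For affinity, I would work at the population level, or equivalently interpret the empirical average by its design expectation. Under \(\bx\sim\mathrm{Unif}([0,1]^d)\) the coordinates are independent, so the average of \(f^*\) over the rectangle splits coordinatewise:
\[
\cK(\bw)=\sum_{k=1}^d g_k(N^k),\qquad g_k(m):=2^{m}\int_{I_k(m)} f_k^*(u)\,du .
\]
Next I would telescope each \(g_k\) along the path:
\[
g_k(N^k)=g_k(0)+\sum_{r=1}^{N^k}\bigl(g_k(r)-g_k(r-1)\bigr)=g_k(0)+\sum_{r=1}^{T}\theta_{r,k}\,w^k_r,
\]
where \(\theta_{r,k}:=g_k(r)-g_k(r-1)\). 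This uses that \(\bw^k\) has ones exactly in its first \(N^k\) entries. Setting \(c_{\bx}=\sum_k g_k(0)\) and \((\Theta_{\bx})_{r,k}=\theta_{r,k}\) gives \(\cK(\bw)=c_{\bx}+\langle\Theta_{\bx},\bw\rangle\). Both \(c_{\bx}\) and \(\Theta_{\bx}\) depend only on \(\bx\) and \(f^*\), not on \(\bw\).

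The final bound follows from Cauchy--Schwarz: \(\langle\Theta_{\bx},\bw_1-\bw_2\rangle\le\|\Theta_{\bx}\|_F\|\bw_1-\bw_2\|_F\). It remains to show \(\|\Theta_{\bx}\|_F=O(1)\). Under boundedness or Lipschitz continuity of the \(f_k^*\), each \(|\theta_{r,k}|\le 2\|f_k^*\|_\infty\), or \(\lesssim 2^{-r}\) in the Lipschitz case. That gives a constant depending on \(f^*\) and \(T\), which the \(\lesssim\) absorbs.

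The main obstacle is the step from the empirical cell average to the product-form integral. With a finite sample, the average over points in \(L_{\bw}\) of \(\sum_k f_k^*(\bx_i^k)\) is not exactly additive across coordinates in \(N^k\). I would handle this in one of two ways. The first is to define \(\cK\) with the expectation also taken over the design conditional on cell membership, so that points in \(L_{\bw}\) are uniform on it and the coordinates are independent. The second is to note that the statement only asserts existence of an affine map, and to interpret \(f^{\bw}\) as its cell-mean idealization. I would state this convention explicitly, since it is where the exact affine identity really comes from.
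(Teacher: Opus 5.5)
Your proposal is correct and follows the paper's proof essentially step for step. The paper uses the same dyadic averages $A_{k,r}=2^r\int_{I_{k,r}}f_k^*(u)\,du$ (your $g_k(r)$), identifies the conditional mean prediction with $\sum_k A_{k,N^k}$ ``under the additive model and the uniform design,'' telescopes via the column-filled encoding, and bounds the resulting fixed linear functional by a constant times the Frobenius norm. The empirical-versus-population issue you flag is genuine; the paper resolves it exactly as in your second option, passing to the cell-mean idealization without comment, so stating that convention explicitly (and giving an explicit Cauchy--Schwarz constant) only sharpens the argument.
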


Given Proposition~\ref{prop:linear-map-tree}, the random forest predictor is
fully determined by the ensemble feature-weight matrix \(\EE_{\xi}[\bw]\):
\begin{align*}
   \EE_{\by\mid\bX}
   \left[
        f(\bx)
   \right]
   =
   \EE_{\xi}
   \left[
        \EE_{\by\mid\bX,\bw}
        \left[
            f^{\bw}(\bx)
        \right]
   \right]
   =
   \EE_{\xi}
   \left[
        \cK(\bw)
   \right]
   =
   \cK\left(\EE_{\xi}[\bw]\right),
\end{align*}
where the last equality follows from the affine representation of \(\cK\).
Moreover, for the feature-removed forest, the affine representation gives
\begin{align*}
\left(
    \EE_{\by\mid\bX}
    \left[
        f(\bx)-f^{-j}(\bx)
    \right]
\right)^2
&
=
\left(
    \cK(\EE_{\xi}[\bw_T])
    -
    \cK(\EE_{\xi}[\bw_T^{-j}])
\right)^2  \\
&
=
\left\langle
    \Theta_{\bx},
    \EE_{\xi}[\bw_T]-\EE_{\xi}[\bw_T^{-j}]
\right\rangle^2  
\lesssim
\left\|
    \EE_{\xi}[\bw_T]-\EE_{\xi}[\bw_T^{-j}]
\right\|_F^2 .
\end{align*}
Thus, it suffices to analyze \(\EE_{\xi}[\bw_T]\) with respect to the Frobenius
norm.

We next give the instability bound of the encoding.

\begin{theorem}
\label{thm:stabilityrandomforest}
\label{thm:max-edge}
Let $\cA$ be the max-edge dyadic random forest algorithm described in
Section~\ref{sec:random-forest-main}. Then $\operatorname{rad}(\cW_t)\leq 1$, and
the one-step condition~\eqref{equ:contraction} holds with
\(\delta_t=\sqrt{2}\).
Let \(A_T = \sum_{t=1}^{T} (1+\sqrt{2})^{T-t}\). Then
\begin{align}\label{equ:stability-of-rf-final-bound}
\frac{1}{d}
 \sum_{j=1}^d
 \left\| \EE\left[\bw_T\right] - \EE\left[\bw_T^{-j}\right]
\right\|_{\rm F}^2
 \leq
 \frac{q \cdot A_T^2}{(d-1)(1-q)}. 
\end{align}
\end{theorem}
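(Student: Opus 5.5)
The plan is to instantiate Proposition~\ref{prop:stabilityofgeneralrecursiveobject} with \(\cH=\RR^{T\times d}\) and the Frobenius norm. This leaves three things to do: bound the radii \(\operatorname{rad}(\cW_t)\), verify the one-step condition~\eqref{equ:contraction} with \(\delta_t=\sqrt2\), and read off \(M_T\).

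\textbf{Radius.} At step \(t\), conditional on \(\bw_{t-1}\), the admissible set \(\cW_t\) is the convex hull of the possible next states \(\bw_{t-1}+\bE_{N^{k}_{t-1}+1,k}\), together with \(\bw_0=\zero\). Proposition~\ref{prop:stabilityofgeneralrecursiveobject} actually uses the radius of the increment. Two possible one-hot increments \(\bE_{r,k},\bE_{r',k'}\) are at Frobenius distance at most \(\sqrt2\), and taking the center to be the midpoint of the increment hull gives radius at most \(1\). I would therefore argue that what enters the proof of Proposition~\ref{prop:stabilityofgeneralrecursiveobject} is the one-step perturbation \(S_{\xi_t}(\bw)-S^{-j}_{\xi_t}(\bw)\) at a common state. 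This difference is either \(\zero\) or the difference of two one-hot matrices, which lies in a set of radius at most \(1\) around a suitable center. The single-step bagging bound of Proposition~\ref{prop:stability-edit-norm} is then applied to this increment map with candidate sampling \(\bnu_t\). This is the part of the argument that produces the factor \(q/((d-1)(1-q))\).

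\textbf{One-step condition.} I need to bound
\[
\bigl\|\EE[S^{-j}_{\xi_t}(\bw_{t-1})-S^{-j}_{\xi_t}(\bw^{-j}_{t-1})]\bigr\|_F
\le (1+\sqrt2)\,\bigl\|\EE[\Delta^{-j}_{t-1}]\bigr\|_F .
\]
Write \(S^{-j}(\bw)=\bw+\bE(\bw)\), where \(\bE(\bw)\) is the selected one-hot increment. The difference then splits into \(\Delta_{t-1}^{-j}\) and \(\bE(\bw_{t-1})-\bE(\bw^{-j}_{t-1})\), and the first piece contributes the factor \(1\). For the second piece, the increment of each run is determined by the max-edge counts \(N^k\) restricted to \(\bnu_t\). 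If the two encodings agree, the increments agree. If they disagree, the increments differ by at most two one-hot entries, which gives Frobenius norm at most \(\sqrt2\).

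The key is to bound the expected increment difference by \(\sqrt2\) times the expected encoding discrepancy. Since the encodings are 0/1 matrices, \(\|\Delta_{t-1}^{-j}\|_F\ge 1\) whenever they differ, so pathwise the increment difference is at most \(\sqrt2\,\|\Delta_{t-1}^{-j}\|_F\).

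\textbf{Main obstacle.} Condition~\eqref{equ:contraction} is stated with the norm outside the expectation, so the pathwise bound above does not transfer directly. I would first try to exploit the structure of the max-edge rule. Because ties and data-driven choices are resolved identically across runs by the shared seed, the distribution of the increment depends on \(\bw\) only through the count profile. This should make the expected increment an affine-like, Lipschitz function of \(\EE[\bw]\) coordinatewise, and the \(\sqrt2\) constant would then come from each discrepant entry moving at most two entries of the expected increment. If that fails, the fallback is to prove the statement with expectation inside the norm and note that Proposition~\ref{prop:stabilityofgeneralrecursiveobject}'s telescoping proof only uses it in that form.

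\textbf{Conclusion.} Once the constants are in place, substituting \(\delta_t=\sqrt2\) and \(\operatorname{rad}(\cW_t)\le1\) gives
\[
M_T\le\Bigl(\sum_{t=1}^T (1+\sqrt2)^{T-t}\Bigr)^2=A_T^2,
\]
and~\eqref{equ:stability-of-rf-final-bound} follows.
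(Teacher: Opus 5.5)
\textbf{The gap is in the one-step condition.} It sits exactly where you flag the main obstacle, and neither of your two resolutions closes it.

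Route (a) offers no mechanism. The conditional mean increment $\bw\mapsto\EE_{\xi_t}[S^{-j}_{\xi_t}(\bw)-\bw]$ is a nonlinear, data-dependent (argmin/argmax) function of the state. The difference of its means along two coupled random states is not controlled by $\EE[\bw_{t-1}]-\EE[\bw^{-j}_{t-1}]$ unless you exhibit specific structure.

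Route (b) rests on a misreading of Proposition~\ref{prop:stabilityofgeneralrecursiveobject}. The recursion \eqref{equ:recurence} propagates $\|\EE[\Delta^{-j}_{t-1}]\|$, so it needs $\|\operatorname{pith}^j_t\|\le(1+\delta_t)\|\EE[\Delta^{-j}_{t-1}]\|$. A bound by $(1+\delta_t)\EE\|\Delta^{-j}_{t-1}\|$ cannot be fed back, because $\EE\|\cdot\|\ge\|\EE[\cdot]\|$ goes the wrong way. Moreover, the peel is handled by Proposition~\ref{prop:stability-edit-norm}, whose $q/((d-1)(1-q))$ gain is a norm-of-mean (cancellation) estimate. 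Redoing the recursion in mean-of-norm form would therefore need a different peel bound and would not return the stated inequality.

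\textbf{The missing idea.} The discrepancy probability can be read off a single coordinate of the mean difference, in two steps.

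First, if the full run never splits on $j$ during steps $1,\dots,t-1$, the coupled paths coincide. Whenever $j\in\bnu_s$ is not selected, removing it changes neither the minimum count over the candidates nor the argmax of the split score, by deterministic tie-breaking. The local cell, and hence the scores of all $k\neq j$, is also the same in both runs. So $\{\bw_{t-1}\neq\bw^{-j}_{t-1}\}\subseteq\{N^j_{t-1}\ge1\}$.

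Second, the encoding is column-filled and the reduced run has an identically zero $j$-th column. Hence $(\EE[\bw_{t-1}-\bw^{-j}_{t-1}])_{1,j}=\PP(N^j_{t-1}\ge1)$ exactly, with no possible cancellation. This gives $\PP(\bw_{t-1}\neq\bw^{-j}_{t-1})\le\|\EE[\Delta^{-j}_{t-1}]\|_F$.

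Combine this with the pathwise bound $\|U(\bw)-U(\bw')\|_F\le\sqrt2\,\mathbf{1}\{\bw\neq\bw'\}$ for the increment $U=S^{-j}_{\xi_t}-\mathrm{id}$. The triangle inequality then yields
\[ \|\operatorname{pith}^j_t\|_F\le\|\EE[\Delta^{-j}_{t-1}]\|_F+\sqrt2\,\PP(\bw_{t-1}\neq\bw^{-j}_{t-1})\le(1+\sqrt2)\,\|\EE[\Delta^{-j}_{t-1}]\|_F . \]
Your pathwise observation that $\|\bw_{t-1}-\bw^{-j}_{t-1}\|_F\ge1$ on the discrepancy event is not what is needed here.

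\textbf{The radius step.} It is simpler than your detour through increment sets. Take $\cW_t(\bw_{t-1})$ to be the set of possible next states and center it at $\bw_{t-1}$; every next state differs from $\bw_{t-1}$ by at most one unit entry. If you adjoin $\zero$ as you wrote, the radius of that set is at least $\sqrt{t}/2$, so the bound $\operatorname{rad}(\cW_t)\le1$ fails for it.
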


Theorem~\ref{thm:stabilityrandomforest} shows that recursive feature bagging
stabilizes the local partition induced by a dyadic tree. Its dependence on the
feature-subsampling ratio matches the one-step bound in
Proposition~\ref{prop:stability-edit-norm}: smaller \(q\) yields a stronger
feature-stability guarantee. The factor \(A_T^2\) is the price of recursion,
capturing how perturbations accumulate along the tree path. For comparison, a
non-bagged depth-\(T\) dyadic tree has feature instability \(2T/d\), as derived
in Appendix~\ref{app:non-bagged-stability}. Thus, for shallow trees and
sufficiently small \(q\), feature bagging provides a strict stability
improvement.

\subsection{Finite Bagging}
\label{sec:finite-bagging}

The preceding model-free guarantees focus on infinite bagging, whereas practical
ensembles use a finite number \(B\) of bagging rounds. We now show that the
infinite-bagging bounds remain representative when \(B\) is moderately large.

\begin{proposition}[Feature instability of finite bagging]\label{prop:stability-set-finite}
Let Assumption~\ref{asp:hilbert} hold. Then for a bagged algorithm \(\cA\) with
\(B\) bagging rounds and outputs \(\bw^{(b)}\in\cW\), \(b\in[B]\), there holds
\begin{align*}
  \frac{1}{d} \sum_{j=1}^d
  \left\|
  \frac{1}{B}\sum_{b=1}^B\bw^{(b)}
  -
  \frac{1}{B}\sum_{b=1}^B\bw^{(b),-j}
  \right\|^2
  \leq
  \frac{\sqrt{3}}{d} \sum_{j=1}^d
  \left\|\EE_{\xi}[\bw]-\EE_{\xi}[\bw^{-j}]\right\|^2
  +
  \frac{6\sqrt{3}\operatorname{rad}(\cW)^2
  \log\left(\frac{d+1}{\delta}\right)}{B}
\end{align*}
with probability \(1-\delta\).
\end{proposition}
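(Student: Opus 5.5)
Our plan is to split the finite-bagging discrepancy into an infinite-bagging term plus two concentration errors, and to control the errors with a single vector Bernstein bound and a union bound over the \(d+1\) averages involved. For each \(j\in[d]\), write \(\bar\bw_B:=B^{-1}\sum_{b}\bw^{(b)}\), \(\bar\bw_B^{-j}:=B^{-1}\sum_b\bw^{(b),-j}\), \(\bmu:=\EE_\xi[\bw]\) and \(\bmu^{-j}:=\EE_\xi[\bw^{-j}]\). Then
\[
\bar\bw_B-\bar\bw_B^{-j}=(\bmu-\bmu^{-j})+(\bar\bw_B-\bmu)-(\bar\bw_B^{-j}-\bmu^{-j}).
\]
A weighted inequality \(\|a+b+c\|^2\le \lambda_1\|a\|^2+\lambda_2\|b\|^2+\lambda_3\|c\|^2\), valid whenever \(\sum_k\lambda_k^{-1}\le 1\), then bounds the squared norm. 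We expect a choice such as \(\lambda_1=\sqrt3\) and \(\lambda_2=\lambda_3\) determined by the constraint to produce the leading constant \(\sqrt3\); any admissible choice gives the statement up to constants. Averaging over \(j\) then gives the first term, \(\frac{\sqrt3}{d}\sum_j\|\bmu-\bmu^{-j}\|^2\), exactly. It remains to bound \(\|\bar\bw_B-\bmu\|^2\) and \(\max_j\|\bar\bw_B^{-j}-\bmu^{-j}\|^2\) uniformly.

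For the concentration step, fix \(j\). The seeds \(\xi^{(b)}\) are i.i.d. and are shared between the full and reduced runs. Hence the summands \(\bw^{(b),-j}\) are i.i.d. in \(\cW\), and likewise for \(\bw^{(b)}\). By Assumption~\ref{asp:hilbert}, pick a center \(\bw_c\) with \(\sup_{\bw'\in\cW}\|\bw'-\bw_c\|\le \operatorname{rad}(\cW)=:r\) (up to an arbitrary slack). Because \(\cW\) is closed and convex, \(\bmu\in\cW\), so \(\|\bw^{(b)}-\bmu\|\le 2r\) and the variance proxy is at most \(r^2\). We will apply a Hilbert-space Bernstein or Pinelis inequality for bounded i.i.d. centered vectors. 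For \(B\) samples it gives
\[
\|\bar\bw_B-\bmu\|\le C r\sqrt{\log(2/\delta')/B}+C r\log(2/\delta')/B
\]
with probability \(1-\delta'\). Squaring and absorbing the lower-order term (the regime \(\log(1/\delta')\le B\); otherwise the claim is trivial since the left side is at most \(4r^2\)) yields \(\|\bar\bw_B-\bmu\|^2\lesssim r^2\log(1/\delta')/B\).

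We then take a union bound over the \(d+1\) events (the full run and the \(d\) reduced runs) with \(\delta'=\delta/(d+1)\). This produces the \(\log((d+1)/\delta)\) factor, and with probability \(1-\delta\) all deviations are simultaneously bounded. Averaging over \(j\) and inserting these bounds into the weighted split then gives the second term, of the form \(c\,r^2\log((d+1)/\delta)/B\). The final step is to collect constants to reach \(6\sqrt3\).

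The main obstacle is getting the constants exactly: we need a dimension-free Hilbert-space concentration inequality with explicit constants, such as Pinelis's martingale inequality, \(\|\sum_b Z_b\|\le \sqrt{2B}\,\|Z\|_\infty\sqrt{\log(2/\delta)}\) in Hoeffding form. We would first try the Hoeffding-type version with range bound \(2r\), which gives \(\|\bar\bw_B-\bmu\|^2\le 8r^2\log(2/\delta')/B\). We would then tune \(\lambda_2,\lambda_3\) to match \(6\sqrt3\). If the numbers do not align precisely, we accept a slightly different absolute constant, since only the \(O(\log(d/\delta)/B)\) rate matters for the conclusion.
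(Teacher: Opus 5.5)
Your route is the paper's route: the same three-term split of $\bar{\bw}_B-\bar{\bw}_B^{-j}$, a Hilbert-space Hoeffding bound for each of the $d+1$ averages at level $\delta/(d+1)$, and a union bound. The only place you differ is the squaring step, and there you are right and the paper is not. The paper invokes Minkowski to get $\|a+b+c\|^2\le\sqrt3\,(\|a\|^2+\|b\|^2+\|c\|^2)$, which already fails for $a=b=c$; the correct equal-weight constant is $3$.

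With your admissible weights, $\lambda_1=\sqrt3$ forces $\lambda_2+\lambda_3\ge2(3+\sqrt3)$. With the Pinelis constant your argument therefore yields
\[
\frac{\sqrt3}{d}\sum_{j=1}^d\left\|\EE_{\xi}[\bw]-\EE_{\xi}[\bw^{-j}]\right\|^2+\frac{16(3+\sqrt3)\operatorname{rad}(\cW)^2\log\left(2(d+1)/\delta\right)}{B}.
\]
No tuning of $\lambda_2,\lambda_3$ will get you to $6\sqrt3$. That would need $\|\bar{\bw}_B-\EE_{\xi}[\bw]\|^2\le K\operatorname{rad}(\cW)^2\log((d+1)/\delta)/B$ with $K\le(3\sqrt3-3)/2\approx1.10$. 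The central limit theorem, already for symmetric signs in $\mathbb{R}$, forces $K\ge2$ as $\delta\to0$.

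This is not a flaw in your reasoning: with the stated constants the proposition is false, so your fallback of changing the absolute constant is forced. The change is by a factor of several, not slight. The $\log((d+1)/\delta)/B$ rate, which is all that is used downstream to choose $B$, survives.

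Here is a counterexample in the generality in which the proposition is stated and proved.
\begin{itemize}
\item Take $\cW=[-1,1]$, so $\operatorname{rad}(\cW)=1$.
\item Use shared-seed outputs $w^{(b)}=\epsilon_b$ and $w^{(b),-j}=-\epsilon_b$ for every $j$, where the $\epsilon_b\in\{\pm1\}$ are i.i.d.\ with mean $m/\sqrt B$. For example, let the output's sign depend on the parity of the number of input features.
\item Put $L=\log((d+1)/\delta)$, $m^2=\frac{3\sqrt3}{2(3-\sqrt3)}L$, and $\bar\epsilon_B=B^{-1}\sum_b\epsilon_b$.
\end{itemize}
The left side is $4\bar\epsilon_B^2$ and the right side is exactly $12m^2/B$. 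So the bound fails whenever
\[
\sqrt B\,(\bar\epsilon_B-m/\sqrt B)>(\sqrt3-1)m=\sqrt{(3\sqrt3-3)L/2}\approx1.05\sqrt L.
\]
As $B\to\infty$ this probability tends to $\PP\big(Z>\sqrt{(3\sqrt3-3)L/2}\big)$ with $Z\sim\cN(0,1)$. Up to a logarithmic factor this is $(\delta/(d+1))^{0.55}$, which is much larger than $\delta$ once $\delta$ is small. For example, $d=1$ and $\delta=10^{-3}$ give a failure probability of about $1.9\times10^{-3}$.
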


The proof of Proposition~\ref{prop:stability-set-finite} is given in
Appendix~\ref{app:finite-bagging}; it is a direct application of Hoeffding's
inequality and the union bound. The finite-bagging feature instability exceeds
the infinite-bagging feature instability by a concentration term that decays
linearly with the number of bagging rounds \(B\). On the constant, the radius of
the overall algorithm \(\cA\) is no larger than the sum of the suprema of
\(\operatorname{rad}(\cW_t)\). Combining this finite-\(B\) concentration term
with Proposition~\ref{prop:stabilityofgeneralrecursiveobject}, it suffices to
set
\begin{align}\label{equ:guarantee-B-feature}
    B \gtrsim d \cdot \frac{1-q}{q}.
\end{align}
In Figure~\ref{fig:feature_stability_B_max_features}, empirical feature
stability improves as \(B\) increases. For larger \(q\), fewer bagging rounds
are needed to reach the theoretical stability regime predicted by
\eqref{equ:guarantee-B-feature}.

As observed by \citet{soloff2024bagging}, finite instance bagging is also
harmless when \(B\gtrsim n(1-p)/p\). Though the results are dual, the feature
side can be much more practical when \(d\ll n\). For instance, a typical tabular
scale \((n,d)\sim(10^5,10^2)\) leads to a requirement of order \(10^2\) bagging
rounds for feature stability, as opposed to order \(10^5\) rounds for instance
stability.

\begin{figure}[!t]
\centering
\subfigure[RFS, $q = 0.05$]{
\begin{minipage}{0.4\linewidth}
\centering
\includegraphics[width=\textwidth]{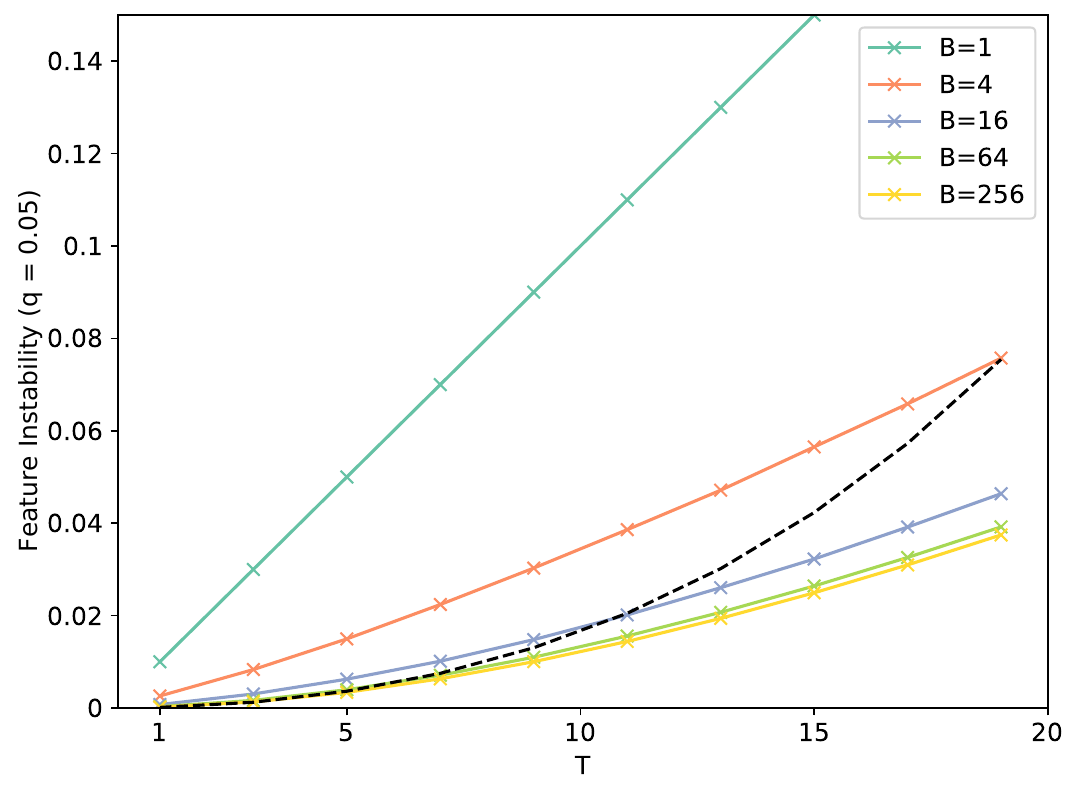}
\end{minipage}
\label{fig:forward_selection_feature_stability_B_max_features_0.05}
}
\subfigure[RFS, $q = 0.1$]{
\begin{minipage}{0.4\linewidth}
\centering
\includegraphics[width=\textwidth]{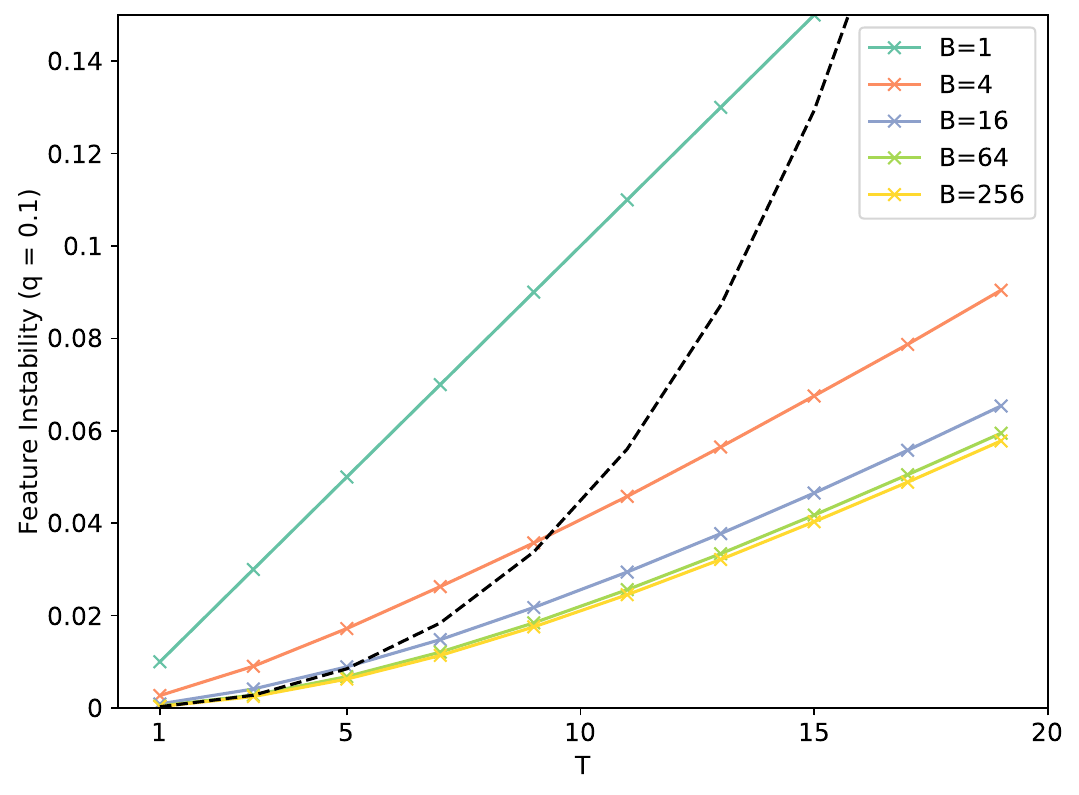}
\end{minipage}
\label{fig:forward_selection_feature_stability_B_max_features_0.1}
}
\subfigure[RF, $q = 0.1$]{
\begin{minipage}{0.4\linewidth}
\centering
\includegraphics[width=\textwidth]{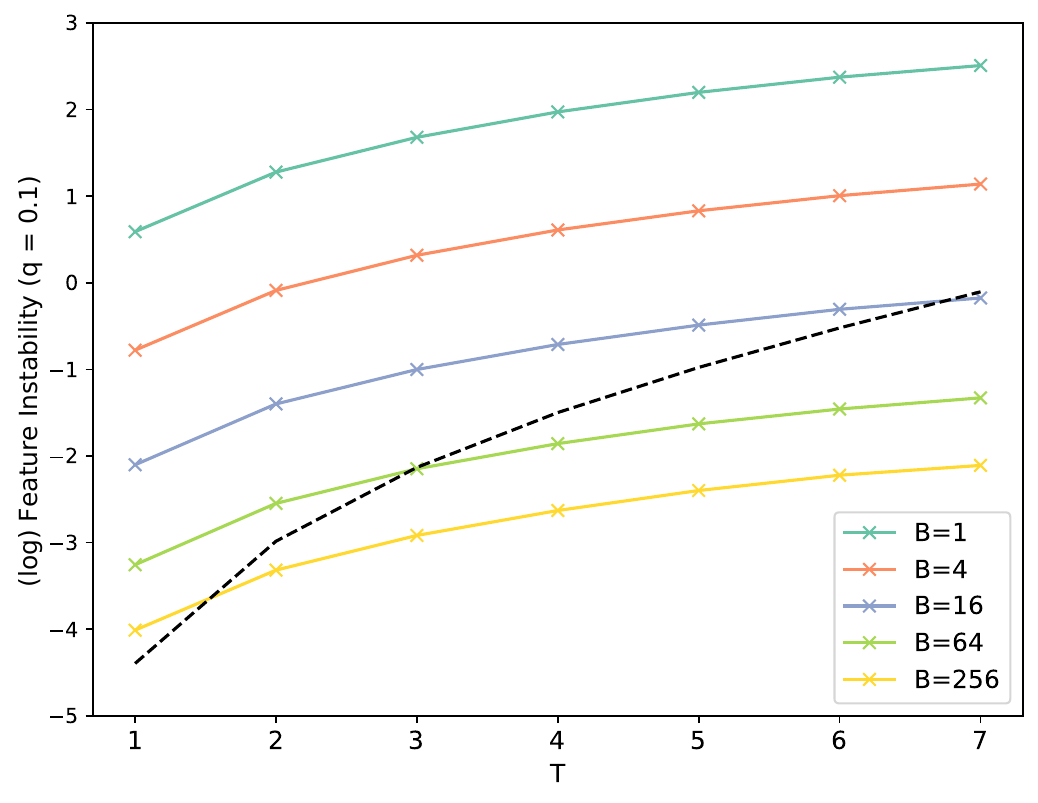}
\end{minipage}
\label{fig:random_forest_feature_stability_B_max_features_0.1}
}
\subfigure[RF, $q = 0.2$]{
\begin{minipage}{0.4\linewidth}
\centering
\includegraphics[width=\textwidth]{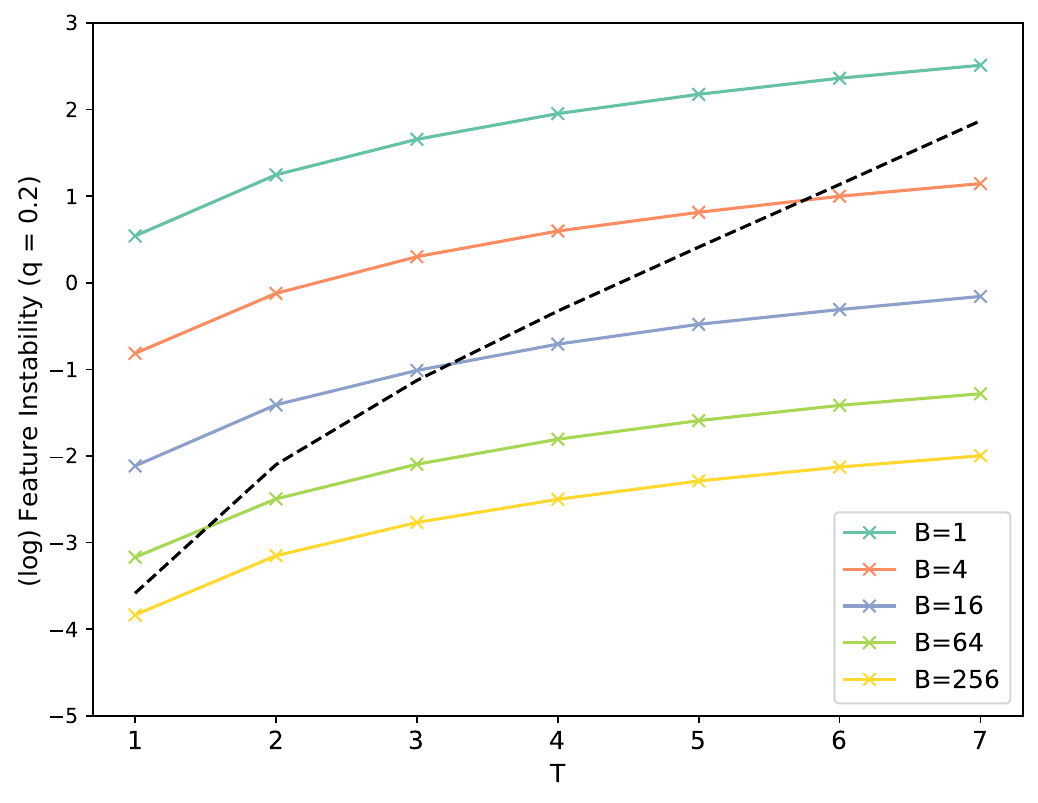}
\end{minipage}
\label{fig:random_forest_feature_stability_B_max_features_0.2}
}
\caption{
Comparison of feature instability across different numbers of bagging rounds \(B\).
Each cross corresponds to 100 repetitions. The RFS and RF simulation settings
are the same as those described in Appendix~\ref{sec:additional-experiments}.
The black lines are instability bounds from
Theorems~\ref{thm:stabilityrandomforwardselection} and
\ref{thm:stabilityrandomforest}.
}
\label{fig:feature_stability_B_max_features}
\end{figure}

\section{Conclusion}

We showed that feature bagging provides a principled route to algorithmic stabilization. By introducing FI as the feature-side counterpart to II, we made it possible to study how bagging improves stability along both the instance and feature axes. In linear regression, we derived sharp asymptotic characterizations of II and FI; in a model-free setting inspired by recursive feature subsampling in random forests, we obtained tight upper bounds for FI in early steps. Together, these results show that feature bagging consistently improves stability, that more aggressive feature subsampling can yield stronger stabilization, and that a modest number of bagging rounds is enough to approach the infinite-bagging stability level.

\section*{Acknowledgments}

We thank the anonymous reviewers for their constructive comments and suggestions, which have helped improve the clarity and presentation of this work. The bulk of the work was carried out while YM was a visiting student at MBZUAI and University of Toronto. QS was partially supported by NSERC Grant RGPIN-2026-06888, Compute Canada, and MBZUAI.

\section*{Impact Statement}

This paper presents work whose goal is to advance the field of Machine
Learning. There are many potential societal consequences of our work, none
which we feel must be specifically highlighted here.

\bibliographystyle{apalike}
\bibliography{ref}

\newpage
\stopcontents[main]
\appendix
\renewcommand{\theequation}{S.\arabic{equation}}
\renewcommand{\thetable}{S.\arabic{table}}
\renewcommand{\thefigure}{S.\arabic{figure}}
\renewcommand{\thesection}{S.\arabic{section}}
\newcommand{\thelemma}{S.\arabic{lemma}}

\vspace{30pt}
\noindent{\bf \LARGE Appendix}

\startcontents[sections]
\section*{\contentsname}
\printcontents[sections]{}{1}{}

\section{Experiment Details of PMI between Feature Instability and Generalization}
\label{app:pmi_fs_generalization}

This appendix provides full experimental details and quantitative analyses supporting
Section~\ref{sec:exp_for_feature_stability_meaning}.

\subsection{Data Generating Process}

We adopt the MARSadd setting from \citet{friedman1991multivariate},
a standard benchmark for evaluating random forests
\citep{mentch2020randomization,curth2024random}.
For each experiment, we independently sample
$X \in [0,1]^{n \times d}$ from the uniform distribution and generate responses according to
\begin{align*}
f(x) = (1-\nu) f_{\text{weak}}(x) + \nu f_{\text{strong}}(x),
\end{align*}
where
\[
f_{\text{weak}}(x) = \frac{1}{\sqrt{d-4}} \sum_{j=5}^d x_j
\]
represents a linear but low signal-to-noise component, in which the
signal is distributed across many features.
Let
\[
f_{\text{strong}}(x)
= 0.1 e^{4x_1}
+ \frac{4}{1+e^{-20x_2+10}}
+ 3x_3 + 2x_4
\]
represents a concentrated but nonlinear signal.
Independent Gaussian noise $\epsilon \sim \mathcal{N}(0,\sigma^2)$ is added to form
$y = f(x) + \epsilon$.
An independent test set of equal size is generated using the same procedure.
The mixing parameter $\nu \in [0,1]$ controls the dominance of nonlinear structure.

\subsection{Model Training and Instability Estimation}

The random forest models were trained over a predefined hyperparameter grid
to systematically explore the effects of model complexity and ensemble
diversity.
The number of estimators was fixed at $n_{\text{estimators}}=256$.
To control tree diversity, the maximum feature ratio (\texttt{max\_features})
and the maximum sample ratio (\texttt{max\_samples}) were varied over
$\{0.1, 0.3, 0.5, 0.7\}$.
Model complexity was adjusted by varying the maximum tree depth
(\texttt{max\_depth}) from 1 to 7, while the minimum number of samples per leaf
(\texttt{min\_samples\_leaf}) was fixed at 1.

This design enables a systematic investigation of model complexity along both
the feature-subsampling and depth dimensions.
In particular, \texttt{max\_features} and \texttt{max\_samples} control the
diversity of individual trees, whereas \texttt{max\_depth} governs the
bias-variance trade-off of the ensemble.
Fixing the number of estimators improves comparability across runs.

For each configuration of hyperparameters  $\{$max\_features, max\_samples, max\_depth$\}$, we train a RandomForestRegressor on the training data and compute both training and test mean squared errors (MSE).
Both instance and feature instability metrics are then measured as in \eqref{equ:informal-instance-stability} and \eqref{equ:informal-feature-stability}, by retraining the model on the data with the removed feature or instance.

All configurations are repeated 500 times with different random seeds, ensuring robust Monte Carlo estimates.
For each unique hyperparameter configuration, we compute the mean instability scores, mean squared errors, and define the generalization gap as
\begin{align*}
\text { Gap }=\text{MSE}_{\text{test}}-\text{MSE}_{\text{train}}.
\end{align*}
Results are filtered for fixed conditions ($n=500, d=10, \sigma=1$).
The MSE$_{\text{test}}$ is computed using additional independent test samples, and MSE$_{\text{train }}$ is the in-sample training error.

\subsection{Statistical Dependence Measures}

We then investigate the statistical dependence between instability measures and the generalization gap. Three types of associations are evaluated:
\begin{itemize}
\item Linear dependence:
$R^2$ values from linear regression models of the form
Gap $\sim S_{\text {feature },}$
Gap $\sim S_{\text {instance }}$, and
Gap $\sim\left(S_{\text {feature }}, S_{\text {instance }}\right)$.
\item Mutual information (MI):
Nonlinear associations are estimated using the mutual\_info\_regression function from scikit-learn.
\item Partial mutual information (PMI):
To isolate the unique contribution of each instability measure, we apply a residual-based approach using random forest regressions:
we regress out the conditioning variable (e.g., \(S_{\text{instance}}\)) from both the predictor and target, and compute the mutual information between the residuals via the MINE estimator.
This yields estimates of \(\operatorname{PMI}(S_{\text{feature}}; \mathrm{Gap} \mid S_{\text{instance}})\) and \(\operatorname{PMI}(S_{\text{instance}}; \mathrm{Gap} \mid S_{\text{feature}})\).
\end{itemize}

\subsection{More Explanation about Partial Mutual Information}

We provide an overview of partial mutual information used in the experiments.

Mutual information (MI) quantifies the amount of shared information between two random variables $X$ and $Y$.
The partial mutual information (PMI) aims to measure the direct association between $X$ and $Y$ while removing the indirect influence of $Z$, typically by regressing out $Z$ from both $X$ and $Y$ and then computing the mutual information between the residuals. Formally, if $\tilde{X}$ and $\tilde{Y}$ denote the residuals obtained from regressing $X$ and $Y$ on $Z$, respectively, then
\begin{equation}
    \mathrm{PMI}(X; Y \mid Z) = I(\tilde{X}; \tilde{Y}).
\end{equation}
Thus, PMI can be viewed as a nonparametric extension of the partial correlation concept to information-theoretic measures.
Consider three Gaussian random variables satisfying
\begin{align}
    X &= g_1(Z) + \epsilon_X, \\
    Y &= g_2(Z) + \epsilon_Y,
\end{align}
where $\epsilon_X$ and $\epsilon_Y$ are independent Gaussian noises and $Z$ is a common latent variable. In this case, $X$ and $Y$ are dependent due to their shared dependence on $Z$, yet they contain no direct interaction.
If we compute the partial mutual information $\mathrm{PMI}(X; Y \mid Z)$, it is zero because, after regressing out $Z$, the residuals $\tilde{X}$ and $\tilde{Y}$ are independent.
However, if we slightly modify the model by introducing a direct coupling:
\begin{equation}
    Y = g_1(Z) + g_2(X) + \epsilon_Y,
\end{equation}
then $\mathrm{PMI}(X; Y \mid Z) > 0$, indicating a direct dependence between $X$ and $Y$.
If we treat the generalization gap as \(Y\), and take \((X,Z)\) to be either \((\mathrm{FI},\mathrm{II})\) or \((\mathrm{II},\mathrm{FI})\), then both PMIs should be positive when both measures contain generalization-relevant information.

\subsection{Analysis}

We investigate how FI and II jointly and individually explain variation in the generalization gap \(Y\), under two configurations of the hyperparameter \(\nu\) (\(\nu = 1\) and \(\nu = 0\)). We report both deterministic correlations (via \(R^2\)) and information-theoretic dependencies (via mutual information and partial mutual information).

\paragraph{Case 1: $\nu = 1$.}

When \(\nu = 1\), the individual explanatory powers of II and FI are moderate: \(R^2(\text{II} \rightarrow Y) = 0.143\) and \(R^2(\text{FI} \rightarrow Y) = 0.120\). However, their joint contribution increases substantially, achieving \(R^2(\text{FI + II} \rightarrow Y) = 0.345\). This suggests a complementary effect between II and FI: neither measure alone suffices to capture the variability of the generalization gap, but together they explain a larger fraction of variance.

The mutual information (MI) values support this observation. The joint dependency between \((\text{FI}, \text{II})\) and \(Y\) is substantial (\(I((\text{FI}, \text{II}); Y) = 1.345\)), with asymmetric contributions: \(\text{PMI}(\text{FI}; Y \mid \text{II}) = 0.469\), \(\text{PMI}(\text{II}; Y \mid \text{FI}) = 0.439\), and mutual dependence between FI and II (\(I(\text{FI}; \text{II}) = 0.558\)). These results indicate that both measures convey overlapping but not redundant information about generalization. FI contributes slightly more conditional information about the generalization gap once II is known.

\paragraph{Case 2: $\nu = 0$.}

At \(\nu = 0\), the relationships become markedly stronger. \(R^2(\text{II} \rightarrow Y) = 0.231\), \(R^2(\text{FI} \rightarrow Y) = 0.546\), and \(R^2(\text{FI + II} \rightarrow Y) = 0.675\). The dominant role of FI indicates that feature-level perturbations are the primary source of generalization-relevant variation, while II contributes additional but smaller explanatory power.

Information-theoretic results are consistent: \(I((\text{FI}, \text{II}); Y) = 1.062\) (with FI dominating), \(\text{PMI}(\text{FI}; Y \mid \text{II}) = 0.498\), \(\text{PMI}(\text{II}; Y \mid \text{FI}) = 0.402\), and \(I(\text{FI}; \text{II}) = 0.748\). Compared with the \(\nu = 1\) case, the interdependence between FI and II increases, suggesting that when regularization or noise is reduced, the two measures become more correlated. Partial mutual information values also reveal that the unique contribution of each variable to generalization decreases slightly, consistent with the growing redundancy between FI and II.

\paragraph{Interpretation}

Overall, these findings highlight two trends: (1) complementarity, where FI and II jointly enhance generalization predictability, especially when they are less correlated (as in \(\nu = 1\)), and (2) a redundancy-dominance shift, where for smaller \(\nu\), FI becomes the dominant predictor while its correlation with II strengthens, reducing the independent information each measure provides.
This suggests that different regimes of model regularization or noise change how II and FI interact with generalization: from complementary (when \(\nu = 1\)) to partially redundant but stronger overall predictors (when \(\nu = 0\)).

\subsection{Additional Robustness Checks}
\label{app:pmi_robustness_checks}

The preceding analysis is a focused synthetic random-forest study at two
representative values of \(\nu\).  The robustness study here strictly broadens
that analysis along four axes: six synthetic signal regimes
\(\nu\in\{0,0.2,0.4,0.6,0.8,1\}\), four base estimator families (RF, DT,
AdaBoost, and GBRT), four PMI residualizers (RF, DT, AdaBoost, and
GBRT), and four completed real tabular regression benchmarks
(\texttt{diabetes}, \texttt{abalone}, \texttt{cpu\_act}, and
\texttt{house\_prices}).  The base estimator grids contain 105, 35, 63, and 63
hyperparameter settings, respectively, and each reported summary uses 100
bootstrap replicates, each subsampling one third of the corresponding grid.

Tables~\ref{tab:pmi-synthetic-full}--\ref{tab:dr2-real-full} report the
conditional contribution of FI after accounting for II.  The entries are
positive across all displayed synthetic and real-data settings under both
\pmifi{} and incremental \dRtwo.  Across the 96 synthetic
estimator/residualizer/\(\nu\) groups and the 64 real-data
estimator/residualizer/dataset groups, the bootstrap positive rate is 1.0 for
every group.  For \dRtwo, the values repeat across residualizer columns because
the residualizer is only used in the PMI calculation.  Thus, the conclusion
from the focused analysis above persists under a broader estimator grid,
multiple dependence residualizers, and real tabular data.

\begin{table}[p]
\centering
\caption{Synthetic \pmifi{} across estimator families and PMI residualizers (6 seeds/config, 100 bootstrap repetitions).}
\label{tab:pmi-synthetic-full}
\scriptsize
\resizebox{\textwidth}{!}{%
\begin{tabular}{lcccccccccccccccc}
\toprule
& \multicolumn{4}{c}{RF residualizer} & \multicolumn{4}{c}{DT residualizer} & \multicolumn{4}{c}{AdaBoost residualizer} & \multicolumn{4}{c}{GBRT residualizer} \\
\cmidrule(lr){2-5}\cmidrule(lr){6-9}\cmidrule(lr){10-13}\cmidrule(lr){14-17}
\shortstack{Synthetic\\setting} & RF & DT & Ada & GBRT & RF & DT & Ada & GBRT & RF & DT & Ada & GBRT & RF & DT & Ada & GBRT \\
\midrule
\(\nu = 0.0\) & 0.380 & 0.633 & 0.485 & 0.376 & 0.417 & 0.841 & 0.598 & 0.696 & 0.375 & 0.582 & 0.460 & 0.350 & 0.337 & 0.486 & 0.432 & 0.343 \\
\(\nu = 0.2\) & 0.251 & 0.496 & 0.454 & 0.443 & 0.325 & 0.723 & 0.590 & 0.644 & 0.263 & 0.441 & 0.455 & 0.404 & 0.239 & 0.391 & 0.378 & 0.360 \\
\(\nu = 0.4\) & 0.238 & 0.378 & 0.347 & 0.375 & 0.252 & 0.555 & 0.428 & 0.525 & 0.239 & 0.342 & 0.349 & 0.365 & 0.223 & 0.335 & 0.303 & 0.340 \\
\(\nu = 0.6\) & 0.228 & 0.373 & 0.522 & 0.306 & 0.299 & 0.495 & 0.636 & 0.424 & 0.249 & 0.344 & 0.512 & 0.281 & 0.226 & 0.352 & 0.427 & 0.255 \\
\(\nu = 0.8\) & 0.223 & 0.322 & 0.582 & 0.368 & 0.236 & 0.404 & 0.682 & 0.550 & 0.229 & 0.307 & 0.595 & 0.348 & 0.222 & 0.294 & 0.477 & 0.318 \\
\(\nu = 1.0\) & 0.250 & 0.303 & 0.458 & 0.414 & 0.291 & 0.378 & 0.585 & 0.537 & 0.265 & 0.291 & 0.488 & 0.397 & 0.224 & 0.302 & 0.379 & 0.381 \\
\bottomrule
\end{tabular}%
}
\end{table}

\begin{table}[p]
\centering
\caption{Synthetic \dRtwo{} across estimator families and PMI residualizers (6 seeds/config, 100 bootstrap repetitions).}
\label{tab:dr2-synthetic-full}
\scriptsize
\resizebox{\textwidth}{!}{%
\begin{tabular}{lcccccccccccccccc}
\toprule
& \multicolumn{4}{c}{RF residualizer} & \multicolumn{4}{c}{DT residualizer} & \multicolumn{4}{c}{AdaBoost residualizer} & \multicolumn{4}{c}{GBRT residualizer} \\
\cmidrule(lr){2-5}\cmidrule(lr){6-9}\cmidrule(lr){10-13}\cmidrule(lr){14-17}
\shortstack{Synthetic\\setting} & RF & DT & Ada & GBRT & RF & DT & Ada & GBRT & RF & DT & Ada & GBRT & RF & DT & Ada & GBRT \\
\midrule
\(\nu = 0.0\) & 0.152 & 0.485 & 0.246 & 0.231 & 0.152 & 0.485 & 0.246 & 0.231 & 0.152 & 0.485 & 0.246 & 0.231 & 0.152 & 0.485 & 0.246 & 0.231 \\
\(\nu = 0.2\) & 0.064 & 0.402 & 0.300 & 0.273 & 0.064 & 0.402 & 0.300 & 0.273 & 0.064 & 0.402 & 0.300 & 0.273 & 0.064 & 0.402 & 0.300 & 0.273 \\
\(\nu = 0.4\) & 0.014 & 0.320 & 0.204 & 0.334 & 0.014 & 0.320 & 0.204 & 0.334 & 0.014 & 0.320 & 0.204 & 0.334 & 0.014 & 0.320 & 0.204 & 0.334 \\
\(\nu = 0.6\) & 0.028 & 0.310 & 0.510 & 0.332 & 0.028 & 0.310 & 0.510 & 0.332 & 0.028 & 0.310 & 0.510 & 0.332 & 0.028 & 0.310 & 0.510 & 0.332 \\
\(\nu = 0.8\) & 0.011 & 0.133 & 0.496 & 0.420 & 0.011 & 0.133 & 0.496 & 0.420 & 0.011 & 0.133 & 0.496 & 0.420 & 0.011 & 0.133 & 0.496 & 0.420 \\
\(\nu = 1.0\) & 0.016 & 0.095 & 0.316 & 0.454 & 0.016 & 0.095 & 0.316 & 0.454 & 0.016 & 0.095 & 0.316 & 0.454 & 0.016 & 0.095 & 0.316 & 0.454 \\
\bottomrule
\end{tabular}%
}
\end{table}

\begin{table}[p]
\centering
\caption{Real-data \pmifi{} across estimator families and PMI residualizers (6 seeds/config, 100 bootstrap repetitions).}
\label{tab:pmi-real-full}
\scriptsize
\resizebox{\textwidth}{!}{%
\begin{tabular}{lcccccccccccccccc}
\toprule
& \multicolumn{4}{c}{RF residualizer} & \multicolumn{4}{c}{DT residualizer} & \multicolumn{4}{c}{AdaBoost residualizer} & \multicolumn{4}{c}{GBRT residualizer} \\
\cmidrule(lr){2-5}\cmidrule(lr){6-9}\cmidrule(lr){10-13}\cmidrule(lr){14-17}
Real dataset & RF & DT & Ada & GBRT & RF & DT & Ada & GBRT & RF & DT & Ada & GBRT & RF & DT & Ada & GBRT \\
\midrule
Abalone & 0.219 & 0.334 & 0.384 & 0.292 & 0.232 & 0.387 & 0.452 & 0.573 & 0.227 & 0.316 & 0.402 & 0.283 & 0.216 & 0.390 & 0.363 & 0.276 \\
CPU Act & 0.264 & 0.387 & 0.278 & 0.283 & 0.330 & 0.438 & 0.296 & 0.361 & 0.279 & 0.372 & 0.281 & 0.266 & 0.252 & 0.471 & 0.293 & 0.269 \\
Diabetes & 0.358 & 0.368 & 0.549 & 0.437 & 0.412 & 0.444 & 0.597 & 0.538 & 0.364 & 0.332 & 0.524 & 0.404 & 0.339 & 0.328 & 0.505 & 0.398 \\
House prices & 0.216 & 0.310 & 0.404 & 0.365 & 0.222 & 0.313 & 0.399 & 0.436 & 0.219 & 0.288 & 0.373 & 0.355 & 0.213 & 0.294 & 0.368 & 0.347 \\
\bottomrule
\end{tabular}%
}
\end{table}

\begin{table}[p]
\centering
\caption{Real-data \dRtwo{} across estimator families and PMI residualizers (6 seeds/config, 100 bootstrap repetitions).}
\label{tab:dr2-real-full}
\scriptsize
\resizebox{\textwidth}{!}{%
\begin{tabular}{lcccccccccccccccc}
\toprule
& \multicolumn{4}{c}{RF residualizer} & \multicolumn{4}{c}{DT residualizer} & \multicolumn{4}{c}{AdaBoost residualizer} & \multicolumn{4}{c}{GBRT residualizer} \\
\cmidrule(lr){2-5}\cmidrule(lr){6-9}\cmidrule(lr){10-13}\cmidrule(lr){14-17}
Real dataset & RF & DT & Ada & GBRT & RF & DT & Ada & GBRT & RF & DT & Ada & GBRT & RF & DT & Ada & GBRT \\
\midrule
Abalone & 0.017 & 0.212 & 0.295 & 0.114 & 0.017 & 0.212 & 0.295 & 0.114 & 0.017 & 0.212 & 0.295 & 0.114 & 0.017 & 0.212 & 0.295 & 0.114 \\
CPU Act & 0.087 & 0.202 & 0.028 & 0.027 & 0.087 & 0.202 & 0.028 & 0.027 & 0.087 & 0.202 & 0.028 & 0.027 & 0.087 & 0.202 & 0.028 & 0.027 \\
Diabetes & 0.100 & 0.217 & 0.469 & 0.314 & 0.100 & 0.217 & 0.469 & 0.314 & 0.100 & 0.217 & 0.469 & 0.314 & 0.100 & 0.217 & 0.469 & 0.314 \\
House prices & 0.016 & 0.047 & 0.238 & 0.048 & 0.016 & 0.047 & 0.238 & 0.048 & 0.016 & 0.047 & 0.238 & 0.048 & 0.016 & 0.047 & 0.238 & 0.048 \\
\bottomrule
\end{tabular}%
}
\end{table}

\clearpage

We also performed a noise-control check by replacing FI with Gaussian and
permuted-noise baselines and rerunning the same conditional-dependence
analysis.  Tables~\ref{tab:pmi-noise-synthetic-full}--\ref{tab:dr2-noise-real-full}
report the differences between the FI-based statistic and the corresponding
noise-based statistic, averaged over the two noise controls.  The incremental
\dRtwo{} advantage of FI over the noise baseline is positive in all 320
noise-control aggregate entries.  The \pmifi{} advantage over \pminoise{} is
positive in 319 out of 320 entries; the only negative entry is \(-7.7\times
10^{-4}\), which is numerically negligible.  Thus, the observed conditional
signal is not explained by simply adding another random covariate.

\begin{table}[p]
\centering
\caption{Synthetic noise-control: average PMI difference \(\pmifi-\pminoise\) across estimator families and PMI residualizers (averaged over Gaussian and permuted noise; 6 seeds/config, 100 bootstrap repetitions).}
\label{tab:pmi-noise-synthetic-full}
\scriptsize
\resizebox{\textwidth}{!}{%
\begin{tabular}{lcccccccccccccccc}
\toprule
& \multicolumn{4}{c}{RF residualizer} & \multicolumn{4}{c}{DT residualizer} & \multicolumn{4}{c}{AdaBoost residualizer} & \multicolumn{4}{c}{GBRT residualizer} \\
\cmidrule(lr){2-5}\cmidrule(lr){6-9}\cmidrule(lr){10-13}\cmidrule(lr){14-17}
\shortstack{Synthetic\\setting} & RF & DT & Ada & GBRT & RF & DT & Ada & GBRT & RF & DT & Ada & GBRT & RF & DT & Ada & GBRT \\
\midrule
\(\nu = 0.0\) & 0.167 & 0.371 & 0.241 & 0.116 & 0.201 & 0.565 & 0.357 & 0.452 & 0.170 & 0.325 & 0.216 & 0.101 & 0.118 & 0.191 & 0.188 & 0.096 \\
\(\nu = 0.2\) & 0.042 & 0.239 & 0.206 & 0.198 & 0.119 & 0.454 & 0.344 & 0.394 & 0.045 & 0.185 & 0.204 & 0.160 & 0.027 & 0.128 & 0.112 & 0.120 \\
\(\nu = 0.4\) & 0.028 & 0.115 & 0.103 & 0.134 & 0.040 & 0.280 & 0.181 & 0.270 & 0.027 & 0.100 & 0.104 & 0.122 & 0.007 & 0.074 & 0.053 & 0.099 \\
\(\nu = 0.6\) & 0.020 & 0.095 & 0.262 & 0.049 & 0.079 & 0.241 & 0.382 & 0.173 & 0.042 & 0.094 & 0.247 & 0.037 & 0.011 & 0.063 & 0.174 & 0.012 \\
\(\nu = 0.8\) & 0.008 & 0.050 & 0.325 & 0.135 & 0.026 & 0.139 & 0.424 & 0.323 & 0.021 & 0.039 & 0.344 & 0.120 & 0.006 & 0.029 & 0.209 & 0.086 \\
\(\nu = 1.0\) & 0.042 & 0.039 & 0.221 & 0.162 & 0.079 & 0.098 & 0.338 & 0.298 & 0.047 & 0.019 & 0.241 & 0.150 & 0.011 & 0.029 & 0.131 & 0.123 \\
\bottomrule
\end{tabular}%
}
\end{table}

\begin{table}[p]
\centering
\caption{Synthetic noise-control: average \dRtwo{} difference \(\drfi-\drnoise\) across estimator families and PMI residualizers (averaged over Gaussian and permuted noise; 6 seeds/config, 100 bootstrap repetitions).}
\label{tab:dr2-noise-synthetic-full}
\scriptsize
\resizebox{\textwidth}{!}{%
\begin{tabular}{lcccccccccccccccc}
\toprule
& \multicolumn{4}{c}{RF residualizer} & \multicolumn{4}{c}{DT residualizer} & \multicolumn{4}{c}{AdaBoost residualizer} & \multicolumn{4}{c}{GBRT residualizer} \\
\cmidrule(lr){2-5}\cmidrule(lr){6-9}\cmidrule(lr){10-13}\cmidrule(lr){14-17}
\shortstack{Synthetic\\setting} & RF & DT & Ada & GBRT & RF & DT & Ada & GBRT & RF & DT & Ada & GBRT & RF & DT & Ada & GBRT \\
\midrule
\(\nu = 0.0\) & 0.152 & 0.477 & 0.240 & 0.235 & 0.152 & 0.475 & 0.241 & 0.234 & 0.152 & 0.475 & 0.240 & 0.235 & 0.153 & 0.476 & 0.240 & 0.234 \\
\(\nu = 0.2\) & 0.062 & 0.408 & 0.306 & 0.254 & 0.061 & 0.407 & 0.307 & 0.255 & 0.061 & 0.407 & 0.307 & 0.253 & 0.061 & 0.411 & 0.305 & 0.254 \\
\(\nu = 0.4\) & 0.011 & 0.294 & 0.194 & 0.333 & 0.011 & 0.295 & 0.195 & 0.334 & 0.011 & 0.295 & 0.195 & 0.333 & 0.011 & 0.296 & 0.195 & 0.334 \\
\(\nu = 0.6\) & 0.027 & 0.292 & 0.516 & 0.335 & 0.027 & 0.290 & 0.517 & 0.335 & 0.027 & 0.289 & 0.515 & 0.335 & 0.027 & 0.290 & 0.516 & 0.336 \\
\(\nu = 0.8\) & 0.011 & 0.137 & 0.448 & 0.422 & 0.011 & 0.141 & 0.447 & 0.423 & 0.011 & 0.142 & 0.447 & 0.423 & 0.011 & 0.140 & 0.448 & 0.423 \\
\(\nu = 1.0\) & 0.013 & 0.083 & 0.316 & 0.463 & 0.013 & 0.085 & 0.318 & 0.465 & 0.013 & 0.082 & 0.319 & 0.463 & 0.013 & 0.085 & 0.317 & 0.463 \\
\bottomrule
\end{tabular}%
}
\end{table}

\begin{table}[p]
\centering
\caption{Real-data noise-control: average PMI difference \(\pmifi-\pminoise\) across estimator families and PMI residualizers (averaged over Gaussian and permuted noise; 6 seeds/config, 100 bootstrap repetitions).}
\label{tab:pmi-noise-real-full}
\scriptsize
\resizebox{\textwidth}{!}{%
\begin{tabular}{lcccccccccccccccc}
\toprule
& \multicolumn{4}{c}{RF residualizer} & \multicolumn{4}{c}{DT residualizer} & \multicolumn{4}{c}{AdaBoost residualizer} & \multicolumn{4}{c}{GBRT residualizer} \\
\cmidrule(lr){2-5}\cmidrule(lr){6-9}\cmidrule(lr){10-13}\cmidrule(lr){14-17}
Real dataset & RF & DT & Ada & GBRT & RF & DT & Ada & GBRT & RF & DT & Ada & GBRT & RF & DT & Ada & GBRT \\
\midrule
Abalone & 0.007 & 0.068 & 0.153 & 0.051 & 0.022 & 0.096 & 0.224 & 0.325 & 0.020 & 0.047 & 0.167 & 0.043 & 0.000 & 0.025 & 0.116 & 0.033 \\
CPU Act & 0.052 & 0.114 & 0.032 & 0.040 & 0.119 & 0.149 & 0.045 & 0.119 & 0.067 & 0.093 & 0.027 & 0.021 & 0.039 & 0.039 & 0.042 & 0.017 \\
Diabetes & 0.140 & 0.115 & 0.294 & 0.200 & 0.192 & 0.172 & 0.337 & 0.302 & 0.151 & 0.062 & 0.272 & 0.149 & 0.118 & 0.063 & 0.243 & 0.152 \\
House prices & 0.005 & 0.045 & 0.150 & 0.120 & 0.004 & 0.043 & 0.143 & 0.194 & 0.009 & 0.030 & 0.131 & 0.108 & 0.001 & 0.007 & 0.122 & 0.104 \\
\bottomrule
\end{tabular}%
}
\end{table}

\begin{table}[p]
\centering
\caption{Real-data noise-control: average \dRtwo{} difference \(\drfi-\drnoise\) across estimator families and PMI residualizers (averaged over Gaussian and permuted noise; 6 seeds/config, 100 bootstrap repetitions).}
\label{tab:dr2-noise-real-full}
\scriptsize
\resizebox{\textwidth}{!}{%
\begin{tabular}{lcccccccccccccccc}
\toprule
& \multicolumn{4}{c}{RF residualizer} & \multicolumn{4}{c}{DT residualizer} & \multicolumn{4}{c}{AdaBoost residualizer} & \multicolumn{4}{c}{GBRT residualizer} \\
\cmidrule(lr){2-5}\cmidrule(lr){6-9}\cmidrule(lr){10-13}\cmidrule(lr){14-17}
Real dataset & RF & DT & Ada & GBRT & RF & DT & Ada & GBRT & RF & DT & Ada & GBRT & RF & DT & Ada & GBRT \\
\midrule
Abalone & 0.011 & 0.184 & 0.312 & 0.103 & 0.013 & 0.186 & 0.312 & 0.103 & 0.012 & 0.184 & 0.311 & 0.103 & 0.013 & 0.183 & 0.311 & 0.103 \\
CPU Act & 0.083 & 0.173 & 0.018 & 0.020 & 0.084 & 0.171 & 0.017 & 0.021 & 0.082 & 0.175 & 0.019 & 0.021 & 0.082 & 0.172 & 0.017 & 0.021 \\
Diabetes & 0.092 & 0.201 & 0.430 & 0.327 & 0.092 & 0.200 & 0.430 & 0.328 & 0.092 & 0.201 & 0.431 & 0.327 & 0.091 & 0.199 & 0.431 & 0.327 \\
House prices & 0.010 & 0.035 & 0.238 & 0.046 & 0.011 & 0.035 & 0.236 & 0.046 & 0.010 & 0.033 & 0.238 & 0.047 & 0.009 & 0.036 & 0.239 & 0.046 \\
\bottomrule
\end{tabular}%
}
\end{table}

\begin{table}[p]
\centering
\caption{Noise-control consistency against the Gaussian baseline. ``FI wins on'' is the average, within each scope, of the bootstrap win rate for a fixed completed row; that row-level win rate is the fraction of bootstrap replicates in which the FI-based statistic exceeds its Gaussian-noise counterpart.}
\label{tab:noise-control-consistency}
\small
\setlength{\tabcolsep}{3pt}
\begin{tabular}{@{}lcccc@{}}
\toprule
Scope & \shortstack{mean\\\dRtwo{} diff} & \shortstack{FI wins\\on \dRtwo} & \shortstack{mean\\PMI diff} & \shortstack{FI wins\\on PMI} \\
\midrule
Real & 0.143 & 0.935 & 0.103 & 0.852 \\
Synthetic subset & 0.252 & 0.972 & 0.155 & 0.907 \\
\bottomrule
\end{tabular}
\end{table}

\section{Contents Related to Stability of Bagged Linear Regression}
\label{app:linear-stability-proofs}

\subsection{Error Analysis}\label{sec:error-analysis}

In the general setting, we may remove both instance $i$ and feature $j$ simultaneously. The updated sketching matrices $\bU_b'$ and $\bV_b'$ reflect these removals as defined above. The corresponding modified bagged estimator is:
\[
\bbeta' := \frac{1}{B} \sum_{b=1}^B \bV_b' (\bU_b'^{\top} \bX \bV_b')^{\dagger} \bU_b'^{\top} \by.
\]

For notational simplicity, we use $\bU_b'$ and $\bV_b'$ throughout to denote the sketching matrices under the appropriate removal operation. If only instance subsampling is applied (i.e., no feature subsampling), then $\bV_b' = \bV_b$ for all $b$; conversely, if only feature sampling is used, we have $\bU_b' = \bU_b$.
For notational simplicity, we define the weighting matrices \begin{align*}
g(\bX) = \frac{1}{B} \sum_{b=1}^B \bV_b (\bU_b^{\top} \bX \bV_b)^{\dagger} \bU_b^{\top}, g'(\bX) = \frac{1}{B} \sum_{b=1}^B \bV_b' (\bU_b^{\top\prime} \bX \bV_b')^{\dagger} \bU_b^{\top\prime},
\end{align*}
so that $\bbeta = g(\bX) \by$, $\bbeta' = g'(\bX) \by$.
We decompose the instability into
\begin{align*}
    \mathbb{E} \left[\left\|\bbeta - \bbeta^{\prime} \right\|_2^2\right] = &  \mathbb{E}_{\bX, \by, \bU, \bV} \left[\by^{\top} \left(g(\bX) - g'(\bX)\right)^{\top} \left(g(\bX) - g'(\bX)\right) \by\right]\\
 = & \underbrace{\sigma^2 \mathbb{E}_{\bX, \bU, \bV} \left[\operatorname{tr}\left( \left(g(\bX) - g'(\bX)\right)^{\top} \left(g(\bX) - g'(\bX)\right) \right)\right]}_{\textbf{Variance}}  \\
 + & \underbrace{\frac{1}{d}\mathbb{E}_{\bX, \bU, \bV} \left[\operatorname{tr}\left(\bX^{\top} \left(g(\bX) - g'(\bX)\right)^{\top} \left(g(\bX) - g'(\bX)\right)\bX \right)\right]}_{\textbf{Bias}}.
\end{align*}
The two terms correspond to the impact of removal on the variance and bias terms, respectively.
The following propositions give the exact values of the variance and the bias term.

\begin{proposition}[\textbf{Variance term of instability}]\label{prop:varianceofstability}
Assume the above assumptions hold.
Then the expectation of the variance term is
\begin{align*}
 \sigma^2 \mathbb{E}_{\bX, \bU, \bV} \left[\operatorname{tr}\left( \left(g(\bX) - g'(\bX)\right)^{\top} \left(g(\bX) - g'(\bX)\right) \right)\right]
=
\sigma^2 \left(\frac{\Delta^{V=}}{B} + \frac{(B-1)\Delta^{V\neq}}{B}\right),
\end{align*}
where
\begin{align*}
\lim_{n,d\to \infty} \Delta^{V=} :=
\begin{cases}
\frac{2 \gamma q (1 - p)}{(p - \gamma q)(1 - \gamma q)} & \text{ if }  \gamma q < p \text{ and } \text{ $i$-th instance removed } \\
\frac{2 \gamma pq (1 - q)}{(p - \gamma q)(p - \gamma q^2)}
& \text{ if }  \gamma q < p \text{ and } \text{ $j$-th feature removed } \\
\frac{2 \gamma q (p - p^2)}{( \gamma q - p)( \gamma q - p^2)}
& \text{ if }  \gamma q > p \text{ and } \text{ $i$-th instance removed } \\
\frac{2 \gamma q (1 - p)}{( \gamma q - p)( \gamma - p)}& \text{ if }  \gamma q > p \text{ and } \text{ $j$-th feature removed }
\end{cases}
\end{align*}
and
\begin{align*}
\lim_{n,d\to \infty}  n \Delta^{V\neq} :=
\begin{cases}
 \frac{\gamma q^2}{(1 - \gamma q^2)^2} & \text{ if }  \gamma q < p \text{ and } \text{ $i$-th instance removed } \\
 \frac{q^2}{(1 - \gamma q^2)^2}
& \text{ if }  \gamma q < p \text{ and } \text{ $j$-th feature removed } \\
 \frac{\gamma p^2}{(\gamma - p^2)^2}
& \text{ if }  \gamma q > p \text{ and } \text{ $i$-th instance removed } \\
 \frac{ p^2}{(\gamma - p^2)^2}
& \text{ if }  \gamma q > p \text{ and } \text{ $j$-th feature removed }.
\end{cases}
\end{align*}
\end{proposition}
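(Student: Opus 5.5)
The plan is to expand $g(\bX)-g'(\bX)$ over the bagging rounds and to reduce each resulting trace to a known random-matrix limit. Write $\bA_b:=\bV_b(\bU_b^{\top}\bX\bV_b)^{\dagger}\bU_b^{\top}$ and $\bA_b':=\bV_b'(\bU_b'^{\top}\bX\bV_b')^{\dagger}\bU_b'^{\top}$, and set $\bD_b:=\bA_b-\bA_b'$. Then
\[
\operatorname{tr}\bigl((g-g')^{\top}(g-g')\bigr)=\frac{1}{B^2}\sum_{b,c}\operatorname{tr}(\bD_b^{\top}\bD_c).
\]

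\textbf{Step 1: the $1/B$ decomposition.} The pairs $(\bU_b,\bV_b,\bU_b',\bV_b')$ are i.i.d.\ across $b$ given $\bX$, so the expectation of a summand depends only on whether $b=c$. This gives exactly
\[
\frac{1}{B}\,\mathbb{E}\operatorname{tr}(\bD_1^{\top}\bD_1)+\frac{B-1}{B}\,\mathbb{E}\operatorname{tr}(\bD_1^{\top}\bD_2).
\]
I define $\Delta^{V=}$ and $\Delta^{V\neq}$ as these two expectations. It then remains to compute their limits.

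\textbf{Step 2: the same-submodel term $\Delta^{V=}$.} I expand it as
\[
\operatorname{tr}(\bA_1^{\top}\bA_1)+\operatorname{tr}(\bA_1'^{\top}\bA_1')-2\operatorname{tr}(\bA_1^{\top}\bA_1').
\]
The first two traces equal $\operatorname{tr}\bigl((\bZ^{\top}\bZ)^{\dagger}\bigr)$ for an $m\times s$ Gaussian block $\bZ$. By Marchenko--Pastur with aspect ratio $\gamma q/p$, this converges to $\gamma q/(p-\gamma q)$ when $\gamma q<p$, and to $p/(\gamma q-p)$ in the overparameterized case.

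The cross trace is where the coupling enters.
\begin{itemize}
\item Under instance removal, $\bU_1'$ is redrawn from $[n]\setminus\{i\}$. Its row set therefore overlaps that of $\bU_1$ in a fraction of about $p$ of the rows, while the column set is shared.
\item Under feature removal, the column sets overlap in a fraction of about $q$ of the columns, while the row set is shared.
\end{itemize}
I would evaluate $\operatorname{tr}(\bA_1^{\top}\bA_1')$ with the deterministic equivalents for pseudoinverses of two sketches with overlapping index sets developed for ensembles of sketched least squares \citep{chen2023sketched,wu2025ensemble}. Specializing the overlap fractions to $(p,1)$, respectively $(1,q)$, relative to a single sketch should produce the four closed forms. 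For example, the factor $(1-p)$ in the instance case, and the factors $q(1-q)$ with denominator $p-\gamma q^2$ in the feature case, should come out of the overlap calculation.

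\textbf{Step 3: the cross-submodel term $\Delta^{V\neq}$, which I expect to be the main obstacle.} Here $\bA_1$ and $\bA_2$ use independent sketches, and each of $\operatorname{tr}(\bA_1^{\top}\bA_2)$, $\operatorname{tr}(\bA_1^{\top}\bA_2')$, and so on is $\Theta(1)$. These are the known limits of the bagged variance \citep{wu2025ensemble}, which depend on the overlap fractions $p^2$ and $q^2$; this is where the denominators $(1-\gamma q^2)^2$ and $(\gamma-p^2)^2$ originate. The quantity $\Delta^{V\neq}$ is a second difference of these traces and is only of order $1/n$, so the leading-order deterministic equivalents cancel. To extract it I would condition on the removed index. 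For instance removal, the difference between the sketches with and without row $i$ is a rank-one modification, which I would handle with Sherman--Morrison (or the Woodbury analogue for the pseudoinverse in the overparameterized regime). This writes $\bD_1$ as an explicit rank-one term times a leave-one-out resolvent, plus a part coming from the redrawn indices. Then $\operatorname{tr}(\bD_1^{\top}\bD_2)$ becomes a quadratic form in the removed row, sandwiched between two independent-sketch resolvents. Concentration of quadratic forms replaces it by $n^{-1}$ times a trace of a product of two resolvents with overlap $p^2$ or $q^2$. Differentiating the overlap deterministic equivalent with respect to its spectral parameter should then give the squared denominators. The feature-removal case is analogous, with the rank-one update acting on a column. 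The delicate point is controlling the remainder terms uniformly at the $o(1/n)$ level, especially near the threshold $\gamma q=p$. I would restrict to compact sets of $(\gamma,p,q)$ bounded away from that threshold.
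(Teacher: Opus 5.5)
Your Step~1 is exactly the paper's decomposition. From there the paper takes a purely finite-sample route in which your ``main obstacle'' never appears.

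\textbf{The paper's route.} Lemma~\ref{lem:variance-resorted-lemma} uses Lemma~\ref{lem:lemmaoflejeune1} to collapse each trace onto the intersection of the column sets (underparameterized) or of the row sets (overparameterized). It then averages out the sketch on the other axis and applies the inverse-Wishart mean of Lemma~\ref{lem:mplaw}. So every trace has the closed form $\frac{k}{N-k-1}$, where $k$ is the intersection size and $N$ is the size of the (larger) index pool on the other axis.

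Removal only changes $N$ by one ($n\to n-1$ or $d\to d-1$), or shifts $\mathbb{E}k$ by $O(1)$ (e.g.\ $s^2/d\to s^2/(d-1)$). Hence $n\Delta^{V\neq}$ is a difference of explicit rational functions. For example, with $k\approx\gamma q^2 n$ one gets $n\bigl(\frac{k}{n-k-2}-\frac{k}{n-k-1}\bigr)\to\frac{\gamma q^2}{(1-\gamma q^2)^2}$. No deterministic equivalents, Sherman--Morrison, or quadratic-form concentration are needed.

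\textbf{What your route buys.} It gives rigor at the order that matters. Whenever the two sketches on the non-collapsed axis differ, as in every trace entering $\Delta^{V\neq}$, the paper's closed form comes from replacing $\bU_2\bU_2^\top$ (resp.\ $\bV_2\bV_2^\top$) by its mean while holding the sketched Gram inverse fixed. That is not an identity at finite $n$, and nothing in the paper shows its error is $o(1/n)$. The paper therefore tacitly assumes these corrections cancel in the four-term difference, which is precisely what your leave-one-out analysis would have to prove. The price is that your Step~3 is a genuine second-order random-matrix computation that is still only outlined.

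\textbf{Two concrete remarks.}

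First, the rounds are conditionally i.i.d.\ given $\bX$. So $\mathbb{E}\operatorname{tr}(\mathbf{D}_1^\top\mathbf{D}_2)=\mathbb{E}_{\bX}\bigl\|\mathbb{E}[\mathbf{A}_1\mid\bX]-\mathbb{E}[\mathbf{A}_1'\mid\bX]\bigr\|_F^2$, which depends only on the marginal laws of the sketches, not on the coupling. For instance removal the inner difference is exactly $p\bigl(\mathbb{E}[\mathbf{A}_1\mid\bX,i\in\bmu^{(1)}]-\mathbb{E}[\mathbf{A}_1\mid\bX,i\notin\bmu^{(1)}]\bigr)$; for feature removal use $q$ and $j\in\bnu^{(1)}$. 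This is what legitimizes your rank-one step. Pathwise, under the independent-redraw coupling, $\mathbf{D}_1$ is not a low-rank perturbation, since $\mathbb{E}\operatorname{tr}(\mathbf{D}_1^\top\mathbf{D}_1)=\Theta(1)$. In this form, however, the ``part coming from the redrawn indices'' disappears exactly.

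Second, in Step~2 the overlap plays no role in the instance/underparameterized and feature/overparameterized cases. There the cross trace tends to the full-data values $\frac{\gamma q}{1-\gamma q}$ and $\frac{p}{\gamma-p}$; the factor $1-p$ comes from $\frac{1}{p-\gamma q}-\frac{1}{1-\gamma q}$, not from the overlap. The other two cross traces are exact via Lemmas~\ref{lem:lemmaoflejeune1} and~\ref{lem:mplaw}. Hence the fourth case yields $2\frac{p}{\gamma q-p}-2\frac{p}{\gamma-p}=\frac{2\gamma p(1-q)}{(\gamma q-p)(\gamma-p)}$, as in Table~\ref{tab:linear-stability-limits} and in the paper's proof. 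The $\frac{2\gamma q(1-p)}{(\gamma q-p)(\gamma-p)}$ printed in the statement is a typo, so do not try to match it.
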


\begin{proposition}[\textbf{Bias term of instability}]\label{prop:biasofstability}
Assume the above assumptions hold.
Then, the expectation of the bias term is
\begin{align*}
  \frac{1}{d} \; \mathbb{E}_{\bX, \bU, \bV} \left[\operatorname{tr}\left(\bX^{\top} \left(g(\bX) - g'(\bX)\right)^{\top} \left(g(\bX) - g'(\bX)\right)\bX \right)\right]
=
\frac{B-1}{B}\Delta^{B\neq} + \frac{1}{B}\Delta^{B =},
\end{align*}
where
\begin{align*}
\lim_{n,d\to \infty}\Delta^{B=} :=
\begin{cases}
    \frac{2 \gamma q (1- q)(1 - p)}{(p - \gamma q)(1 - \gamma q)}  & \text{ if }  \gamma q < p \text{ and } \text{ $i$-th instance removed } \\
    \frac{2 \gamma pq (1 - q)^2 }{(p - \gamma q)(p - \gamma q^2)} + \frac{2  pq (1 - q)}{p - \gamma q^2} & \text{ if }  \gamma q < p \text{ and } \text{ $j$-th feature removed } \\
     \frac{2\gamma p q(\gamma - p) (1- p) }{\gamma (\gamma q - p)(\gamma q - p^2)} -  \frac{2 (1 - p) p^2 }{\gamma(\gamma q - p^2)} & \text{ if }  \gamma q > p \text{ and } \text{ $i$-th instance removed } \\
    \frac{ 2 p (1 - q) }{ \gamma  q - p}  & \text{ if }  \gamma q > p \text{ and } \text{ $j$-th feature removed }.
\end{cases}
\end{align*}
and
\begin{align*}
\lim_{n,d\to \infty} n\Delta^{B\neq} :=
\begin{cases}
    \frac{  \gamma q^2 (1 - q)^2 }{(1 - \gamma q^2)^2} & \text{ if }  \gamma q < p \text{ and } \text{ $i$-th instance removed } \\
    \frac{   q^2 (1 +\gamma - 2\gamma q)  }{\gamma (1 - \gamma q^2)^2}  & \text{ if }  \gamma q < p \text{ and } \text{ $j$-th feature removed } \\
    \frac{p^2(\gamma - p)^2}{\gamma (\gamma - p^2)^2} & \text{ if }  \gamma q > p \text{ and } \text{ $i$-th instance removed } \\
    \frac{p^2(1 - 2 p + \gamma )}{(\gamma - p^2)^2}  & \text{ if }  \gamma q > p \text{ and } \text{ $j$-th feature removed }.
\end{cases}
\end{align*}
\end{proposition}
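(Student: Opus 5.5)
Write $\bM_b:=\bV_b(\bU_b^{\top}\bX\bV_b)^{\dagger}\bU_b^{\top}$ and let $\bM_b'$ be its counterpart built from the resampled sketch. Then $g-g'=B^{-1}\sum_b(\bM_b-\bM_b')$, where the pairs $(\bM_b,\bM_b')$ are i.i.d.\ across $b$ conditional on $\bX$.

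The plan is to expand the bias term, after substituting the Gaussian prior $\bbeta^*\sim\cN(\zero,\bI/d)$, as a double sum over pairs $(b,b')$:
\[
\frac{1}{dB^2}\sum_{b,b'}\mathbb{E}\,\operatorname{tr}\bigl(\bX^{\top}(\bM_b-\bM_b')^{\top}(\bM_{b'}-\bM_{b'}')\bX\bigr).
\]
Then I split into diagonal and off-diagonal pairs.
\begin{itemize}
\item The $B$ diagonal pairs $b=b'$ each contribute $\Delta^{B=}$, so together they carry weight $1/B$.
\item The $B(B-1)$ off-diagonal pairs each contribute a common value $\Delta^{B\neq}$, by exchangeability, so together they carry weight $(B-1)/B$.
\end{itemize}
This gives the finite-$B$ identity immediately. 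What remains is the asymptotic evaluation of the two constants, separately for instance removal and feature removal and for the two regimes $\gamma q\lessgtr p$.

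\textbf{Diagonal constant.} I would expand
\[
\Delta^{B=}=d^{-1}\mathbb{E}\|(\bM_1-\bM_1')\bX\|_F^2
\]
into
\[
d^{-1}\mathbb{E}\bigl[\|\bM_1\bX\|_F^2+\|\bM_1'\bX\|_F^2-2\operatorname{tr}(\bX^{\top}\bM_1^{\top}\bM_1'\bX)\bigr].
\]
\begin{itemize}
\item Each squared norm is a one-sketch quantity. Write $\bM_1\bX$ as the projection $\bP_b$ onto the row space of the sketched design, restricted to the $s$ sampled coordinates, plus a leakage term $\bM_1\bX(\bI-\bV_1\bV_1^{\top})$ coming from the unsampled columns. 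By independence of the Gaussian columns, the leakage term reduces to the trace of a Wishart pseudo-inverse. Standard Marchenko--Pastur moments, with aspect ratio $\gamma q/p$, then give the limits $\operatorname{tr}(\bW^{\dagger})\to\cdot/(p-\gamma q)$ or $\cdot/(\gamma q-p)$.
\item The cross term involves two sketches that share all but one instance or feature. Their overlaps are $|\bmu\cap\bmu'|$ and $|\bnu\cap\bnu'|$, which differ from $m$ or $s$ by at most one with probability $1-p$ or $1-q$ (the chance that the removed index was actually sampled). Conditioning on this event, the two pseudo-inverses differ by a rank-one update. Sherman--Morrison then gives the discrepancy exactly, in terms of leverage-type quadratic forms that concentrate at deterministic limits such as $\gamma q/p$. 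This is where the factors $(1-p)$ and $(1-q)$ in the table come from.
\end{itemize}

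\textbf{Off-diagonal constant.} For $\Delta^{B\neq}$ the two sketches are independent, with overlap fractions $p^2$ and $q^2$. I would use the known limits for mixed products $\operatorname{tr}(\bM_1^{\top}\bM_2)$ of independent sketched estimators, as in the bagged-interpolator analysis of \citet{wu2025ensemble} (Sherman--Morrison plus free-probability style deterministic equivalents). These produce the denominators $(1-\gamma q^2)^2$ and $(\gamma-p^2)^2$. Differencing in the perturbed index then gives a derivative of these expressions with respect to $m$ or $s$, which is exactly order $1/n$. Concretely, I would write $\mathbb{E}\langle\bM_1-\bM_1',\bM_2-\bM_2'\rangle$ as a second finite difference in the sketch overlap size and Taylor-expand the known limit, yielding $n\Delta^{B\neq}$.

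\textbf{Main obstacle.} I expect the hardest step to be the feature-removal bias in the off-diagonal case. Removing a feature changes which signal coordinates are unsampled, so the leakage contributions do not cancel. This produces the extra $(1+\gamma-2\gamma q)/\gamma$ factor, and it requires tracking first-order corrections of order $1/d$ alongside order $1/n$ terms without losing them to concentration errors. My plan is to carry exact finite-$n$ expressions via rank-one identities as far as possible, and only then pass to limits using Gaussian rotational invariance.
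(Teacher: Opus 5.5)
Your finite-$B$ split into $B$ diagonal and $B(B-1)$ exchangeable off-diagonal pairs is exactly the paper's. Your treatment of the one-sketch squared norms also matches Lemma~\ref{lem:bias-resorted-lemma}: a projection part plus a leakage term from the unsampled columns, evaluated through an inverse-Wishart trace.

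\textbf{The gap: the coupling for the reduced run.} You treat $\bU_b'$ (resp.\ $\bV_b'$) as sharing all but at most one index with $\bU_b$ (resp.\ $\bV_b$). Then the two sketched estimators coincide unless the removed index was sampled, and otherwise differ by a rank-one Sherman--Morrison correction. That is not the coupling in the statement. Under the affected-axis resampling coupling of Section~\ref{sec:linear-basic-setting}, the sketch on the perturbed axis is redrawn \emph{independently} from the reduced index set. The paper's proof uses that $\bU_b,\bU_b'$ (resp.\ $\bV_b,\bV_b'$) are independent, overlapping in only about $p^2 n$ rows (resp.\ $q^2 d$ columns).

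Under your coupling the diagonal term would be of order $1/n$. It is nonzero only with probability $p$ or $q$, and then it is a low-rank perturbation contributing $O(1/n)$ after the $1/d$ normalization. So it cannot produce the $\Theta(1)$ constants in the statement. For example, the feature-removal entry $\frac{2\gamma pq(1-q)^2}{(p-\gamma q)(p-\gamma q^2)}+\frac{2pq(1-q)}{p-\gamma q^2}$ carries the denominator $p-\gamma q^2$. That is the fingerprint of two independent feature sketches sharing about $q^2 d$ coordinates.

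Likewise, the factors $(1-p)$ and $(1-q)$ do not come from whether the removed index was sampled. In the paper, up to the $1/d$ normalization, the diagonal term is twice the same-sketch quantity $\frac{s(d-s)}{m-s-1}+s$ minus twice the independent-sketch cross quantity.
\begin{itemize}
\item For instance removal the cross quantity is $\frac{s(d-s)}{n-s-1}+s$. This yields $\frac{1}{p-\gamma q}-\frac{1}{1-\gamma q}=\frac{1-p}{(p-\gamma q)(1-\gamma q)}$.
\item For feature removal the cross quantity is $\frac{|\bnu\cap\bnu'|\,|\bnu^{c}\cap\bnu^{\prime c}|}{m-|\bnu\cap\bnu'|-1}+|\bnu\cap\bnu'|$.
\end{itemize}

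\textbf{The same misreading breaks the off-diagonal plan.} Resampling leaves $m$ and $s$ unchanged. What moves is the ambient index set, $n\to n-1$ or $d\to d-1$, not the sketch size. In the underparameterized instance-removal case, the exact cross-term expression $\frac{|\bnu_1\cap\bnu_2|\,|\bnu_1^{c}\cap\bnu_2^{c}|}{\operatorname{tr}(\Lambda^\vee)-|\bnu_1\cap\bnu_2|-1}+|\bnu_1\cap\bnu_2|$ does not involve $m$ at all. A derivative in $m$ would therefore give zero.

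In the paper, the four-term expansion of $\langle \mathbf{M}_{b_1}-\mathbf{M}_{b_1}',\mathbf{M}_{b_2}-\mathbf{M}_{b_2}'\rangle$ collapses to a first difference. The two mixed terms equal the unperturbed one: the union of the row index sets is still the full set, and the expected feature overlaps $s^2/d$ and $(d-s)^2/d$ are unchanged. Only the doubly perturbed pair changes, through $\operatorname{tr}(\Lambda^\vee)=n-1$ or through the overlap $s^2/(d-1)$.

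A genuine mixed second difference of a smooth function would be $O(1/n^2)$. Also, as you partly anticipate in your last paragraph, Taylor-expanding asymptotic limits imported from \citet{wu2025ensemble} cannot resolve an $O(1/n)$ quantity. The extraction is legitimate in the paper because it has exact finite-$(n,d)$ rational formulas. These come from $\mathbb{E}[(\bX^\top\bX)^{-1}]=\Sigma^{-1}/(n-d-1)$ (Lemma~\ref{lem:mplaw}) together with Lemma~\ref{lem:lemmaoflejeune1} for the unsampled block, all computed under independent resampling. You would need to redo both constants in that framework, not via Sherman--Morrison on nearly identical sketches.
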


\begin{proof}[Proof of Theorem~\ref{thm:stabilitylinear-decomp-subscript}]

Theorem~\ref{thm:stabilitylinear-decomp-subscript}, including all entries of Table~\ref{tab:linear-stability-limits},  follows by applying Propositions~\ref{prop:varianceofstability} and~\ref{prop:biasofstability} to
the two perturbations \(\ell=-i\) and \(\ell=-j\), and then collecting the
corresponding variance and bias terms.

\end{proof}

\subsection{Useful Lemmas}\label{sec:twousefullemmas}

\begin{lemma}[Expectation of matrix under MP-law]\label{lem:mplaw}
Let $\bX$ be a $n\times d$ matrix with each row sampled from $\mathcal{N}(\mathbf{0}, \Sigma)$, where $\Sigma$ is invertible.
Then, there holds
\begin{align*}
    \mathbb{E}_{\bX} \left[\bX^{\top} \bX\right] = n \Sigma, \;\;\; \mathbb{E}_{\bX} \left[(\bX^{\top} \bX)^{-1}\right] = \frac{1}{ n - d - 1} \Sigma^{-1}.
\end{align*}
\end{lemma}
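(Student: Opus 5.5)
The first identity follows by linearity and independence of the rows; the second is the standard inverse-Wishart mean. I would prove the two separately.

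\emph{First moment.} Write $\bX^{\top}\bX=\sum_{i=1}^n \bx_i\bx_i^{\top}$ with $\bx_i\sim\mathcal N(\mathbf 0,\Sigma)$ i.i.d. Linearity of expectation and $\mathbb{E}[\bx_i\bx_i^{\top}]=\Sigma$ give $n\Sigma$ at once.

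\emph{Inverse moment: reduction to the isotropic case.} Write $\bX=\bZ\Sigma^{1/2}$, where $\bZ$ has i.i.d.\ $\mathcal N(0,1)$ entries. Then
\[
(\bX^{\top}\bX)^{-1}=\Sigma^{-1/2}(\bZ^{\top}\bZ)^{-1}\Sigma^{-1/2},
\]
so it suffices to show $\mathbb{E}[(\bZ^{\top}\bZ)^{-1}]=\bI_d/(n-d-1)$. Implicitly I assume $n\ge d+2$: $\bZ^{\top}\bZ$ is then almost surely invertible and the expectation is finite.

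\emph{Isotropic case: the matrix is a multiple of the identity.} Let $\bW=\bZ^{\top}\bZ$. For any orthogonal $\bO$, the matrix $\bZ\bO$ has the same law as $\bZ$, so $\bO^{\top}\bW^{-1}\bO$ has the same law as $\bW^{-1}$. Hence $\mathbb{E}[\bW^{-1}]$ commutes with every orthogonal matrix and must equal $c\,\bI_d$ for some scalar $c$.

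\emph{Identifying the constant.} It remains to compute $c=\mathbb{E}[(\bW^{-1})_{11}]$.
\begin{itemize}
\item By the Schur complement, $1/(\bW^{-1})_{11}=\bz_1^{\top}(\bI-\bP)\bz_1$. Here $\bz_1$ is the first column of $\bZ$ and $\bP$ is the projection onto the span of the other $d-1$ columns.
\item $\bz_1$ is independent of $\bP$, and $\bI-\bP$ is a rank-$(n-d+1)$ projection. So conditionally on $\bP$, this quantity is $\chi^2_{n-d+1}$, and therefore unconditionally as well.
\item Using $\mathbb{E}[1/\chi^2_k]=1/(k-2)$ with $k=n-d+1$ gives $c=1/(n-d-1)$.
\end{itemize}

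\emph{Main obstacle.} The only delicate step is the conditional chi-square distribution of the Schur complement. It follows from rotation invariance of $\bz_1$ given the independent subspace. Finiteness of $\mathbb{E}[1/\chi^2_k]$ needs $k>2$, which is exactly the condition $n>d+1$.
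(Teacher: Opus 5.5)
Your proof is correct. The difference from the paper is that the paper gives no derivation at all. It treats the second identity as the standard mean of the inverse Wishart distribution, $\mathbb{E}[\mathbf{W}^{-1}]=\Sigma^{-1}/(n-d-1)$ for $\mathbf{W}\sim W_d(n,\Sigma)$, and cites \citet{haff1979identity}.

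You prove that fact from scratch in three steps. Whitening reduces to $\Sigma=\bI$. Orthogonal invariance of the Gaussian matrix then forces $\mathbb{E}[\mathbf{W}^{-1}]$ to be a multiple of the identity. Finally, the Schur complement shows that $1/(\mathbf{W}^{-1})_{11}$ is a $\chi^2_{n-d+1}$ variable, whose reciprocal mean $1/(n-d-1)$ fixes the constant.

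The citation is shorter. Your route is self-contained, and it also makes explicit the hypothesis $n\ge d+2$. The paper's statement omits this hypothesis, although the identity fails without it: for $n<d$ the matrix is singular, and for $n\in\{d,d+1\}$ the expectation is infinite.

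One step you assert rather than check is integrability of the off-diagonal entries. The invariance argument needs this before you can equate $\mathbb{E}[\mathbf{W}^{-1}]$ with $\mathbf{O}^{\top}\mathbb{E}[\mathbf{W}^{-1}]\mathbf{O}$. It follows at once from positive definiteness, $|(\mathbf{W}^{-1})_{ij}|\le\tfrac12\bigl((\mathbf{W}^{-1})_{ii}+(\mathbf{W}^{-1})_{jj}\bigr)$, together with your diagonal computation.
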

\begin{proof}[Proof of \ref{lem:mplaw}]
The conclusions follow from standard properties of inverse Wishart distribution, see e.g. \citet{haff1979identity}.
\end{proof}

\begin{lemma}[Lemma A.1 of \citet{lejeune2020implicit}]
\label{lem:lemmaoflejeune1}
    Let $\bV$ and $\bV^c$ be the selection matrix corresponds to $\bnu$ and $\bnu^c$, respectively.
    Then for any random matrix $\bX \in \mathbb{R}^{n\times d}$ whose entries are sampled i.i.d. from the standard Gaussian distribution, we have the following holds true
    \begin{align*}
     \mathbb{E}_{\bX \bV^c} \left[\bV^{\top} \bX^{\dagger}\right] = (\bX \bS)^{\dagger} \text{ if } n > d, \text{ and }\;\; \mathbb{E}_{\bU^{c\top}\bX} \left[ \bX^{ \dagger} T\right] = (\bU^{\top}\bX)^{\dagger} \text{ if } n < d.
    \end{align*}
    Moreover, if $h(\bX \bS)$ and $\bX \bV^{c}$ are independent, there holds
\begin{align*}
    \mathbb{E}_{\bX \bV^c} \left[\bV^{c\top}\bX^{\top}  h(\bX\bS) \bV^{\top} \bX^{\dagger}\right] = \mathbf{0} \text{ if } n > d, \text{ and }\;\; \mathbb{E}_{\bU^{c\top}X} \left[X^{\top} \bU^{c}  h(\bU^{\top}X)  \bX^{\dagger} T\right] = \mathbf{0} \text{ if } n < d.
\end{align*}
\end{lemma}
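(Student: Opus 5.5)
The plan is to prove the $n>d$ statements directly by a block decomposition of $\bX^{\dagger}$ along the selected and unselected columns. The $n<d$ statements then follow by the dual row argument, with transposition exchanging the roles of $\bV$ and $\bU^{\top}$. I read $\bS$ in the statement as the column selector $\bV$, and $T$ as the row selector $\bU$. Write $\bX_1:=\bX\bV$ and $\bX_2:=\bX\bV^c$; these are independent Gaussian matrices with i.i.d. $\cN(0,1)$ entries. Since $n>d$, $\bX$ has full column rank almost surely, so $\bX^{\dagger}=(\bX^{\top}\bX)^{-1}\bX^{\top}$ and $\bX^{\dagger}\bX=\bI_d$. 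Let $P_1:=\bX_1\bX_1^{\dagger}$ and $Q_1:=\bI-P_1$.

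\textbf{Step 1: block formula.} Put $Z:=\bV^{c\top}\bX^{\dagger}$ and $W:=\bV^{\top}\bX^{\dagger}$. The identity $\bX^{\dagger}\bX=\bI$ gives $W\bX_1+\ldots$ restricted blockwise as $W\bX_1=\bI$ and $W\bX_2=0$. Together with $\bX\bX^{\dagger}=\bX_1W+\bX_2Z$ being the projection onto the column space of $\bX$, I will verify
\begin{align*}
W=\bX_1^{\dagger}\left(\bI-\bX_2 Z\right),\qquad Z=(\bX_2^{\top}Q_1\bX_2)^{-1}\bX_2^{\top}Q_1 .
\end{align*}
This is the standard Frisch--Waugh / Schur-complement form of the partitioned least-squares solution.

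\textbf{Step 2: conditional independence split.} Condition on $\bX_1$ and decompose $\bX_2=G+H$ with $G:=P_1\bX_2$ and $H:=Q_1\bX_2$. The columns of $\bX_2$ are i.i.d. $\cN(0,\bI_n)$ and $P_1,Q_1$ are complementary orthogonal projections. Hence, conditionally on $\bX_1$, $G$ and $H$ are independent and mean zero, $G$ has i.i.d. columns $\cN(0,P_1)$, and $Z=(H^{\top}H)^{-1}H^{\top}$ depends on $H$ only. Since $\bX_1^{\dagger}=\bX_1^{\dagger}P_1$, we have $\bX_1^{\dagger}\bX_2Z=\bX_1^{\dagger}GZ$. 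Taking $\EE$ over $G$ given $(\bX_1,H)$ kills this term, so $\EE_{\bX\bV^c}[W]=\bX_1^{\dagger}=(\bX\bV)^{\dagger}$.

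\textbf{Step 3: the cross term.} Let $M:=h(\bX_1)\bX_1^{\dagger}$. The target is
\begin{align*}
\EE\left[\bX_2^{\top}h(\bX_1)W\right]=\EE\left[\bX_2^{\top}M\right]-\EE\left[(G+H)^{\top}MGZ\right].
\end{align*}
The first term vanishes because $\bX_2$ is mean zero and independent of $\bX_1$. In the second term, the $H^{\top}MGZ$ part vanishes after averaging over $G$. For the $G^{\top}MG$ part, the column structure of $G$ gives $\EE_G[G^{\top}MG]=\operatorname{tr}(MP_1)\bI$, so this part reduces to $\operatorname{tr}(MP_1)\,\EE_H[Z]$. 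Now $Z(-H)=-Z(H)$ and $H\overset{d}{=}-H$, so $\EE_H[Z]=0$, and the whole expression is $\mathbf 0$.

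\textbf{Main obstacle.} I expect the main obstacle to be this last quadratic-in-$G$ term. A naive appeal to "mean zero" fails there, because $\EE_G[G^{\top}MG]$ is not zero; the argument really needs the sign symmetry of $H$, and it needs $h$ to be a function of $\bX_1$ alone (the stated independence hypothesis). A secondary difficulty is confirming that $H^{\top}H$ is almost surely invertible, which holds because $n-(d-|\bnu^c|)\ge |\bnu^c|$ when $n>d$.

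For $n<d$, I will apply the same argument to $\bX^{\top}$ with rows in place of columns. There $\bX^{\dagger}=\bX^{\top}(\bX\bX^{\top})^{-1}$, and the projections act on $\RR^d$ via $\bU^{\top}\bX$ and $\bU^{c\top}\bX$.
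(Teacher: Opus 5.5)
\textbf{Verdict.} The paper gives no proof of this lemma. It is quoted from LeJeune et al.\ (2020) and then used as a black box in the proofs of Lemmas~\ref{lem:variance-resorted-lemma} and~\ref{lem:bias-resorted-lemma}. So your proposal does not compete with an in-paper proof; it supplies one, and it is correct.

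\textbf{What your route makes explicit.} The Frisch--Waugh form $W=\bX_1^{\dagger}(\bI-\bX_2Z)$, $Z=(H^{\top}H)^{-1}H^{\top}$, together with the conditional independence of $G=P_1\bX_2$ and $H=Q_1\bX_2$ given $\bX_1$, and $\bX_1^{\dagger}\bX_2Z=\bX_1^{\dagger}GZ$, gives the first identity. In the cross term you correctly see that the quadratic piece $\mathbb{E}_G[G^{\top}MG]=\operatorname{tr}(MP_1)\bI$ survives averaging over $G$. It is removed only by the oddness $Z(-H)=-Z(H)$. This shows what the citation hides: the result rests on independence of $\bX\bV$ and $\bX\bV^c$ plus Gaussian sign symmetry. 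It also shows that $h$ must be measurable with respect to $\bX\bV$, or with respect to $\bX\bV$ together with randomness independent of $\bX\bV^c$.

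\textbf{Two points to tighten.}

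First, strict $n>d$ is really needed for integrability, not invertibility. Conditionally on $\bX_1$, and in an orthonormal basis of $\mathrm{range}(Q_1)$, $H$ is an $(n-|\bnu|)\times(d-|\bnu|)$ standard Gaussian matrix. The norm $\|Z\|=\sigma_{\min}(H)^{-1}$ has finite mean only because $H$ has strictly more rows than columns. At $n=d$ the matrix is still a.s.\ invertible, but $\mathbb{E}[\sigma_{\min}(H)^{-1}]=\infty$. You need finiteness to justify the conditional-expectation (Fubini) steps and to conclude $\mathbb{E}_H[Z]=\mathbf{0}$.

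Second, the $n<d$ cross term as stated is $\bX^{\top}\bU^{c}h(\bU^{\top}\bX)\bX^{\dagger}\bU$, with $h$ of size $|\bmu^c|\times d$. This is not the transpose of your $n>d$ identity applied to $\bX^{\top}$, which yields $\bU^{c\top}\bX\,h\,(\bX^{\dagger}\bU)^{\top}$ instead. So your transposition remark covers only the first $n<d$ identity, and for the cross term you must rerun the argument. It goes through as follows.
\begin{itemize}
\item Set $\bX_1=\bU^{\top}\bX$, $\bX_2=\bU^{c\top}\bX$, $P_1'=\bX_1^{\dagger}\bX_1$, $G'=P_1'\bX_2^{\top}$, and $H'=(\bI-P_1')\bX_2^{\top}$.
\item Then $\bX^{\dagger}\bU=\bX_1^{\dagger}-H'(H'^{\top}H')^{-1}G'^{\top}\bX_1^{\dagger}$.
\item The term $\bX_2^{\top}h\bX_1^{\dagger}$ has mean zero, and the terms linear in $G'$ vanish.
\item The quadratic term averages over $G'$ to $-\operatorname{tr}\bigl(hH'(H'^{\top}H')^{-1}\bigr)\bX_1^{\dagger}$, which has mean zero by the same sign symmetry in $H'$.
\item Integrability again comes from $H'$ having $d-|\bmu|>n-|\bmu|$ effective rows.
\end{itemize}
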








The assumption $\mathbb{E}_T\left[ \bU_1^{\top} \bU_1\right] = \mathbb{E}_T\left[ \bU_2^{\top} \bU_2\right] = \frac{|\bmu_1|}{\tr(\Lambda)} \Lambda$ appears less natural, but is important to unify the results for $T$ and $T'$.

\begin{lemma}[Variance]\label{lem:variance-resorted-lemma}
For $\bmu_1, \bmu_2, \bnu_1, \bnu_2$ sampled independently, let $\bU_1,\bU_2,\bV_1,\bV_2$ be their selection matrix.
Suppose $\Lambda_1, \Lambda_2$ and $\Theta_1, \Theta_2$ are diagonal matrices with only $1$s and $0$s on their diagonal.
Then let $\mathbb{E}_T\left[ \bU_1 \bU_1^{\top}\right] = \frac{|\bmu_1|}{\tr(\Lambda_1)} \Lambda_1 $, $ \mathbb{E}_T\left[ \bU_2 \bU_2^{\top}\right] = \frac{|\bmu_2|}{\tr(\Lambda_2)} \Lambda_2 $, $\mathbb{E}_S\left[ \bV_1 \bV_1^{\top}\right] = \frac{|\bnu_1|}{\tr(\Theta_1)} \Theta_1 $, $\mathbb{E}_S\left[ \bV_2 \bV_2^{\top}\right] = \frac{|\bnu_2|}{\tr(\Theta_2)} \Theta_2$.
Assume we always have $\Lambda_1 \Lambda_2 \in \{\Lambda_1, \Lambda_2\}$ and $\Theta_1 \Theta \in  \{\Theta_1, \Theta_2\}$.
Define
\begin{align*}
     \Lambda^{\wedge} = \Lambda_1 \Lambda_2,  \;\;  \Lambda^{\vee} =  \Lambda_1 + \Lambda_2 - \Lambda^{\wedge}, \text{ and } \;\; \Theta^{\wedge} = \Theta_1 \Theta_2,  \;\;  \Theta^{\vee} =  \Theta_1 + \Theta_2 - \Theta^{\wedge}.
\end{align*}
Then there holds
\begin{align}
\label{equ:lem-variance-diff-b}
\mathbb{E}_{\bX, \bS, \bT}\left[\operatorname{tr}\left(\bU_1 (\bU_1^{\top} \bX \bV_1)^{\dagger \top } \bV_1^{\top} \bV_2(\bU_2^{\top} \bX \bV_2)^{\dagger} \bU_2^{\top}\right)\right] = \begin{cases}
    \frac{|\bnu_1 \cap \bnu_2|}{\tr(\Lambda^{\vee}) -  |\bnu_1 \cap \bnu_2| - 1} & \text{ if } |\bnu_1| < |\bmu_1|, |\bnu_2| < |\bmu_2|,\\
     \frac{|\bmu_1\cap \bmu_2|}{ \tr(\Theta^{\vee}) - |\bmu_1\cap \bmu_2| - 1} & \text{ if } |\bnu_1| > |\bmu_1|, |\bnu_2| > |\bmu_2|.
\end{cases}
\end{align}
Moreover, even if $\bmu_1$ and $\bmu_2$ are not independent, and $\bnu_1$ and $\bnu_2$ are not independent, yet we know $\bmu_1 \subset \bmu_2$, $\bnu_1 \subset \bnu_2$, the quantity becomes
\begin{align}
\label{equ:lem-variance-same-b}
\mathbb{E}_{\bX, \bS, \bT}\left[\operatorname{tr}\left(\bU_1 (\bU_1^{\top} \bX \bV_1)^{\dagger \top } \bV_1^{\top} \bV_2(\bU_2^{\top} \bX \bV_2)^{\dagger} \bU_2^{\top}\right)\right] = \begin{cases}
    \frac{|\bnu_1|}{ |\bmu_2|-  |\bnu_1 | - 1} & \text{ if } |\bnu_1| < |\bmu_1|, |\bnu_2| < |\bmu_2|,\\
     \frac{|\bmu_1|}{ |\bnu_2| - |\bmu_1| - 1} & \text{ if } |\bnu_1| > |\bmu_1|, |\bnu_2| > |\bmu_2|.
\end{cases}
\end{align}
\end{lemma}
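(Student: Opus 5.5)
The plan is to handle the underparameterized case $|\bnu_k|<|\bmu_k|$ first and obtain the overparameterized case by transposition. Throughout, condition on the index sets $\bmu_1,\bmu_2,\bnu_1,\bnu_2$ and average over $\bX$; only at the end average over the sketches using $\mathbb{E}_T[\bU_k\bU_k^\top]\propto\Lambda_k$ and $\mathbb{E}_S[\bV_k\bV_k^\top]\propto\Theta_k$. Write $A_k=\bU_k^\top\bX\bV_k\in\RR^{|\bmu_k|\times|\bnu_k|}$. Under Gaussian design $A_k$ has full column rank almost surely, so $A_k^\dagger=(A_k^\top A_k)^{-1}A_k^\top$. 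By cyclicity, the trace in \eqref{equ:lem-variance-diff-b} becomes
\begin{align*}
\operatorname{tr}\bigl((A_1^\top A_1)^{-1}\,\bV_1^\top\bV_2\,(A_2^\top A_2)^{-1}A_2^\top\,\bU_2^\top\bU_1\,A_1\bigr).
\end{align*}
Here $\bV_1^\top\bV_2$ selects the common features $\bnu_1\cap\bnu_2$, and $\bU_2^\top\bU_1$ selects the common rows.

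\textbf{Step 1 (reduce to common features).} Apply Lemma~\ref{lem:lemmaoflejeune1} conditionally on the columns $\bX_{:,\bnu_1\cap\bnu_2}$, integrating out the columns of $\bX$ in $\bnu_1\setminus\bnu_2$ and in $\bnu_2\setminus\bnu_1$. These blocks are independent Gaussian and enter each pseudoinverse only through the ``$\bV^c$'' part. The first identity of the lemma replaces $\bV^\top A_k^\dagger$ by the pseudoinverse of the submatrix on the common feature set. The second identity makes the cross terms carrying the non-shared columns vanish in expectation. After this reduction both factors are least-squares pseudoinverses built from $B_k:=\bU_k^\top\bX\bV_\cap$, with $k_\cap:=|\bnu_1\cap\bnu_2|$ columns.

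\textbf{Step 2 (rows, nested case first).} Prove \eqref{equ:lem-variance-same-b} first, i.e.\ $\bmu_1\subset\bmu_2$. Embedding $B_1$ into the row space of $B_2$ gives $B_2^\top\bU_2^\top\bU_1B_1=B_1^\top B_1$. The trace therefore collapses to
\begin{align*}
\operatorname{tr}\bigl((B_1^\top B_1)^{-1}B_1^\top B_1(B_2^\top B_2)^{-1}\bigr)=\operatorname{tr}\bigl((B_2^\top B_2)^{-1}\bigr).
\end{align*}
Lemma~\ref{lem:mplaw} with $\Sigma=\bI$ then gives $k_\cap/(|\bmu_2|-k_\cap-1)$, and $k_\cap=|\bnu_1|$ in the nested setting.

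For \eqref{equ:lem-variance-diff-b}, the sketches are independent and the nestedness assumption $\Lambda_1\Lambda_2\in\{\Lambda_1,\Lambda_2\}$ holds only for the sampling supports. I will apply the row analogue of Lemma~\ref{lem:lemmaoflejeune1} (its $n<d$ form, transposed) to integrate out the rows outside the intersection. The goal is to show that the effective Gram matrix is that of the union support, which has $\operatorname{tr}(\Lambda^\vee)$ rows. This yields $k_\cap/(\operatorname{tr}(\Lambda^\vee)-k_\cap-1)$, averaged over $\bnu$'s, as in the displayed formula.

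\textbf{Step 3 (overparameterized case).} Apply the same argument to $\bX^\top$, which is also i.i.d.\ Gaussian. This swaps the roles of $(\bU,\bmu,\Lambda)$ and $(\bV,\bnu,\Theta)$ and uses the minimum-norm form $A^\dagger=A^\top(AA^\top)^{-1}$, producing the second line of each case.

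\textbf{Main obstacle.} I expect the hard part to be Step 2 in the independent (non-nested) row case. There the two Gram inverses share only part of their rows, and the clean collapse to a single inverse Wishart trace has to come from the vanishing of the cross expectations. I would first verify that the conditional expectation over the non-shared rows is orthogonally invariant. I would then exploit the isotropy $\mathbb{E}[\bU\bU^\top]\propto\Lambda$ to identify the resulting matrix as $(B_\vee^\top B_\vee)^{-1}$, where $B_\vee$ is built from the union support.
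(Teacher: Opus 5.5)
\textbf{Verdict: genuine gap, in the independent case of \eqref{equ:lem-variance-diff-b}.} Your Step~1, your nested computation for \eqref{equ:lem-variance-same-b} (collapsing via $B_2^\top\bU_2^\top\bU_1B_1=B_1^\top B_1$ to $\operatorname{tr}((B_2^\top B_2)^{-1})$), and the transposition in Step~3 match the paper's proof. The gap is the substantive part of the lemma, and the plan you sketch for it cannot work.

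\textbf{Why the proposed tool fails.} The $n<d$ identity of Lemma~\ref{lem:lemmaoflejeune1} is a min-norm statement about integrating out rows of a \emph{wide} matrix. Transposed, it is just the column identity you already used in Step~1. Here $B_k=\bU_k^\top\bX\bV_\cap$ is tall, and the rows in $\bmu_k\setminus(\bmu_1\cap\bmu_2)$ sit inside $(B_k^\top B_k)^{-1}$. Since $\mathbb{E}[(W+Z)^{-1}]\neq W^{-1}$ for an independent Wishart $Z$, those rows do not average away. Even if they did, you would land on the Gram matrix of the intersection, i.e.\ on $k_\cap/(|\bmu_1\cap\bmu_2|-k_\cap-1)$. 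That moves away from the union support, not toward it.

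\textbf{Why the fallback fails.} Your fallback, exploiting $\mathbb{E}[\bU\bU^\top]\propto\Lambda$, is exactly the paper's move. The paper inserts $\bI=\Pi+P$, where $P$ projects onto the column span of $\Lambda^\vee\bX\bV_\cap$. It then replaces $\bU_2\bU_2^\top$ by $\frac{|\bmu_2|}{\operatorname{tr}(\Lambda_2)}\Lambda_2$ to kill the $\Pi$-part, and collapses the $P$-part using $\bU_k^\top\Lambda^\vee\bX=\bU_k^\top\bX$. That replacement is illegitimate, because $(B_2^\top B_2)^{-1}$ also depends on $\bU_2$.

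\textbf{The exact identity is false.} No argument can close this step.
\begin{itemize}
\item Take $\Lambda_1=\Lambda_2=\Lambda$ and $|\bmu_1|=|\bmu_2|$. After your Step~1 reduction, condition on $\bX\bV_\cap$.
\item Each $G_{\bmu}:=(\bU^\top\bX\bV_\cap)^\dagger\bU^\top$ is a left inverse of $M:=\Lambda\bX\bV_\cap$. So is $\bar G:=\mathbb{E}_{\bmu}[G_{\bmu}]$.
\item The expectation over the i.i.d.\ $\bmu_1,\bmu_2$ equals $\|\bar G\|_F^2$. Since $\bar G$ is a left inverse, $\|\bar G\|_F^2\ge\|M^\dagger\|_F^2=\operatorname{tr}((M^\top M)^{-1})$, whose mean is the claimed right-hand side.
\item Equality forces $\bar G=M^\dagger$. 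That is precisely the paper's assertion that the $\Pi$-part vanishes, and it generically fails.
\end{itemize}

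\textbf{Concrete counterexample.} Take $d=1$ ($\bnu_1=\bnu_2=\{1\}$), $\Lambda=\bI_4$, and $|\bmu_1|=|\bmu_2|=3$. The trace is $\sum_{i\in\bmu_1\cap\bmu_2}x_i^2/(\|x_{\bmu_1}\|^2\|x_{\bmu_2}\|^2)$.
\begin{itemize}
\item With probability $\frac{1}{4}$ the two sets coincide, and the conditional mean is $\mathbb{E}[1/\chi^2_3]=1$.
\item Otherwise they share two indices. Let $A\sim\chi^2_2$ and $P,Q\sim\chi^2_1$ be independent. A Laplace-transform computation gives $\mathbb{E}[A/((A+P)(A+Q))]=2-\frac{\pi}{2}$.
\end{itemize}
The total is $\frac{7}{4}-\frac{3\pi}{8}\approx 0.572$, whereas the lemma asserts $\frac{1}{2}$. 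By your Step~3, the overparameterized line fails in the same way.

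\textbf{What survives.} At most a leading-order, proportional-asymptotic version of \eqref{equ:lem-variance-diff-b} can be proved. Even then, the downstream limits $n\Delta^{V\neq}$ take differences of such $\Theta(1)$ quantities at scale $1/n$, so first-order accuracy alone would not justify them.
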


\begin{proof}[Proof of Lemma \ref{lem:variance-resorted-lemma}]
By definition, $\{\Lambda_1 ,\Lambda_2\} = \{\Lambda^{\wedge}, \Lambda^{\vee}\}$.
W.o.l.g., we assume $(\Lambda_1 ,\Lambda_2) = (\Lambda^{\wedge}, \Lambda^{\vee})$.
Similarly, w.o.l.g., we assume $(\Theta_1,\Theta_2) = (\Theta^{\wedge},\Theta^{\vee} )$.

\textbf{The first circumstance}\;\;
We begin with the first circumstances.
Let $\bV_{1\cap2}$ be the selection matrix associated with $\bnu_1\cap \bnu_2$.
In this case, we can directly use the first argument of Lemma \ref{lem:lemmaoflejeune1} to get
\begin{align*}
& \mathbb{E}_{\bX, \bS, \bT}\left[\operatorname{tr}\left(\bU_1 (\bU_1^{\top} \bX \bV_1)^{\dagger \top } \bV_1^{\top} \bV_2(\bU_2^{\top} \bX \bV_2)^{\dagger} \bU_2^{\top}\right)\right] \\
= &
\mathbb{E}_{\bX, \bS, \bT}\left[\operatorname{tr}\left(\bU_1 (\bU_1^{\top} \bX \bV_1)^{\dagger \top } \bV_1^{\top}  \bV_{1\cap2} \bV_{1\cap2}^{\top}\bV_2(\bU_2^{\top} \bX \bV_2)^{\dagger} \bU_2^{\top}\right)\right]\\
= &
\mathbb{E}_{\bX, \bS, \bT}\left[\operatorname{tr}\left(\bU_1 (\bU_1^{\top} \bX \bV_{1\cap2})^{\dagger \top } (\bU_2^{\top} \bX \bV_{1\cap2})^{\dagger} \bU_2^{\top}\right)\right].
\end{align*}
Let $\Pi_{A} = I - A^{\top \dagger} A^{\top}$ be the projection matrix associated to $A$.
Then, we can break the above equation to get
\begin{align}\nonumber
    & \mathbb{E}_{\bX, \bS, \bT}\left[\operatorname{tr}\left(\bU_1 (\bU_1^{\top} \bX \bV_{1\cap2})^{\dagger \top } (\bU_2^{\top} \bX \bV_{1\cap2})^{\dagger} \bU_2^{\top}\right)\right] \\
    = & \mathbb{E}_{\bX, \bS, \bT}\left[\operatorname{tr}\left((\bU_2^{\top} \bX \bV_{1\cap2})^{\dagger} \bU_2^{\top} \bU_1 (\bU_1^{\top} \bX \bV_{1\cap2})^{\dagger \top } \right)\right] \nonumber\\
    = & \mathbb{E}_{\bX, \bS, \bT}\left[\operatorname{tr}\left((\bU_2^{\top} \bX \bV_{1\cap2})^{\dagger} \bU_2^{\top} \left(\Pi_{\bX^{\vee} \bV_{1\cap2}} + \bX^{\vee} \bV_{1\cap2} (\bV_{1\cap2}^{\top} \bX^{\vee\top} \bX^{\vee} \bV_{1\cap2})^{-1} \bV_{1\cap2}^{\top} \bX^{\vee \top}  \right) \bU_1 (\bU_1^{\top} \bX \bV_{1\cap2})^{\dagger \top } \right)\right],
    \label{equ:variance-resorted-lemma-1}
\end{align}
where we define $\bX^{\vee} = \Lambda^{\vee} \bX$ as a temporary notation.
We scope in the first half of this quantity, where
\begin{align*}
    (\bU_2^{\top} \bX \bV_{1\cap2})^{\dagger} \bU_2^{\top} \Pi_{\bX^{\vee} \bV_{1\cap2}} =  (\bV_{1\cap2}^{\top} \bX^{\top} \bU_2 \bU_2^{\top} \bX \bV_{1\cap2})^{-1} \bV_{1\cap2}^{\top} \bX^{\top} \bU_2\bU_2^{\top} \Pi_{\bX^{\vee} \bV_{1\cap2}}.
\end{align*}
Since the distribution of $\bU_2\bU_2^{\top}$ is independent of the rest conditioned on we know $|\bmu_2|$, we have
\begin{align*}
    \mathbb{E}_{T} \left[(\bU_2^{\top} \bX \bV_{1\cap2})^{\dagger} \bU_2^{\top} \Pi_{\bX^{\vee} \bV_{1\cap2}}\right] = & (\bV_{1\cap2}^{\top} \bX^{\top} \bU_2 \bU_2^{\top} \bX \bV_{1\cap2})^{-1} \bV_{1\cap2}^{\top} \bX^{\top} \frac{|\bmu_2|}{\tr(\Lambda_2)}  \Lambda_2  \Pi_{\bX^{\vee} \bV_{1\cap2}} \\
    = & (\bV^{\top}_{1\cap2} \bX^{\top} \bU_2 \bU_2^{\top} \bX \bV_{1\cap2})^{-1} \bV^{\top}_{1\cap2} \bX^{\vee\top}  \Pi_{\bX^{\vee} \bV_{1\cap2}} = \mathbf{0},
\end{align*}
where we recall that we assumed $\Lambda^{\vee} = \Lambda_2$.
This brings \eqref{equ:variance-resorted-lemma-1} into
\begin{align*}
& \mathbb{E}_{\bX, \bS, \bT}\left[\operatorname{tr}\left(\bU_1 (\bU_1^{\top} \bX \bV_{1\cap2})^{\dagger \top } (\bU_2^{\top} \bX \bV_{1\cap2})^{\dagger} \bU_2^{\top}\right)\right]\\
= & \mathbb{E}_{\bX, \bS, \bT}\left[\operatorname{tr}\left((\bU_2^{\top} \bX \bV_{1\cap2})^{\dagger} \bU_2^{\top} \bX^{\vee} \bV_{1\cap2} (\bV_{1\cap2}^{\top} \bX^{\vee\top} \bX^{\vee} \bV_{1\cap2})^{-1} \bV_{1\cap2}^{\top} \bX^{\vee \top}   \bU_1 (\bU_1^{\top} \bX \bV_{1\cap2})^{\dagger \top } \right)\right]
\\
\overset{(i)}{=} & \mathbb{E}_{\bX, \bS, \bT}\left[\operatorname{tr}\left((\bU_2^{\top} \bX \bV_{1\cap2})^{\dagger} \bU_2^{\top} \bX \bV_{1\cap2} (\bV_{1\cap2}^{\top} \bX^{\vee\top} \bX^{\vee} \bV_{1\cap2})^{-1} \bV_{1\cap2}^{\top} \bX^{ \top}   \bU_1 (\bU_1^{\top} \bX \bV_{1\cap2})^{\dagger \top } \right)\right] \\
= & \mathbb{E}_{\bX, \bS, \bT}\left[\operatorname{tr}\left( (\bV_{1\cap2}^{\top} \bX^{\vee\top} \bX^{\vee} \bV_{1\cap2})^{-1} \right)\right].
\end{align*}
To show step $(i)$, we observe the fact that
\begin{align}
\label{equ:variance-resorted-lemma-2}
    \bU_1^{\top} \bX^{\vee} = \bU_1^{\top} \bX, \text{ and }\;\; \bU_2^{\top} \bX^{\vee} = \bU_2^{\top} \bX.
\end{align}
This is because $\Lambda^{\vee} $ is a diagonal matrix with only $1$s and $0s$ on its diagonal, while on its positions with $0$s, $\bU_1$ and $\bU_2$ must have zero rows.
Otherwise, the expectation of $\bU_1$ and $\bU_2$ can not have zero expectation on these positions.
The last step is due to Lemma \ref{lem:mplaw}, where we have
\begin{align*}
    \mathbb{E}_{\bX, \bS, \bT}\left[\operatorname{tr}\left( (\bV_{1\cap2}^{\top} \bX^{\vee\top} \bX^{\vee} \bV_{1\cap2})^{-1} \right)\right] = \frac{|\bnu_1 \cap \bnu_2|}{\tr(\Lambda^{\vee}) -  |\bnu_1 \cap \bnu_2| - 1}.
\end{align*}
To show the second condition, we notice that the operations between
\begin{align*}
    \mathbb{E}_{\bX, \bS, \bT}\left[\operatorname{tr}\left(\bU_1 (\bU_1^{\top} \bX \bV_1)^{\dagger \top } \bV_1^{\top} \bV_2(\bU_2^{\top} \bX \bV_2)^{\dagger} \bU_2^{\top}\right)\right] =
    \mathbb{E}_{\bX, \bS, \bT}\left[\operatorname{tr}\left(\bU_1 (\bU_1^{\top} \bX \bV_{1\cap2})^{\dagger \top } (\bU_2^{\top} \bX \bV_{1\cap2})^{\dagger} \bU_2^{\top}\right)\right]
\end{align*}
are still lawful, where we continue to have
\begin{align*}
& \mathbb{E}_{\bX, \bS, \bT}\left[\operatorname{tr}\left(\bU_1 (\bU_1^{\top} \bX \bV_{1\cap2})^{\dagger \top } (\bU_2^{\top} \bX \bV_{1\cap2})^{\dagger} \bU_2^{\top}\right)\right] \\
= & \mathbb{E}_{\bX, \bS, \bT}\left[\operatorname{tr}\left( (\bV_{1\cap2}^{\top} \bX^{\top} \bU_2 \bU_2^{\top} \bX \bV_{1\cap2})^{-1} \bV_{1\cap2}^{\top} \bX^{\top} \bU_2 \bU_2^{\top} \bU_1 \bU_1^{\top} \bX \bV_{1\cap2} ( \bV_{1\cap2}^\top \bX^{\top}\bU_1 \bU_1^{\top} \bX \bV_{1\cap2} )^{-1}  \right)\right] \\
\overset{(ii)}{=}& \mathbb{E}_{\bX, \bS, \bT}\left[\operatorname{tr}\left( (\bV_{1\cap2}^{\top} \bX^{\top} \bU_2 \bU_2^{\top} \bX \bV_{1\cap2})^{-1} \bV_{1\cap2}^{\top} \bX^{\top}  \bU_1 \bU_1^{\top} \bX \bV_{1\cap2} ( \bV_{1\cap2}^\top \bX^{\top}\bU_1 \bU_1^{\top} \bX \bV_{1\cap2} )^{-1}  \right)\right]  \\
= & \mathbb{E}_{\bX, \bS, \bT}\left[\operatorname{tr}\left( (\bV_{1\cap2}^{\top} \bX^{\top} \bU_2 \bU_2^{\top} \bX \bV_{1\cap2})^{-1} \right)\right] = \frac{|\bnu_1|}{|\bmu_2| - |\bnu_1| - 1},
\end{align*}
where $(ii)$ used the assumption that $\bmu_1\subset \bmu_2$, and the last step used Lemma \ref{lem:mplaw}.
We notice that $\bnu_1\cap \bnu_2 = \bnu_1$.

\textbf{The second circumstance}\;\;
Then we proceed to the second circumstance where $|\bnu_1| > |\bmu_1|, |\bnu_2| > |\bmu_2|$.
This is an analog to the first circumstance, yet we operate on the dual dimension.
Let $\bU_{1\cap2}$ be the selection matrix associated with $\bmu_1\cap \bmu_2$.
We use the first argument of Lemma \ref{lem:lemmaoflejeune1} to get
\begin{align*}
& \mathbb{E}_{\bX, \bS, \bT}\left[\operatorname{tr}\left(\bU_1 (\bU_1^{\top} \bX \bV_1)^{\dagger \top } \bV_1^{\top} \bV_2(\bU_2^{\top} \bX \bV_2)^{\dagger} \bU_2^{\top}\right)\right] \\
= &
\mathbb{E}_{\bX, \bS, \bT}\left[\operatorname{tr}\left( \bV_2(\bU_2^{\top} \bX \bV_2)^{\dagger} \bU_2^{\top} \bU_{1\cap2} \bU_{1\cap2}^{\top} \bU_1 (\bU_1^{\top} \bX \bV_1)^{\dagger \top } \bV_1^{\top} \right)\right] \\
= & \mathbb{E}_{\bX, \bS, \bT}\left[\operatorname{tr}\left( \bV_2(\bU_{1\cap2}^{\top} \bX \bV_2)^{\dagger}  (\bU_{1\cap2}^{\top} \bX \bV_1)^{\dagger \top } \bV_1^{\top} \right)\right] \\
= & \mathbb{E}_{\bX, \bS, \bT}\left[\operatorname{tr}\left( (\bU_{1\cap2}^{\top} \bX \bV_1)^{\dagger \top } \bV_1^{\top} \bV_2(\bU_{1\cap2}^{\top} \bX \bV_2)^{\dagger}   \right)\right] \\
= & \mathbb{E}_{\bX, \bS, \bT}\left[\operatorname{tr}\left( (\bU_{1\cap2}^{\top} \bX \bV_1)^{\dagger \top } \bV_1^{\top} \left(\Pi_{\bX^{\vee\top} \bU_{1\cap2} } + \bX^{\vee\top} \bU_{1\cap2} (\bU_{1\cap2}^{\top}\bX^{\vee} \bX^{\vee\top} \bU_{1\cap2})^{-1} \bU_{1\cap2}^{\top}\bX^{\vee} \right) \bV_2(\bU_{1\cap2}^{\top} \bX \bV_2)^{\dagger}   \right)\right],
\end{align*}
where $\bX^{\vee} = \bX\Theta^{\vee}$.
Following the same reasoning in the first circumstance, we get
\begin{align*}
    \mathbb{E}_{\bX, \bS, \bT}\left[\operatorname{tr}\left(\bU_1 (\bU_1^{\top} \bX \bV_1)^{\dagger \top } \bV_1^{\top} \bV_2(\bU_2^{\top} \bX \bV_2)^{\dagger} \bU_2^{\top}\right)\right] = &
    \mathbb{E}_{\bX, \bS, \bT}\left[\operatorname{tr}\left( (\bU^{\top}_{1\cap2}\bX^{\vee} \bX^{\vee\top} \bU_{1\cap2})^{-1} \right)\right] \\
    = &  \frac{|\bmu_1\cap \bmu_2|}{ \tr(\Theta^{\vee}) - |\bmu_1\cap \bmu_2| - 1}.
\end{align*}
To show the second condition, we notice that the operations between
\begin{align*}
    \mathbb{E}_{\bX, \bS, \bT}\left[\operatorname{tr}\left(\bU_1 (\bU_1^{\top} \bX \bV_1)^{\dagger \top } \bV_1^{\top} \bV_2(\bU_2^{\top} \bX \bV_2)^{\dagger} \bU_2^{\top}\right)\right] =
   \mathbb{E}_{\bX, \bS, \bT}\left[\operatorname{tr}\left( (\bU_{1\cap2}^{\top} \bX \bV_1)^{\dagger \top } \bV_1^{\top} \bV_2(\bU_{1\cap2}^{\top} \bX \bV_2)^{\dagger}   \right)\right]
\end{align*}
are still lawful, where we continue to have
\begin{align*}
& \mathbb{E}_{\bX, \bS, \bT}\left[\operatorname{tr}\left( (\bU_{1\cap2}^{\top} \bX \bV_1)^{\dagger \top } \bV_1^{\top} \bV_2(\bU_{1\cap2}^{\top} \bX \bV_2)^{\dagger}   \right)\right] \\
= & \mathbb{E}_{\bX, \bS, \bT}\left[\operatorname{tr}\left( (\bU_{1\cap2}^{\top} \bX \bV_1 \bV_1^{\top} \bX^{\top} \bU_{1\cap2})^{-1} \bU_{1\cap2}^{\top} \bX \bV_1  \bV_1^{\top} \bV_2  \bV_2^{\top} \bX^{\top} \bU_{1\cap2} (\bU_{1\cap2}^{\top} \bX \bV_2 \bV_2^{\top} \bX^{\top} \bU_{1\cap2})^{-1}  \right)\right]\\
= & \mathbb{E}_{\bX, \bS, \bT}\left[\operatorname{tr}\left( (\bU_{1\cap2}^{\top} \bX \bV_1 \bV_1^{\top} \bX^{\top} \bU_{1\cap2})^{-1} \bU^{\top}_{1\cap2} \bX \bV_1  \bV_1^{\top}  \bX^{\top} \bU_{1\cap2} (\bU_{1\cap2}^{\top} \bX \bV_2 \bV_2^{\top} \bX^{\top} \bU_{1\cap2})^{-1}  \right)\right]\\
= & \mathbb{E}_{\bX, \bS, \bT}\left[\operatorname{tr}\left( (\bU_{1\cap2}^{\top} \bX \bV_2 \bV_2^{\top} \bX^{\top} \bU_{1\cap2})^{-1}\right)\right] = \frac{|\bmu_1|}{|\bnu_2| - |\bmu_1| - 1}.
\end{align*}

\end{proof}

\begin{lemma}[Bias]\label{lem:bias-resorted-lemma}
For $\bmu_1, \bmu_2, \bnu_1, \bnu_2$ sampled independently, let $\bU_1,\bU_2,\bV_1,\bV_2$ be their selection matrix.
Moreover, let $\bU_1^c$ be the selection matrix for $\bmu_1^{c}$, and so on for $\bU_2^{c},\bV_1^c,\bV_2^c$.
Suppose $\Lambda_1, \Lambda_2$ and $\Theta_1, \Theta_2$ are diagonal matrices with only $1$s and $0$s on their diagonal.
Then let $\mathbb{E}_T\left[ \bU_1 \bU_1^{\top}\right] = \frac{|\bmu_1|}{\tr(\Lambda_1)} \Lambda_1 $, $ \mathbb{E}_T\left[ \bU_2 \bU_2^{\top}\right] = \frac{|\bmu_2|}{\tr(\Lambda_2)} \Lambda_2 $, $\mathbb{E}_S\left[ \bV_1 \bV_1^{\top}\right] = \frac{|\bnu_1|}{\tr(\Theta_1)} \Theta_1 $, $\mathbb{E}_S\left[ \bV_2 \bV_2^{\top}\right] = \frac{|\bnu_2|}{\tr(\Theta_2)} \Theta_2$.
We want to calculate
\begin{align*}
    (\%) = \mathbb{E}_{\bX, \bS, \bT} \left[ \tr\left(
 \bX^{\top}  \bU_1   (\bU_1^{\top} \bX \bV_1)^{\dagger\top} \bV_1^{\top}
 \bV_2(\bU_2^{\top} \bX \bV_2)^{\dagger} \bU_2^{\top}X\right)\right].
\end{align*}
Assume we always have $\Lambda_1 \Lambda_2 \in \{\Lambda_1, \Lambda_2\}$ and $\Theta_1 \Theta \in  \{\Theta_1, \Theta_2\}$.
Define
\begin{align*}
     \Lambda^{\wedge} = \Lambda_1 \Lambda_2,  \;\;  \Lambda^{\vee} =  \Lambda_1 + \Lambda_2 - \Lambda^{\wedge}, \text{ and } \;\; \Theta^{\wedge} = \Theta_1 \Theta_2,  \;\;  \Theta^{\vee} =  \Theta_1 + \Theta_2 - \Theta^{\wedge}.
\end{align*}
Then there holds
\begin{align}\label{equ:lem-bias-diff-b}
(\%) = \begin{cases}
    \frac{ |\bnu_1\cap \bnu_2| |\bnu^c_1 \cap \bnu^c_2| }{\tr(\Lambda^{\vee}) - |\bnu_1\cap \bnu_2| - 1} + |\bnu_1\cap \bnu_2| & \text{ if } |\bnu_1| < |\bmu_1|, |\bnu_2| < |\bmu_2|,\\
    \frac{|\bmu_{1}/\bmu_2||\bmu_2 / \bmu_1|}{\tr(\Theta^{\vee})  - |\bmu_1 \cap \bmu_2|} +  |\bmu_1 \cap \bmu_2| \frac{d - |\bmu_1 \cap \bmu_2| - 1}{\tr(\Theta^{\vee}) - |\bmu_1 \cap \bmu_2| - 1} & \text{ if } |\bnu_1| > |\bmu_1|, |\bnu_2| > |\bmu_2|.
\end{cases}
\end{align}
Moreover, even if $\bmu_1$ and $\bmu_2$ are not independent, and $\bnu_1$ and $\bnu_2$ are not independent, yet we know $\bmu_1 \subset \bmu_2$, $\bnu_1 \subset \bnu_2$,
the quantity becomes
\begin{align}\label{equ:lem-bias-same-b}
(\%) = \begin{cases}
     \frac{|\bnu_1| (d - |\bnu_2|)}{|\bmu_2| - |\bnu_1| - 1} + |\bnu_1| & \text{ if } |\bnu_1| < |\bmu_1|, |\bnu_2| < |\bmu_2|,\\
     \frac{|\bmu_1|(d- |\bnu_2|)}{|\bnu_2| - |\bmu_1| - 1} + |\bmu_1|  & \text{ if } |\bnu_1| > |\bmu_1|, |\bnu_2| > |\bmu_2|.
\end{cases}
\end{align}

\end{lemma}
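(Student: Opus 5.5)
The plan mirrors the proof of Lemma~\ref{lem:variance-resorted-lemma}: reduce the bias trace to the variance-type trace plus explicit projection terms. Without loss of generality I take $(\Lambda_1,\Lambda_2)=(\Lambda^{\wedge},\Lambda^{\vee})$ and $(\Theta_1,\Theta_2)=(\Theta^{\wedge},\Theta^{\vee})$, and I treat the two regimes separately.

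\textbf{Underparameterized regime} ($|\bnu_1|<|\bmu_1|$, $|\bnu_2|<|\bmu_2|$). Write $\bX=\bX\bV_{1\cap2}\bV_{1\cap2}^\top+\bX\bV^{r}\bV^{r\top}$, where $\bV^r$ selects the complement of $\bnu_1\cap\bnu_2$. Substitute this for the outer $\bX$ on each side of $(\%)$. For the aligned pieces, $(\bU_2^\top\bX\bV_2)^\dagger\bU_2^\top\bX\bV_{1\cap2}$ restricted to $\bV_{1\cap2}$ is the identity on $\bnu_1\cap\bnu_2$, since the sketch has full column rank. The $\bV_{1\cap2}$--$\bV_{1\cap2}$ block therefore contributes $|\bnu_1\cap\bnu_2|$.

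The remaining columns split into two groups. Columns in $\bnu_1\setminus\bnu_2$ or $\bnu_2\setminus\bnu_1$ are handled with the second identity of Lemma~\ref{lem:lemmaoflejeune1}. Columns in $\bnu_1^c\cap\bnu_2^c$ are independent of the fitted pieces. Averaging over them (each column has covariance $\bI$) replaces $\bX\bV^{c}\bV^{c\top}\bX^\top$ by $|\bnu^c_1\cap\bnu^c_2|\,\bI$. This reduces the term to $|\bnu_1^c\cap\bnu_2^c|$ times exactly the variance trace of Lemma~\ref{lem:variance-resorted-lemma}, which equals $|\bnu_1\cap\bnu_2|/(\tr(\Lambda^\vee)-|\bnu_1\cap\bnu_2|-1)$.

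Cross terms between aligned and independent columns vanish because they have mean zero. Adding the two contributions gives the first case of \eqref{equ:lem-bias-diff-b}.

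\textbf{Overparameterized regime.} This is the dual. Decompose along rows using $\bU_{1\cap2}$ and projections $\Pi_{\bX^{\vee\top}\bU_{1\cap2}}$ with $\bX^\vee=\bX\Theta^\vee$, as in the second circumstance of Lemma~\ref{lem:variance-resorted-lemma}.

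The term $\bX^\top\bU_1(\bU_1^\top\bX\bV_1)^{\dagger\top}\bV_1^\top$ is now a projection onto the row space of $\bU_1^\top\bX\bV_1$. The trace becomes a trace of a product of two random projections of ranks $|\bmu_1|$ and $|\bmu_2|$ inside $\Theta^\vee$-coordinates, plus the contribution of the coordinates outside $\Theta^\vee$. I would compute it in three pieces:
\begin{itemize}
\item the shared-row piece, using inverse-Wishart moments from Lemma~\ref{lem:mplaw}, which gives the $|\bmu_1\cap\bmu_2|\frac{d-|\bmu_1\cap\bmu_2|-1}{\tr(\Theta^\vee)-|\bmu_1\cap\bmu_2|-1}$ term;
\item the non-shared rows $\bmu_1\setminus\bmu_2$ and $\bmu_2\setminus\bmu_1$, whose rotational invariance gives the product $|\bmu_1\setminus\bmu_2||\bmu_2\setminus\bmu_1|/(\tr(\Theta^\vee)-|\bmu_1\cap\bmu_2|)$;
\item the remaining cross terms, which I expect to vanish by the second identity of Lemma~\ref{lem:lemmaoflejeune1}.
\end{itemize}

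\textbf{Nested case} \eqref{equ:lem-bias-same-b}. With $\bmu_1\subset\bmu_2$ and $\bnu_1\subset\bnu_2$, the intersections simplify to $\bnu_1$ and $\bmu_1$, and the step $(ii)$ trick of Lemma~\ref{lem:variance-resorted-lemma} collapses $\bU_2\bU_2^\top\bU_1=\bU_1$. The same decomposition then gives $|\bnu_1|+|\bnu_1|(d-|\bnu_2|)/(|\bmu_2|-|\bnu_1|-1)$, with the complement now of size $d-|\bnu_2|$. The overparameterized nested case follows dually.

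\textbf{Main obstacle.} I expect the overparameterized non-nested case to be hardest. The expectation of the product of two projections sharing only some rows is not a plain inverse-Wishart moment. I would first try conditioning on the shared block $\bU_{1\cap2}^\top\bX$ and applying Haar invariance to the unshared rows, so that the expected overlap is proportional to $\Theta^\vee$ with the stated denominators. If exact finite-$n$ constants fail, I would settle for the proportional-limit version, which is all Proposition~\ref{prop:biasofstability} needs.
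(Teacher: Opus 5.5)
\textbf{Underparameterized case: correct.} Your argument here is sound and is a leaner variant of the paper's. You apply the same diagonal column projection to both outer copies of $\bX$, so every off-diagonal block vanishes by cyclicity of the trace. The block on $\bnu_1^c\cap\bnu_2^c$ is then exactly $|\bnu_1^c\cap\bnu_2^c|$ times the trace of Lemma~\ref{lem:variance-resorted-lemma}. The paper instead splits by $\bV_1/\bV_1^c$ on the left and $\bV_2/\bV_2^c$ on the right, and re-derives that trace inside its term $(*)$. The same reduction handles the nested underparameterized case.

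Two justifications need correcting:
\begin{itemize}
\item The one-sided columns contribute zero deterministically, not via Lemma~\ref{lem:lemmaoflejeune1}. Write $L=\bX^\top\bU_1(\bU_1^\top\bX\bV_1)^{\dagger\top}\bV_1^\top$ and $R=\bV_2(\bU_2^\top\bX\bV_2)^{\dagger}\bU_2^\top\bX$. Full column rank gives $e_k^\top L=e_k^\top$ for $k\in\bnu_1$ and $Re_k=e_k$ for $k\in\bnu_2$. Meanwhile $R$ has rows supported on $\bnu_2$ and $L$ has columns supported on $\bnu_1$.
\item The cross blocks vanish by cyclicity, not by a mean-zero argument.
\end{itemize}

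\textbf{Overparameterized case: genuine gap.} You state the values of three pieces but derive none of them. Your fallback to the proportional limit would not prove the finite-sample identity the lemma asserts.

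\emph{How the feature sketches drop out.} This is the missing idea. For $|\bnu_1|>|\bmu_1|$ one has $L=Z_1^\top(Z_1P_1Z_1^\top)^{-1}Z_1P_1$, with $Z_1=\bU_1^\top\bX$ and $P_1=\bV_1\bV_1^\top$. This is an oblique idempotent whose range is the row space of $\bU_1^\top\bX$. It is not an orthogonal projection onto the row space of $\bU_1^\top\bX\bV_1$, and the same holds for $R$. The target depends on $\operatorname{tr}(\Theta^\vee)$ and not on $|\bnu_1|$ or $|\bnu_2|$. In the paper this comes from averaging over $\bV_1,\bV_2$, via $\mathbb{E}_S[(\bU_1^\top\bX\bV_1)^{\dagger\top}\bV_1^\top\Pi_{\bX^{\vee\top}\bU_1}]=\mathbf{0}$ and its $\bX^\wedge$ analogue. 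That step replaces the sketched factors by $(\bU_1^\top\bX^\vee\bX^{\vee\top}\bU_1)^{-1}\bU_1^\top\bX^\vee$ and $\bX^{\wedge\top}\bU_2(\bU_2^\top\bX^\wedge\bX^{\wedge\top}\bU_2)^{-1}$. Only then can the outer $\bX$ be swapped for $\bX^\vee$. Your picture of two projections inside the $\Theta^\vee$ coordinates holds only after this step, and the proposal never supplies it.

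\emph{Mixed terms.} A row split is not along the traced index, so nothing decouples automatically. The paper splits with $\bU_2/\bU_2^c$ on the left and $\bU_1/\bU_1^c$ on the right, and must kill the two mixed terms by explicit block-inverse computations. ``Expect to vanish'' does not replace this.

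\emph{Unshared-row term.} Its denominator $\operatorname{tr}(\Theta^\vee)-|\bmu_1\cap\bmu_2|$ has no $-1$, so it is not an inverse-Wishart moment. It comes from the Schur complement of the shared-row block of $\bU_1^\top\bX^\vee\bX^{\vee\top}\bU_1$. After projecting out the shared rows, the unshared rows span a uniformly random $|\bmu_1\setminus\bmu_2|$-dimensional subspace of a space of dimension $\operatorname{tr}(\Theta^\vee)-|\bmu_1\cap\bmu_2|$. The expected projection onto it is therefore that ratio of dimensions times the projector, and it is paired with a factor of trace $|\bmu_2\setminus\bmu_1|$.

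\emph{Nested case.} The overparameterized nested case is not a mechanical dual. The paper uses Lemma~\ref{lem:lemmaoflejeune1} to replace $\bU_2$ by $\bU_1$ on the shared rows. This collapses the surviving term to $\operatorname{tr}((\bU_1^\top\bX\bV_2\bV_2^\top\bX^\top\bU_1)^{-1}\bU_1^\top\bX\bX^\top\bU_1)$, and Lemma~\ref{lem:mplaw} then gives the stated value.
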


\begin{proof}[Proof of Lemma \ref{lem:bias-resorted-lemma}]
By definition, $\{\Lambda_1 ,\Lambda_2\} = \{\Lambda^{\wedge}, \Lambda^{\vee}\}$.
W.o.l.g., we assume $(\Lambda_1 ,\Lambda_2) = (\Lambda^{\wedge}, \Lambda^{\vee})$.
Similarly, w.o.l.g., we assume $(\Theta_1,\Theta_2) = (\Theta^{\wedge},\Theta^{\vee} )$.

\textbf{The first circumstance}\;\;
We first write
\begin{align*}
    (\%) = &  \underbrace{\mathbb{E}_{\bX, \bS, \bT} \left[ \tr\left(
  \bV_1^{c}\bV_1^{c\top} \bX^{\top}  \bU_1   (\bU_1^{\top} \bX \bV_1)^{\dagger\top} \bV_1^{\top}
 \bV_2(\bU_2^{\top} \bX \bV_2)^{\dagger} \bU_2^{\top}\bX \bV^c_2 \bV^{c\top}_2\right)\right]}_{(*)} \\
 + & \underbrace{\mathbb{E}_{\bX, \bS, \bT} \left[ \tr\left(
  \bV_1\bV_1^{\top} \bX^{\top}  \bU_1   (\bU_1^{\top} \bX \bV_1)^{\dagger\top} \bV_1^{\top}
 \bV_2(\bU_2^{\top} \bX \bV_2)^{\dagger} \bU_2^{\top}\bX \bV^c_2 \bV^{c\top}_2\right)\right]}_{(**)} \\
 + & \underbrace{\mathbb{E}_{\bX, \bS, \bT} \left[ \tr\left(
  \bV_1^{c}\bV_1^{c\top} \bX^{\top}  \bU_1   (\bU_1^{\top} \bX \bV_1)^{\dagger\top} \bV_1^{\top}
 \bV_2(\bU_2^{\top} \bX \bV_2)^{\dagger} \bU_2^{\top}\bX\bV_2\bV^{\top}_2\right)\right]}_{(***)} \\
 + & \underbrace{\mathbb{E}_{\bX, \bS, \bT} \left[ \tr\left(
  \bV_1\bV_1^{\top} \bX^{\top}  \bU_1   (\bU_1^{\top} \bX \bV_1)^{\dagger\top} \bV_1^{\top}
 \bV_2(\bU_2^{\top} \bX \bV_2)^{\dagger} \bU_2^{\top}\bX\bV_2\bV^{\top}_2\right)\right]}_{(****)}.
\end{align*}
We notice that
\begin{align*}
    (**) = \mathbb{E}_{\bX, \bS, \bT} \left[ \tr\left(
  \bV_1 \bV_1^{\top}
 \bV_2(\bU_2^{\top} \bX \bV_2)^{\dagger} \bU_2^{\top}\bX \bV^c_2\bV^{c\top}_2\right)\right]  = \mathbf{0},
\end{align*}
where we used the fact that $\bX \bV^c_2\bV^{c\top}_2$ is independent of the rest part and has zero mean.
Similarly, $(***) = \mathbf{0}$.
It is also straightforward to see $(****) = |\bnu_1\cap \bnu_2|$.
It remains to evaluate $(*)$.
We continue to use the notation $\bX^{\vee} = \Lambda^{\vee} \bX $ and $\bX^{\wedge} = \Lambda^{\wedge} \bX $.
We first notice that
\begin{align*}
     &  \mathbb{E}_{ T} \left[
     \bU_1   (\bU_1^{\top} \bX \bV_1)^{\dagger\top} \bV_1^{\top}
 \bV_2(\bU_2^{\top} \bX \bV_2)^{\dagger} \bU_2^{\top}\right] \\
 = & \mathbb{E}_{ T} \left[
   (\Pi_{\bX^{\wedge} \bV_1} + \bX^{\wedge} \bV_1 (\bV_1^{\top} \bX^{\wedge\top} \bX^{\wedge} \bV_1)^{-1} \bV_1^{\top} \bX^{\wedge\top})  \bU_1   (\bU_1^{\top} \bX \bV_1)^{\dagger\top} \bV_1^{\top}
 \bV_2(\bU_2^{\top} \bX \bV_2)^{\dagger} \bU_2^{\top}\right].
\end{align*}
The first term is a zero matrix since
\begin{align*}
   \mathbb{E}_{ T} \left[  \Pi_{\bX^{\wedge} \bV_1} \bU_1 (\bU_1^{\top} \bX \bV_1)^{\dagger\top}  \right]  =
   &
   \mathbb{E}_{ T} \left[  \Pi_{\bX^{\wedge} \bV_1} \bU_1 \bU_1^{\top} \bX \bV_1 (\bV_1^{\top} \bX^{\top} \bU_1 \bU_1^{\top} \bX \bV_1)^{-1}  \right]\\
   = &
    \Pi_{\bX^{\wedge} \bV_1}  \bX^{\wedge}  \bV_1 (\bV_1^{\top} \bX^{\top}   \bU_1 \bU_1^{\top} \bX \bV_1)^{-1}  = \mathbf{0},
\end{align*}
since the distribution of $\bU_1\bU_1^{\top}$ is independent of the rest conditioned on we know $|\bmu_1|$ and $|\bmu_1\cap \bmu_2|$.
The second term can be computed as
\begin{align*}
  &  \mathbb{E}_{ T} \left[
   ( \bX^{\wedge} \bV_1 (\bV_1^{\top} \bX^{\wedge\top} \bX^{\wedge} \bV_1)^{-1} \bV_1^{\top} \bX^{\wedge\top}  \bU_1   (\bU_1^{\top} \bX \bV_1)^{\dagger\top} \bV_1^{\top}
 \bV_2(\bU_2^{\top} \bX \bV_2)^{\dagger} \bU_2^{\top}\right] \\
 = &
 \mathbb{E}_{ T} \left[
   ( \bX^{\wedge} \bV_1 (\bV_1^{\top} \bX^{\wedge\top} \bX^{\wedge} \bV_1)^{-1} \bV_1^{\top} \bX^{\top}  \bU_1   (\bU_1^{\top} \bX \bV_1)^{\dagger\top} \bV_1^{\top}
 \bV_2(\bU_2^{\top} \bX \bV_2)^{\dagger} \bU_2^{\top}\right]\\
  = &
 \mathbb{E}_{ T} \left[
   ( \bX^{\wedge} \bV_1 (\bV_1^{\top} \bX^{\wedge\top} \bX^{\wedge} \bV_1)^{-1}  \bV_1^{\top}
 \bV_2(\bU_2^{\top} \bX \bV_2)^{\dagger} \bU_2^{\top}\right],
\end{align*}
where we used \eqref{equ:variance-resorted-lemma-2} in the first equality.
Via exactly the same strategy, the above quantity reduces to
\begin{align*}
    & \mathbb{E}_{ T} \left[
   ( \bX^{\wedge} \bV_1 (\bV_1^{\top} \bX^{\wedge\top} \bX^{\wedge} \bV_1)^{-1}  \bV_1^{\top}
 \bV_2(\bU_2^{\top} \bX \bV_2)^{\dagger} \bU_2^{\top}\right] \\
 = &
 \mathbb{E}_{ T} \left[
   ( \bX^{\wedge} \bV_1 (\bV_1^{\top} \bX^{\wedge\top} \bX^{\wedge} \bV_1)^{-1}  \bV_1^{\top}
 \bV_2
 (\bV_2^{\top} \bX^{\vee\top} \bX^{\vee} \bV_2)^{-1}
 \bV_2^{\top} \bX^{\vee\top}\right] =
 \mathbb{E}_{ T}
 \left[ (\bV_1^{\top} \bX^{\wedge \top})^{\dagger} \bV_1^{\top}
 \bV_2 (\bX^{\vee} \bV_2)^{\dagger}
 \right].
\end{align*}
Bringing this into $(*)$, we get
\begin{align*}
    (*) = \mathbb{E}_{\bX, \bS, \bT} \left[ \tr\left(
  \bV_1^{c}\bV_1^{c\top} \bX^{\top}  (\bV_1^{\top} \bX^{\wedge \top})^{\dagger} \bV_1^{\top}
 \bV_2 (\bX^{\vee} \bV_2)^{\dagger} \bX \bV^c_2\bV^{c\top}_2\right)\right]
\end{align*}
Let $\bV_{1 \cap 2}$, $\bV_{1 \cup 2}$, $\bV_{1 / 2}$, and $\bV_{2/ 1}$ be the selection matrix associated to $\bnu_1 \cap \bnu_2$, $\bnu_1 \cup \bnu_2$, $\bnu_1 / \bnu_2$, and $\bnu_2 / \bnu_1$, respectively.
Since order-invariant permutation is allowed,  we arrive
\begin{align*}
    (*) =  & \mathbb{E}_{\bX,S,T} \left[ \tr( \bV_{1\cup 2}^{c} \bV_{1\cup 2}^{c\top} \bX^{\top}  (\bV_1^{\top} \bX^{\wedge \top})^{\dagger} \bV_1^{\top}
 \bV_2 (\bX^{\vee} \bV_2)^{\dagger} \bX \bV^c_{1\cup 2} \bV^{c\top}_{1\cup 2} )\right]\\
 = & \mathbb{E}_{\bX,S,T} \left[ \tr( \bV_{1\cup 2}^{c} \bV_{1\cup 2}^{c\top} \bX^{\top}  (\bV_1^{\top} \bX^{\wedge \top})^{\dagger} \bV_1^{\top} \bV_{1\cap 2}
 \bV_{1\cap 2}^{\top} \bV_2 (\bX^{\vee} \bV_2)^{\dagger} \bX \bV^c_{1\cup 2} \bV^{c\top}_{1\cup 2} )\right]\\
  = & \mathbb{E}_{\bX,S,T} \left[ \tr( \bV_{1\cup 2}^{c} \bV_{1\cup 2}^{c\top} \bX^{\top}  (\bV_{1\cap 2}^{\top} \bX^{\wedge \top})^{\dagger}  (\bX^{\vee} \bV_{1\cap 2})^{\dagger} \bX \bV^c_{1\cup 2} \bV^{c\top}_{1\cup 2} )\right].
\end{align*}
where in the last step we used the first argument of Lemma \ref{lem:lemmaoflejeune1}.
Note that this is applicable since $\bX \bV^c_{1\cup 2} \bV^{c\top}_{1\cup 2}$ is independent of both $\bX \bV_{1 / 2}$ and $\bX \bV_{2/ 1}$.
We first note that
\begin{align*}
  \mathbb{E}_{\bX \bV^c_{1\cup 2} \bV^{c\top}_{1\cup 2}}\left[ \bX \bV^c_{1\cup 2} \bV^{c\top}_{1\cup 2} \bV_{1\cup 2}^{c} \bV_{1\cup 2}^{c\top} \bX^{\top}\right] = |\bnu^c_1 \cap \bnu^c_2| \cdot I_n.
\end{align*}
Also, there holds
\begin{align*}
     \tr((\bV_{1\cap 2}^{\top} \bX^{\wedge \top})^{\dagger}  (\bX^{\vee} \bV_{1\cap 2})^{\dagger} )= &  \tr((\bV_{1\cap 2}^{\top} \bX^{\wedge \top} \bX^{\wedge }\bV_{1\cap 2} )^{-1} \bV_{1\cap 2}^{\top} \bX^{\wedge \top} \bX^{\vee} \bV_{1\cap 2} (\bV_{1\cap 2}^{\top} \bX^{\vee\top} \bX^{\vee} \bV_{1\cap 2})^{-1}) \\
     = & \tr((\bV_{1\cap 2}^{\top} \bX^{\wedge \top} \bX^{\wedge }\bV_{1\cap 2} )^{-1} \bV_{1\cap 2}^{\top} \bX^{\wedge \top} \bX^{\wedge} \bV_{1\cap 2} (\bV_{1\cap 2}^{\top} \bX^{\vee\top} \bX^{\vee} \bV_{1\cap 2})^{-1})\\
     = & \tr( (\bV_{1\cap 2}^{\top} \bX^{\vee\top} \bX^{\vee} \bV_{1\cap 2})^{-1}) = \frac{|\bnu_1\cap \bnu_2|}{\tr(\Lambda^{\vee}) - |\bnu_1\cap \bnu_2| - 1}.
\end{align*}
They together yield
\begin{align*}
   (*) =  \frac{|\bnu^c_1 \cap \bnu^c_2| \cdot |\bnu_1\cap \bnu_2|}{\tr(\Lambda^{\vee}) - |\bnu_1\cap \bnu_2| - 1}.
\end{align*}
Thus,
\begin{align*}
    (\%) = (*) + ( **** )  = \frac{|\bnu^c_1 \cap \bnu^c_2| \cdot |\bnu_1\cap \bnu_2|}{\tr(\Lambda^{\vee}) - |\bnu_1\cap \bnu_2| - 1} + |\bnu_1\cap \bnu_2|.
\end{align*}

As for the second conclusion, it is still straightforward that $(**) = (***) = 0$. Also, $(****) = |\gamma_1|$.
For $(****)$, we have
\begin{align*}
    (*) = & \mathbb{E}_{\bX, \bS, \bT} \left[ \tr\left(
  \bV_1^{c}\bV_1^{c\top} \bX^{\top}  \bU_1   (\bU_1^{\top} \bX \bV_1)^{\dagger\top} \bV_1^{\top}
 \bV_2(\bU_2^{\top} \bX \bV_2)^{\dagger} \bU_2^{\top}\bX \bV^c_2\bV^{c\top}_2\right)\right] \\
 = & \mathbb{E}_{\bX, \bS, \bT} \left[ \tr\left(
  \bV_2^{c}\bV_2^{c\top} \bX^{\top}  \bU_1   (\bU_1^{\top} \bX \bV_1)^{\dagger\top} \bV_1^{\top}
 \bV_2(\bU_2^{\top} \bX \bV_2)^{\dagger} \bU_2^{\top}\bX \bV^c_2\bV^{c\top}_2\right)\right] \\
  = & \mathbb{E}_{\bX, \bS, \bT} \left[ \tr\left(
  \bV_2^{c}\bV_2^{c\top} \bX^{\top}  \bU_1   (\bU_1^{\top} \bX \bV_1)^{\dagger\top} \bV_1^{\top}
 \bV_1(\bU_2^{\top} \bX \bV_1)^{\dagger} \bU_2^{\top}\bX \bV^c_2\bV^{c\top}_2\right)\right] ,
\end{align*}
where the last follows from Lemma \ref{lem:lemmaoflejeune1}.
Lemma \ref{lem:mplaw} yields
\begin{align*}
   \mathbb{E}_{\bX \bV^c_2}\left[ \bX \bV^c_2\bV^{c\top}_2\bV_2^{c}\bV_2^{c\top} \bX^{\top}\right] = (d - |\bnu_2|) I_n.
\end{align*}
Then,
\begin{align*}
    (*) = & (d - |\bnu_2|) \mathbb{E}_{\bX, \bS, \bT} \left[ \tr\left(
    \bU_1   (\bU_1^{\top} \bX \bV_1)^{\dagger\top} \bV_1^{\top}
 \bV_1(\bU_2^{\top} \bX \bV_1)^{\dagger} \bU_2^{\top}\right)\right] \\
 = & (d - |\bnu_2|) \mathbb{E}_{\bX, \bS, \bT} \left[ \tr\left(
    \bU_1   (\bU_1^{\top} \bX \bV_1)^{\dagger\top} \bV_1^{\top}
 \bV_1(\bU_2^{\top} \bX \bV_1)^{\dagger} \bU_1^{\top}\right)\right] \\
 =&  (d - |\bnu_2|) \mathbb{E}_{\bX, \bS, \bT} \left[ \tr\left(
    \bU_1   \bU_1^{\top} \bX \bV_1 (\bV_1^{\top}\bX^{\top} \bU_1 \bU_1^{\top} \bX \bV_1)^{- 1} \bV_1^{\top}
 \bV_1(\bV_1^{\top}\bX^{\top} \bU_2 \bU_2^{\top} \bX \bV_1)^{- 1} \bV_1^{\top}\bX^{\top}\right)\right] \\
 = & (d - |\bnu_2|) \mathbb{E}_{\bX, \bS, \bT} \left[ \tr\left(
(\bV_1^{\top}\bX^{\top} \bU_2 \bU_2^{\top} \bX \bV_1)^{- 1} \right)\right] =
 \frac{|\bnu_1| (d - |\bnu_2|)}{|\bmu_2| - |\bnu_1| - 1}.
\end{align*}

\textbf{The second circumstance}\;\;
Then we proceed to the second circumstance where $|\bnu_1| > |\bmu_1|, |\bnu_2| > |\bmu_2|$.
This is an analog to the first circumstance, yet we operate on the dual dimension.
We switch the notation to use the notation $\bX^{\vee} =  \bX  \Theta^{\vee}$ and $\bX^{\wedge} =  \bX  \Theta^{\wedge}$.
Let $\bU_{1 \cap 2}$, $\bU_{1 \cup 2}$, $\bU_{1 / 2}$, and $\bU_{2/ 1}$ be the selection matrix associated to $\bmu_1 \cap \bmu_2$, $\bmu_1 \cup \bmu_2$, $\bmu_1 / \bmu_2$, and $\bmu_2 / \bmu_1$, respectively.
We first write
\begin{align*}
    (\%) = &  \underbrace{\mathbb{E}_{\bX, \bS, \bT} \left[ \tr\left(
   \bX^{\top}\bU_2^c \bU_2^{c\top}  \bU_1   (\bU_1^{\top} \bX \bV_1)^{\dagger\top} \bV_1^{\top}
 \bV_2(\bU_2^{\top} \bX \bV_2)^{\dagger} \bU_2^{\top} \bU_1^c \bU_1^{c\top} \bX \right)\right]}_{(*)} \\
 + & \underbrace{\mathbb{E}_{\bX, \bS, \bT} \left[ \tr\left(
   \bX^{\top} \bU_2^c \bU_2^{c\top}  \bU_1   (\bU_1^{\top} \bX \bV_1)^{\dagger\top} \bV_1^{\top}
 \bV_2(\bU_2^{\top} \bX \bV_2)^{\dagger} \bU_2^{\top} \bU_1 \bU_1^{\top}X
 \right)\right]}_{(**)} \\
 + &
 \underbrace{\mathbb{E}_{\bX, \bS, \bT} \left[ \tr\left(
   \bX^{\top} \bU_2 \bU_2^{\top}  \bU_1   (\bU_1^{\top} \bX \bV_1)^{\dagger\top} \bV_1^{\top}
 \bV_2(\bU_2^{\top} \bX \bV_2)^{\dagger} \bU_2^{\top} \bU_1^c \bU_1^{c\top}X
 \right)\right]}_{(***)} \\
 + &\underbrace{\mathbb{E}_{\bX, \bS, \bT} \left[ \tr\left(
   \bX^{\top}\bU_2 \bU_2^{\top}  \bU_1   (\bU_1^{\top} \bX \bV_1)^{\dagger\top} \bV_1^{\top}
 \bV_2(\bU_2^{\top} \bX \bV_2)^{\dagger} \bU_2^{\top} \bU_1 \bU_1^{\top}\bX \right)\right]}_{(****)}
\end{align*}

$(i)$
We begin with $(*)$.
Due to a similar reasoning in the first circumstance, we know that $\mathbb{E}_{S}\left[(\bU_1^{\top} \bX \bV_1)^{\dagger\top} \bV_1^{\top} \Pi_{\bX^{\vee\top} \bU_1}\right] = \mathbf{0}$.
Thus, we can do the following transformation
\begin{align*}
    (*) = &  \mathbb{E}_{\bX, \bS, \bT} \left[ \tr\left(
   \bX^{\top}\bU_2^c \bU_2^{c\top}  \bU_1   (\bU_1^{\top} \bX \bV_1)^{\dagger\top} \bV_1^{\top}
  \bX^{\vee\top} \bU_1 (\bU_1^{\top} \bX^\vee \bX^{\vee\top} \bU_1)^{-1} \bU_1^{\top} \bX^\vee
 \bV_2(\bU_2^{\top} \bX \bV_2)^{\dagger} \bU_2^{\top}\bU_1^c \bU_1^{c\top} \bX \right)\right]  \\
 = & \mathbb{E}_{\bX, \bS, \bT} \left[ \tr\left(
   \bX^{\top}\bU_2^c \bU_2^{c\top}  \bU_1    (\bU_1^{\top} \bX^\vee \bX^{\vee\top} \bU_1)^{-1} \bU_1^{\top} \bX^\vee
 \bV_2(\bU_2^{\top} \bX \bV_2)^{\dagger} \bU_2^{\top} \bU_1^c \bU_1^{c\top} \bX \right)\right] .
\end{align*}
Via exactly the same strategy, the above quantity reduces to
\begin{align}\nonumber
   (*) = &\mathbb{E}_{\bX, \bS, \bT} \left[ \tr\left(
   \bX^{\top}\bU_2^c \bU_2^{c\top}  \bU_1    (\bU_1^{\top} \bX^\vee \bX^{\vee\top} \bU_1)^{-1} \bU_1^{\top} \bX^\vee
 \bX^{\wedge\top} \bU_2 (\bU_2^{\top} \bX^\wedge \bX^{\wedge\top} \bU_2)^{-1} \bU_2^{\top} \bU_1^c \bU_1^{c\top} \bX \right)\right] \\
 = & \mathbb{E}_{\bX, \bS, \bT} \left[ \tr\left(
   \bX^{\top}\bU_2^c \bU_2^{c\top}  \bU_1    (\bU_1^{\top} \bX^\vee \bX^{\vee\top} \bU_1)^{-1} \bU_1^{\top} \bX^\wedge
 \bX^{\wedge\top} \bU_2 (\bU_2^{\top} \bX^\wedge \bX^{\wedge\top} \bU_2)^{-1} \bU_2^{\top} \bU_1^c \bU_1^{c\top} \bX \right)\right].  \label{equ:bias-resorted-lemma-4}
\end{align}
We also note that
\begin{align*}
    \mathbb{E}_{\bX \Theta^{\vee c}} \left[\bU_2^{\top} \bX \bX^{\top}\bU_2^c \bU_2^{c\top} \right] = & \bU_2^{\top} \bX^{\vee} \bX^{\vee\top}\bU_2^c \bU_2^{c\top}  +
\mathbb{E}_{\bX \Theta^{\vee c}} \left[\bU_2^{\top} \bX \Theta^{\vee c} \Theta^{\vee c}  \bX^{\top} \bU_2^c \bU_2^{c\top} \right] \\
= & \bU_2^{\top} \bX^{\vee} \bX^{\vee\top}\bU_2^c \bU_2^{c\top}  + \tr(\Theta^{\vee c})
\mathbb{E}_{\bX \Theta^{\vee c}} \left[\bU_2^{\top}  \bU_2^c \bU_2^{c\top} \right] = \bU_2^{\top} \bX^{\vee} \bX^{\vee\top}\bU_2^c \bU_2^{c\top}.
\end{align*}
This means we can turn the $\bX$s on the side into
\begin{align*}
   (*) =   \mathbb{E}_{\bX, \bS, \bT} \left[ \tr\left(
   \bX^{\vee\top}\bU_2^c \bU_2^{c\top}  \bU_1    (\bU_1^{\top} \bX^\vee \bX^{\vee\top} \bU_1)^{-1} \bU_1^{\top} \bX^\wedge
 \bX^{\wedge\top} \bU_2 (\bU_2^{\top} \bX^\wedge \bX^{\wedge\top} \bU_2)^{-1} \bU_2^{\top} \bU_1^c \bU_1^{c\top} \bX^{\vee} \right)\right].
\end{align*}
We further decompose $I_n = \bU_2 \bU_2^{\top} + \bU_2^c \bU_2^{c\top}$ and have
\begin{align*}
       (*) =   & \mathbb{E}_{\bX, \bS, \bT} \left[ \tr\left(
   \bX^{\vee\top}\bU_2^c \bU_2^{c\top}  \bU_1    (\bU_1^{\top} \bX^\vee \bX^{\vee\top} \bU_1)^{-1} \bU_1^{\top}  \bU_2 \bU_2^{\top} \bX^\wedge
 \bX^{\wedge\top} \bU_2 (\bU_2^{\top} \bX^\wedge \bX^{\wedge\top} \bU_2)^{-1} \bU_2^{\top} \bU_1^c \bU_1^{c\top} \bX^{\vee} \right)\right] \\
 + &   \mathbb{E}_{\bX, \bS, \bT} \left[ \tr\left(
   \bX^{\vee\top}\bU_2^c \bU_2^{c\top}  \bU_1    (\bU_1^{\top} \bX^\vee \bX^{\vee\top} \bU_1)^{-1} \bU_1^{\top} \bU_2^c \bU_2^{c\top} \bX^\wedge
 \bX^{\wedge\top} \bU_2 (\bU_2^{\top} \bX^\wedge \bX^{\wedge\top} \bU_2)^{-1} \bU_2^{\top} \bU_1^c \bU_1^{c\top} \bX^{\vee} \right)\right].
\end{align*}
The first term becomes
\begin{align}\nonumber
    & \mathbb{E}_{\bX, \bS, \bT} \left[ \tr\left(
   \bX^{\vee\top}\bU_2^c \bU_2^{c\top}  \bU_1    (\bU_1^{\top} \bX^\vee \bX^{\vee\top} \bU_1)^{-1} \bU_1^{\top}  \bU_2 \bU_2^{\top} \bX^\wedge
 \bX^{\wedge\top} \bU_2 (\bU_2^{\top} \bX^\wedge \bX^{\wedge\top} \bU_2)^{-1} \bU_2^{\top} \bU_1^c \bU_1^{c\top} \bX^{\vee} \right)\right] \\
 = & \mathbb{E}_{\bX, \bS, \bT} \left[ \tr\left(
   \bX^{\vee\top}\bU_2^c \bU_2^{c\top}  \bU_1    (\bU_1^{\top} \bX^\vee \bX^{\vee\top} \bU_1)^{-1} \bU_1^{\top}  \bU_2  \bU_2^{\top} \bU_1^c \bU_1^{c\top} \bX^{\vee} \right)\right]  = \mathbf{0}.
   \label{equ:bias-resorted-lemma-1}
\end{align}
For the second term, we write $\bU_{1\cap 2}^{\top}\bX^{\vee} =  \bX^{\vee}_{1\cap 2} $, $\bU_{2/ 1}^{\top}\bX^{\vee} =  \bX^{\vee}_{2/1} $, $\bU_{1/ 2}^{\top}\bX^{\vee} =  \bX^{\vee}_{1/ 2} $, $\bU_{1\cup 2}^{c\top}\bX^{\vee} =  \bX^{\vee}_{1^c\cap 2^c} $.
W.o.l.g., we assume $(\bmu_1\cap \bmu_2, \bmu_1 / \bmu_2, \bmu_2 / \bmu_1,  \bmu_1^c\cap \bmu_2^c) = (1,\ldots,d)$, which means $\bX = (\bX^{\vee\top}_{1\cap 2}, \bX^{\vee\top}_{1/ 2}, \bX^{\vee\top}_{2/1}, \bX^{\vee\top}_{1^c\cap 2^c})^{\top}$.
Then we have
\begin{align*}
\bU_1^{\top} \bX^\vee \bX^{\vee\top} \bU_1 = \left(\begin{matrix}
X^{\vee}_{1\cap 2} \bX^{\vee\top}_{1\cap 2} &  \bX^{\vee}_{1\cap 2} \bX^{\vee\top}_{1/ 2} \\
X^{\vee}_{1/ 2} \bX^{\vee\top}_{1\cap 2}& \bX^{\vee}_{1/ 2} \bX^{\vee\top}_{1/ 2}\\
\end{matrix}\right).
\end{align*}
We can compute
\begin{align}\nonumber
 & \bX^{\vee\top}\bU_2^c \bU_2^{c\top}  \bU_1    (\bU_1^{\top} \bX^{\vee} \bX^{\vee\top} \bU_1)^{-1} \bU_1^{\top} \bU_2^c \bU_2^{c\top} \bX^{\vee} \\
 = & \bX^{\vee\top}_{1/ 2}    (\bU_1^{\top} \bX^{\vee} \bX^{\vee\top} \bU_1)^{-1}  \bX^{\vee}_{1/ 2}\nonumber \\
 = & \bX^{\vee\top}_{1/ 2}    (\bX^{\vee}_{1/ 2} \bX^{\vee\top}_{1/ 2} - \bX^{\vee}_{1/ 2} \bX^{\vee\top}_{1\cap 2} (\bX^{\vee}_{1\cap 2} \bX^{\vee\top}_{1\cap 2})^{-1} \bX^{\vee}_{1\cap 2} \bX^{\vee\top}_{1/ 2})^{-1}  \bX^{\vee}_{1/ 2} = \bX^{\vee\top}_{1/ 2}    (\bX^{\vee}_{1/ 2} \Pi_{\bX^{\vee\top}_{1\cap 2}}\bX^{\vee\top}_{1/ 2} )^{-1}  \bX^{\vee}_{1/ 2}. \label{equ:bias-resorted-lemma-3}
\end{align}
Since $\bX^{\vee}_{1/ 2}$ has i.i.d. standard Gaussian entries,
decomposing it into $\bX^{\vee\top}_{1/ 2} = \Pi_{\bX^{\vee\top}_{1\cap 2}}\bX^{\vee\top}_{1/ 2} + \Pi_{\bX^{\vee c }_{1\cap 2}}\bX^{\vee\top}_{1/ 2}$ yields the following, where the cross terms disappear.
\begin{align*}
    & \mathbb{E}_{\bX^{\vee\top}_{1/ 2}} \left[X^{\vee\top}_{1/ 2}    (\bX^{\vee}_{1/ 2} \Pi_{\bX^{\vee\top}_{1\cap 2}}\bX^{\vee\top}_{1/ 2} )^{-1}  \bX^{\vee}_{1/ 2}\right]\\
    = & \mathbb{E}_{\bX^{\vee\top}_{1/ 2}} \left[ \Pi_{\bX^{\vee }_{1\cap 2}} \bX^{\vee\top}_{1/ 2}    (\bX^{\vee}_{1/ 2} \Pi_{\bX^{\vee\top}_{1\cap 2}}\bX^{\vee\top}_{1/ 2} )^{-1}  \bX^{\vee}_{1/ 2} \Pi_{\bX^{\vee}_{1\cap 2}}\right]  + \mathbb{E}_{\bX^{\vee\top}_{1/ 2}} \left[ \Pi_{\bX^{\vee c}_{1\cap 2}} \bX^{\vee\top}_{1/ 2}    (\bX^{\vee}_{1/ 2} \Pi_{\bX^{\vee\top}_{1\cap 2}}\bX^{\vee\top}_{1/ 2} )^{-1}  \bX^{\vee}_{1/ 2} \Pi_{\bX^{\vee c}_{1\cap 2}}\right].
\end{align*}
The first term has
\begin{align*}
    \mathbb{E}_{\bX^{\vee\top}_{1/ 2}} \left[ \Pi_{\Pi_{\bX^{\vee\top}_{1\cap 2}}\bX^{\vee\top}_{1/ 2}} \right]  = \frac{|\bmu_{1}/\bmu_2|}{\tr(\Theta^{\vee}) - |\bmu_1 \cap \bmu_2|} \Pi_{\bX^{\vee }_{1\cap 2}},
\end{align*}
which is by the fact that $\Pi_{\bX^{\vee }_{1\cap 2}} \bX^{\vee\top}_{1/ 2}$ is distributed as $\mathcal{N}(0, \Pi_{\bX^{\vee }_{1\cap 2}} )$.
The second term has
\begin{align*}
    & \mathbb{E}_{\bX^{\vee\top}_{1/ 2}} \left[ \Pi_{\bX^{\vee c}_{1\cap 2}} \bX^{\vee\top}_{1/ 2}    (\bX^{\vee}_{1/ 2} \Pi_{\bX^{\vee\top}_{1\cap 2}}\bX^{\vee\top}_{1/ 2} )^{-1}  \bX^{\vee}_{1/ 2} \Pi_{\bX^{\vee c}_{1\cap 2}}\right]\\
    = & \mathbb{E}_{\bX^{\vee\top}_{1/ 2}} \left[ \Pi_{\bX^{\vee c}_{1\cap 2}} \bX^{\vee\top}_{1/ 2}    \frac{I_{|\bmu_1/\bmu_2|}}{ \tr(\Theta^{\vee})  - |\bmu_1\cap \bmu_2|- |\bmu_1/\bmu_2| - 1} \bX^{\vee}_{1/ 2} \Pi_{\bX^{\vee c}_{1\cap 2}}\right]\\
    = & \Pi_{\bX^{\vee c}_{1\cap 2}} \frac{ |\bmu_1/\bmu_2|}{\tr(\Theta^{\vee})  - |\bmu_1\cap \bmu_2| - |\bmu_1/\bmu_2| - 1}
\end{align*}
due to Lemma \ref{lem:mplaw}.
This means the second term of $(*)$ is
\begin{align}\nonumber
  (\cdot):= &\frac{|\bmu_{1}/\bmu_2|}{\tr(\Theta^{\vee})  - |\bmu_1 \cap \bmu_2|} \Pi_{\bX^{\vee }_{1\cap 2}} + \Pi_{\bX^{\vee c}_{1\cap 2}} \frac{ |\bmu_1/\bmu_2|}{\tr(\Theta^{\vee})  - |\bmu_1\cap \bmu_2| - |\bmu_1/\bmu_2| - 1} \\
   = & \frac{|\bmu_{1}/\bmu_2|}{\tr(\Theta^{\vee})  - |\bmu_1 \cap \bmu_2|} I_{\tr(\Theta^{\vee}) } + \Pi_{\bX^{\vee c}_{1\cap 2}} \frac{ |\bmu_1/\bmu_2|}{(\tr(\Theta^{\vee})  - |\bmu_1\cap \bmu_2| - |\bmu_1/\bmu_2| - 1)(\tr(\Theta^{\vee})  - |\bmu_1 \cap \bmu_2|)}.
   \label{equ:bias-resorted-lemma-2}
\end{align}
Bringing this into $(*)$, we get
\begin{align*}
    (*) =   \mathbb{E}_{\bX, \bS, \bT} \left[ \tr\left( (\cdot)
 \bX^{\wedge\top} \bU_2 (\bU_2^{\top} \bX^\wedge \bX^{\wedge\top} \bU_2)^{-1} \bU_2^{\top} \bU_1^c \bU_1^{c\top} \bX^{\vee} \right)\right].
\end{align*}
Note that $\bU_1^c \bU_1^{c\top} \bX^{\vee}\Pi_{\bX^{\vee c}_{1\cap 2}} = \mathbf{0}$, and thus
\begin{align*}
    (*) =  \frac{|\bmu_{1}/\bmu_2|}{\tr(\Theta^{\vee})  - |\bmu_1 \cap \bmu_2|}  \mathbb{E}_{\bX, \bS, \bT} \left[ \tr\left(
 \bX^{\wedge\top} \bU_2 (\bU_2^{\top} \bX^\wedge \bX^{\wedge\top} \bU_2)^{-1} \bU_2^{\top} \bU_1^c \bU_1^{c\top} \bX^{\vee} \right)\right].
\end{align*}
Note that we can change the last $\bX^{\vee}$ into $\bX^{\wedge}$ since trace operator can permutate and $\bX^{\vee} \bX^{\wedge\top} = \bX^{\wedge} \bX^{\wedge\top}$.
Following the same notations, we have
\begin{align*}
&  \bX^{\wedge\top} \bU_2 (\bU_2^{\top} \bX^\wedge \bX^{\wedge\top} \bU_2)^{-1} \bU_2^{\top} \bU_1^c \bU_1^{c\top} \bX^{\wedge} \\
=  & -  \bX^{\wedge\top}_{1 \cap 2} (\bX^{\wedge}_{1 \cap 2} \bX^{\wedge\top}_{1 \cap 2})^{-1}  \bX^{\wedge}_{1 \cap 2} \bX^{\wedge\top}_{2 / 1}    (\bX^{\wedge}_{2 / 1} \bX^{\wedge\top}_{2 / 1} - \bX^{\wedge}_{2 / 1} \bX^{\wedge\top}_{1\cap 2} (\bX^{\wedge}_{1\cap 2} \bX^{\wedge\top}_{1\cap 2})^{-1} \bX^{\wedge}_{1\cap 2} \bX^{\wedge\top}_{2 / 1})^{-1}  \bX^{\wedge}_{2 / 1} \\
 & +  \bX^{\wedge\top}_{2 / 1}    (\bX^{\wedge}_{2 / 1} \bX^{\wedge\top}_{2 / 1} - \bX^{\wedge}_{2 / 1} \bX^{\wedge\top}_{1\cap 2} (\bX^{\wedge}_{1\cap 2} \bX^{\wedge\top}_{1\cap 2})^{-1} \bX^{\wedge}_{1\cap 2} \bX^{\wedge\top}_{2 / 1})^{-1}  \bX^{\wedge}_{2 / 1}\\
= & \Pi_{\bX^{\wedge\top}_{1 \cap 2}} \bX^{\wedge\top}_{2 / 1}    (\bX^{\wedge}_{2 / 1} \Pi_{\bX^{\wedge\top}_{1 \cap 2}} \bX^{\wedge\top}_{2 / 1})^{-1}  \bX^{\wedge}_{2 / 1},
\end{align*}
whose trace is
\begin{align*}
& \mathbb{E}_{\bX, \bS, \bT} \left[ \tr\left(
X^{\wedge\top} \bU_2 (\bU_2^{\top} \bX^\wedge \bX^{\wedge\top} \bU_2)^{-1} \bU_2^{\top} \bU_1^c \bU_1^{c\top} \bX^{\wedge} \right)\right]\\
= &
\mathbb{E}_{\bX, \bS, \bT} \left[ \tr\left(
\Pi_{\bX^{\wedge\top}_{1 \cap 2}} \bX^{\wedge\top}_{2 / 1}    (\bX^{\wedge}_{2 / 1} \Pi_{\bX^{\wedge\top}_{1 \cap 2}} \bX^{\wedge\top}_{2 / 1})^{-1}  \bX^{\wedge}_{2 / 1} \Pi_{\bX^{\wedge\top}_{1 \cap 2}}\right)\right] = |\bmu_2 / \bmu_1|.
\end{align*}
Thus, we reach
\begin{align*}
(*) = \frac{|\bmu_{1}/\bmu_2||\bmu_2 / \bmu_1|}{\tr(\Theta^{\vee})  - |\bmu_1 \cap \bmu_2|}.
\end{align*}

$(ii)$
For $(**)$, notice that the derivations till \eqref{equ:bias-resorted-lemma-1} and \eqref{equ:bias-resorted-lemma-2} are still valid, except we need to replace $\bU_1^c \bU_1^{c\top}$ by $\bU_1 \bU_1^{\top}$, namely
\begin{align*}
    (**) = & \mathbb{E}_{\bX, \bS, \bT} \left[ \tr\left(
   \bX^{\vee\top}\bU_2^c \bU_2^{c\top}  \bU_1    (\bU_1^{\top} \bX^\vee \bX^{\vee\top} \bU_1)^{-1} \bU_1^{\top}  \bU_2  \bU_2^{\top} \bU_1 \bU_1^{\top} \bX^{\vee} \right)\right]\\
    + &  \frac{ |\bmu_1/\bmu_2|}{\tr(\Theta^{\vee})  - |\bmu_1\cap \bmu_2| - |\bmu_1/\bmu_2| - 1}  \mathbb{E}_{\bX, \bS, \bT} \left[ \tr\left(
 \bX^{\wedge\top} \bU_2 (\bU_2^{\top} \bX^\wedge \bX^{\wedge\top} \bU_2)^{-1} \bU_2^{\top} \bU_1 \bU_1^{\top} \bX^{\vee} \right)\right].
\end{align*}
Under the same notation, we can compute
\begin{align*}
 \bX^{\vee\top}\bU_2^c \bU_2^{c\top}  \bU_1    (\bU_1^{\top} \bX^\vee \bX^{\vee\top} \bU_1)^{-1} \bU_1^{\top}  \bU_2  \bU_2^{\top} \bX^{\vee}
=   - \bX^{\vee\top}_{1 / 2}    (\bX^{\vee}_{1 / 2} \Pi_{\bX^{\vee\top}_{1 \cap 2}}\bX^{\vee\top}_{1 / 2})^{-1}  \bX^{\vee}_{1 / 2} \Pi_{\bX^{\vee c\top}_{1 \cap 2}}
\end{align*}
and
\begin{align*}
 & \bX^{\wedge\top} \bU_2 (\bU_2^{\top} \bX^\wedge \bX^{\wedge\top} \bU_2)^{-1} \bU_2^{\top} \bU_1 \bU_1^{\top} \bX^{\vee} \\
=  &  \bX^{\wedge\top}_{1 \cap 2} (\bX^{\wedge}_{1 \cap 2} \bX^{\wedge\top}_{1 \cap 2})^{-1}  \bX^{\wedge}_{1 \cap 2}  \\
+ &   \bX^{\wedge\top}_{1 \cap 2} (\bX^{\wedge}_{1 \cap 2} \bX^{\wedge\top}_{1 \cap 2})^{-1}  \bX^{\wedge}_{1 \cap 2} \bX^{\wedge\top}_{2 / 1}    (\bX^{\wedge}_{2 / 1} \Pi_{\bX^{\wedge\top}_{1\cap 2}} \bX^{\wedge\top}_{2 / 1} )^{-1}  \bX^{\wedge}_{2 / 1}  \bX^{\wedge \top}_{1 \cap 2} ( \bX^{\wedge }_{1 \cap 2}  \bX^{\wedge \top}_{1 \cap 2})^{-1}  \bX^{\wedge }_{1 \cap 2} \\
 - &   \bX^{\wedge\top}_{2 / 1}    (\bX^{\wedge}_{2 / 1} \Pi_{\bX^{\wedge\top}_{1\cap 2}} \bX^{\wedge\top}_{2 / 1} )^{-1}  \bX^{\wedge}_{2 / 1}  \bX^{\wedge \top}_{1 \cap 2} ( \bX^{\wedge }_{1 \cap 2}  \bX^{\wedge \top}_{1 \cap 2})^{-1}  \bX^{\wedge }_{1 \cap 2}\\
= & \bX^{\wedge\top}_{1 \cap 2} (\bX^{\wedge}_{1 \cap 2} \bX^{\wedge\top}_{1 \cap 2})^{-1}  \bX^{\wedge}_{1 \cap 2} + \Pi_{\bX^{\wedge\top}_{1\cap 2}}
X^{\wedge\top}_{2 / 1}    (\bX^{\wedge}_{2 / 1} \Pi_{\bX^{\wedge\top}_{1\cap 2}} \bX^{\wedge\top}_{2 / 1} )^{-1}  \bX^{\wedge}_{2 / 1}  \Pi_{\bX^{\wedge c\top}_{1\cap 2}}.
\end{align*}
Bringing in the same calculations as in \eqref{equ:bias-resorted-lemma-3} yields $(**) = 0$.

$(iii)$
The calculation of $(***)$ follows symmetrically to $(*)$.

$(iv)$
We directly apply Lemma \ref{lem:lemmaoflejeune1} to have
\begin{align*}
   &  \mathbb{E}_{\bX, \bS, \bT} \left[ \tr\left(
   \bX^{\top}\bU_2 \bU_2^{\top}  \bU_1   (\bU_1^{\top} \bX \bV_1)^{\dagger\top} \bV_1^{\top}
 \bV_2(\bU_2^{\top} \bX \bV_2)^{\dagger} \bU_2^{\top} \bU_1 \bU_1^{\top}\bX \right)\right] \\
 = & \mathbb{E}_{\bX, \bS, \bT} \left[ \tr\left(
   \bX^{\top}\bU_{1\cap 2}   (\bU_{1\cap 2}^{\top} \bX \bV_1)^{\dagger\top} \bV_1^{\top}
 \bV_2(\bU_{1\cap 2}^{\top} \bX \bV_2)^{\dagger}  \bU_{1\cap 2}^{\top}\bX \right)\right],
\end{align*}
which is followed by the transformations similar to \eqref{equ:bias-resorted-lemma-4} to have
\begin{align*}
    (****)
 =  & \mathbb{E}_{\bX, \bS, \bT} \left[ \tr\left(
   \bX^{\top}\bU_{1\cap 2} (\bU_{1\cap 2}^{\top} \bX^{\vee} \bX^{\vee\top} \bU_{1\cap 2})^{-1}  \bU_{1\cap 2}^{\top}\bX \right)\right] \\
   = & \mathbb{E}_{\bX, \bS, \bT} \left[ \tr\left(
   \bX^{\vee\top}\bU_{1\cap 2} (\bU_{1\cap 2}^{\top} \bX^{\vee} \bX^{\vee\top} \bU_{1\cap 2})^{-1}  \bU_{1\cap 2}^{\top}\bX^{\vee} \right)\right]\\
   + & \mathbb{E}_{\bX, \bS, \bT} \left[ \tr\left(
   \bX^{\top}\bU_{1\cap 2} (\bU_{1\cap 2}^{\top} \bX^{\vee} \bX^{\vee\top} \bU_{1\cap 2})^{-1}  \bU_{1\cap 2}^{\top}\bX (I_d - \Theta^{\vee} )\right)\right].
\end{align*}
The first term is $|\bmu_1 \cap \bmu_2|$, while the second is
\begin{align*}
    \frac{(d - \tr(\Theta^{\vee})) |\bmu_1 \cap \bmu_2|}{\tr(\Theta^{\vee}) - |\bmu_1 \cap \bmu_2| - 1 }.
\end{align*}
They together yield
\begin{align*}
    (****) = |\bmu_1 \cap \bmu_2| \frac{d - |\bmu_1 \cap \bmu_2| - 1}{\tr(\Theta^{\vee}) - |\bmu_1 \cap \bmu_2| - 1}.
\end{align*}
Thus,
\begin{align*}
    (\%) = (*) + (****) = \frac{|\bmu_{1}/\bmu_2||\bmu_2 / \bmu_1|}{\tr(\Theta^{\vee})  - |\bmu_1 \cap \bmu_2|} +  |\bmu_1 \cap \bmu_2| \frac{d - |\bmu_1 \cap \bmu_2| - 1}{\tr(\Theta^{\vee}) - |\bmu_1 \cap \bmu_2| - 1}.
\end{align*}

For the second conclusion, we notice that $(*) = (**) = (***) = 0$ is obvious.
For $(****)$, we have
\begin{align*}
    (****) = & \mathbb{E}_{\bX, \bS, \bT} \left[ \tr\left(
   \bX^{\top}  \bU_1   (\bU_1^{\top} \bX \bV_1)^{\dagger\top} \bV_1^{\top}
 \bV_1(\bU_2^{\top} \bX \bV_2)^{\dagger} \bU_2^{\top} \bU_1  \bU_1^{\top}\bX \right)\right] \\
 = &  \mathbb{E}_{\bX, \bS, \bT} \left[ \tr\left(
   \bX^{\top}  \bU_1   (\bU_1^{\top} \bX \bV_1)^{\dagger\top} \bV_1^{\top}
 \bV_1(\bU_1^{\top} \bX \bV_2)^{\dagger} \bU_1^{\top}\bX \right)\right]
\end{align*}
by Lemma \ref{lem:lemmaoflejeune1}, from where we proceed to have
\begin{align*}
     (****) = & \mathbb{E}_{\bX, \bS, \bT} \left[ \tr\left(
   \bX^{\top}  \bU_1   (\bU_1^{\top} \bX \bV_1 \bV_1^{\top} \bX^{\top} \bU_1)^{-1} \bU_1^{\top} \bX \bV_1 \bV_1^{\top} \bX^{\top} \bU_1
 (\bU_1^{\top} \bX \bV_2\bV_2^{\top} \bX^{\top} \bU_1 )^{-1} \bU_1^{\top}\bX \right)\right] \\
 = & \mathbb{E}_{\bX, \bS, \bT} \left[ \tr\left(
   \bX^{\top}  \bU_1
 (\bU_1^{\top} \bX \bV_2\bV_2^{\top} \bX^{\top} \bU_1 )^{-1} \bU_1^{\top}\bX \right)\right] \\
 = & |\bmu_1| + \frac{|\bmu_1|(d- |\bnu_2|)}{|\bnu_2| - |\bmu_1| - 1}.
\end{align*}
\end{proof}

\subsection{Proofs for Appendix~\ref{sec:error-analysis}}\label{sec:prooflinearstability}

\begin{proof}[Proof of Proposition \ref{prop:varianceofstability}]
    Recall that
\begin{align*}
& \tr\left(\left(g(\bx) - g'(\bx)\right)^{\top} \left(g(\bx) - g'(\bx)\right) \right) \\
= & \frac{1}{B^2}  \sum_{b_1, b_2=1}^B \underbrace{\tr\left(\left( \bV_{b_1}(\bU_{b_1}^{\top} \bX \bV_{b_1})^{\dagger} \bU_{b_1}^{\top} -
 \bV_{b_1}^{\prime}(\bU_{b_1}^{\prime\top} \bX \bV_{b_1}^{\prime})^{\dagger} \bU_{b_1}^{\prime\top}\right)^{\top}
 \left( \bV_{b_2}(\bU_{b_2}^{\top} \bX \bV_{b_2})^{\dagger} \bU_{b_2}^{\top} -
 \bV_{b_2}^{\prime}(\bU_{b_2}^{\prime\top} \bX \bV_{b_2}^{\prime})^{\dagger} \bU_{b_2}^{\prime\top}\right)\right)}_{V_{b_1,b_2}}.
\end{align*}

There are $B(B-1)$ cross terms which have $b_1\neq b_2$, and $B$ squared terms with $b_1 = b_2$.

\textbf{Underparameterized case} \;\;
For instance removal, the squared terms become
\begin{align*}
V_{b,b} =     \tr\left(\left( \bV_{b}(\bU_{b}^{\top} \bX \bV_{b})^{\dagger} \bU_{b}^{\top} -
 \bV_{b}(\bU_{b}^{\prime\top} \bX \bV_{b})^{\dagger} \bU_{b}^{\prime\top}\right)^{\top}
 \left( \bV_{b}(\bU_{b}^{\top} \bX \bV_{b})^{\dagger} \bU_{b}^{\top} -
 \bV_{b}(\bU_{b}^{\prime\top} \bX \bV_{b})^{\dagger} \bU_{b}^{\prime\top}\right)\right),
\end{align*}
where $\bU_b$ and $\bU_b'$ are independent.
We apply \eqref{equ:lem-variance-same-b} in Lemma \ref{lem:variance-resorted-lemma} to have
\begin{align*}
    &\mathbb{E}_{\bX, \bS, \bT} \left[\tr\left(\left( \bV_{b}(\bU_{b}^{\top} \bX \bV_{b})^{\dagger} \bU_{b}^{\top} \right)^{\top}
 \left( \bV_{b}(\bU_{b}^{\top} \bX \bV_{b})^{\dagger} \bU_{b}^{\top} \right)\right)\right] \\
 = &
 \mathbb{E}_{\bX, \bS, \bT} \left[\tr\left(\left(
 \bV_{b}(\bU_{b}^{\prime\top} \bX \bV_{b})^{\dagger} \bU_{b}^{\prime\top}\right)^{\top}
 \left(
 \bV_{b}(\bU_{b}^{\prime\top} \bX \bV_{b})^{\dagger} \bU_{b}^{\prime\top}\right)\right)\right]
 = \frac{|\bnu_{b}|}{ |\bmu_{b}|-  |\bnu_{b} | - 1}.
\end{align*}
For the other two cross terms, we treat them as only have $\bnu_b$ columns, and apply \eqref{equ:lem-variance-diff-b} in Lemma \ref{lem:variance-resorted-lemma} to have
\begin{align*}
    \mathbb{E}_{\bX, \bS, \bT} \left[\tr\left(\left( \bV_{b}(\bU_{b}^{\top} \bX \bV_{b})^{\dagger} \bU_{b}^{\top} \right)^{\top}
 \left( \bV_{b}(\bU_{b}^{\prime\top} \bX \bV_{b})^{\dagger} \bU_{b}^{\prime\top} \right)\right)\right]
 =  \frac{|\bnu_b|}{ n-|\bnu_b| - 1}.
\end{align*}
Combined together, this is
\begin{align*}
     \mathbb{E}_{\bX, \bS, \bT} \left[V_{b,b}\right]  = 2 \frac{|\bnu_b|}{ |\bmu_b|-  |\bnu_b | - 1} - 2 \frac{|\bnu_b|}{ n-|\bnu_b| - 1}.
\end{align*}
Taking the limit, this is
\begin{align*}
 \lim_{n,d\to \infty} \mathbb{E}_{\bX, \bS, \bT} \left[V_{b,b}\right] = 2 \frac{ \gamma q}{p - \gamma q} - 2 \frac{ \gamma q}{1 - \gamma q} = \frac{2 \gamma q (1 - p)}{(p - \gamma q)(1 - \gamma q)}.
\end{align*}

For feature removal, the squared term becomes
\begin{align*}
V_{b,b} =     \tr\left(\left( \bV_{b}(\bU_{b}^{\top} \bX \bV_{b})^{\dagger} \bU_{b}^{\top} -
 \bV_{b}^\prime(\bU_{b}^{\top} \bX \bV_{b}^\prime)^{\dagger} \bU_{b}^{\top}\right)^{\top}
 \left( \bV_{b}(\bU_{b}^{\top} \bX \bV_{b})^{\dagger} \bU_{b}^{\top} -
 \bV_{b}^\prime(\bU_{b}^{\top} \bX \bV_{b}^{\prime})^{\dagger} \bU_{b}^{\top}\right)\right),
\end{align*}
where $\bV_b$ and $\bV_b'$ are independent.
We apply \eqref{equ:lem-variance-same-b} in Lemma \ref{lem:variance-resorted-lemma} to have
\begin{align*}
    &\mathbb{E}_{\bX, \bS, \bT} \left[\tr\left(\left( \bV_{b}(\bU_{b}^{\top} \bX \bV_{b})^{\dagger} \bU_{b}^{\top} \right)^{\top}
 \left( \bV_{b}(\bU_{b}^{\top} \bX \bV_{b})^{\dagger} \bU_{b}^{\top} \right)\right)\right] \\
 = &
 \mathbb{E}_{\bX, \bS, \bT} \left[\tr\left(\left(
 \bV_{b}^\prime(\bU_{b}^{\top} \bX \bV_{b}^\prime)^{\dagger} \bU_{b}^{\top}\right)^{\top}
 \left(
 \bV_{b}^\prime(\bU_{b}^{\top} \bX \bV_{b}^\prime)^{\dagger} \bU_{b}^{\top}\right)\right)\right]
 = \frac{|\bnu_{b}|}{ |\bmu_{b}|-  |\bnu_{b} | - 1}.
\end{align*}
For the other two cross terms, we treat them as only have $\bmu_b$ rows, and apply \eqref{equ:lem-variance-diff-b} in Lemma \ref{lem:variance-resorted-lemma} to have
\begin{align*}
    \mathbb{E}_{\bX, \bS, \bT} \left[\tr\left(\left( \bV_{b}(\bU_{b}^{\top} \bX \bV_{b})^{\dagger} \bU_{b}^{\top} \right)^{\top}
 \left( \bV_{b}^\prime(\bU_{b}^{\top} \bX \bV_{b}^\prime)^{\dagger} \bU_{b}^{\top} \right)\right)\right]
 =  \frac{|\bnu_b\cap \bnu_b'|}{ n p -|\bnu_b \cap \bnu_b'| - 1}.
\end{align*}
Combined together, this is
\begin{align*}
     \mathbb{E}_{\bX, \bS, \bT} \left[V_{b,b}\right]  = 2 \frac{|\bnu_b|}{ |\bmu_b|-  |\bnu_b | - 1} - 2 \frac{|\bnu_b\cap \bnu_b'|}{ n p -|\bnu_b \cap \bnu_b'| - 1}.
\end{align*}
Taking the limit, this is
\begin{align*}
 \lim_{n,d\to \infty} \mathbb{E}_{\bX, \bS, \bT} \left[V_{b,b}\right] = 2 \frac{ \gamma q}{p - \gamma q} - 2 \frac{ \gamma q^2 }{p - \gamma q^2} = \frac{2 \gamma pq (1 - q)}{(p - \gamma q)(p - \gamma q^2)}.
\end{align*}

For the cross terms ($b_1\neq b_2$) for instance removal, we have
\begin{align*}
     \lim_{n,d\to \infty}n \cdot  \mathbb{E}_{\bX, \bS, \bT} \left[V_{b_1,b_2}\right] =& \lim_{n,d\to \infty}n \cdot  \mathbb{E}_{\bX, \bS, \bT} \left[\frac{|\bnu_1 \cap \bnu_2|}{n - 1 -  |\bnu_1 \cap \bnu_2| - 1}- \frac{|\bnu_1 \cap \bnu_2|}{n -  |\bnu_1 \cap \bnu_2| - 1}\right] \\
     = & \lim_{n,d\to \infty}n \cdot   \left[\frac{\frac{s^2}{d}}{n - \frac{s^2}{d} - 2} - \frac{\frac{s^2}{d}}{n - \frac{s^2}{d} - 1}\right] = \frac{\gamma q^2}{(1 - \gamma q^2)^2}.
\end{align*}

As for the cross terms ($b_1\neq b_2$) for feature removal, we have
\begin{align*}
     \lim_{n,d\to \infty}n \cdot  \mathbb{E}_{\bX, \bS, \bT} \left[V_{b_1,b_2}\right] =& \lim_{n,d\to \infty}n \cdot  \mathbb{E}_{\bX, \bS, \bT} \left[\frac{|\bnu_1' \cap \bnu_2'|}{n -  |\bnu_1' \cap \bnu_2'| - 1} - \frac{|\bnu_1 \cap \bnu_2|}{n -  |\bnu_1 \cap \bnu_2| - 1}\right] \\
     = & \lim_{n,d\to \infty}n \cdot   \left[\frac{\frac{s^2}{d-1}}{n - \frac{s^2}{d-1} - 1} - \frac{\frac{s^2}{d}}{n - \frac{s^2}{d} - 1}\right] = \frac{q^2}{(1 - \gamma q^2)^2}.
\end{align*}

\textbf{Overparameterized case} \;\;
For instance removal, the squared term becomes
\begin{align*}
V_{b,b} =     \tr\left(\left( \bV_{b}(\bU_{b}^{\top} \bX \bV_{b})^{\dagger} \bU_{b}^{\top} -
 \bV_{b}(\bU_{b}^{\prime\top} \bX \bV_{b})^{\dagger} \bU_{b}^{\prime\top}\right)^{\top}
 \left( \bV_{b}(\bU_{b}^{\top} \bX \bV_{b})^{\dagger} \bU_{b}^{\top} -
 \bV_{b}(\bU_{b}^{\prime\top} \bX \bV_{b})^{\dagger} \bU_{b}^{\prime\top}\right)\right),
\end{align*}
where $\bU_b$ and $\bU_b'$ are independent.
We apply \eqref{equ:lem-variance-same-b} in Lemma \ref{lem:variance-resorted-lemma} to have
\begin{align*}
    &\mathbb{E}_{\bX, \bS, \bT} \left[\tr\left(\left( \bV_{b}(\bU_{b}^{\top} \bX \bV_{b})^{\dagger} \bU_{b}^{\top} \right)^{\top}
 \left( \bV_{b}(\bU_{b}^{\top} \bX \bV_{b})^{\dagger} \bU_{b}^{\top} \right)\right)\right] \\
 = &
 \mathbb{E}_{\bX, \bS, \bT} \left[\tr\left(\left(
 \bV_{b}(\bU_{b}^{\prime\top} \bX \bV_{b})^{\dagger} \bU_{b}^{\prime\top}\right)^{\top}
 \left(
 \bV_{b}(\bU_{b}^{\prime\top} \bX \bV_{b})^{\dagger} \bU_{b}^{\prime\top}\right)\right)\right]
 = \frac{|\bmu_{b}|}{ |\bnu_{b}|-  |\bmu_{b} | - 1}.
\end{align*}
For the other two cross terms, we treat them as only have $\bnu_b$ columns, and apply \eqref{equ:lem-variance-diff-b} in Lemma \ref{lem:variance-resorted-lemma} to have
\begin{align*}
    \mathbb{E}_{\bX, \bS, \bT} \left[\tr\left(\left( \bV_{b}(\bU_{b}^{\top} \bX \bV_{b})^{\dagger} \bU_{b}^{\top} \right)^{\top}
 \left( \bV_{b}^\prime(\bU_{b}^{\top} \bX \bV_{b}^\prime)^{\dagger} \bU_{b}^{\top} \right)\right)\right]
 =  \frac{|\bmu_b \cap \bmu_b'|}{qd - |\bmu_b \cap \bmu_b'| - 1}.
\end{align*}
Combined together, this is
\begin{align*}
     \mathbb{E}_{\bX, \bS, \bT} \left[V_{b,b}\right]  = 2 \frac{|\bmu_{b}|}{ |\bnu_{b}|-  |\bmu_{b} | - 1} - 2  \frac{|\bmu_b \cap \bmu_b'|}{qd - |\bmu_b \cap \bmu_b'| - 1}.
\end{align*}
Taking the limit, this is
\begin{align*}
 \lim_{n,d\to \infty} \mathbb{E}_{\bX, \bS, \bT} \left[V_{b,b}\right] = 2 \frac{ p}{ \gamma q - p} - 2 \frac{  p^2 }{\gamma q - p^2} = \frac{2 \gamma q (p - p^2)}{( \gamma q - p)( \gamma q - p^2)}.
\end{align*}

For feature removal, the squared terms become
\begin{align*}
V_{b,b} =     \tr\left(\left( \bV_{b}(\bU_{b}^{\top} \bX \bV_{b})^{\dagger} \bU_{b}^{\top} -
 \bV_{b}^\prime(\bU_{b}^{\top} \bX \bV_{b}^\prime)^{\dagger} \bU_{b}^{\top}\right)^{\top}
 \left( \bV_{b}(\bU_{b}^{\top} \bX \bV_{b})^{\dagger} \bU_{b}^{\top} -
 \bV_{b}^\prime(\bU_{b}^{\top} \bX \bV_{b}^{\prime})^{\dagger} \bU_{b}^{\top}\right)\right),
\end{align*}
where $\bV_b$ and $\bV_b'$ are independent.
We apply \eqref{equ:lem-variance-same-b} in Lemma \ref{lem:variance-resorted-lemma} to have
\begin{align*}
    &\mathbb{E}_{\bX, \bS, \bT} \left[\tr\left(\left( \bV_{b}(\bU_{b}^{\top} \bX \bV_{b})^{\dagger} \bU_{b}^{\top} \right)^{\top}
 \left( \bV_{b}(\bU_{b}^{\top} \bX \bV_{b})^{\dagger} \bU_{b}^{\top} \right)\right)\right] \\
 = &
 \mathbb{E}_{\bX, \bS, \bT} \left[\tr\left(\left(
 \bV_{b}^\prime(\bU_{b}^{\top} \bX \bV_{b}^\prime)^{\dagger} \bU_{b}^{\top}\right)^{\top}
 \left(
 \bV_{b}^\prime(\bU_{b}^{\top} \bX \bV_{b}^\prime)^{\dagger} \bU_{b}^{\top}\right)\right)\right]
 = \frac{|\bmu_{b}|}{ |\bnu_{b}|-  |\bmu_{b} | - 1}.
\end{align*}
For the other two cross terms, we treat them as only have $\bmu_b$ rows, and apply \eqref{equ:lem-variance-diff-b} in Lemma \ref{lem:variance-resorted-lemma} to have
\begin{align*}
    \mathbb{E}_{\bX, \bS, \bT} \left[\tr\left(\left( \bV_{b}(\bU_{b}^{\top} \bX \bV_{b})^{\dagger} \bU_{b}^{\top} \right)^{\top}
 \left( \bV_{b}^\prime(\bU_{b}^{\top} \bX \bV_{b}^\prime)^{\dagger} \bU_{b}^{\top} \right)\right)\right]
 =  \frac{|\bmu_b |}{d - |\bmu_b | - 1}.
\end{align*}
Combined together, this is
\begin{align*}
     \mathbb{E}_{\bX, \bS, \bT} \left[V_{b,b}\right]  = 2 \frac{|\bmu_{b}|}{ |\bnu_{b}|-  |\bmu_{b} | - 1} - 2   \frac{|\bmu_b |}{d - |\bmu_b | - 1}.
\end{align*}
Taking the limit, this is
\begin{align*}
 \lim_{n,d\to \infty} \mathbb{E}_{\bX, \bS, \bT} \left[V_{b,b}\right] = 2 \frac{ p}{ \gamma q - p} - 2 \frac{  p }{\gamma  - p} = \frac{2 \gamma p (1 - q)}{( \gamma q - p)( \gamma - p)}.
\end{align*}

For the cross terms ($b_1 \neq b_2$) with instance removal, we have
\begin{align*}
     \lim_{n,d\to \infty}n \cdot  \mathbb{E}_{\bX, \bS, \bT} \left[V_{b_1,b_2}\right] =& \lim_{n,d\to \infty}n \cdot  \mathbb{E}_{\bX, \bS, \bT} \left[\frac{|\bmu_1' \cap \bmu_2'|}{d -  |\bmu_1' \cap \bmu_2'| - 1} - \frac{|\bmu_1 \cap \bmu_2|}{d -  |\bmu_1 \cap \bmu_2| - 1}\right] \\
     = & \lim_{n,d\to \infty}n \cdot   \left[\frac{\frac{m^2}{n-1}}{d - \frac{m^2}{n-1} - 1} - \frac{\frac{m^2}{n}}{d - \frac{m^2}{n} - 1}\right] = \frac{\gamma p^2}{(\gamma - p^2)^2}.
\end{align*}
For the cross terms ($b_1 \neq b_2$) with feature removal, we have
\begin{align*}
     \lim_{n,d\to \infty}n \cdot  \mathbb{E}_{\bX, \bS, \bT} \left[V_{b_1,b_2}\right] =& \lim_{n,d\to \infty}n \cdot  \mathbb{E}_{\bX, \bS, \bT} \left[\frac{|\bmu_1 \cap \bmu_2|}{d - 1 -  |\bmu_1 \cap \bmu_2| - 1} - \frac{|\bmu_1 \cap \bmu_2|}{d -  |\bmu_1 \cap \bmu_2| - 1}\right] \\
     = & \lim_{n,d\to \infty}n \cdot   \left[\frac{\frac{m^2}{n}}{d -1 - \frac{m^2}{n} - 1} - \frac{\frac{m^2}{n}}{d - \frac{m^2}{n} - 1}\right] = \frac{ p^2}{(\gamma - p^2)^2}.
\end{align*}

\end{proof}

\begin{proof}[Proof of Proposition \ref{prop:biasofstability}]
    Recall that
\begin{align*}
& \mathbb{E}_{\bbeta^*}\left[\bbeta^{*\top}\bX^{\top} \left(g(\bx) - g'(\bx)\right)^{\top} \left(g(\bx) - g'(\bx)\right) \bX \bbeta^*  \right]\\
= & \frac{1}{d B^2}  \sum_{b_1, b_2=1}^B \underbrace{ \tr\left(\left( \bV_{b_1}(\bU_{b_1}^{\top} \bX \bV_{b_1})^{\dagger} \bU_{b_1}^{\top} \bX -
 \bV_{b_1}^{\prime}(\bU_{b_1}^{\prime\top} \bX \bV_{b_1}^{\prime})^{\dagger} \bU_{b_1}^{\prime\top} \bX\right)^{\top}
 \left( \bV_{b_2}(\bU_{b_2}^{\top} \bX \bV_{b_2})^{\dagger} \bU_{b_2}^{\top} \bX -
 \bV_{b_2}^{\prime}(\bU_{b_2}^{\prime\top} \bX \bV_{b_2}^{\prime})^{\dagger} \bU_{b_2}^{\prime\top} \bX\right)\right) }_{B_{b_1,b_2}}.
\end{align*}
There are $B(B-1)$ terms which have $b_1\neq b_2$, and $B$ terms with $b_1 = b_2$.

\textbf{Underparameterized case} \;\;
For instance removal, the squared terms become
\begin{align*}
B_{b,b} =   \tr\left(\left( \bV_{b}(\bU_{b}^{\top} \bX \bV_{b})^{\dagger} \bU_{b}^{\top} \bX -
 \bV_{b}(\bU_{b}^{\prime\top} \bX \bV_{b})^{\dagger} \bU_{b}^{\prime\top} \bX\right)^{\top}
 \left( \bV_{b}(\bU_{b}^{\top} \bX \bV_{b})^{\dagger} \bU_{b}^{\top} \bX -
 \bV_{b}(\bU_{b}^{\prime\top} \bX \bV_{b})^{\dagger} \bU_{b}^{\prime\top} \bX\right)\right),
\end{align*}
where $\bU_b$ and $\bU_b'$ are independent.
We apply \eqref{equ:lem-bias-same-b} in Lemma \ref{lem:bias-resorted-lemma} to have
\begin{align*}
    &\mathbb{E}_{\bX, \bS, \bT} \left[\tr\left(\left( \bV_{b}(\bU_{b}^{\top} \bX \bV_{b})^{\dagger} \bU_{b}^{\top} \bX\right)^{\top}
 \left( \bV_{b}(\bU_{b}^{\top} \bX \bV_{b})^{\dagger} \bU_{b}^{\top} \bX\right)\right)\right] \\
 = &
 \mathbb{E}_{\bX, \bS, \bT} \left[\tr\left(\left(
 \bV_{b}(\bU_{b}^{\prime\top} \bX \bV_{b})^{\dagger} \bU_{b}^{\prime\top}\bX\right)^{\top}
 \left(
 \bV_{b}(\bU_{b}^{\prime\top} \bX \bV_{b})^{\dagger} \bU_{b}^{\prime\top}\bX\right)\right)\right]
 = \frac{|\bnu_{b}| (d - |\bnu_b|)}{ |\bmu_{b}|-  |\bnu_{b} | - 1} + |\bnu_{b} |.
\end{align*}
For the other two cross terms, we treat them as only have $\bnu_b$ columns, and apply \eqref{equ:lem-bias-diff-b} in Lemma \ref{lem:bias-resorted-lemma} to have
\begin{align*}
   \mathbb{E}_{\bX, \bS, \bT} \left[\tr\left(\left(
 \bV_{b}(\bU_{b}^{\top} \bX \bV_{b})^{\dagger} \bU_{b}^{\top}\bX\right)^{\top}
 \left(
 \bV_{b}(\bU_{b}^{\prime\top} \bX \bV_{b})^{\dagger} \bU_{b}^{\prime\top}\bX\right)\right)\right]
 =  \frac{|\bnu_b||\bnu_b^c|}{ n-|\bnu_b| - 1} + |\bnu_b|.
\end{align*}
Combined together, this is
\begin{align*}
     \mathbb{E}_{\bX, \bS, \bT} \left[B_{b,b}\right]  = 2 ( \frac{|\bnu_{b}| (d - |\bnu_b|)}{ |\bmu_{b}|-  |\bnu_{b} | - 1} + |\bnu_{b} |) - 2 ( \frac{|\bnu_b||\bnu_b^c|}{ n-|\bnu_b| - 1} + |\bnu_b|).
\end{align*}
Taking the limit, this is
\begin{align*}
 \lim_{n,d\to \infty} \frac{1}{d}\mathbb{E}_{\bX, \bS, \bT} \left[B_{b,b}\right] = 2 \frac{ \gamma q (1- q)}{p - \gamma q} - 2 \frac{ \gamma q (1- q)}{1 - \gamma q} = \frac{2 \gamma q (1- q)(1 - p)}{(p - \gamma q)(1 - \gamma q)}.
\end{align*}

For feature removal, the squared term becomes
\begin{align*}
B_{b,b} =   \tr\left(\left( \bV_{b}(\bU_{b}^{\top} \bX \bV_{b})^{\dagger} \bU_{b}^{\top} \bX -
 \bV_{b}^{\prime}(\bU_{b}^{\top} \bX \bV_{b}^{\prime})^{\dagger} \bU_{b}^{\top} \bX\right)^{\top}
 \left( \bV_{b}(\bU_{b}^{\top} \bX \bV_{b})^{\dagger} \bU_{b}^{\top} \bX -
 \bV_{b}^{\prime}(\bU_{b}^{\top} \bX \bV_{b}^{\prime})^{\dagger} \bU_{b}^{\top} \bX\right)\right),
\end{align*}
where $\bV_b$ and $\bV_b'$ are independent.
We apply \eqref{equ:lem-bias-same-b} in Lemma \ref{lem:bias-resorted-lemma} to have
\begin{align*}
    &\mathbb{E}_{\bX, \bS, \bT} \left[\tr\left(\left( \bV_{b}(\bU_{b}^{\top} \bX \bV_{b})^{\dagger} \bU_{b}^{\top} \bX\right)^{\top}
 \left( \bV_{b}(\bU_{b}^{\top} \bX \bV_{b})^{\dagger} \bU_{b}^{\top} \bX\right)\right)\right] \\
 = &
 \mathbb{E}_{\bX, \bS, \bT} \left[\tr\left(\left(
 \bV_{b}^\prime(\bU_{b}^{\top} \bX \bV_{b}^\prime)^{\dagger} \bU_{b}^{\top}\bX\right)^{\top}
 \left(
 \bV_{b}^\prime(\bU_{b}^{\top} \bX \bV_{b}^\prime)^{\dagger} \bU_{b}^{\top}\bX\right)\right)\right]
 = \frac{|\bnu_{b}| (d - |\bnu_b|)}{ |\bmu_{b}|-  |\bnu_{b} | - 1} + |\bnu_{b} |.
\end{align*}
For the other two cross terms, we treat them as only have $\bnu_b$ rows, and apply \eqref{equ:lem-bias-diff-b} in Lemma \ref{lem:bias-resorted-lemma} to have
\begin{align*}
   \mathbb{E}_{\bX, \bS, \bT} \left[\tr\left(\left(
 \bV_{b}(\bU_{b}^{\top} \bX \bV_{b})^{\dagger} \bU_{b}^{\top}\bX\right)^{\top}
 \left(
 \bV_{b}^\prime(\bU_{b}^{\top} \bX \bV_{b}^\prime)^{\dagger} \bU_{b}^{\top}\bX\right)\right)\right]
 =  \frac{|\bnu_b\cap \bnu_b'||\bnu_b^c\cap \bnu_b^{\prime c}|}{ np-|\bnu_b\cap \bnu_b'| - 1} + |\bnu_b\cap \bnu_b'|.
\end{align*}
Combined together, this is
\begin{align*}
     \mathbb{E}_{\bX, \bS, \bT} \left[B_{b,b}\right]  = 2 \left(\frac{|\bnu_{b}| (d - |\bnu_b|)}{ |\bmu_{b}|-  |\bnu_{b} | - 1} + |\bnu_{b} |\right) - 2 \left(\frac{|\bnu_b\cap \bnu_b'||\bnu_b^c\cap \bnu_b^{\prime c}|}{ np-|\bnu_b\cap \bnu_b'| - 1} + |\bnu_b\cap \bnu_b'|\right).
\end{align*}
Taking the limit, this is
\begin{align*}
 \lim_{n,d\to \infty} \frac{1}{d}\mathbb{E}_{\bX, \bS, \bT} \left[B_{b,b}\right] = 2 \frac{ \gamma q (1 - q) }{p - \gamma q} - 2 \frac{ \gamma q^2 (1-q)^2 }{p - \gamma q^2}  + 2 q (1- q) = \frac{2 \gamma pq (1 - q)^2 }{(p - \gamma q)(p - \gamma q^2)} + \frac{2  pq (1 - q)}{p - \gamma q^2}.
\end{align*}

For the cross terms ($b_1\neq b_2$) for instance removal, we have
\begin{align*}
     & \lim_{n,d\to \infty}\frac{n}{d} \cdot  \mathbb{E}_{\bX, \bS, \bT} \left[B_{b_1,b_2}\right] \\
     =& \lim_{n,d\to \infty} \frac{n}{d} \cdot  \mathbb{E}_{\bX, \bS, \bT} \left[ \frac{ |\bnu_{b_1}\cap \bnu_{b_2}| |\bnu^c_{b_1} \cap \bnu^c_{b_2}| }{\tr(\Lambda^{\vee}) - |\bnu_{b_1}\cap \bnu_{b_2}| - 1} + |\bnu_{b_1}\cap \bnu_{b_2}| -  \frac{ |\bnu_{b_1}\cap \bnu_{b_2}| |\bnu^c_{b_1} \cap \bnu^c_{b_2}| }{\tr(\Lambda^{\vee}) - |\bnu_{b_1}\cap \bnu_{b_2}| - 1} - |\bnu_{b_1}\cap \bnu_{b_2}| \right] \\
     = & \lim_{n,d\to \infty}\frac{n}{d} \cdot   \left[ \frac{ \frac{s^2}{d} \frac{(d-s)^2}{d} }{n - 1 - \frac{s^2}{d} - 1} + \frac{s^2}{d} - \frac{ \frac{s^2}{d} \frac{(d-s)^2}{d} }{n - \frac{s^2}{d} - 1} - \frac{s^2}{d} \right] = \frac{  \gamma q^2 (1 - q)^2 }{(1 - \gamma q^2)^2} .
\end{align*}
As for the cross terms ($b_1\neq b_2$) for feature removal, we have
\begin{align*}
     & \lim_{n,d\to \infty}\frac{n}{d} \cdot  \mathbb{E}_{\bX, \bS, \bT} \left[B_{b_1,b_2}\right] \\
     =& \lim_{n,d\to \infty} \frac{n}{d} \cdot  \mathbb{E}_{\bX, \bS, \bT} \left[ \frac{ |\bnu_{b_1}^{\prime}\cap \bnu_{b_2}^{\prime}| |\bnu^{\prime c}_{b_1} \cap \bnu^{\prime c}_{b_2}| }{\tr(\Lambda^{\vee}) - |\bnu_{b_1}^{\prime}\cap \bnu_{b_2}^{\prime}| - 1} + |\bnu_{b_1}^{\prime}\cap \bnu_{b_2}^{\prime}| -  \frac{ |\bnu_{b_1}\cap \bnu_{b_2}| |\bnu^c_{b_1} \cap \bnu^c_{b_2}| }{\tr(\Lambda^{\vee}) - |\bnu_{b_1}\cap \bnu_{b_2}| - 1} - |\bnu_{b_1}\cap \bnu_{b_2}| \right] \\
     = & \lim_{n,d\to \infty}\frac{n}{d} \cdot   \left[ \frac{ \frac{s^2}{d-1} (\frac{(d-1-s)^2}{d-2} + 1) }{n - \frac{s^2}{d-1} - 1} + \frac{s^2}{d-1} - \frac{ \frac{s^2}{d} \frac{(d-s)^2}{d-1} }{n - \frac{s^2}{d} - 1} - \frac{s^2}{d} \right] \\
     = & \frac{  \gamma q^4 (1 - q^2) - 2  q^3(1 - q) }{ (1 - \gamma q^2)^2} + \frac{q^2}{1 - \gamma q^2} + \gamma^{-1} q^2 .
\end{align*}

\textbf{Overparameterized case}\;\;
For instance removal, the squared terms become
\begin{align*}
B_{b,b} =   \tr\left(\left( \bV_{b}(\bU_{b}^{\top} \bX \bV_{b})^{\dagger} \bU_{b}^{\top} \bX -
 \bV_{b}(\bU_{b}^{\prime\top} \bX \bV_{b})^{\dagger} \bU_{b}^{\prime\top} \bX\right)^{\top}
 \left( \bV_{b}(\bU_{b}^{\top} \bX \bV_{b})^{\dagger} \bU_{b}^{\top} \bX -
 \bV_{b}(\bU_{b}^{\prime\top} \bX \bV_{b})^{\dagger} \bU_{b}^{\prime\top} \bX\right)\right),
\end{align*}
where $\bU_b$ and $\bU_b'$ are independent.
We apply \eqref{equ:lem-bias-same-b} in Lemma \ref{lem:bias-resorted-lemma} to have
\begin{align*}
    &\mathbb{E}_{\bX, \bS, \bT} \left[\tr\left(\left( \bV_{b}(\bU_{b}^{\top} \bX \bV_{b})^{\dagger} \bU_{b}^{\top} \bX\right)^{\top}
 \left( \bV_{b}(\bU_{b}^{\top} \bX \bV_{b})^{\dagger} \bU_{b}^{\top} \bX\right)\right)\right] \\
 = &
 \mathbb{E}_{\bX, \bS, \bT} \left[\tr\left(\left(
 \bV_{b}(\bU_{b}^{\prime\top} \bX \bV_{b})^{\dagger} \bU_{b}^{\prime\top}\bX\right)^{\top}
 \left(
 \bV_{b}(\bU_{b}^{\prime\top} \bX \bV_{b})^{\dagger} \bU_{b}^{\prime\top}\bX\right)\right)\right]
 = \frac{|\bmu_{b}| (d - |\bnu_b|)}{ |\bnu_{b}|-  |\bmu_{b} | - 1} + |\bmu_{b} |.
\end{align*}
For the other two cross terms, we treat them as only have $\bnu_b$ columns, and apply \eqref{equ:lem-bias-diff-b} in Lemma \ref{lem:bias-resorted-lemma} to have
\begin{align*}
  & \mathbb{E}_{\bX, \bS, \bT} \left[\tr\left(\left(
 \bV_{b}(\bU_{b}^{\top} \bX \bV_{b})^{\dagger} \bU_{b}^{\top}\bX\right)^{\top}
 \left(
 \bV_{b}(\bU_{b}^{\prime\top} \bX \bV_{b})^{\dagger} \bU_{b}^{\prime\top}\bX\right)\right)\right]
 \\
 = & \frac{|\bmu_b / \bmu_b'||\bmu_b' / \bmu_b|}{ qd -|\bmu_b \cap \bmu_b'| - 1} + |\bmu_b \cap \bmu_b'| \frac{d - |\bmu_b \cap \bmu_b'| - 1}{qd -   |\bmu_b \cap \bmu_b'|-1}.
\end{align*}
Combined together, this is
\begin{align*}
     \mathbb{E}_{\bX, \bS, \bT} \left[B_{b,b}\right]  = 2 (\frac{|\bmu_{b}| (d - |\bnu_b|)}{ |\bnu_{b}|-  |\bmu_{b} | - 1} + |\bmu_{b} |) - 2 ( \frac{|\bmu_b / \bmu_b'||\bmu_b' / \bmu_b|}{ qd -|\bmu_b \cap \bmu_b'| - 1} + |\bmu_b \cap \bmu_b'| \frac{d - |\bmu_b \cap \bmu_b'| - 1}{qd -   |\bmu_b \cap \bmu_b'|-1}).
\end{align*}
Taking the limit, this is
\begin{align*}
 \lim_{n,d\to \infty} \frac{1}{d}\mathbb{E}_{\bX, \bS, \bT} \left[B_{b,b}\right] = & 2 \frac{  p (1- q)} { \gamma q -p} + 2 \gamma^{-1}p - 2 \frac{ \gamma^{-1} p^2 (1- p)^2 }{\gamma q - p^2} - 2 \gamma^{-1} p^2 \frac{\gamma - p^2 }{\gamma q - p^2}.
\end{align*}

For feature removal, the squared term becomes
\begin{align*}
B_{b,b} =   \tr\left(\left( \bV_{b}(\bU_{b}^{\top} \bX \bV_{b})^{\dagger} \bU_{b}^{\top} \bX -
 \bV_{b}^{\prime}(\bU_{b}^{\top} \bX \bV_{b}^{\prime})^{\dagger} \bU_{b}^{\top} \bX\right)^{\top}
 \left( \bV_{b}(\bU_{b}^{\top} \bX \bV_{b})^{\dagger} \bU_{b}^{\top} \bX -
 \bV_{b}^{\prime}(\bU_{b}^{\top} \bX \bV_{b}^{\prime})^{\dagger} \bU_{b}^{\top} \bX\right)\right),
\end{align*}
where $\bV_b$ and $\bV_b'$ are independent.
We apply \eqref{equ:lem-bias-same-b} in Lemma \ref{lem:bias-resorted-lemma} to have
\begin{align*}
    &\mathbb{E}_{\bX, \bS, \bT} \left[\tr\left(\left( \bV_{b}(\bU_{b}^{\top} \bX \bV_{b})^{\dagger} \bU_{b}^{\top} \bX\right)^{\top}
 \left( \bV_{b}(\bU_{b}^{\top} \bX \bV_{b})^{\dagger} \bU_{b}^{\top} \bX\right)\right)\right] \\
 = &
 \mathbb{E}_{\bX, \bS, \bT} \left[\tr\left(\left(
 \bV_{b}^\prime(\bU_{b}^{\top} \bX \bV_{b}^\prime)^{\dagger} \bU_{b}^{\top}\bX\right)^{\top}
 \left(
 \bV_{b}^\prime(\bU_{b}^{\top} \bX \bV_{b}^\prime)^{\dagger} \bU_{b}^{\top}\bX\right)\right)\right]
 = \frac{|\bmu_{b}| (d - |\bnu_b|)}{ |\bnu_{b}|-  |\bmu_{b} | - 1} + |\bmu_{b} |.
\end{align*}
For the other two cross terms, we treat them as only have $\bmu_b$ rows, and apply \eqref{equ:lem-bias-diff-b} in Lemma \ref{lem:bias-resorted-lemma} to have
\begin{align*}
   \mathbb{E}_{\bX, \bS, \bT} \left[\tr\left(\left(
 \bV_{b}(\bU_{b}^{\top} \bX \bV_{b})^{\dagger} \bU_{b}^{\top}\bX\right)^{\top}
 \left(
 \bV_{b}^\prime(\bU_{b}^{\top} \bX \bV_{b}^\prime)^{\dagger} \bU_{b}^{\top}\bX\right)\right)\right]
 =  |\bmu_b|.
\end{align*}
Combined together, this is
\begin{align*}
     \mathbb{E}_{\bX, \bS, \bT} \left[B_{b,b}\right]  = 2 \left(\frac{|\bmu_{b}| (d - |\bnu_b|)}{ |\bnu_{b}|-  |\bmu_{b} | - 1} + |\bmu_{b} |\right) - 2 |\bmu_b|.
\end{align*}
Taking the limit, this is
\begin{align*}
 \lim_{n,d\to \infty} \frac{1}{d}\mathbb{E}_{\bX, \bS, \bT} \left[B_{b,b}\right] = \frac{ 2 p (1 - q) }{ \gamma  q - p} .
\end{align*}

As for the cross terms ($b_1\neq b_2$) for instance removal, we have
\begin{align*}
     & \lim_{n,d\to \infty}\frac{n}{d} \cdot  \mathbb{E}_{\bX, \bS, \bT} \left[B_{b_1,b_2}\right] \\
     =& \lim_{n,d\to \infty} \frac{n}{d} \cdot  \mathbb{E}_{\bX, \bS, \bT} \left[ \frac{|\bmu_{1}^{\prime}/\bmu_2^{\prime}||\bmu_2^{\prime} / \bmu_1^{\prime}|}{\tr(\Theta^{\vee})  - |\bmu_1^{\prime} \cap \bmu_2^{\prime}|} +  |\bmu_1^{\prime} \cap \bmu_2^{\prime}| \frac{d - |\bmu_1^{\prime} \cap \bmu_2^{\prime}| - 1}{\tr(\Theta^{\vee}) - |\bmu_1^{\prime} \cap \bmu_2^{\prime}| - 1} \right] \\
     - & \lim_{n,d\to \infty} \frac{n}{d} \cdot  \mathbb{E}_{\bX, \bS, \bT} \left[ \frac{|\bmu_{1}/\bmu_2||\bmu_2 / \bmu_1|}{\tr(\Theta^{\vee})  - |\bmu_1 \cap \bmu_2|} +  |\bmu_1 \cap \bmu_2| \frac{d - |\bmu_1 \cap \bmu_2| - 1}{\tr(\Theta^{\vee}) - |\bmu_1 \cap \bmu_2| - 1} \right] \\
     = & \lim_{n,d\to \infty}\frac{n}{d} \cdot   \left[ \frac{ \frac{m^2}{n-1} \frac{(n-1-m)^2}{n-2} }{d - \frac{m^2}{n-1} } + \frac{m^2}{n-1} - \frac{ \frac{m^2}{n} \frac{(n-m)^2}{n-1} }{d - \frac{m^2}{n} } - \frac{m^2}{n} \right] \\
     = & \frac{\gamma^{-1}p^4 (1 - p^2) - 2 p^3 (1 - p)}{(\gamma - p^2)^2} + \gamma^{-1} p^2.
\end{align*}

As for the cross terms ($b_1\neq b_2$) for feature removal, we have
\begin{align*}
     & \lim_{n,d\to \infty}\frac{n}{d} \cdot  \mathbb{E}_{\bX, \bS, \bT} \left[B_{b_1,b_2}\right] \\
     =& \lim_{n,d\to \infty} \frac{n}{d} \cdot  \mathbb{E}_{\bX, \bS, \bT} \left[ \frac{|\bmu_{1}/\bmu_2||\bmu_2 / \bmu_1|}{d  - 1  - |\bmu_1 \cap \bmu_2|} +  |\bmu_1 \cap \bmu_2| \frac{d - |\bmu_1 \cap \bmu_2| - 1}{d  - 1 - |\bmu_1 \cap \bmu_2| - 1} \right] \\
     - & \lim_{n,d\to \infty} \frac{n}{d} \cdot  \mathbb{E}_{\bX, \bS, \bT} \left[ \frac{|\bmu_{1}/\bmu_2||\bmu_2 / \bmu_1|}{d  - |\bmu_1 \cap \bmu_2|} +  |\bmu_1 \cap \bmu_2| \frac{d - |\bmu_1 \cap \bmu_2| - 1}{d - |\bmu_1 \cap \bmu_2| - 1} \right] \\
     = & \frac{\gamma^{-1} p^2(1-p)^2}{(\gamma - p^2)^2} + \frac{\gamma^{-1} p^2 }{\gamma - p^2} .
\end{align*}

\end{proof}

\section{Contents Related to Model Free Stability Guarantees}

This appendix proves stronger stability guarantees in which the expectation is taken only over the algorithmic randomness.

\subsection{Additional Experiments}\label{sec:additional-experiments}

\begin{figure}[!p]
\centering
\subfigure[Weak signal]{
\begin{minipage}{0.3\linewidth}
\centering
\includegraphics[width=\textwidth]{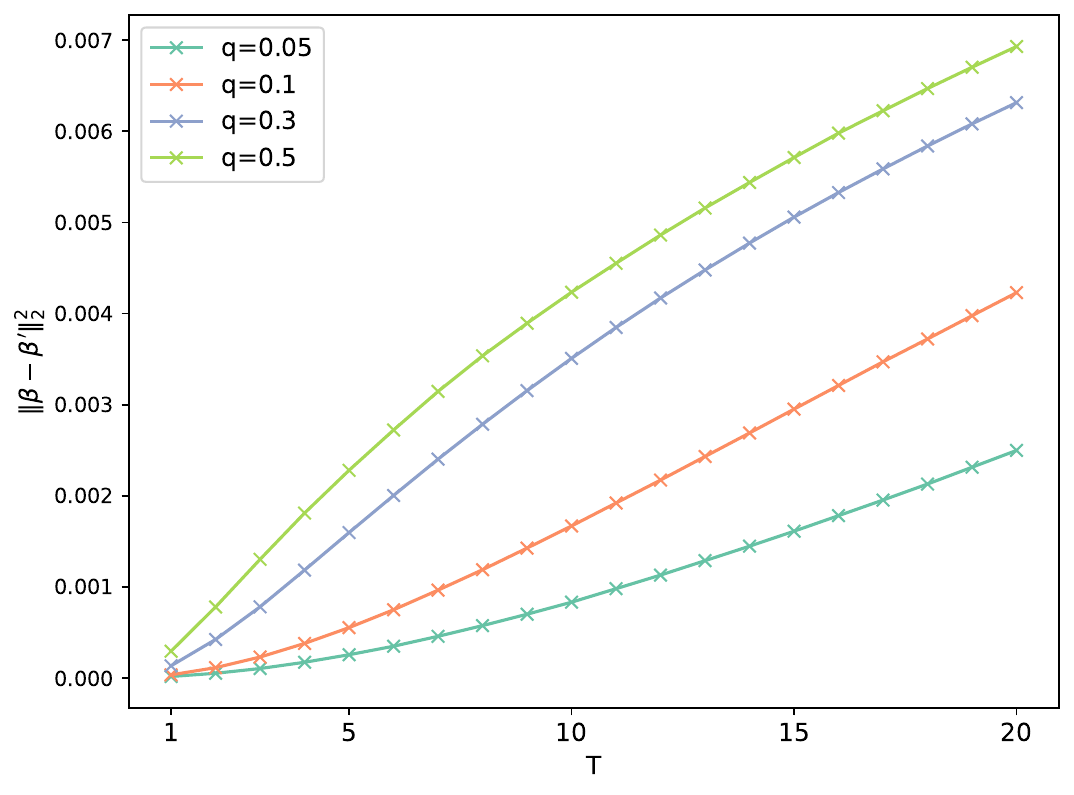}
\end{minipage}
\label{fig:forward_selection_only_stability_0.1_nu_0}
}
\subfigure[Middle signal]{
\begin{minipage}{0.3\linewidth}
\centering
\includegraphics[width=\textwidth]{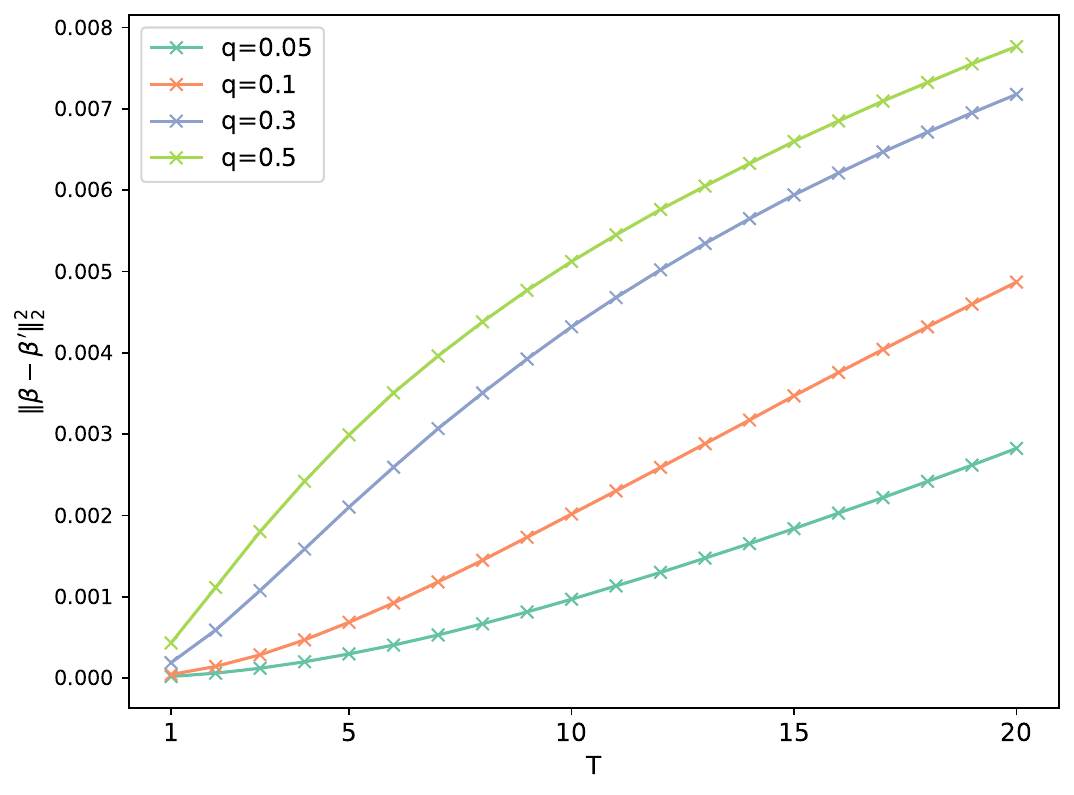}
\end{minipage}
\label{fig:forward_selection_only_stability_0.1_nu_0.5}
}
\subfigure[Strong signal]{
\begin{minipage}{0.3\linewidth}
\centering
\includegraphics[width=\textwidth]{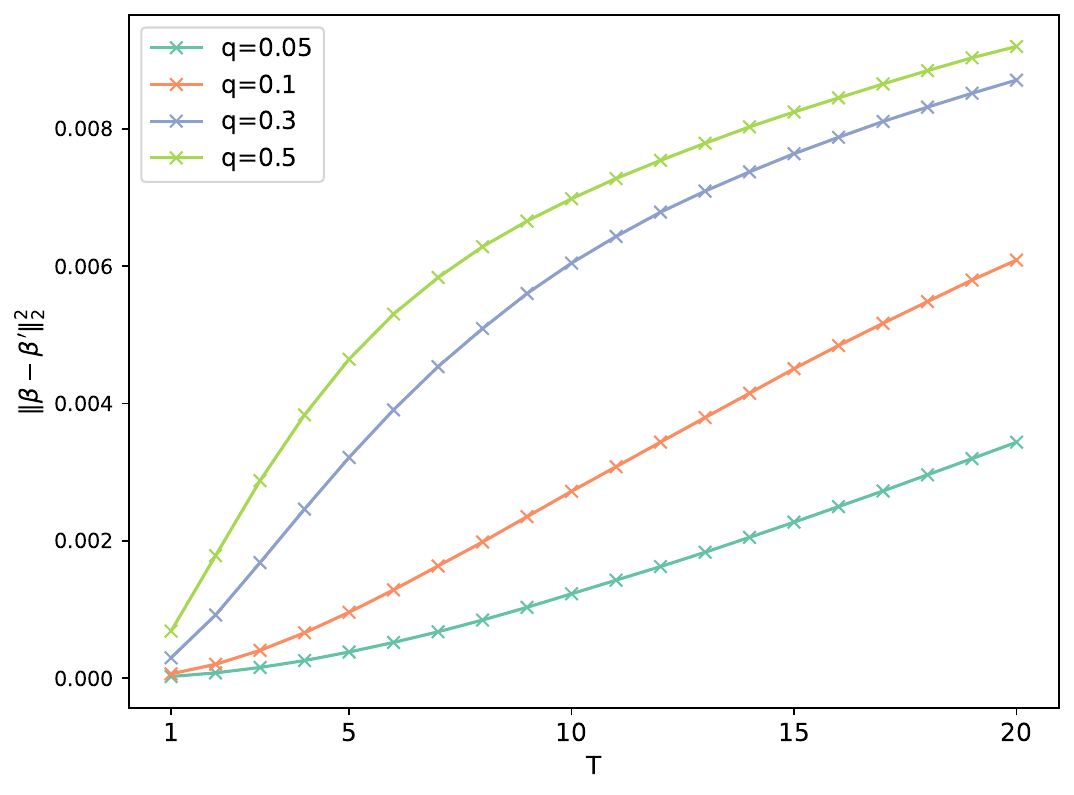}
\end{minipage}
\label{fig:forward_selection_only_stability_0.1_nu_1}
}
\subfigure[Weak signal]{
\begin{minipage}{0.3\linewidth}
\centering
\includegraphics[width=\textwidth]{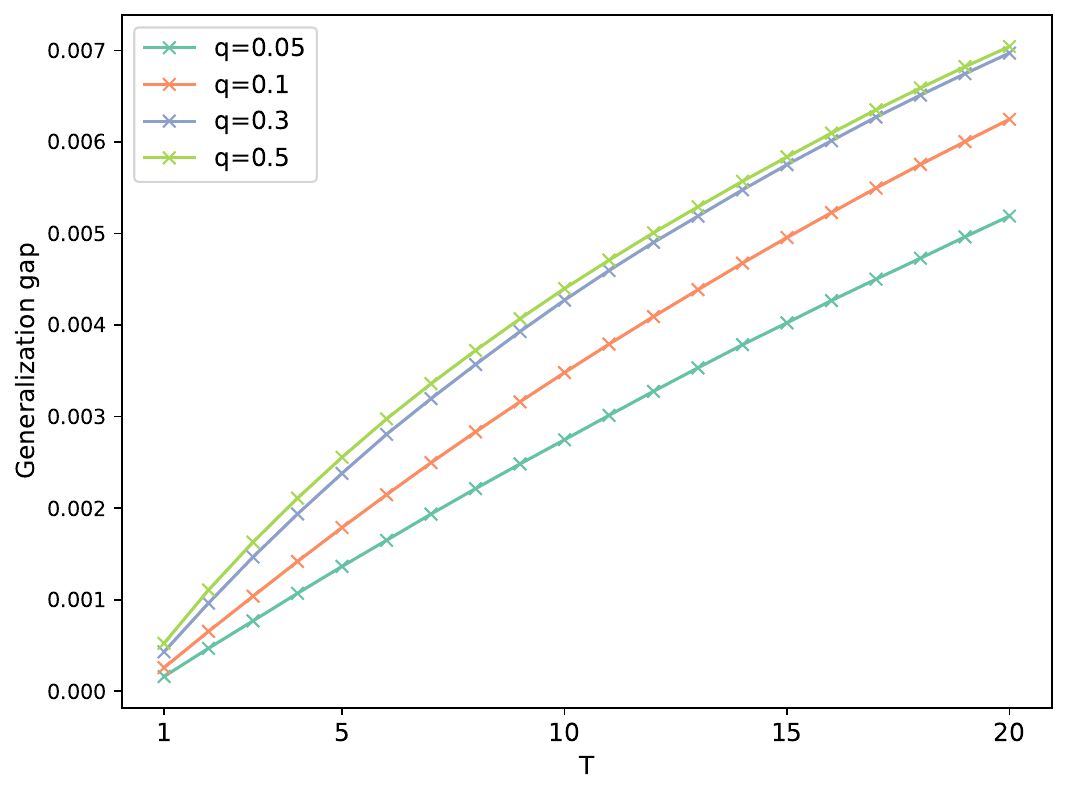}
\end{minipage}
\label{fig:forward_selection_generalization_gap_sigma_0.1_nu_0}
}
\subfigure[Mixed signal]{
\begin{minipage}{0.3\linewidth}
\centering
\includegraphics[width=\textwidth]{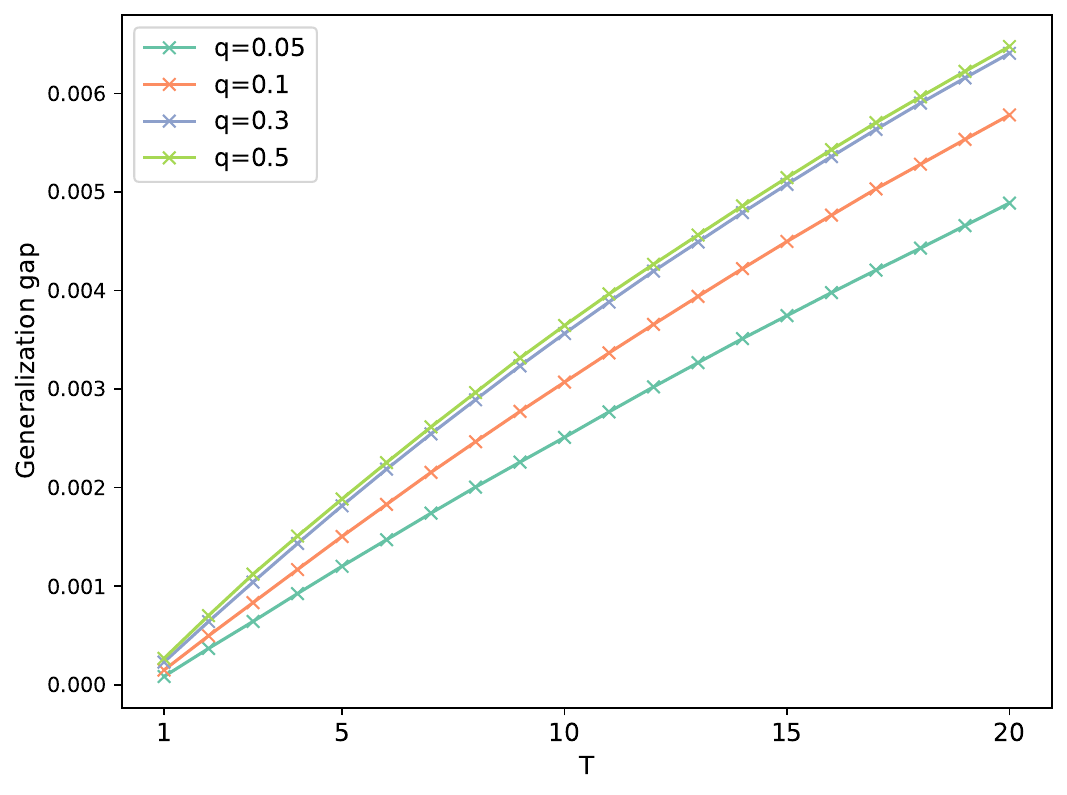}
\end{minipage}
\label{fig:forward_selection_generalization_gap_sigma_0.1_nu_0.5}
}
\subfigure[Strong signal]{
\begin{minipage}{0.3\linewidth}
\centering
\includegraphics[width=\textwidth]{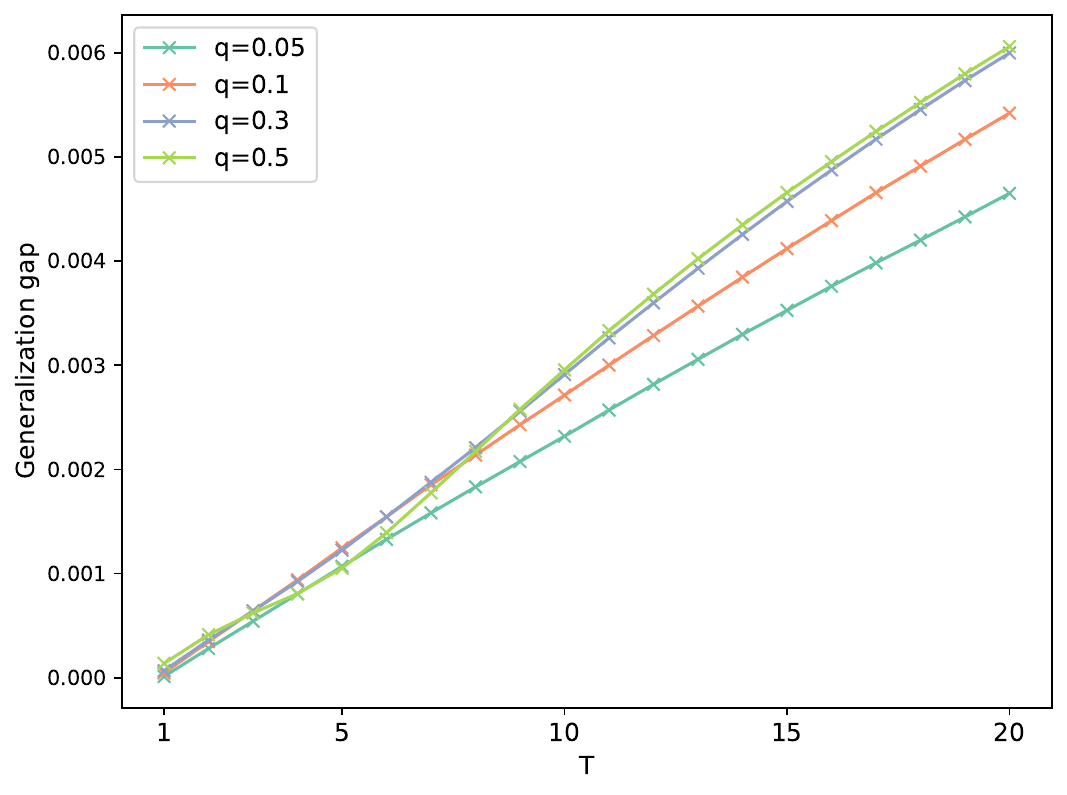}
\end{minipage}
\label{fig:forward_selection_generalization_gap_sigma_0.1_nu_1}
}
\subfigure[Weak signal]{
\begin{minipage}{0.3\linewidth}
\centering
\includegraphics[width=\textwidth]{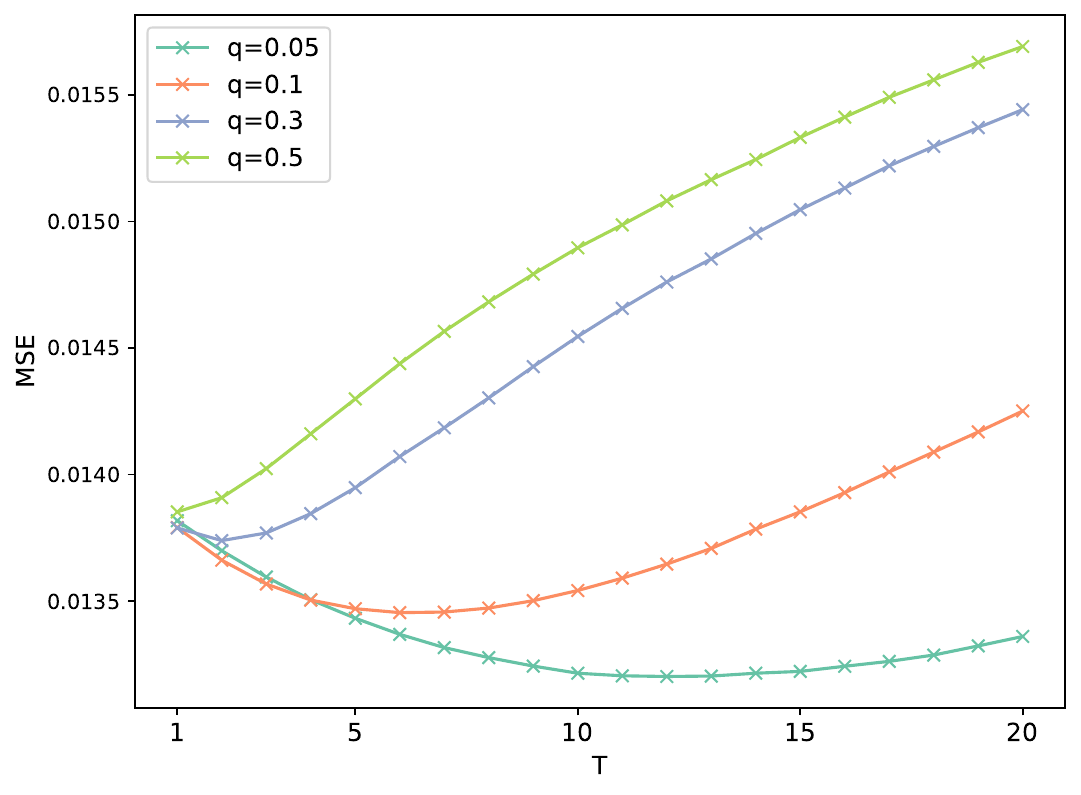}
\end{minipage}
\label{fig:forward_selection_feature_stability_sigma_nu_0}
}
\subfigure[Mixed signal]{
\begin{minipage}{0.3\linewidth}
\centering
\includegraphics[width=\textwidth]{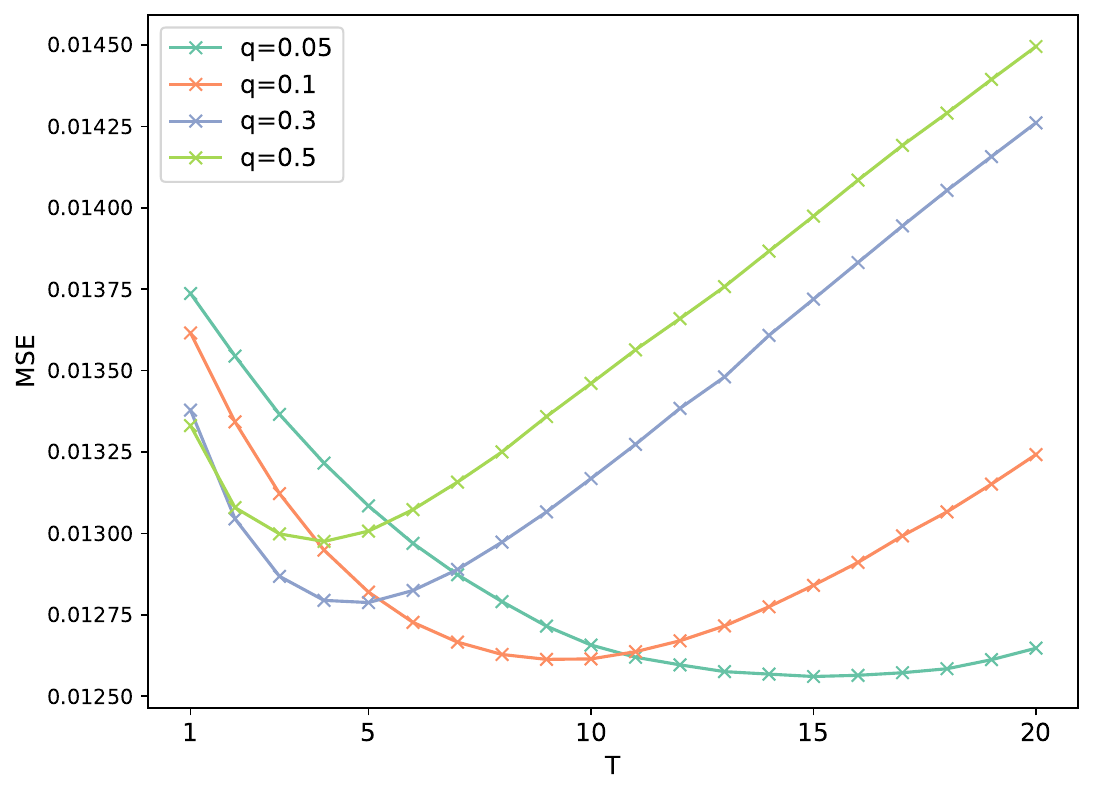}
\end{minipage}
\label{fig:forward_selection_feature_stability_sigma_nu_0.5}
}
\subfigure[Strong signal]{
\begin{minipage}{0.3\linewidth}
\centering
\includegraphics[width=\textwidth]{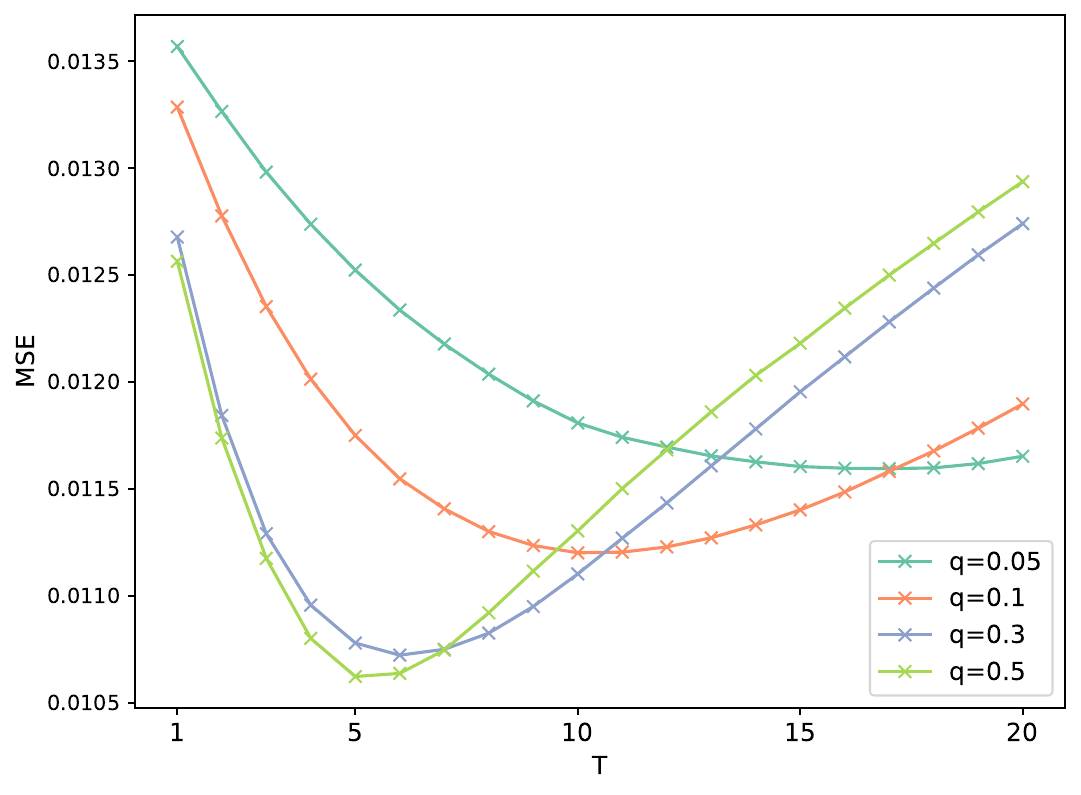}
\end{minipage}
\label{fig:forward_selection_feature_stability_sigma_nu_1}
}
\subfigure[Weak signal]{
\begin{minipage}{0.3\linewidth}
\centering
\includegraphics[width=\textwidth]{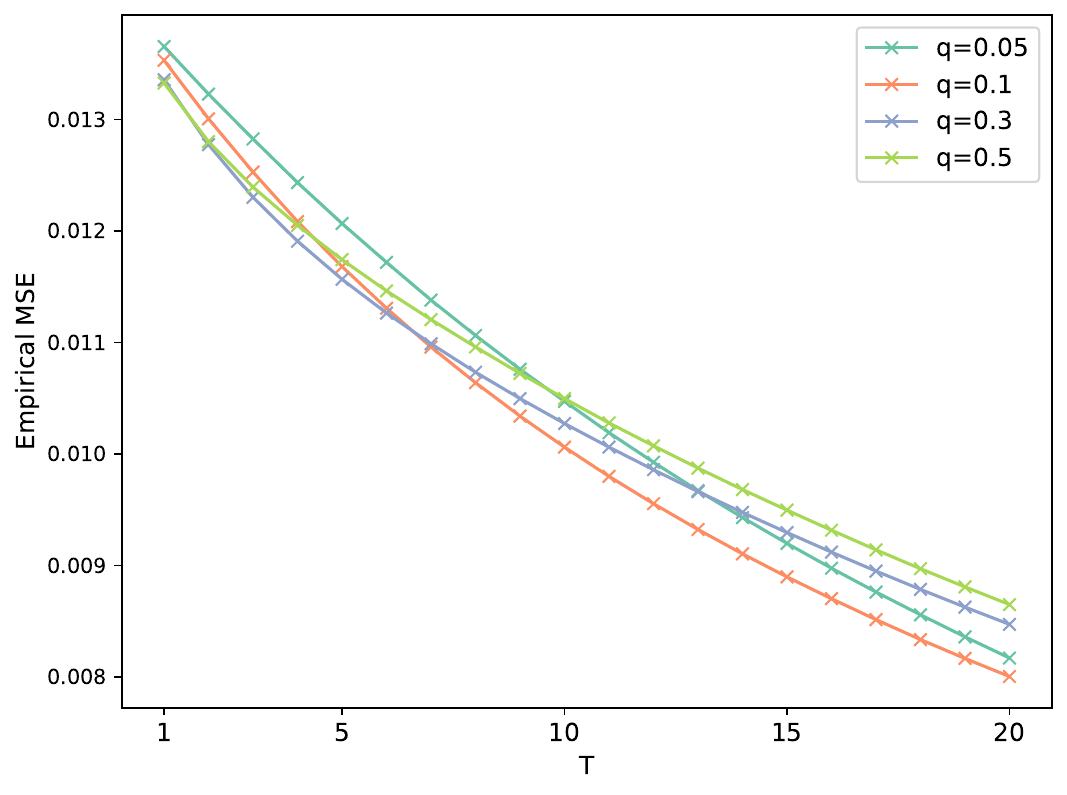}
\end{minipage}
\label{fig:forward_selection_feature_stability_empirical_sigma_nu_0}
}
\subfigure[Mixed signal]{
\begin{minipage}{0.3\linewidth}
\centering
\includegraphics[width=\textwidth]{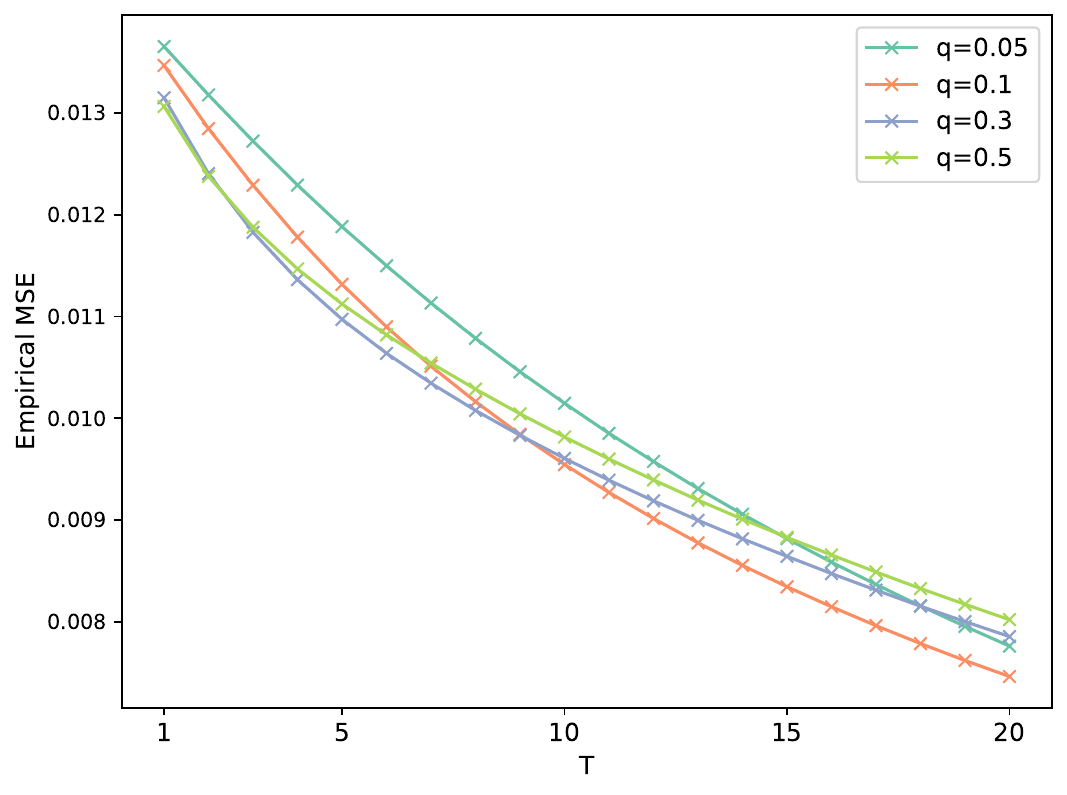}
\end{minipage}
\label{fig:forward_selection_feature_stability_empirical_sigma_nu_0.5}
}
\subfigure[Strong signal]{
\begin{minipage}{0.3\linewidth}
\centering
\includegraphics[width=\textwidth]{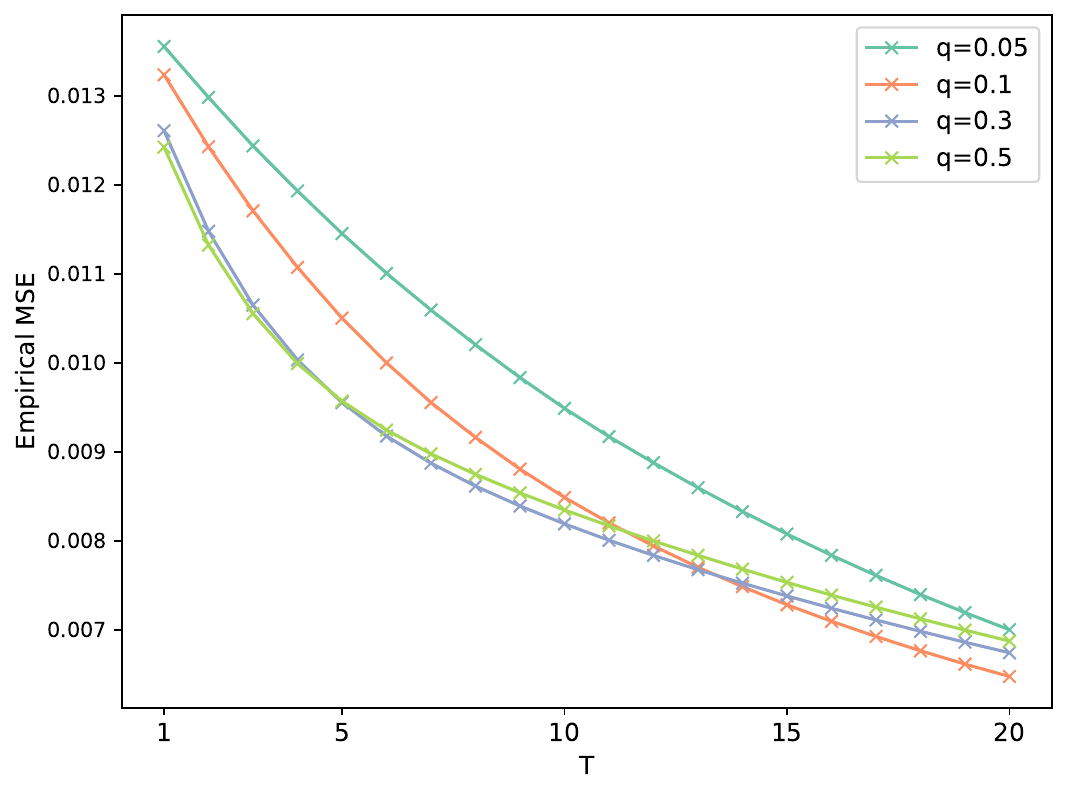}
\end{minipage}
\label{fig:forward_selection_feature_stability_empirical_sigma_nu_1}
}
\caption{The feature instability, generalization gap, generalization error, and empirical risk of RFS under different signal types.
Each cross corresponds to 100 repetitions.
We set $\sigma = 0.1$.
}
\label{fig:forward_selection_generalization_gap_sigma_0.1}
\end{figure}

We adopt the MARSadd setting from \citet{friedman1991multivariate}, now a standard benchmark for random forests \citep{mentch2020randomization, curth2024random}.
We generate $\bX$ from Unif$([0,1]^d)$ and set
\begin{align}\label{equ:model-marsadd}
    y = 0.1 e^{4 X^1}+\frac{4}{1+e^{-20\left(X^2-0.5\right)}}+3 X^3+2 X^4+X^5+\cN (0,\sigma^2).
\end{align}
We choose $n = 250$, $d=10$, and $\sigma = 1$.
We set $B=256$, $q\in \{0.1, 0.2, 0.3, 0.4, 0.5\}$, and $T\in [7]$.
In Figures~\ref{fig:random_forest_feature_stability} and~\ref{fig:random_forest_comparision}, we conduct a parallel set of experiments to those for RF, which brings similar conclusions.

We adopt the following simulation setup.
An orthogonal design matrix $\bX \in \mathbb{R}^{n \times d}$ is generated by applying QR decomposition to a matrix with i.i.d. standard normal entries. The true coefficient vector is set to $\bbeta^{*} = d^{-1/2}(1,\ldots,1) \in \mathbb{R}^d$, and the noise terms are drawn independently as $\varepsilon_i \sim \mathcal{N}(0, \sigma^2)$.  We fix the parameters as follows: $n = 250$, $d = 200$, and $\sigma = 1$. The number of bagging rounds is set to $B = 256$. We vary the feature subsampling ratio $q \in \{0.05, 0.1, 0.3, 0.5\}$ and the number of forward selection steps $T \in [20]$. Larger values of $T$ are not considered, as none of the chosen $q$ values satisfy the regime $qT \lesssim 1$ beyond this range.

We next explore the connection between feature instability and generalization.
Define $\bbeta^*_{\textrm{strong}} = (5^{-1/2},\ldots,5^{-1/2}, 0, \ldots, 0)\in \RR^{200}$ to be a sparse coefficient with $5$ nonzero positions, which represents a strong signal.
Let $\bbeta^*_{\textrm{weak}} = d^{-1/2} (1,\ldots,1)$ and $\bbeta^*_{\textrm{mixed}} = (\bbeta^*_{\textrm{weak}} + \bbeta^*_{\textrm{strong}}) / \|\bbeta^*_{\textrm{weak}} + \bbeta^*_{\textrm{strong}}\|_2$ represent a weak and a mixed signal, respectively.  Note that this sense of strong and weak depicts a different aspect of learning hardness opposed to the SNR \citep{zhou2023trees, liu2025randomization}.
We compare the relationship between feature instability, generalization gap, generalization error, and empirical risk in Figure \ref{fig:forward_selection_generalization_gap_sigma_0.1}, respectively, in four rows.
\begin{itemize}
    \item  Although there is no theoretical result establishing a relationship between the generalization gap and feature instability, we observe a similarity in their trends, regardless of whether the signal is concentrated or spread across the features.
    \item The generalization-risk curves, however, have different patterns under different signal types.
Observing the second last row of Figure \ref{fig:forward_selection_generalization_gap_sigma_0.1}, when the signal is weak, it is suggested to use a smaller subsampling ratio.
In contrast, when the signal is strong, using a larger $q$ and a smaller $T$ is a better choice.
This observation aligns with past experience that the bagging provides a regularization effect, which is more promising when the signal is weak \citep{lejeune2020implicit, mentch2020randomization}.
\item
In practice, the generalization error is unobservable.
Practitioners see only the empirical risk in the last row.
In all signal types, the empirical risk decreases as $T$ grows, i.e., the model capacity becomes larger. Thus, in practice, hyperparameter tuning involves balancing between the empirical risk and the generalization gap, where the latter can take evidence from feature instability.

\end{itemize}

\begin{figure}[h]
\centering
\subfigure[$\frac{1}{d}\sum_{j=1}^d\|\bw - \bw^{-j}\|_2^2$]{
\begin{minipage}{0.45\linewidth}
\centering
\includegraphics[width=\textwidth]{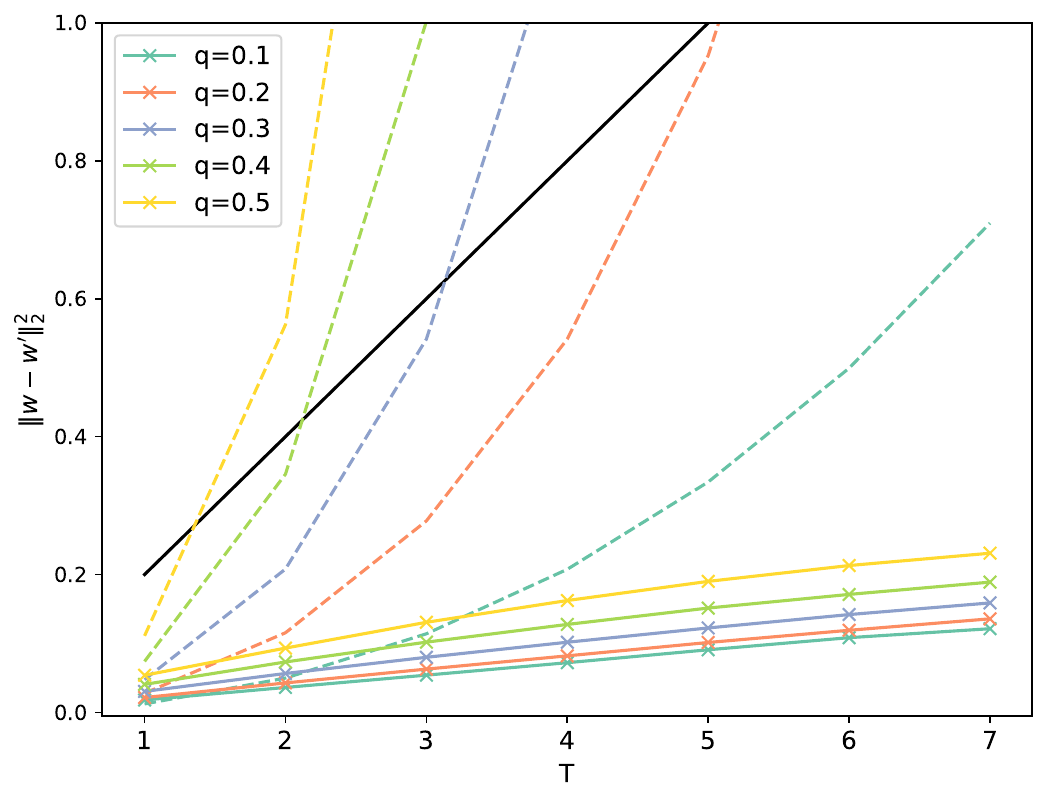}
\end{minipage}
\label{fig:random_forest_feature_stability_l2}
}
\subfigure[$\frac{1}{d}\sum_{j=1}^d|f(\bx) - f^{-j}(\bx)|_2^2$]{
\begin{minipage}{0.45\linewidth}
\centering
\includegraphics[width=\textwidth]{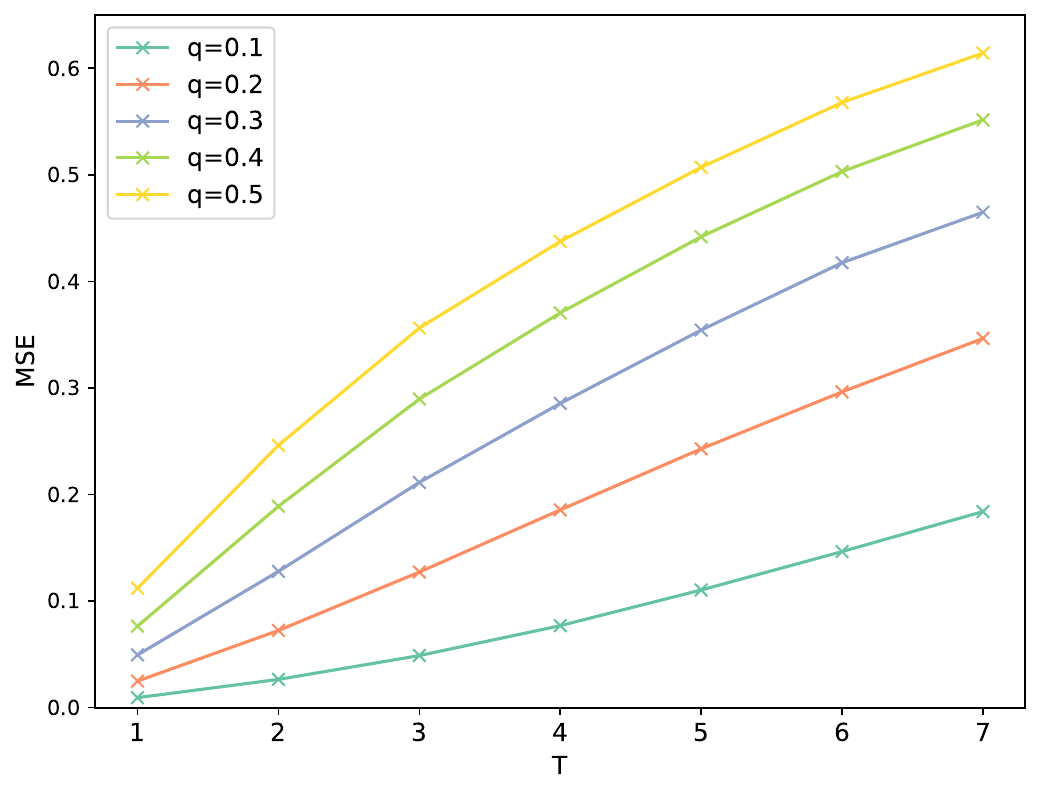}
\end{minipage}
\label{fig:random_forest_feature_stability_prediction_mse}
}
\caption{
Feature-instability metrics for RF.
Left: comparison of the feature importance between the feature-bagged RF and the non-bagged RF.
The real lines with crosses correspond to the simulated values over 100 repetitions under different feature subsampling ratios.
The dashed lines are the theoretical upper bound \eqref{equ:stability-of-rf-final-bound}.
The black solid line is the feature instability of non-bagged RF, i.e., a single tree.
Right:
comparison of the prediction value between the feature-bagged RF with different subsampling ratios.
We compute over an independent test set of size $n$.
}
\label{fig:random_forest_feature_stability}
\end{figure}

\begin{figure}[h]
\centering
\subfigure[Weak signal]{
\begin{minipage}{0.3\linewidth}
\centering
\includegraphics[width=\textwidth]{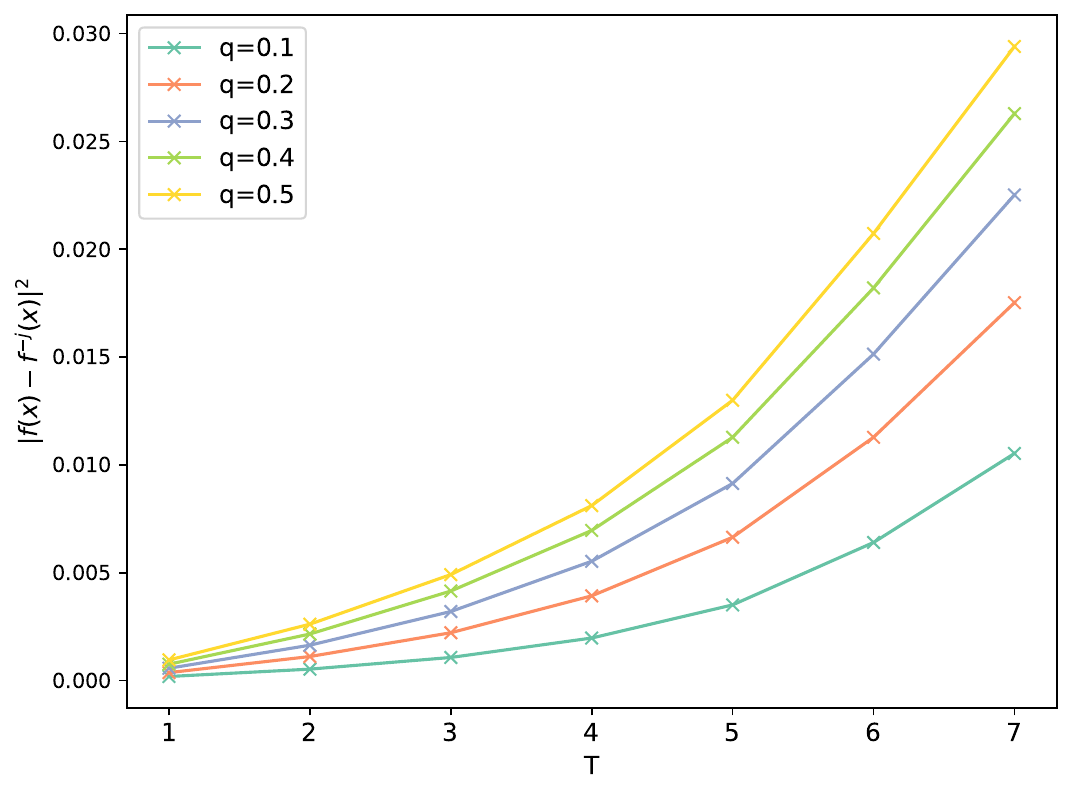}
\end{minipage}
\label{fig:random_forest_prediction_stability_sigma_1_nu_0}
}
\subfigure[Mixed signal]{
\begin{minipage}{0.3\linewidth}
\centering
\includegraphics[width=\textwidth]{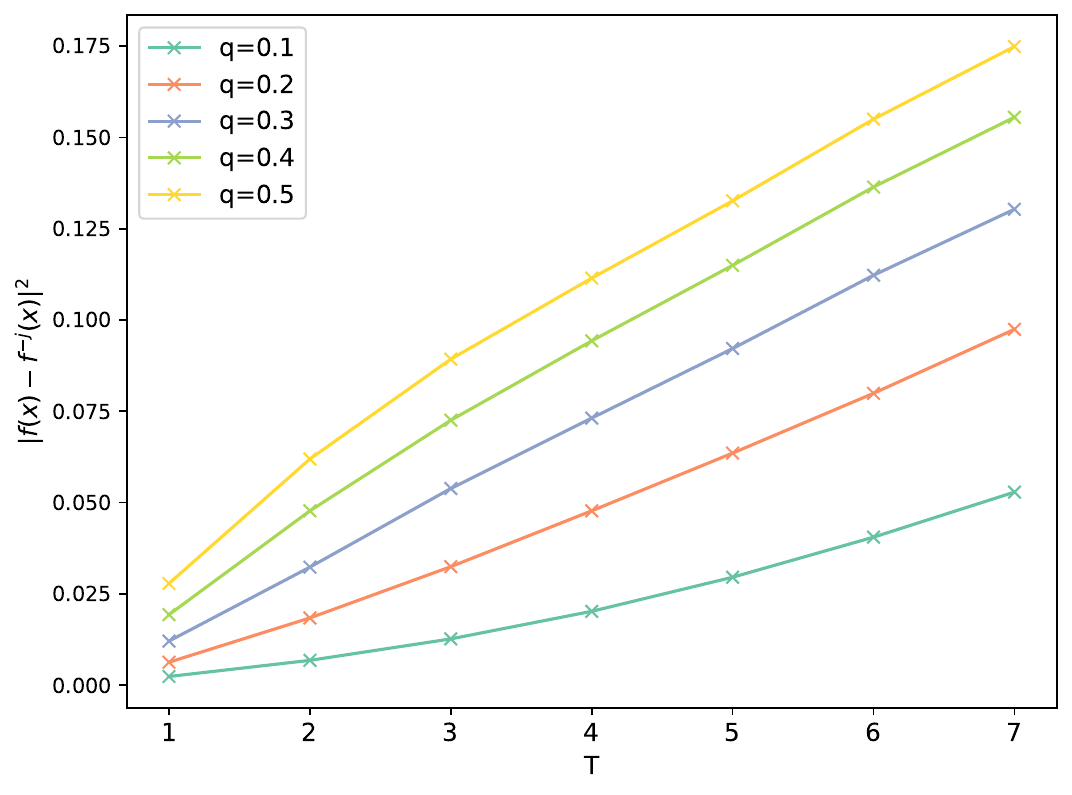}
\end{minipage}
\label{fig:random_forest_prediction_stability_sigma_1_nu_0.5}
}
\subfigure[Strong signal]{
\begin{minipage}{0.3\linewidth}
\centering
\includegraphics[width=\textwidth]{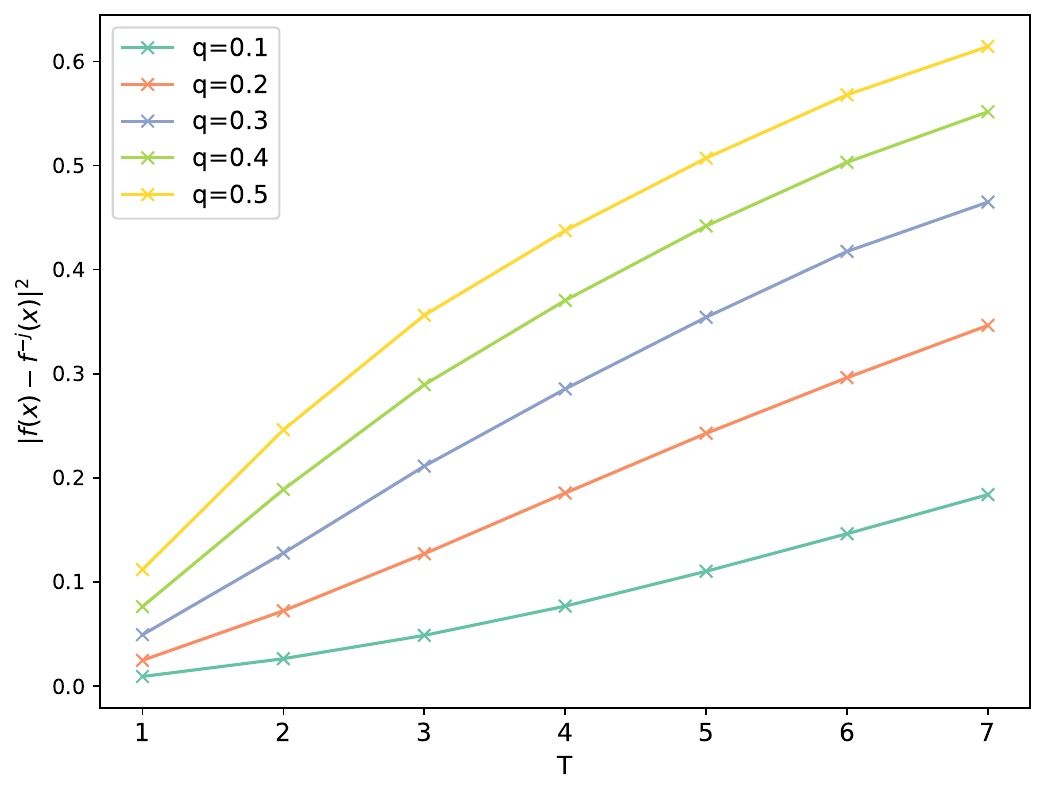}
\end{minipage}
\label{fig:random_forest_prediction_stability_sigma_1_nu_1}
}
\subfigure[Weak signal]{
\begin{minipage}{0.3\linewidth}
\centering
\includegraphics[width=\textwidth]{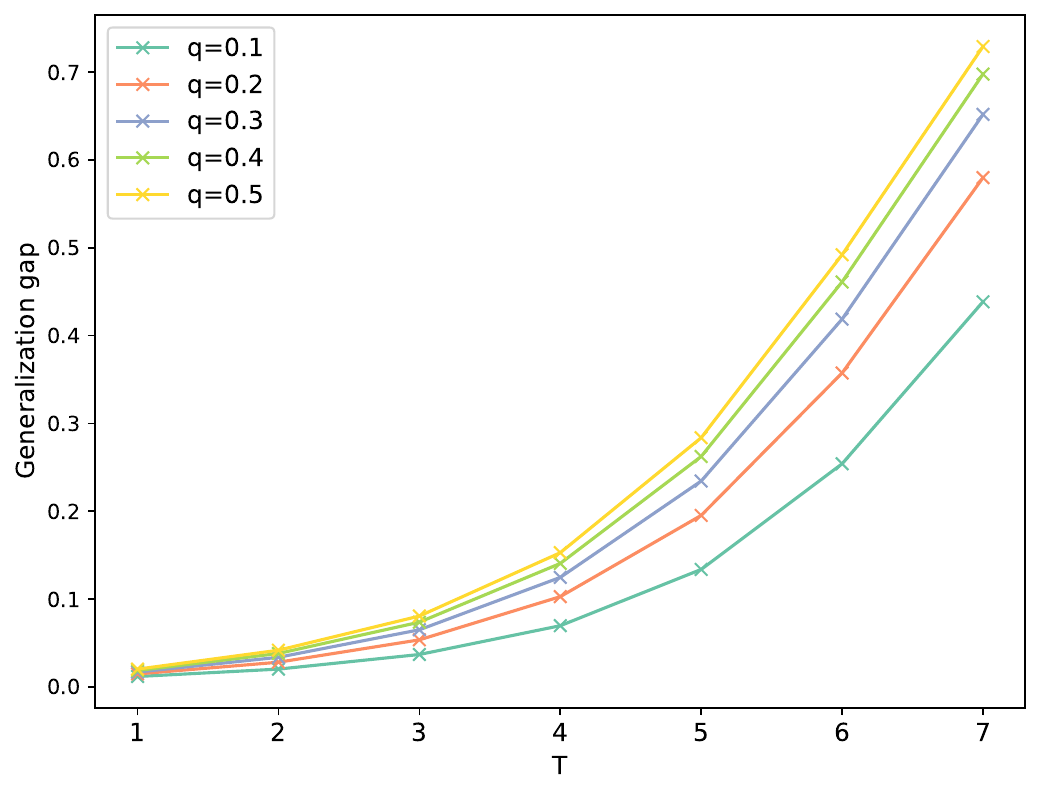}
\end{minipage}
\label{fig:random_forest_generalization_gap_sigma_1_nu_0}
}
\subfigure[Mixed signal]{
\begin{minipage}{0.3\linewidth}
\centering
\includegraphics[width=\textwidth]{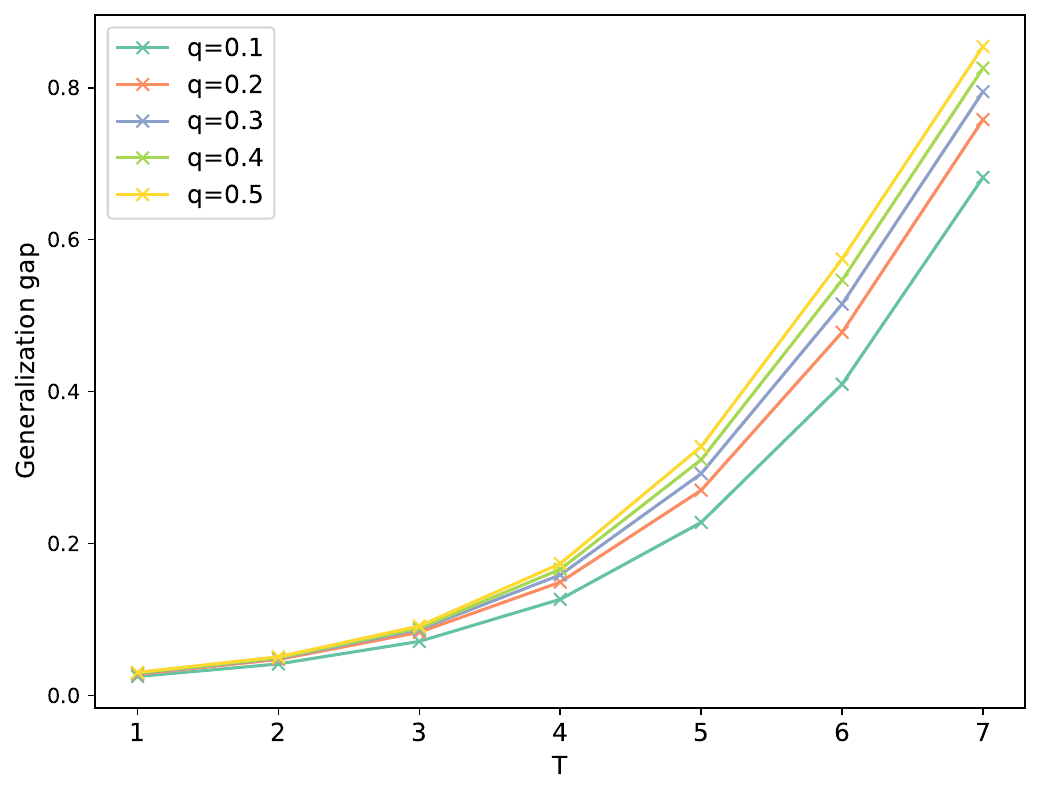}
\end{minipage}
\label{fig:random_forest_generalization_gap_sigma_1_nu_0.5}
}
\subfigure[Strong signal]{
\begin{minipage}{0.3\linewidth}
\centering
\includegraphics[width=\textwidth]{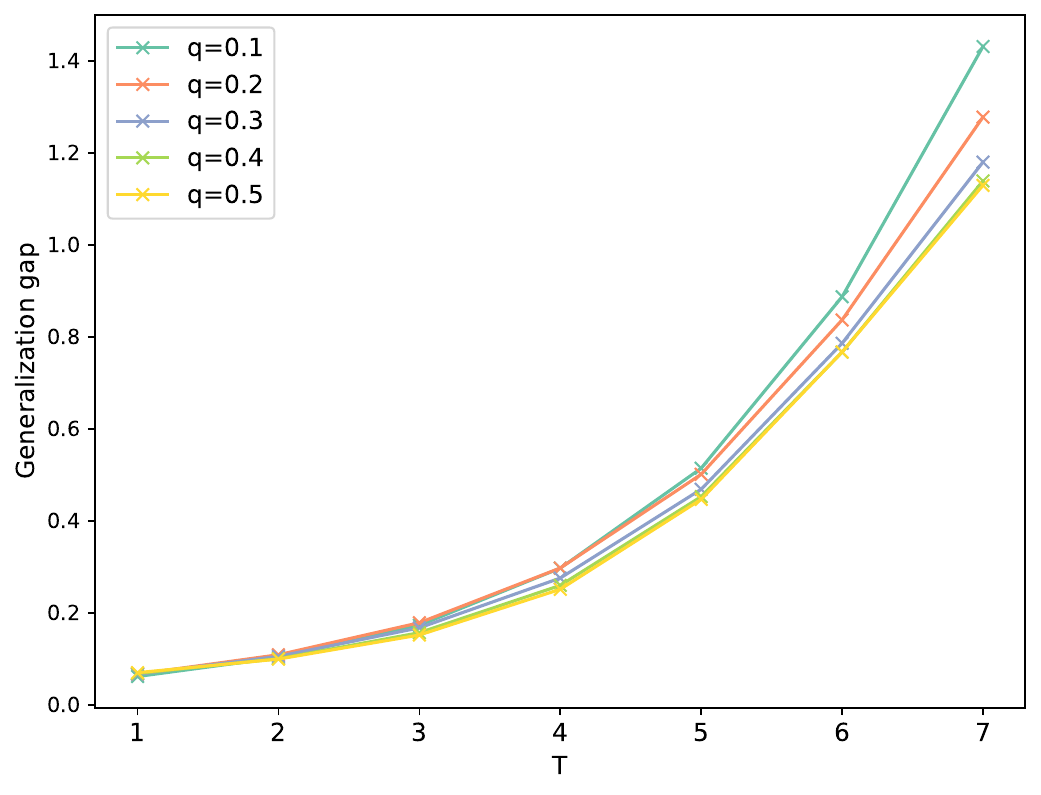}
\end{minipage}
\label{fig:random_forest_generalization_gap_sigma_1_nu_1}
}
\subfigure[Weak signal]{
\begin{minipage}{0.3\linewidth}
\centering
\includegraphics[width=\textwidth]{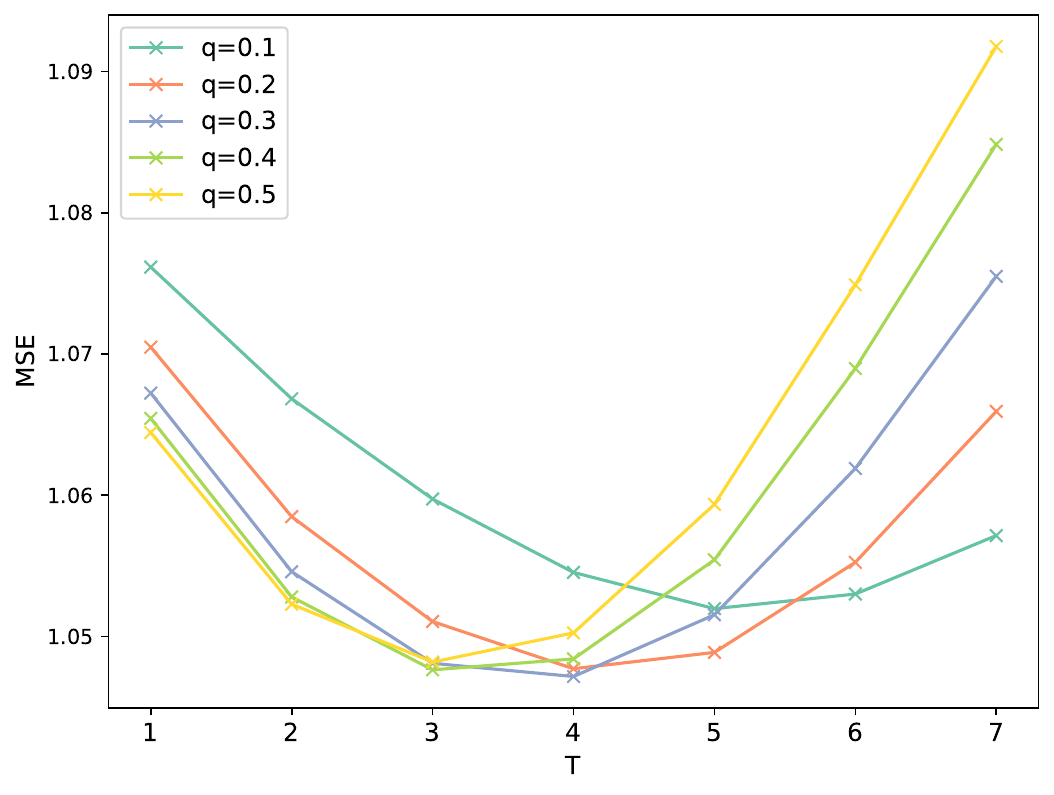}
\end{minipage}
\label{fig:random_forest_feature_stability_nu_0}
}
\subfigure[Mixed signal]{
\begin{minipage}{0.3\linewidth}
\centering
\includegraphics[width=\textwidth]{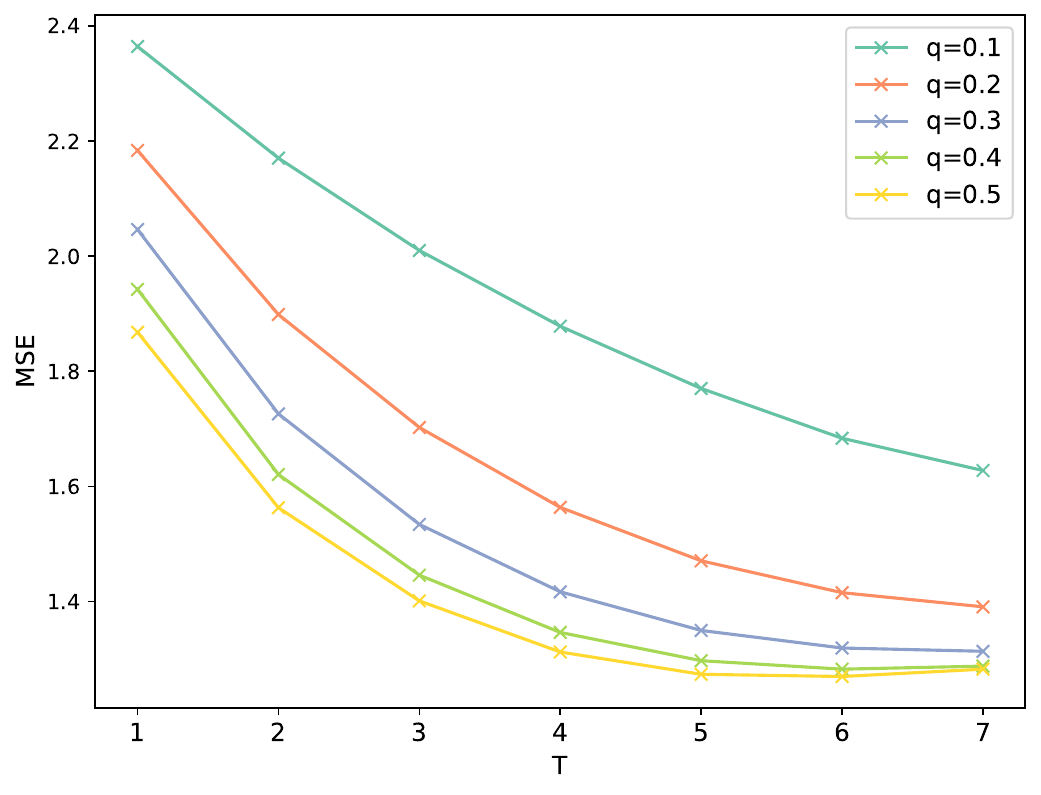}
\end{minipage}
\label{fig:random_forest_feature_stability_nu_0.5}
}
\subfigure[Strong signal]{
\begin{minipage}{0.3\linewidth}
\centering
\includegraphics[width=\textwidth]{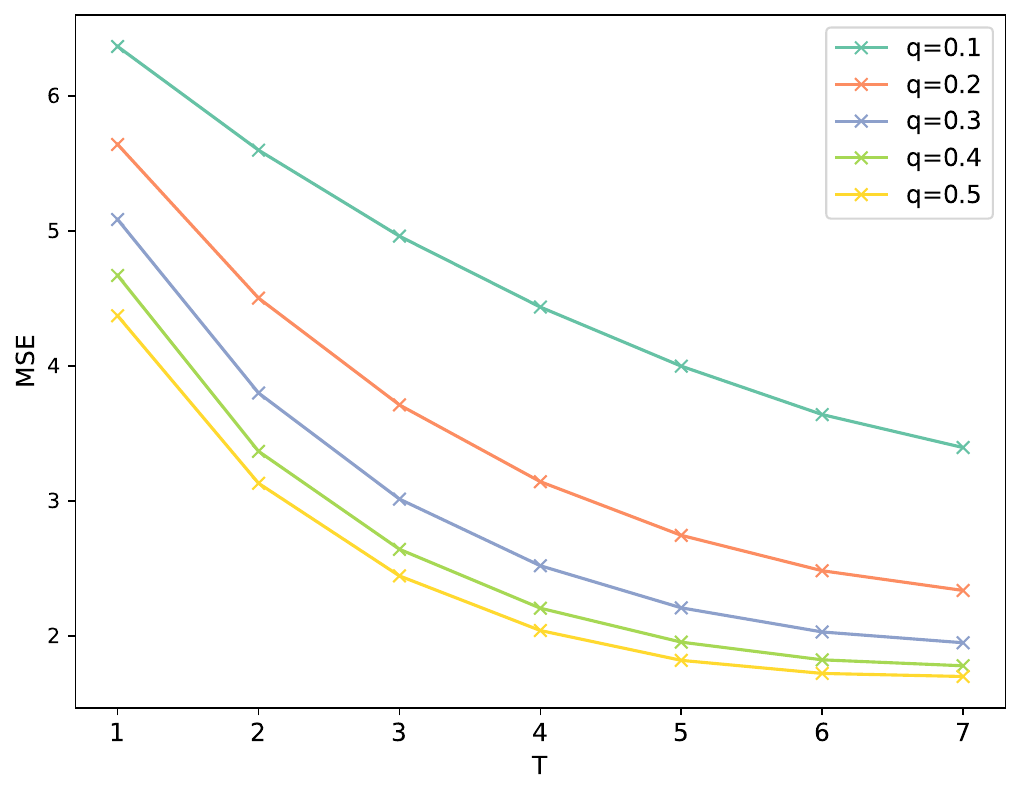}
\end{minipage}
\label{fig:random_forest_feature_stability_nu_1}
}
\subfigure[Weak signal]{
\begin{minipage}{0.3\linewidth}
\centering
\includegraphics[width=\textwidth]{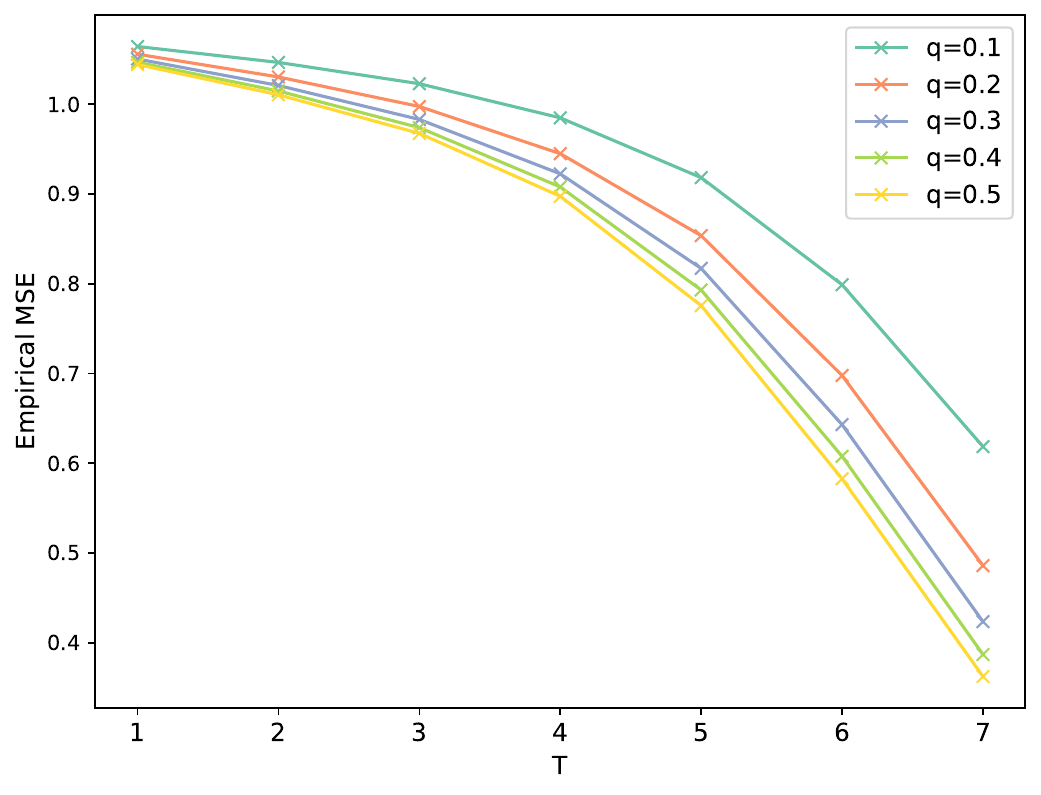}
\end{minipage}
\label{fig:random_forest_empirical_error_sigma_1_nu_0}
}
\subfigure[Mixed signal]{
\begin{minipage}{0.3\linewidth}
\centering
\includegraphics[width=\textwidth]{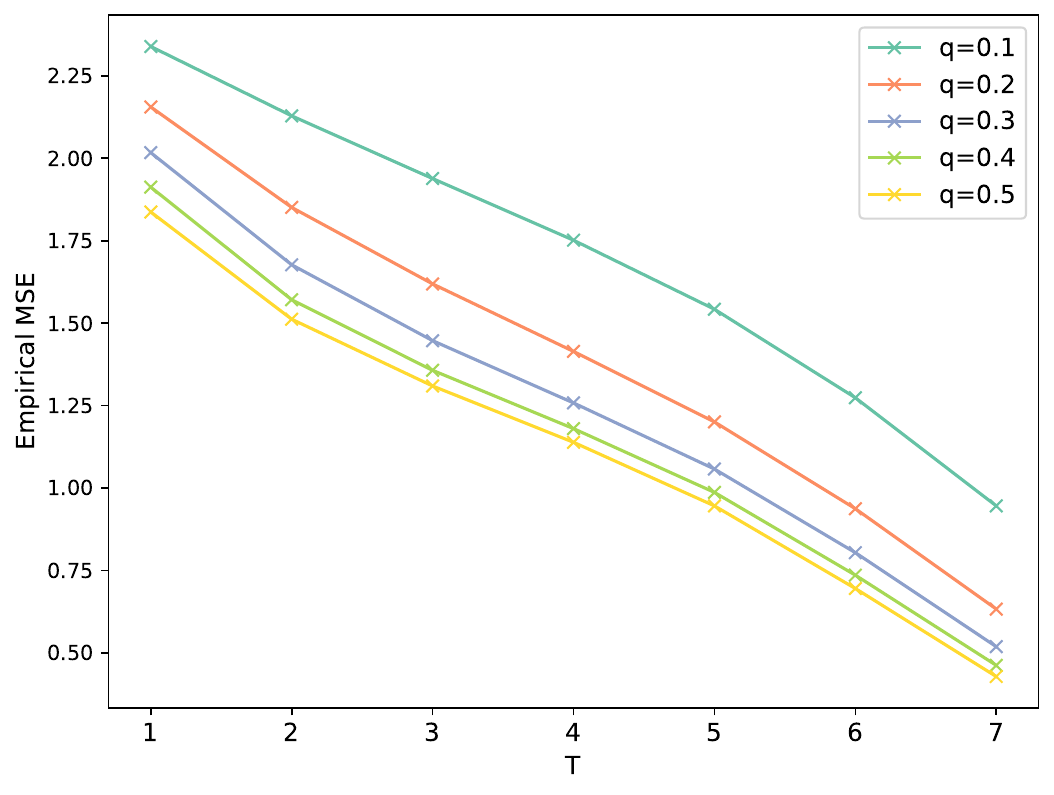}
\end{minipage}
\label{fig:random_forest_empirical_error_sigma_1_nu_0.5}
}
\subfigure[Strong signal]{
\begin{minipage}{0.3\linewidth}
\centering
\includegraphics[width=\textwidth]{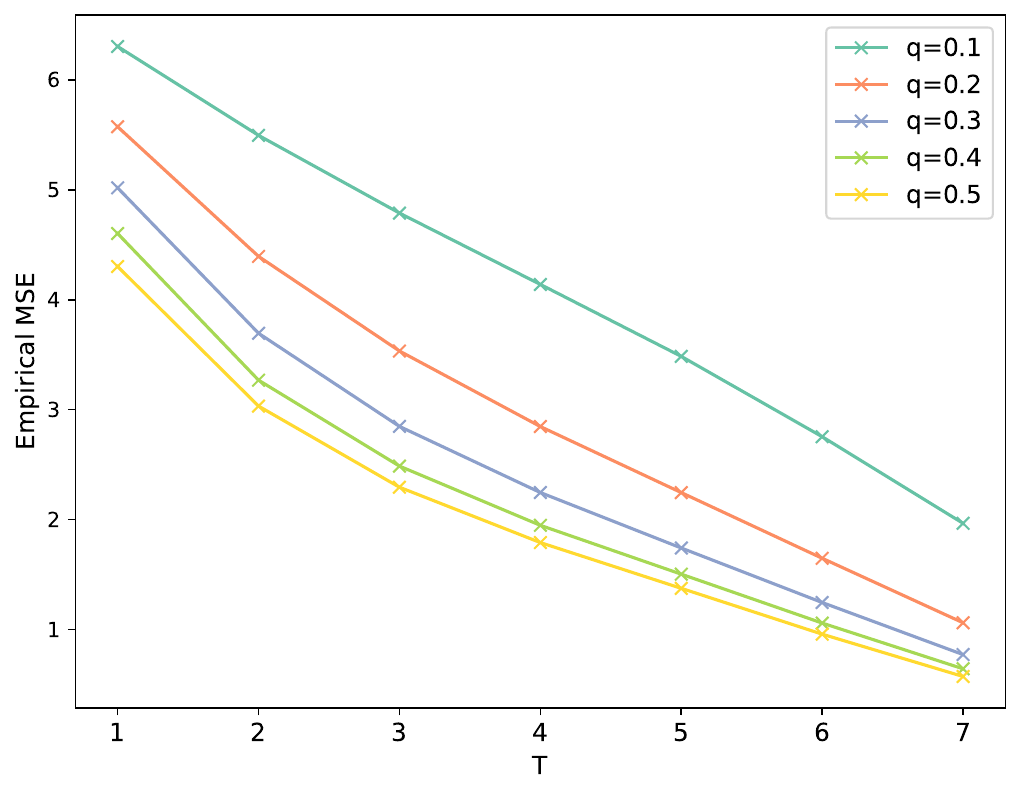}
\end{minipage}
\label{fig:random_forest_empirical_error_sigma_1_nu_1}
}
\caption{The feature instability, generalization gap, generalization error, and empirical risk of RF under different signal types.
Each cross corresponds to 100 repetitions.
}
\label{fig:random_forest_comparision}
\end{figure}

\subsection{Proofs of Results in Section~\ref{sec:extention-metric-space}}

\begin{proof}[\textbf{Proof of Proposition \ref{prop:stability-feature}}]
    The proposition is a special case of Proposition \ref{prop:stability-edit-norm}, where $\operatorname{rad}(\cW) = \operatorname{rad}([-M,M]) = M$.
\end{proof}

\begin{proof}[\textbf{Proof of Proposition \ref{prop:stability-edit-norm}}]
The proof is a direct extension of that of Proposition \ref{prop:stability-feature}, except that here we are considering the norm in general spaces.
We use $\bnu$ to denote a subsampled set of features.
Let event $E_{j, \bnu} = \{ j \notin \bnu\}$.
Denote the quantity $L_{j} = \widehat{\bw}- \widehat{\bw}^{-j}$, for which we similarly define $ \widehat{\bw}^{-j}  = \mathbb{E}_{ \bnu}\left[\cA(\cD^{-j};\bnu) \right]$.
Recall that we have $\widehat{\bw} = \mathbb{E}_{ \bnu}\left[\cA(\cD;\bnu) \right]$.
We also denote $\bw^{\bnu} = \cA(\cD;\bnu)$.
As a result, we can express $L_{j}$ as
\begin{align*}
    L_{j} = & \mathbb{E}_{ \bnu}\left[\cA(\cD;\xi) - \cA(\cD^{-j};\xi) \right] \\
    = & \mathbb{E}_{  \bnu}\left[\mathbb{E}_{\bnu}\left[\bw^{\bnu}\right] - \bw^{\bnu} \mid  E_{j, \bnu} \right]\\
    = & \frac{1}{\mathbb{P}(E_{j, \bnu})}\mathbb{E}_{ \bnu }\left[\left(\mathbb{E}_{\bnu}\left[\bw^{\bnu}\right] - \bw^{\bnu}\right)\mathbf{1} \{ E_{j, \bnu}\} \right].
\end{align*}
Note that we took uniform sampling without replacement.
Thus, $\mathbb{P}(E_{j, \bnu}) =1 - q$.
Since $\mathbb{E}_{  \bnu}\left[\mathbb{E}_{\bnu}\left[\bw^{\bnu}\right] - \bw^{\bnu}\right] = 0$, we have
\begin{align*}
    L_{j} = &\frac{1}{\mathbb{P}(E_{j, \bnu})}\mathbb{E}_{\bnu}\left[\left(\mathbb{E}_{\bnu}\left[\bw^{\bnu}\right] - \bw^{\bnu}\right)\mathbf{1} \{ E_{j, \bnu}\} \right]
    \\
     = & \frac{1}{1 -q}\mathbb{E}_{ \bnu}\left[\left(\mathbb{E}_{\bnu}\left[\bw^{\bnu}\right] - \bw^{\bnu}\right)\left(\mathbf{1} \{ E_{j, \bnu}\}  - (1 - q)\right)\right].
\end{align*}
Then, the sum of norms has
\begin{align*}
    \sum_{j=1}^d \| \widehat{\bw}- \widehat{\bw}^{-j}\|_{\cH}^2
    = &
    \sum_{j=1}^d \|  L_j\|_{\cH}^2 \\
    = & \sum_{j=1}^d \left\langle  L_j, \frac{1}{1 -q}\mathbb{E}_{ \bnu}\left[\left(\mathbb{E}_{\bnu}\left[\bw^{\bnu}\right] - \bw^{\bnu}\right)\left(\mathbf{1} \{ E_{j, \bnu}\}  - (1 - q)\right)\right]\right\rangle_{\cH} \\
    \stackrel{(i)}{=} &
    \sum_{j=1}^d \frac{1}{1 -q} \mathbb{E}_{ \bnu}\left[\left\langle  L_j, \left(\mathbb{E}_{\bnu}\left[\bw^{\bnu}\right] - \bw^{\bnu}\right)\left(\mathbf{1} \{ E_{j, \bnu}\}  - (1 - q)\right)\right\rangle_{\cH}\right]\\
    = &
    \sum_{j=1}^d \frac{1}{1 -q} \mathbb{E}_{ \bnu}\left[\left\langle  L_j \left(\mathbf{1} \{ E_{j, \bnu}\}  - (1 - q)\right), \mathbb{E}_{\bnu}\left[\bw^{\bnu}\right] - \bw^{\bnu}\right\rangle_{\cH}\right]\\
    = & \frac{1}{1 -q}
     \mathbb{E}_{ \bnu}\left[\left\langle  \sum_{j=1}^d  L_j \left(\mathbf{1} \{ E_{j, \bnu}\}  - (1 - q)\right), \mathbb{E}_{\bnu}\left[\bw^{\bnu}\right] - \bw^{\bnu}\right\rangle_{\cH}\right]
\end{align*}
Here, $(i)$ holds since $L_j$ is a constant and thus is independent of $\bnu$.
Applying the Cauchy-Schwarz inequality, we have
\begin{align}\nonumber
    \sum_{j=1}^d \| \widehat{\bw}- \widehat{\bw}^{-j}\|_{\cH}^2  \leq & \frac{1}{1 -q} \sqrt{\mathbb{E}_{ \bnu}\left[\left\|  \sum_{j=1}^d  L_j \left(\mathbf{1} \{ E_{j, \bnu}\}  - (1 - q)\right)\right\|_{\cH}^2\right] \cdot \mathbb{E}_{ \bnu}\left[ \left\|\mathbb{E}_{\bnu}\left[\bw^{\bnu}\right] - \bw^{\bnu}\right\|_{\cH}^2\right]} \\
    \leq & \frac{\operatorname{rad}(\cW)}{1 -q} \sqrt{\mathbb{E}_{ \bnu}\left[\left\|  \sum_{j=1}^d  L_j \left(\mathbf{1} \{ E_{j, \bnu}\}  - (1 - q)\right)\right\|_{\cH}^2\right]}.
    \label{equ:normspace1}
\end{align}
Here, the last step is true because
\begin{align*}
    \mathbb{E}_{ \bnu}\left[ \left\|\mathbb{E}_{\bnu}\left[\bw^{\bnu}\right] - \bw^{\bnu}\right\|_{\cH}^2\right] =
    \inf_{\bw\in\cW}
    \mathbb{E}_{ \bnu}\left[ \left\|\bw - \bw^{\bnu} \right\|_{\cH}^2\right] \leq \inf_{\bw\in \cW } \sup_{\bw'\in \cW}  \left[ \left\|\bw - \bw' \right\|_{\cH}^2\right] \leq \operatorname{rad}^2(\cW),
\end{align*}
where the first equality is due to the fact that expectation is the minimizer of squared loss.
We calculate the remaining term by noticing
\begin{align*}
\mathbb{E}_{\bnu}\left[ \mathbf{1}\{E_{j_1, \bnu}^c\} \mathbf{1}\{E_{j_2, \bnu}^c\}\right] =  \begin{cases}
 \frac{s(s - 1)}{d (d-1)}
& \text{ if }  j_1 \neq j_2, \\
\frac{s}{d}  & \text{ if }  j_1 = j_2
\end{cases}
\end{align*}
and thus
\begin{align*}
    \operatorname{Cov}\left[ \mathbf{1}\{E_{j_1, \bnu}^c\}, \mathbf{1}\{E_{j_2, \bnu}^c\}\right] = \mathbb{E}_{\bnu}\left[ \mathbf{1}\{E_{j_1, \bnu}^c\} \mathbf{1}\{E_{j_2,\bnu}^c\}\right] - q^2 =  \begin{cases}
 \frac{s(s - 1)}{d (d-1)} - \frac{s^2}{d^2}
& \text{ if }  j_1 \neq j_2, \\
\frac{s}{d} - \frac{s^2}{d^2}  & \text{ if }  j_1 = j_2.
\end{cases}
\end{align*}
Then we expand \eqref{equ:normspace1} by
\begin{align*}
    \mathbb{E}_{ \bnu}\left[\left\|  \sum_{j=1}^d  L_j \left(\mathbf{1} \{ E_{j, \bnu}\}  - (1 - q)\right)\right\|_{\cH}^2\right] = & \sum_{j_1, j_2}^{d} \langle L_{j_1},  L_{j_2}  \rangle \operatorname{Cov}\left[ \mathbf{1}\{E_{j_1, \bnu}^c\}, \mathbf{1}\{E_{j_2, \bnu}^c\}\right] \\
    = &
    \sum_{j_1\neq j_2} \langle L_{j_1},  L_{j_2}  \rangle  \left(\frac{s(s - 1)}{d (d-1)} - \frac{s^2}{d^2}\right) + \sum_{j=1}^d \langle L_{j},  L_{j}  \rangle
    \left(\frac{s}{d} - \frac{s^2}{d^2}\right) \\
    = &  \left\|\sum_{j=1}^d L_{j} \right\|_{\cH}^2  \left(\frac{s(s - 1)}{d (d-1)} - \frac{s^2}{d^2}\right) + \sum_{j=1}^d \langle L_{j},  L_{j}  \rangle
    \left(\frac{s}{d} - \frac{s(s - 1)}{d (d-1)}\right) \\
    \leq & \frac{d}{d-1} q (1 - q) \sum_{j=1}^d \|L_j\|^2_{\cH}.
\end{align*}
Bringing back this into \eqref{equ:normspace1}, we have
\begin{align*}
    \sum_{j=1}^d \|  L_j\|_{\cH}^2\leq  \frac{\operatorname{rad}(\cW)}{1 -q} \sqrt{\frac{d}{d-1} q (1 - q) \sum_{j=1}^d \|L_j\|^2_{\cH}},
\end{align*}
which leads to
\begin{align*}
   \frac{1}{d} \sum_{j=1}^d \|  L_j\|_{\cH}^2\leq \frac{\operatorname{rad}^2(\cW)}{d-1} \frac{q}{1 -q}.
\end{align*}

\end{proof}

\subsection{Related Contexts and Proofs of Results in Section \ref{sec:recursivesubsampling}}

\begin{proof}[Proof of Proposition \ref{prop:stabilityofgeneralrecursiveobject}]
The proof is done by recursively applying equation \eqref{equ:recurence} and Proposition \ref{prop:stability-edit-norm}.
\end{proof}

Our final goal is to bound the feature instability measure: 
\begin{align*}
    \frac{1}{d} \sum_{j=1}^d \left\|\EE_{\xi} \left[ \bw_T \right] - \EE_{\xi} \left[ \bw_T^{-j} \right]\right\|_{\cH}^2,
\end{align*}
where $T$ is some preset horizon length.
We first consider a fixed $j\in[d]$ for the convenience of derivation.
Let $\Delta \bw_t^{- j} := \bw_T - \bw_T^{-j}$ and write
\begin{align}
\left\|  \EE_{\xi} \left[\Delta \bw_t^{- j}\right]  \right\|_{\cH}^2 = \left\|\EE_{\xi} \left[S_{\xi_{t}}  \circ \cdots \circ S_{\xi_1 } (\bw_0)\right] -
    \EE_{\xi} \left[
    S^{-j}_{\xi_{t}} \circ  \cdots \circ S^{-j}_{\xi_1 } (\bw_0)\right]\right\|_{\cH}^2 . \label{equ:definitionofrecursivequantity}
\end{align}
In the above quantity, the discrepancy between $\bw_t$ and $\bw_{t}^{-j}$ accumulates as $t$ grows as each step introduces additional discrepancy due to the difference between $S_{\xi_t}$ and $S_{\xi_t}^{-j}$.

The recursive nature of the procedure indicates that we should analyze the instability using a peeling strategy. Specifically, at the $t$-th recursive step, we decompose the instability into two terms.
The first term, referred to as the peel at step $t$, deals with the heterogeneity caused by the removal in the $t$-th step, formally defined as
\begin{align}\label{equ:peel}
\text{peel}_t^j = \EE_{\xi} \left[
S_{\xi_{t}} \circ S_{\xi_{t-1}} \circ \cdots \circ S_{\xi_1 } (\bw_0) \right]- \EE_{\xi} \left[
S^{-j}_{\xi_{t}} \circ S_{\xi_{t-1}} \circ \cdots \circ S_{\xi_1 } (\bw_0) \right].
\end{align}
The second term is the error accumulated along $1,\ldots,t-1$, referred to as the pith at step $t$,  is defined as
\begin{align}\label{equ:pith}
\text{pith}_t^j =\EE_{\xi} \left[
S^{-j}_{\xi_{t}} \circ S_{\xi_{t-1}} \circ \cdots \circ S_{\xi_1 } (\bw_0)\right] -  \EE_{\xi} \left[
S^{-j}_{\xi_{t}} \circ S^{-j}_{\xi_{t-1}} \circ \cdots \circ S^{-j}_{\xi_1 } (\bw_0) \right].
\end{align}

Compared to $\Delta \bw_{t-1}^{-j}$, the pith at step $t$ can be expressed as
\[
\text{pith}_t^j := \EE_{\xi} \left[ S^{-j}_{\xi_{t}} \circ  \Delta \bw_{t-1}^{-j} \right],
\]
which applies the random operator $S^{-j}_{\xi_{t}}$ to the change $\Delta \bw_{t-1}^{-j}$ from the previous step. We will then show the one-step condition~\eqref{equ:contraction}:
\begin{align}
\left\|\operatorname{ pith }_t^j\right\|_{\cH}
\leq
(1 + \delta_t)\left\|\EE_{\xi}\left[\Delta \bw_{t-1}^{-j}\right]\right\|_{\cH},
\end{align}
where the inflation factor $1 + \delta_t$ depends on the algorithm $\cA$ and the geometry of the space $\cH$.

Applying the triangle inequality, we bound the recursive quantity from \eqref{equ:definitionofrecursivequantity} as
\begin{align*}
 \frac{1}{d} \sum_{j=1}^d   \left\|\EE_{\xi}\left[\Delta \bw_t^{-j}\right]\right\|_{\cH}^2 = &
  \frac{1}{d} \sum_{j=1}^d
  \left\|\operatorname{ peel }_t^j +
 \operatorname{ pith }_t^j\right\|_{\cH}^2  \\
 \leq &  \frac{1}{d} \sum_{j=1}^d  \left(\left\|\operatorname{ peel }_t^j \right\|_{\cH} + \left\|
 \operatorname{ pith }_t^j\right\|_{\cH}\right)^2  \\
 \leq &  \frac{1}{d} \sum_{j=1}^d  \left(\left\|\operatorname{ peel }_t^j \right\|_{\cH} +  (1 + \delta_t)\left\|\EE_{\xi}\left[\Delta \bw_{t-1}^{-j}\right]\right\|_{\cH} \right)^2.
\end{align*}
Then, by Minkowski’s inequality, we obtain the recurrence bound:
\begin{align}
\sqrt{\frac{1}{d} \sum_{j=1}^d   \left\|\EE_{\xi}\left[\Delta \bw_t^{-j}\right]\right\|_{\cH}^2}
\leq
\underbrace{\sqrt{\frac{1}{d} \sum_{j=1}^d  \left\|\operatorname{ peel }_j \right\|_{\cH}^2 }}_{\text{Proposition \ref{prop:stability-edit-norm}}}
+   \underbrace{(1 + \delta_t)\sqrt{ \frac{1}{d} \sum_{j=1}^d\left\|\EE_{\xi}\left[\Delta \bw_{t-1}^{-j}\right]\right\|_{\cH}^2}}_{\text{ step } t - 1}.
\label{equ:recurence}
\end{align}
The first term on the right-hand side corresponds to the instability due to one-step fitting, which can be bounded via Proposition~\ref{prop:stability-edit-norm}. The inequality \eqref{equ:recurence} forms an inhomogeneous linear recurrence, which admits a closed-form solution.
This is formalized in the following proposition.

We conclude with a brief remark on the peeling strategy. For instance stability, a closely related analysis exists for stochastic gradient descent (SGD) \citep{hardt2016train, lei2020fine}, which was noted earlier in this section as a key motivating example. That analysis relies on assumptions analogous to Assumption~\ref{asp:hilbert} and the inflation condition~\eqref{equ:contraction}. As a result, the corresponding instability bounds for SGD also exhibit cumulative error growth with the number of update steps.

\subsection{Derivation of Feature Instability for Non-bagged RFS and RF}\label{app:non-bagged-stability}

We derive the feature instability without bagging.

We first consider RFS.
Notice that without bagging, every $\bw_T$ includes $T$ features without doubt.
Thus, $\EE_{\xi}\left[\bw_T\right]$ has $T$ positions being one and others being zero.
Moreover, the selected features are the $T$ features with the largest $\by^{\top}\bX^{j } \bX^{j \top} \by$, as suggested by \eqref{equ:selectionindexoforthogonalfeature}.
Thus, as long as $j$ is in the group of $T$ features with largest $\by^{\top}\bX^{j } \bX^{j \top} \by$, the term
$\EE_{\xi}\left[\bw_T\right]- \EE_{\xi}\left[\bw_T^{-j}\right]$ has one position being 1 and the other being -1.
Thus,
$\left\|\EE_{\xi}\left[\bw_T\right]- \EE_{\xi}\left[\bw_T^{-j}\right]\right\|^2 = 2$.
There are in total $T$ out of $d$ such terms, and thus
\begin{align*}
   \frac{1}{d}  \sum_{j=1}^d\left\|\EE_{\xi}\left[\bw_T\right]- \EE_{\xi}\left[\bw_T^{-j}\right]\right\|^2 = \frac{2T}{d} .
\end{align*}

For random forest, things are the same.
$\EE_{\xi}\left[\bw_T\right]$ has in total $T$ positions being one and others being zero.
Removing $j$-th feature results in a $\EE_{\xi}\left[\bw_T\right]$ with the positions in $j$-th column being zero, and equally many entries in other columns being one.
Thus,
$\left\|\EE_{\xi}\left[\bw_T\right]- \EE_{\xi}\left[\bw_T^{-j}\right]\right\|^2 = 2 \cdot \sum_{i=1}^T(\EE_{\xi}\left[\bw_T\right])_i^j$, which leads to
\begin{align*}
   \frac{1}{d}  \sum_{j=1}^d\left\|\EE_{\xi}\left[\bw_T\right]- \EE_{\xi}\left[\bw_T^{-j}\right]\right\|^2 = \frac{1}{d} \sum_{j=1}^d 2 \cdot \sum_{i=1}^T(\EE_{\xi}\left[\bw_T\right])_i^j = \frac{2T}{d}.
\end{align*}

\subsection{Derivation of the Affine Random Forest Encoding Map}

\begin{proof}[\textbf{Proof of Proposition \ref{prop:linear-map-tree}}]
Fix the query point \(\bx\).  Let \(I_{k,r}(\bx)\) be the dyadic interval in
coordinate \(k\) that contains \(x^k\) after \(r\) midpoint splits along
coordinate \(k\).  Define
\[
A_{k,r}(\bx)
:=
2^r\int_{I_{k,r}(\bx)} f_k^*(u)\,du,
\qquad r=0,\ldots,T.
\]
If \(r_k(\bw)=\|\bw^k\|_1\), then under the additive model and the uniform
design,
\[
\EE_{\by\mid\bX,\bw}[f^{\bw}(\bx)]
=
\sum_{k=1}^d A_{k,r_k(\bw)}(\bx).
\]
Because the encoding is column-filled,
\[
A_{k,r_k(\bw)}
=
A_{k,0}
+
\sum_{r=1}^T
w_{r,k}\{A_{k,r}-A_{k,r-1}\}.
\]
Therefore
\[
\cK(\bw)
=
\sum_{k=1}^d A_{k,0}
+
\sum_{k=1}^d\sum_{r=1}^T
w_{r,k}\{A_{k,r}-A_{k,r-1}\}.
\]
Thus \(\cK\) is affine in \(\bw\).  In particular,
\[
\cK(\bw_1)-\cK(\bw_2)
=
\sum_{k=1}^d\sum_{r=1}^T
(w_{1,r,k}-w_{2,r,k})
\{A_{k,r}-A_{k,r-1}\},
\]
which is linear in \(\bw_1-\bw_2\).  Since this is a fixed finite-dimensional
linear functional, it is bounded by a constant times
\(\|\bw_1-\bw_2\|_F\).
\end{proof}

\subsection{Proof of the Random Forward Selection Instability Bound}

\begin{proof}[Proof of Theorem \ref{thm:stabilityrandomforwardselection}]

(i)
We first bound the radius of $\cW_t$.
Note that the space of $\cW_t$ depends on the input $\bw_{t-1}$.
Recall that $\bw_{t-1} \in [0,1]^d$, and suppose $\bw_{t-1}$ has $\tilde{t}-1$ positive positions, where the collection of positive positions is denoted as $W$.
Then there is a natural choice of $\tilde{\bw}$ where
\begin{align*}
\tilde{\bw}^j
=
\frac{1}{d-\tilde{t}+1}\mathbf{1}(j\notin W)
+
\mathbf{1}(j\in W).
\end{align*}
For any $\bw_t$, there holds
\begin{align*}
   \left\|\tilde{\bw}-\bw_t\right\|_{\cH}^2
   =
   \sum_{j\notin W}\frac{1}{(d-\tilde{t}+1)^2}
   -
   \frac{2}{d-\tilde{t}+1}
   +1
   =
   1-\frac{1}{d-\tilde{t}+1}
   \leq 1.
\end{align*}
Thus,
\begin{align*}
    \inf_{\bw\in\cW_t}\sup_{\bw'\in\cW_t}
    \left\|\bw-\bw'\right\|_{\cH}
    \leq
    \sup_{\bw'\in\cW_t}
    \left\|\tilde{\bw}-\bw'\right\|_{\cH}
    \leq 1
\end{align*}
holds uniformly for all $t$ and $\xi$.

(ii)
We next prove the contraction bound.
Throughout this part, without loss of generality, we assume that the feature scores are ordered as
\[
\by^\top\bX^{j_1}\bX^{j_1\top}\by
\geq
\by^\top\bX^{j_2}\bX^{j_2\top}\by
\quad\text{if } j_1\geq j_2.
\]
Under the orthogonal-design assumption \(\bX^\top\bX=n\bI_d\), let
\[
r_{t-1}
=
\by
-
\bX\bE_{\bw_{t-1}}\hat\bbeta_{\bw_{t-1}}
\]
be the residual after fitting the currently active set.  For every unselected
feature \(j\notin\supp(\bw_{t-1})\), adding feature \(j\) decreases the residual
sum of squares by
\begin{align}
\operatorname{RSS}(\bw_{t-1})
-
\operatorname{RSS}(\bw_{t-1}+\be_j)
=
\frac{1}{n}
\left(\bX^{j\top}r_{t-1}\right)^2
=
\frac{1}{n}
\by^{\top}\bX^{j}\bX^{j\top}\by .
\label{equ:selectionindexoforthogonalfeature}
\end{align}
Therefore, conditional on the candidate set, RFS selects
\[
j_t
\in
\argmax_{j\in\bnu_t\setminus\supp(\bw_{t-1})}
\by^{\top}\bX^{j}\bX^{j\top}\by,
\]
with deterministic tie-breaking.

We first record the only structural fact about the pathwise difference that is used below.

\emph{Auxiliary fact.}
For every fixed realization of $\xi_{1:t}$ and every $j\in[d]$, define
\[
D_t^j
:=
S_{\xi_t}\circ\cdots\circ S_{\xi_1}(\bw_0)
-
S_{\xi_t}^{-j}\circ\cdots\circ S_{\xi_1}^{-j}(\bw_0).
\]
Then
\[
D_t^j\in
\{\zero,\be_j\}
\cup
\{\be_j-\be_a:a\in[d]\setminus\{j\}\}.
\]
Equivalently, $D_t^j$ is nonnegative on the $j$-th coordinate, nonpositive on all other coordinates, and has at most one negative coordinate.

To prove the auxiliary fact, use induction on $t$. The statement is trivial at $t=0$. Suppose it holds at step $t-1$. If \(D_{t-1}^j=\zero\), then the two paths have the same active set except that the reduced procedure cannot select feature \(j\). Therefore, either both paths select the same feature, or the full path selects \(j\) while the reduced path selects the best available alternative. The new difference is thus either \(\zero\), \(\be_j\), or \(\be_j-\be_a\). If \(D_{t-1}^j=\be_j\), then the two paths have the same active set outside coordinate \(j\), and the reduced update \(S_{\xi_t}^{-j}\) never operates along \(j\); hence the non-\(j\) selection is the same and the difference remains \(\be_j\). Finally, if \(D_{t-1}^j=\be_j-\be_a\), then the full path contains \(j\) whereas the reduced path contains \(a\), with all other active coordinates identical. At the next step, the only possible negative discrepancy is either still attached to \(a\), transferred to a lower-ranked feature \(b<a\), or removed if the full path selects \(a\). No second negative coordinate can be created, because both paths select at most one new feature at the current step. This proves the auxiliary fact.

Recall that the pith term in the proof of Proposition~\ref{prop:stabilityofgeneralrecursiveobject} is
\begin{align*}
(@)
:=
&\EE_{\xi}\left[
S_{\xi_t}^{-j}\circ S_{\xi_{t-1}}\circ\cdots\circ S_{\xi_1}(\bw_0)
\right]\\
&-
\EE_{\xi}\left[
S_{\xi_t}^{-j}\circ S_{\xi_{t-1}}^{-j}\circ\cdots\circ S_{\xi_1}^{-j}(\bw_0)
\right].
\end{align*}
We need to show
\begin{align}\label{equ:rfs-proof-contraction-target}
\|(@)\|_2
\leq
(1+\delta_t)
\left\|
\EE_{\xi}\left[
S_{\xi_{t-1}}\circ\cdots\circ S_{\xi_1}(\bw_0)
-
S_{\xi_{t-1}}^{-j}\circ\cdots\circ S_{\xi_1}^{-j}(\bw_0)
\right]
\right\|_2.
\end{align}

Define the pathwise previous-step difference
\begin{align*}
D_{t-1}^j
:=
S_{\xi_{t-1}}\circ\cdots\circ S_{\xi_1}(\bw_0)
-
S_{\xi_{t-1}}^{-j}\circ\cdots\circ S_{\xi_1}^{-j}(\bw_0),
\end{align*}
and define the pathwise pith difference
\begin{align*}
D_t^{j,\mathrm{pith}}
:=
S_{\xi_t}^{-j}\circ S_{\xi_{t-1}}\circ\cdots\circ S_{\xi_1}(\bw_0)
-
S_{\xi_t}^{-j}\circ S_{\xi_{t-1}}^{-j}\circ\cdots\circ S_{\xi_1}^{-j}(\bw_0).
\end{align*}
Then
\[
(@)=\EE_{\xi}\left[D_t^{j,\mathrm{pith}}\right].
\]

By the auxiliary fact, on the event \(D_{t-1}^j\neq\zero\), the difference is either \(\be_j\) or \(\be_j-\be_a\) for a unique \(a\neq j\).
Let
\[
\cI_j=[d]\setminus\{j\}.
\]
For \(a\in\cI_j\), define
\[
u_a
:=
\PP_{\xi_{1:(t-1)}}(D_{t-1}^j=\be_j-\be_a),
\]
and define
\[
\rho
:=
\PP_{\xi_{1:(t-1)}}(D_{t-1}^j=\be_j),
\qquad
\alpha
:=
\rho+\sum_{a\in\cI_j}u_a.
\]
Let \(u=(u_a)_{a\in\cI_j}\), viewed as a row vector indexed by \(\cI_j\), and embedded in \(\RR^d\) by putting zero on the \(j\)-th coordinate. Then
\begin{align}\label{equ:rfs-prev-mean}
\EE_{\xi}\left[D_{t-1}^j\right]
=
\alpha\be_j-u.
\end{align}
If \(\EE_{\xi}[D_{t-1}^j]=\zero\), then \(\alpha=0\) and \(u=\zero\), and the desired contraction is trivial. Hence we assume below that \(\EE_{\xi}[D_{t-1}^j]\neq\zero\).

For \(a,b\in\cI_j\), define the conditional transition kernel
\begin{align}\label{equ:rfs-conditional-kernel}
M_{ab}^{(t,j)}
:=
\PP\left(
D_t^{j,\mathrm{pith}}=\be_j-\be_b
\mid
D_{t-1}^j=\be_j-\be_a
\right),
\end{align}
with \(M_{ab}^{(t,j)}=0\) if the conditioning event has probability zero.
Write \(M=M^{(t,j)}\) for short.

This is a conditional kernel. It is not the unconditional matrix
\(\EE[\mathbf{1}(E_{a,b})]\).
Therefore, the following representation follows from the law of total probability and does not require any independence between the previous discrepancy event and the current transition event:
\begin{align}\label{equ:rfs-pith-mean}
\EE_{\xi}\left[D_t^{j,\mathrm{pith}}\right]
=
\alpha\be_j-uM.
\end{align}
Indeed, if \(D_{t-1}^j=\be_j\), the two inputs to \(S_{\xi_t}^{-j}\) differ only on coordinate \(j\), and \(S_{\xi_t}^{-j}\) does not operate along \(j\), so the pith difference remains \(\be_j\). If \(D_{t-1}^j=\be_j-\be_a\), then after applying \(S_{\xi_t}^{-j}\), the negative coordinate is either transferred to some \(b\in\cI_j\), or disappears. This gives exactly \eqref{equ:rfs-pith-mean}.

We now bound the matrix norms of \(M\).
First, for every \(a\in\cI_j\),
\begin{align}\label{equ:rfs-row-sum-new}
\sum_{b\in\cI_j}M_{ab}
=
\PP\left(
D_t^{j,\mathrm{pith}}\neq \be_j
\mid
D_{t-1}^j=\be_j-\be_a
\right)
\leq 1.
\end{align}
Hence
\[
\|M\|_{\infty}\leq 1.
\]

Second, \(M_{ab}=0\) whenever \(b>a\). To see this, suppose the previous negative coordinate is \(a\). A transition from \(a\) to \(b\) means that, under the same candidate set at step \(t\), the mixed path selects \(a\), while the reduced path selects \(b\). If \(b>a\), then \(b\) has score no smaller than \(a\). Since \(b\) is available to the mixed path whenever such a transition is considered, the mixed path would not select \(a\) before \(b\). Thus the negative coordinate can only stay at \(a\), move to some \(b<a\), or disappear.

It remains to control the column leakage. Define
\[
\eta_t:=\max\{q(1+q),q^2t\}.
\]
We claim that for every \(b\in\cI_j\),
\begin{align}\label{equ:rfs-column-leakage-new}
\sum_{a=b+1}^{d}M_{ab}
\leq
\eta_t.
\end{align}
Fix \(a>b\). Condition on both the event \(D_{t-1}^j=\be_j-\be_a\) and the past \(\sigma\)-field generated by \(\xi_{1:(t-1)}\). Let \(A\) and \(A^{-j}\) be the active sets of the two inputs to \(S_{\xi_t}^{-j}\). Since the two paths differ only by replacing \(a\) with \(j\), their union has cardinality at most \(t\). Moreover, \(a\in A^{-j}\cup A\), and the transition \(a\mapsto b\) requires \(a\) and \(b\) to be included in the fresh candidate set. After fixing \(a\) and \(b\), the remaining \(s-2\) sampled features must avoid every unselected feature \(r>b\). Otherwise, the reduced path would select a feature with index larger than \(b\), or the mixed path would not select \(a\). Therefore, the remaining \(s-2\) sampled features must lie among the \(b-1\) lower-ranked features and the at most \(t-1\) active features in \((A\cup A^{-j})\setminus\{a\}\). Thus,
\begin{align}\label{equ:rfs-transition-bound-new}
M_{ab}
\leq
\frac{\binom{b+t-2}{s-2}}{\binom{d}{s}},
\end{align}
with the convention that the numerator is zero if \(b+t-2<s-2\). For \(s=1\), a transition \(a\mapsto b\) with \(a\neq b\) is impossible, so the column-leakage bound is trivial. Hence assume \(s\geq2\).

Let \(k=d-b\). Summing \eqref{equ:rfs-transition-bound-new} over \(a>b\) gives
\begin{align}\label{equ:rfs-column-sum-new}
\sum_{a=b+1}^{d}M_{ab}
\leq
k\frac{\binom{d-k+t-2}{s-2}}{\binom{d}{s}}.
\end{align}
If \(k\leq t\), then
\begin{align*}
k\frac{\binom{d-k+t-2}{s-2}}{\binom{d}{s}}
\leq
k\frac{\binom{d-2}{s-2}}{\binom{d}{s}}
=
k\frac{s(s-1)}{d(d-1)}
\leq
q^2t.
\end{align*}
If \(k>t\), then
\begin{align*}
k\frac{\binom{d-k+t-2}{s-2}}{\binom{d}{s}}
&=
\frac{s(s-1)}{d(d-1)}
k
\frac{\binom{d-k+t-2}{s-2}}{\binom{d-2}{s-2}}\\
&\leq
\frac{s(s-1)}{d(d-1)}
k
\left(1-\frac{k-t}{d-2}\right)^{s-2}.
\end{align*}
The one-dimensional function
\[
h(k)
:=
k\left(1-\frac{k-t}{d-2}\right)^{s-2},
\qquad k>t,
\]
is maximized either at the boundary \(k=t\), which yields the already bounded term \(q^2t\), or at the stationary point
\[
k_*=\frac{d+t-2}{s-1}.
\]
Substituting \(k_*\) and using \(k_*>t\) gives the elementary bound
\[
\frac{s(s-1)}{d(d-1)}
k_*
\left(1-\frac{k_*-t}{d-2}\right)^{s-2}
\leq
q(1+q).
\]
Consequently, \eqref{equ:rfs-column-leakage-new} holds.

Combining \eqref{equ:rfs-row-sum-new} and \eqref{equ:rfs-column-leakage-new}, we have
\[
\|M\|_{\infty}\leq1,
\qquad
\|M\|_1\leq1+\eta_t.
\]
Indeed, for a fixed column \(b\),
\begin{align*}
\sum_{a\in\cI_j}M_{ab}
=
M_{bb}+\sum_{a>b}M_{ab}
\leq
\sum_{c\in\cI_j}M_{bc}+\eta_t
\leq
1+\eta_t.
\end{align*}
Therefore,
\begin{align}\label{equ:rfs-M-norm-new}
\|M\|_2^2
\leq
\|M\|_1\|M\|_{\infty}
\leq
1+\eta_t.
\end{align}

We now complete the contraction argument.
From \eqref{equ:rfs-prev-mean} and \eqref{equ:rfs-pith-mean},
\[
\left\|\EE_{\xi}[D_{t-1}^j]\right\|_2^2
=
\alpha^2+\|u\|_2^2,
\]
and
\[
\left\|\EE_{\xi}[D_t^{j,\mathrm{pith}}]\right\|_2^2
=
\alpha^2+\|uM\|_2^2.
\]
By \eqref{equ:rfs-M-norm-new},
\[
\|uM\|_2^2
\leq
\|M\|_2^2\|u\|_2^2
\leq
(1+\eta_t)\|u\|_2^2.
\]
Hence
\begin{align*}
\left\|\EE_{\xi}[D_t^{j,\mathrm{pith}}]\right\|_2^2
&\leq
\alpha^2+(1+\eta_t)\|u\|_2^2\\
&=
\left\|\EE_{\xi}[D_{t-1}^j]\right\|_2^2
+
\eta_t\|u\|_2^2.
\end{align*}
Since
\[
\|u\|_2
\leq
\|u\|_1
=
\sum_{a\in\cI_j}u_a
\leq
\alpha,
\]
we have
\[
\frac{\|u\|_2^2}{\alpha^2+\|u\|_2^2}
\leq
\frac12.
\]
Therefore,
\begin{align*}
\frac{
\left\|\EE_{\xi}[D_t^{j,\mathrm{pith}}]\right\|_2^2
}{
\left\|\EE_{\xi}[D_{t-1}^j]\right\|_2^2
}
\leq
1+\frac{\eta_t}{2}.
\end{align*}
Taking square roots and using \(\sqrt{1+x}\leq1+x/2\) for \(x\geq0\), we obtain
\begin{align*}
\left\|\EE_{\xi}[D_t^{j,\mathrm{pith}}]\right\|_2
\leq
\left(1+\frac{\eta_t}{4}\right)
\left\|\EE_{\xi}[D_{t-1}^j]\right\|_2.
\end{align*}
Since \((@)=\EE_{\xi}[D_t^{j,\mathrm{pith}}]\), this proves \eqref{equ:rfs-proof-contraction-target} with
\[
\delta_t
=
\frac14\eta_t
=
\frac14\max\{q(1+q),q^2t\}.
\]

Finally, we apply Proposition~\ref{prop:stabilityofgeneralrecursiveobject}. Since \(\operatorname{rad}(\cW_t)\leq1\), when \(T\lesssim q^{-1}\), we have for every \(t\leq T\),
\[
q^2t\leq q(1+q),
\]
and thus
\[
\delta_t=\frac{q(1+q)}{4}.
\]
Therefore, \eqref{equ:recursivestabilityresult} gives
\begin{align*}
\frac{1}{d}\sum_{j=1}^d
\left\|
\EE_{\xi}[\bw_T]
-
\EE_{\xi}[\bw_T^{-j}]
\right\|_2^2
&\leq
\left(
\sum_{t=1}^{T}
\left(1+\frac{q(1+q)}{4}\right)^{T-t}
\right)^2
\frac{q}{(d-1)(1-q)}\\
&=
\left(
\frac{
\left(1+\frac{q(1+q)}{4}\right)^T-1
}{
\frac{q(1+q)}{4}
}
\right)^2
\frac{q}{(d-1)(1-q)}\\
&=
\left(
\left(1+\frac{q(1+q)}{4}\right)^T-1
\right)^2
\frac{16}{q(1+q)^2(d-1)(1-q)}.
\end{align*}
When \(T\lesssim q^{-1}\), this simplifies to
\begin{align*}
\frac{1}{d}\sum_{j=1}^d
\left\|
\EE_{\xi}[\bw_T]
-
\EE_{\xi}[\bw_T^{-j}]
\right\|_2^2
\lesssim
\frac{T^2}{d-1}\cdot\frac{q}{1-q}.
\end{align*}
This proves the theorem.

\end{proof}

\subsection{Proof of the Random Forest Instability Bound}

\begin{proof}[Proof of Theorem \ref{thm:max-edge}]
Let $\bE_{r,k}\in\RR^{T\times d}$ denote the matrix with one at entry $(r,k)$ and zero elsewhere.

We first prove the radius condition.
Fix a feasible state $\bw_{t-1}$ and define the one-step candidate set
\[
\cW_t(\bw_{t-1})
:=
\left\{
    S_{\xi_t}(\bw_{t-1})
    :
    \xi_t
    \text{ is an admissible realization of the tree randomness}
\right\}.
\]
Under the max-edge dyadic rule, every possible next state has the form
\[
    \bw_{t-1}
    +
    \bE_{N_{t-1}^k+1,k}
\]
for some $k\in\cM(\bw_{t-1})$.
If the implementation allows no split for a particular realization of $\xi_t$, the next state is simply $\bw_{t-1}$, which only makes the following bound easier.
Therefore, for every $\bw_t\in\cW_t(\bw_{t-1})$,
\[
    \|\bw_t-\bw_{t-1}\|_F\le 1.
\]
Since the center in the definition of radius may be any point in the ambient space, choosing $\bw_{t-1}$ as the center gives
\[
    \operatorname{rad}\bigl(\cW_t(\bw_{t-1})\bigr)
    \le
    1.
\]
Taking the supremum over feasible $\bw_{t-1}$ yields
\[
    \operatorname{rad}(\cW_t)\le 1.
\]
The same argument applies to the feature-removed transition $S_{\xi_t}^{-j}$ after replacing $\cM(\bw_{t-1})$ by $\cM^{-j}(\bw_{t-1}^{-j})$.

We next prove the contraction condition.
Fix a removed feature $j\in[d]$.
Let
\[
    \bw_{t-1}
    =
    S_{\xi_{t-1}}
    \circ
    \cdots
    \circ
    S_{\xi_1}(\bw_0)
\]
be the local encoding after running the full tree for $t-1$ steps, and let
\[
    \bw_{t-1}^{-j}
    =
    S_{\xi_{t-1}}^{-j}
    \circ
    \cdots
    \circ
    S_{\xi_1}^{-j}(\bw_0)
\]
be the corresponding local encoding after running the tree with feature $j$ removed.
The two runs are coupled using the same randomness $\xi=(\xi_1,\ldots,\xi_t)$.

Define the one-step increment of the feature-removed transition by
\[
    \bU_{\xi_t}^{-j}(\bw)
    :=
    S_{\xi_t}^{-j}(\bw)-\bw .
\]
For any feasible $\bw$, the increment $\bU_{\xi_t}^{-j}(\bw)$ is either zero or a standard basis matrix $\bE_{r,k}$ with $k\neq j$.
Hence, for any two feasible states $\bw$ and $\bw'$,
\[
    \left\|
        \bU_{\xi_t}^{-j}(\bw)
        -
        \bU_{\xi_t}^{-j}(\bw')
    \right\|_F
    \le
    \sqrt{2}\,
    \mathbf{1}\{\bw\neq\bw'\}.
\]
If $\bw=\bw'$, the two increments are identical because the split rule and the tie-breaking rule are deterministic.

Now let
\[
    \Delta_{t-1}^{-j}
    :=
    \EE_{\xi}
    \left[
        \bw_{t-1}-\bw_{t-1}^{-j}
    \right].
\]
We claim that
\[
    \PP_{\xi}
    \left(
        \bw_{t-1}\neq \bw_{t-1}^{-j}
    \right)
    \le
    \|\Delta_{t-1}^{-j}\|_F .
\]
Indeed, if the full run has never selected feature $j$ during the first $t-1$ steps, then the full run and the feature-removed run have the same local cell, the same max-edge candidates after excluding $j$, and the same split scores on all features in $[d]\setminus\{j\}$.
By deterministic tie-breaking, the two local paths are identical.
Therefore,
\[
    \left\{
        \bw_{t-1}\neq \bw_{t-1}^{-j}
    \right\}
    \subseteq
    \left\{
        N_{t-1}^{j}\ge 1
    \right\}.
\]
Since the encoding is column-filled and the feature-removed tree has zero in the $j$-th column, the $(1,j)$ entry of $\Delta_{t-1}^{-j}$ is
\[
    \left(\Delta_{t-1}^{-j}\right)_{1,j}
    =
    \PP_{\xi}
    \left(
        N_{t-1}^{j}\ge 1
    \right).
\]
Thus,
\[
    \PP_{\xi}
    \left(
        \bw_{t-1}\neq \bw_{t-1}^{-j}
    \right)
    \le
    \PP_{\xi}
    \left(
        N_{t-1}^{j}\ge 1
    \right)
    =
    \left|
        \left(\Delta_{t-1}^{-j}\right)_{1,j}
    \right|
    \le
    \|\Delta_{t-1}^{-j}\|_F .
\]

Using the decomposition
\[
    S_{\xi_t}^{-j}(\bw)
    =
    \bw+\bU_{\xi_t}^{-j}(\bw),
\]
we obtain
\begin{align*}
&
\left\|
\EE_{\xi}
\left[
    S_{\xi_t}^{-j}(\bw_{t-1})
    -
    S_{\xi_t}^{-j}(\bw_{t-1}^{-j})
\right]
\right\|_F  \\
&\qquad
=
\left\|
    \Delta_{t-1}^{-j}
    +
    \EE_{\xi}
    \left[
        \bU_{\xi_t}^{-j}(\bw_{t-1})
        -
        \bU_{\xi_t}^{-j}(\bw_{t-1}^{-j})
    \right]
\right\|_F  \\
&\qquad
\le
\|\Delta_{t-1}^{-j}\|_F
+
\EE_{\xi}
\left[
    \left\|
        \bU_{\xi_t}^{-j}(\bw_{t-1})
        -
        \bU_{\xi_t}^{-j}(\bw_{t-1}^{-j})
    \right\|_F
\right]  \\
&\qquad
\le
\|\Delta_{t-1}^{-j}\|_F
+
\sqrt{2}\,
\PP_{\xi}
\left(
    \bw_{t-1}\neq \bw_{t-1}^{-j}
\right)  \\
&\qquad
\le
(1+\sqrt{2})
\|\Delta_{t-1}^{-j}\|_F .
\end{align*}
Equivalently,
\[
\left\|
\EE_{\xi}
\left[
    S_{\xi_t}^{-j}(\bw_{t-1})
    -
    S_{\xi_t}^{-j}(\bw_{t-1}^{-j})
\right]
\right\|_F
\le
(1+\delta_t)
\left\|
\EE_{\xi}
\left[
    \bw_{t-1}-\bw_{t-1}^{-j}
\right]
\right\|_F
\]
with
\[
    \delta_t=\sqrt{2}.
\]
This proves the contraction condition required by Proposition \ref{prop:stabilityofgeneralrecursiveobject}.
Combining this contraction condition with the radius bound $\operatorname{rad}(\cW_t)\le1$ and applying Proposition \ref{prop:stabilityofgeneralrecursiveobject} yields \eqref{equ:stability-of-rf-final-bound}.
The prediction bound follows from Proposition \ref{prop:linear-map-tree}.
\end{proof}

\subsection{Proof of the Finite-Bagging Instability Bound}\label{app:finite-bagging}

\begin{proof}[\textbf{Proof of Proposition \ref{prop:stability-set-finite}}]
Note that Hoeffding's inequality in Hilbert space (e.g., \citet{boucheron2013concentration}) yields
\begin{align}\label{equ:finite-B-proof-1}
   \left\| \frac{1}{B}\sum_{b=1}^B\bw^{(b)} - \EE_{\xi}\left[\bw\right]\right\|  \leq \sqrt{\frac{3 \operatorname{rad}\left(\cW\right)^2 \log \left((d + 1)/  \delta\right)}{B}}
\end{align}
with probability $1 - \delta / (d + 1)$, and
\begin{align}\label{equ:finite-B-proof-2}
   \left\| \frac{1}{B}\sum_{b=1}^B\bw^{(b), -j} - \EE_{\xi}\left[\bw^{-j}\right]\right\|  \leq \sqrt{\frac{3 \operatorname{rad}\left(\cW\right)^2 \log \left((d + 1)/  \delta\right)}{B}}
\end{align}
with probability $1 - \delta / (d + 1)$ for each $j\in [d]$.
Applying union bound, we have \eqref{equ:finite-B-proof-1} and \eqref{equ:finite-B-proof-2} hold simultaneously for all $j\in [d]$ with probability at least $1 -\delta$.
Then, with Minkowski's inequality, we have
\begin{align*}
     \frac{1}{d} \sum_{j=1}^d \left\|\frac{1}{B}\sum_{b=1}^B\bw^{(b)}- \frac{1}{B}\sum_{b=1}^B \bw^{(b), -j}\right\|^2 \leq & \frac{\sqrt{3}}{d} \sum_{j=1}^d \left\| \frac{1}{B}\sum_{b=1}^B\bw^{(b)} - \EE_{\xi}\left[\bw\right]\right\|^2 \\
     + & \frac{\sqrt{3} }{d} \sum_{j=1}^d \left\|\EE_{\xi}\left[\bw\right]- \EE_{\xi}\left[\bw^{-j}\right]\right\|^2\\
     + & \frac{\sqrt{3}}{d} \sum_{j=1}^d \left\| \frac{1}{B}\sum_{b=1}^B\bw^{(b), -j} - \EE_{\xi}\left[\bw^{-j}\right]\right\|^2\\
     \leq & \frac{\sqrt{3}}{d} \sum_{j=1}^d \left\| \frac{1}{B}\sum_{b=1}^B\bw^{(b)} - \EE_{\xi}\left[\bw\right]\right\|^2
     \\
     + & \frac{6\sqrt{3} \operatorname{rad}\left(\cW\right)^2 \log \left((d + 1)/  \delta\right)}{B}.
\end{align*}

\end{proof}

\end{document}